\documentclass{article}

\usepackage[preprint]{neurips_2026}

\usepackage[utf8]{inputenc} 
\usepackage[T1]{fontenc}    
\usepackage{hyperref}       
\usepackage{url}            
\usepackage{booktabs}       
\usepackage{amsfonts}       
\usepackage{amssymb}
\usepackage{nicefrac}       
\usepackage{microtype}      
\usepackage{xcolor}         

\usepackage{amsmath, amsthm, enumitem, bbm, subcaption, tikz, pgf, thmtools, thm-restate, placeins, mathtools, algorithm, algpseudocode, titletoc}

\usetikzlibrary{decorations.pathreplacing}
\usetikzlibrary{calc}

\theoremstyle{plain}
\newtheorem{theorem}{Theorem}[section]
\newtheorem{proposition}[theorem]{Proposition}
\newtheorem{lemma}[theorem]{Lemma}
\newtheorem{corollary}[theorem]{Corollary}
\theoremstyle{definition}
\newtheorem{example}[theorem]{Example}
\newtheorem{definition}[theorem]{Definition}

\newcommand{\layernumber}{\ell}
\newcommand{\numhiddenlayers}{L}
\newcommand{\maxcumulantorder}{K}

\newcommand{\tens}[2]{\mathcal{T}^{#1}(\mathbb{R}^{#2})}
\newcommand{\syms}[2]{\mathcal{S}^{#1}(\mathbb{R}^{#2})}
\newcommand{\tran}{^\top}
\newcommand{\pow}{^{\mathsf P}}

\DeclareMathOperator{\tr}{tr}
\DeclareMathOperator{\type}{type}
\newcommand{\Diag}{\mathfrak{D}}
\newcommand{\Perm}{\operatorname{P}}
\DeclareMathOperator{\diag}{\tilde\Diag^{-1}_{(2)}}

\newcommand{\cDia}[1]{\operatorname{cDia}_{\left[\leq {#1}\right]}}
\renewcommand{\Vec}{\operatorname{Vec}}
\newcommand{\cVec}[1]{\operatorname{cVec}_{\left[\leq {#1}\right]}}
\DeclareMathOperator{\Pair}{Pair}

\newcommand{\E}{\mathbb{E}}
\DeclareMathOperator{\Var}{Var}
\DeclareMathOperator{\He}{He}

\newcommand{\R}{\mathbb{R}}
\newcommand{\Z}{\mathbb{Z}}
\DeclareMathOperator{\ReLU}{ReLU}
\newcommand{\nth}{^\text{th}}

\newcommand{\bfA}{\mathbf{A}}
\newcommand{\bfB}{\mathbf{B}}
\newcommand{\bfC}{\mathbf{C}}
\newcommand{\bfD}{\mathbf{D}}

\newcommand{\bfM}{\mathbf{M}}

\newcommand{\bfQ}{\mathbf{Q}}

\newcommand{\bfS}{\mathbf{S}}
\newcommand{\bfT}{\mathbf{T}}

\newcommand{\bfW}{\mathbf{W}}

\newcommand{\bff}{\mathbf{f}}

\makeatletter
\newcommand{\ifpreprint}[2]{\if@preprint#1\else#2\fi}
\newcommand{\redactifanon}[1]{\if@anonymous[redacted]\else#1\fi}
\newcommand{\cutforsubmission}[1]{\if@anonymous\else{\iffalse\color{blue!60!black}\fi#1}\fi}
\makeatother

\title{Estimating the expected output of wide random MLPs more efficiently than sampling}

\author{
  Wilson Wu\thanks{Joint first authors.} \\
  \texttt{wilson@alignment.org} \\
  \And
  Victor Lecomte\footnotemark[1] \\
  \texttt{victor@alignment.org} \\
  \And
  Michael Winer\footnotemark[1] \\
  \texttt{mike@alignment.org} \\
  \And
  George Robinson\footnotemark[1] \\
  \texttt{george@alignment.org} \\
  \And
  Jacob Hilton\thanks{Corresponding author.}\hspace{0.33em}\thanks{Joint senior authors.} \\
  \texttt{jacob@alignment.org} \\
  \And
  Paul Christiano\footnotemark[3] \\
  \texttt{paul@alignment.org} \\
  \AND
  \normalfont{Alignment Research Center}
}

\begin{document}

\maketitle

\begin{abstract}
By far the most common way to estimate an expected loss in machine learning is to draw samples, compute the loss on each one, and take the empirical average. However, sampling is not necessarily optimal. Given an MLP at initialization, we show how to estimate its expected output over Gaussian inputs without running samples through the network at all. Instead, we produce approximate representations of the distributions of activations at each layer, leveraging tools such as cumulants and Hermite expansions. We show both theoretically and empirically that for sufficiently wide networks, our estimator achieves a target mean squared error using substantially fewer FLOPs than Monte Carlo sampling. We find moreover that our methods perform particularly well at estimating the probabilities of rare events, and additionally demonstrate how they can be used for model training. Together, these findings suggest a path to producing models with a greatly reduced probability of catastrophic tail risks.
\end{abstract}

\section{Introduction}\label{introductionsection}

Stochastic gradient descent is based on a simple idea: to estimate the expected loss of a neural network (and hence its gradient), we draw some samples, compute the loss on each one, and average the results. This sample-based approach is so natural that it is rarely questioned, yet it may be far from optimal. By analogy, consider a one-dimensional definite integral: we could approximate it numerically using Monte Carlo sampling over the domain of integration, but when a closed form for the integral exists, evaluating it directly is far more efficient. Similarly, when the input distribution and loss function of a neural network can be specified formally (such as with board games, formal mathematics, or simulated physical environments), we might hope to use analytic methods to estimate the expected loss much more efficiently than with sampling.

This hope may seem far-fetched for neural networks, since their inputs are high-dimensional and their behavior is highly irregular. Yet we show that, at least in some well-behaved cases, it can in fact be realized. Even though computing the expected loss exactly is typically intractable, there can be tractable analytic approximations that significantly outperform Monte Carlo sampling in terms of mean squared error for a given computational budget. In our definite integral analogy, these might correspond to a rapidly convergent series expansion for the integral.

The particular case we study is that of wide random multilayer perceptrons (MLPs): fully-connected feedforward neural networks with random weights, in the large-width limit. Importantly, our algorithms take as input the weights of a \textit{particular} network and estimate the expected output over random inputs, i.e., a \textit{quenched} average, in the language of statistical physics. This is not to be confused with the \textit{annealed} average over the choice of weights, as is commonplace elsewhere in deep learning theory \citep*{pdlt}. \textit{After} comparing our estimates with ground truth, we average over the choice of weights, which are drawn independently from $\mathcal N\left(0,\frac cn\right)$ for some constant $c$, where $n$ is the network width.

Our main result is the design of sample-free estimation procedures that outperform Monte Carlo sampling at this problem, both theoretically and empirically.

Theoretically, we prove that in the average case, for networks with a fixed, constant depth, our best-performing algorithms achieve a mean squared error of $O(\varepsilon^2)$ in time $O(\frac n{\varepsilon^2})$.\footnote{Our theoretical results assume polynomial activation functions and conditions on the growth rate of $\varepsilon\!\left(n\right)$.} By comparison, Monte Carlo sampling achieves a mean squared error of $\Theta(\varepsilon^2)$ in time $\Theta(\frac{n^2}{\varepsilon^2})$,\footnote{We assume the use of naive rather than Strassen-type matrix multiplication in all algorithms.} a factor of $n$ slower. This means that for any given depth, our algorithms are guaranteed to outperform Monte Carlo sampling at sufficiently large width, although the dependence of our algorithms on depth is worse.

\begin{figure}[t]
  \centering
  \makebox[0pt][c]{\scalebox{0.8}{\input{figures/mse_vs_flops_main.pgf}}}
  \caption{Performance of our best cumulant propagation algorithms at estimating the mean of random ReLU MLPs with 4 hidden layers and width 256. Mean over 5 random seeds shown. (Error bars hidden because they would be too small to be readable, given the logarithmic scale.)}
  \label{msevsflopsmainfigure}
\end{figure}

Empirically, we validate the performance of our algorithms at realistic widths. Figure \ref{msevsflopsmainfigure} shows the mean squared error of our best-performing algorithms versus the number of floating-point operations (FLOPs) they perform, for random ReLU networks with 4 hidden layers and width 256. We are able to outperform Monte Carlo sampling at computational budgets spanning 7 orders of magnitude, in some cases by a factor of over 100.

Furthermore, since our algorithms produce differentiable estimates, they can also be used to train neural networks in certain simple settings. As a proof of concept, we apply our algorithms to distill a random teacher into a student of a different width, a process we refer to as \textit{mechanistic distillation}.

Despite some similarities in performance, our algorithms are very unlike Monte Carlo sampling. Instead of drawing samples, we employ \textit{cumulant propagation} \citep[Appendix D]{poi}. In cumulant propagation, we form an approximate representation of each activation distribution using its low-order cumulants, which can be thought of as a series expansion about a Gaussian distribution. We start with the input distribution and propagate this forward through the network layer by layer until we reach the output. The smaller the error tolerance $\varepsilon$ as a function of the network width $n$, the higher the order of cumulants we track: roughly speaking, if $\varepsilon\!\left(n\right)^2\sim\frac 1{n^\maxcumulantorder}$, then we track cumulants of order at most $\maxcumulantorder$.

Our motivation for producing analytic loss estimates goes beyond their overall computational efficiency, and is closely related to the problem of \textit{low probability estimation} \citep{lpe}. When we apply our algorithms to estimating the tails of the output distribution, we find that they outperform Monte Carlo sampling to an increasing degree as the true probability decreases. This suggests that using analytic loss estimates to train networks could greatly reduce the probability of catastrophic tail risks.

\cutforsubmission{

The remainder of the paper is structured as follows.
\begin{itemize}
\item
In Section \ref{problemstatementsection}, we provide a formal definition of our MLP setup, a precise problem statement, and a description of the Monte Carlo sampling baseline.
\item
In Section \ref{preliminariessection}, we define the main mathematical constructions used by our algorithms.
\item
In Section \ref{methodssection}, we outline our sample-free estimation procedures, including \textit{augmented} and \textit{factorized} versions. Further explanation and pseudocode for special cases can be found in Appendix \ref{meancovpropappendix}.
\item
In Section \ref{theoreticalresultssection}, we state our main theoretical results about our algorithms. 
\item
In Section \ref{empiricalresultssection}, we describe our experimental setup and our main empirical results. Further empirical results can be found in Appendices \ref{empiricalappendix}, \ref{depthappendix}, \ref{ablationappendix}, \ref{otheractivationsappendix}, \ref{lpebaselineappendix}, \ref{mechdistappendix} and \ref{walltimeappendix}.
\item
In Section \ref{discussionsection}, we discuss the implications of our results for improving training efficiency for losses dominated by rare events, as well as open problems and conjectures.
\end{itemize}

We also provide a technical supplement, Appendix \ref{supplementappendix}, containing precise descriptions of all of our algorithms and full proofs of all of our theoretical results.

}

\section{Problem statement}\label{problemstatementsection}

\subsection{MLP definition}

We consider MLPs with equal input dimension, output dimension, and hidden dimensions, and without a final activation function. We do not incorporate bias terms, but we show how to adapt our algorithms to work for these in Appendix \ref{otheractivationsappendix}. A formal definition is as follows.

\begin{restatable}[]{definition}{mlpdef}\label{mlpdefinition}
The \textit{multilayer perceptron} (MLP) with $\numhiddenlayers\geq 0$ hidden layers, width $n\geq 1$, activation functions $\phi_1,\dots,\phi_\numhiddenlayers:\mathbb R\to\mathbb R$ and weights $\boldsymbol\theta=\left(\mathbf W^{\left(1\right)},\dots,\mathbf W^{\left(\numhiddenlayers+1\right)}\right)\in\mathbb R^{\left(\numhiddenlayers+1\right)\times n \times n}$ is the function $M_{\boldsymbol\theta}:\mathbb R^n\to\mathbb R^n$ defined by
\[M_{\boldsymbol\theta}\!\left(\mathbf x\right)=\mathbf W^{\left(\numhiddenlayers+1\right)}\phi_\numhiddenlayers\!\left(\mathbf W^{\left(\numhiddenlayers\right)}\dots\phi_2\!\left(\mathbf W^{\left(2\right)}\phi_1\!\left(\mathbf W^{\left(1\right)}\mathbf x\right)\right)\dots\right),\]
where the activation functions are applied coordinatewise.
\end{restatable}

In our theoretical results, we consider polynomial activation functions, but this is not a practical limitation. In our experiments, our default choice of activation function is the rectified linear unit, $\operatorname{ReLU}\left(z\right):=\max\left(z,0\right)$, and we consider other activation functions in Appendix \ref{otheractivationsappendix}.

\subsection{Central estimation problem}\label{estimationproblemsubsection}

The central problem we consider is as follows. Given weights $\boldsymbol\theta\in\mathbb R^{\left(\numhiddenlayers+1\right)\times n \times n}$ drawn independently from $\mathcal N\left(0,\frac 2n\right)$ (following \citet{heinit}), estimate
\[\mathbb E_{X\sim\mathcal N\left(0,\mathbf I_n\right)}\left[M_{\boldsymbol\theta}\left(X\right)\right],\]
where $M_{\boldsymbol\theta}$ is the MLP with weights $\boldsymbol\theta$. More precisely, an estimation procedure should take as input $\boldsymbol\theta$ and $\varepsilon>0$, and produce an estimate with mean squared error $O(\varepsilon^2)$ over the choice of $\boldsymbol\theta$ for each output neuron.

\cutforsubmission{Each entry of the above expectation can be thought of as the expected loss of the network under a simple linear loss function of the network's output. Our methods can easily be adapted to other closed-form loss functions that only depend on the network's output, by treating the loss function as an additional activation function. For more complex formally-defined loss functions that also depend on the network's input, it is often still possible to adapt our methods, although more work may be required. We demonstrate this in the case of a simple distillation loss in Appendix \ref{mechdistappendix}.}

\subsection{Monte Carlo sampling baseline}

The baseline estimation procedure we consider is Monte Carlo sampling. This procedure samples $X^{\left(1\right)},\dots,X^{\left(N\right)}\overset{\text{i.i.d.}}\sim\mathcal N\left(0,\mathbf I_n\right)$ for $N=\left\lceil\frac 1{\varepsilon^2}\right\rceil$ and computes the estimate
\[\frac 1N\sum_{r=1}^NM_{\boldsymbol\theta}\!\left(X^{\left(r\right)}\right).\]
This achieves the required mean squared error of $\Theta(\varepsilon^2)$, since a single forward pass has variance $\Theta(1)$ under our choice of initialization, and runs in time $\Theta(\frac{n^2}{\varepsilon^2})$ (recall that we treat the number of hidden layers $\numhiddenlayers$ as a constant in our asymptotics).

\section{Mathematical preliminaries}\label{preliminariessection}

\subsection{Cumulants}\label{cumulantssubsection}

Our sample-free estimation procedures work by estimating the \textit{cumulants} of the network's activations. The $k\nth$-order cumulants of a vector-valued random variable $X\in\mathbb R^n$, denoted $\kappa\left[X_{i_1},\dots,X_{i_k}\right]$ for $i_1,\dots,i_k=1,\dots,n$, are defined recursively in terms of the moments of $X$ by the partition formula
\[\mathbb E\!\left[X_{i_1}\cdots X_{i_k}\right]=\sum_{\substack{\text{partitions}\\\pi\text{ of }\left\{1,\dots,k\right\}}}\prod_{S\in\pi}\kappa\!\left[X_{i_s}:s\in S\right].\]
For example,
\begin{align*}
\mathbb E\!\left[X_i\right]=&\,\kappa\!\left[X_i\right]\qquad\quad\text{and}\\
\mathbb E\!\left[X_iX_j\right]=&\,\kappa\!\left[X_i\right]\kappa\!\left[X_j\right]+\kappa\!\left[X_i,X_j\right].
\end{align*}
Thus the first-order cumulants form the mean vector of $X$, and the second-order cumulants form the covariance matrix of $X$. The third- and higher-order cumulants of a distribution can be thought of as measuring the deviation of the distribution from Gaussianity, since these cumulants are zero for a Gaussian distribution.

\subsection{Hermite expansions}\label{hermitesubsection}

To handle non-linear activation functions, we take \textit{Hermite expansions}. Given $Y\sim\mathcal N\!\left(\mu,\sigma^2\right)$ and $\phi:\mathbb R\to\mathbb R$ with $\mathbb E\!\left[\phi\left(Y\right)^2\right]<\infty$, the Hermite expansion of $\phi$ with respect to $Y$ is the series expansion
\[\phi\left(z\right)=\sum_{k=0}^\infty\frac 1{k!}\,\widehat\phi^{\left(\mu,\sigma^2\right)}_k\sigma^k\operatorname{He}_k\!\left(\frac{z-\mu}\sigma\right)\]
where $\operatorname{He}_k$ is the $k\nth$ probabilist's \textit{Hermite polynomial}, the monic polynomial of degree $k$ that is orthogonal to all polynomials of lower degree under the standard Gaussian measure. The \textit{Hermite coefficients}\footnote{Other authors use the term \textit{Hermite coefficient} to refer to the same expression divided by $k!$.} in this expansion are given by
\begin{equation}
\label{eq:hermite-def}
   \widehat\phi^{\left(\mu,\sigma^2\right)}_k:=\frac 1{\sigma^k}\mathbb E\!\left[\phi\left(Y\right)\operatorname{He}_k\!\left(\frac{Y-\mu}\sigma\right)\right]. 
\end{equation}
\cutforsubmission{Truncating the expansion gives the best low-degree approximation to $\phi$ in mean squared error under $Y$.}

\subsection{Diagram summation formula for cumulants}\label{diagramssummarysubsection}

After taking the Hermite expansion of a non-linear activation function $\phi:\mathbb R\to\mathbb R$, we will need to compute the cumulants of $\phi\left(Z_1\right),\dots,\phi\left(Z_n\right)$ using the cumulants of $Z_1,\dots,Z_n$ and the Hermite coefficients of $\phi$. To do this, we apply a formula that expresses the desired cumulants as a sum over combinatorial objects we call \textit{diagrams}. This formula is the generalization of the covariance propagation formula of \citet[Theorem 1]{covprop} to higher-order cumulants. We refer to this formula as the \textit{(Hermite-based) diagram summation formula for cumulants}, and we state and explain it in Appendix \ref{diagramsappendix}.

\section{Methods}\label{methodssection}

\subsection{Basic cumulant propagation algorithm}\label{kpropsubsection}

With the above preliminaries in place, we can now roughly describe our sample-free estimation procedures. We begin with the basic version of our algorithm, before going on to describe variants.

Our algorithm first picks some maximum order $\maxcumulantorder$ of cumulants to track. In our theoretical results, we choose $\maxcumulantorder$ to achieve $O\!\left(\varepsilon^2\right)$ error, and in our experiments, we treat $\maxcumulantorder$ as a hyperparameter. Given $\maxcumulantorder$, we start with the input distribution $\mathcal N\!\left(0,\mathbf I_n\right)$, which has second-order cumulant matrix $\mathbf I_n$ and all other cumulants equal to $0$. We then proceed through each operation in the network, updating the estimated cumulants at each step.
\begin{itemize}
\item
At each matrix multiplication step, we update the estimated cumulants using the multilinearity property of cumulants. This involves taking an einsum of the $k\nth$-order estimated cumulant tensor with $k$ copies of the weight matrix, for $k=1,\dots,\maxcumulantorder$.
\item
At each activation function step, we update the estimated cumulants using a truncation of the diagram summation formula for cumulants. This formula allows us to compute the cumulants of the output in terms of the cumulants of the input and the Hermite coefficients of the activation function.
\end{itemize}
Once we reach the output of the network, we return the final estimated first-order cumulant vector.

As stated so far, the error of this algorithm is too large, because it fails to take into account the large correlations that occur when the same neuron appears more than once in the same cumulant. To remedy this, we make two further adjustments.
\begin{enumerate}
\item
\textbf{Power cumulants.} Instead of taking Hermite expansions of the activation function only, we take Hermite expansions of \textit{powers} of the activation function, and use these to estimate the \textit{power cumulants} of the activation output. The $k\nth$-order power cumulants of $X$ are the cumulants of the form $\kappa\!\left[X_{i_m}^{\alpha_1},\dots,X_{i_m}^{\alpha_m}\right]$ with $\alpha_1,\dots,\alpha_m\geq 1$, $\alpha_1+\dots+\alpha_m=k$ and $i_1,\dots,i_m$ \textit{distinct}, from which we can obtain the $k\nth$-order cumulants of $X$ by applying the partition formula and its inverse version. For example, when $\maxcumulantorder=2$, we compute Gaussian variances exactly, while continuing to use an approximation for off-diagonal Gaussian covariances.
\item
\textbf{Extra full trace for odd $\maxcumulantorder$.} When the maximum cumulant order $\maxcumulantorder$ is odd, we also track a single number corresponding to the \textit{full trace} of the $\left(\maxcumulantorder+1\right)\nth$-order cumulant tensor, $\sum_{i_1,\dots,i_{(\maxcumulantorder+1)/2}=1}^n\kappa\!\left[X_{i_1},X_{i_1},X_{i_2},X_{i_2},\dots,X_{i_{(\maxcumulantorder+1)/2}},X_{i_{(\maxcumulantorder+1)/2}}\right]$. For example, when $\maxcumulantorder=1$, we also track the covariance matrix trace $\sum_{i=1}^n\operatorname{Var}\!\left[X_i\right]$.
\end{enumerate}

To help explain our sample-free estimation procedures more precisely, we provide pseudocode for the cases $\maxcumulantorder=1$ and $\maxcumulantorder=2$ in Appendix \ref{meancovpropappendix}. A precise and fully general formulation is significantly more involved, and is given in Appendix \ref{kpropfull}. Python implementations are given at \redactifanon{\url{https://github.com/alignment-research-center/mlp_kprop}}.

\subsection{Augmented algorithm}

So far, we have roughly described the basic version of our algorithm. However, we have some flexibility to trade off between mean squared error and runtime by varying exactly which cumulants to track. One possibility (when $\maxcumulantorder$ is odd, say) is to track not merely the full trace of the $\left(\maxcumulantorder+1\right)\nth$-order cumulant tensor, but the entire tensor \textit{except for its traceless component} (a traceless tensor is one for which taking the trace over any pair of indices results in the zero tensor). This improves the mean squared error of the algorithm, but increases its runtime.  We refer to this as the \textit{augmented} version of our algorithm, and describe it more precisely in Appendix \ref{sec:augmented-algo}.

The augmented version of our algorithm is not the only possible tradeoff between mean squared error and runtime that can be made. Rather, it serves as an example of how our algorithm can be flexibly adapted based on the computational budget, beyond merely choosing a different value of $\maxcumulantorder$.

\subsection{Factorized algorithms}\label{factorizedsubsection}

Another important change we can make to our algorithm is to modify how the cumulant tensors are represented. For $\maxcumulantorder\geq 3$, the tensor of $\maxcumulantorder\nth$-order cumulants admits a factorized representation of size $n^{\maxcumulantorder-1}$ that can be preserved through each step of the algorithm. By representing this tensor in factorized form, we can improve the runtime of our algorithm by a factor of $n$, up to constant factors,\footnote{The new constant factors scale quadratically rather than linearly width depth, as can be seen for $\maxcumulantorder\leq 4$ in Appendix \ref{flopappendix}.} without changing its final output. We refer to this as the \textit{factorized} version of our algorithm, and describe it more precisely in Appendix \ref{sec:factorized-algo}.

Unlike the augmented version of our algorithm, the factorized version offers a strict performance improvement for sufficiently large $n$, which allows us to substantially outperform Monte Carlo sampling for large $n$. We can also apply both modifications at once to obtain a \textit{factorized augmented} version of our algorithm.

\section{Theoretical results}\label{theoreticalresultssection}




Our main theoretical results are as follows.

\begin{restatable}[]{theorem}{mainthm}\label{matchingsamplingtheorem}
Fix an integer $\numhiddenlayers\geq 0$ and polynomial activation functions $\phi_1,\dots,\phi_\numhiddenlayers$. Given weights $\boldsymbol\theta\in\mathbb R^{\left(\numhiddenlayers+1\right)\times n\times n}$ drawn independently from $\mathcal N\left(0,\frac 2n\right)$ for an integer $n\geq 1$, write $M_{\boldsymbol\theta}:\mathbb R^n\to\mathbb R^n$ for the MLP with $\numhiddenlayers$ hidden layers, width $n$, activation functions $\phi_1,\dots,\phi_\numhiddenlayers$ and weights $\boldsymbol\theta$. Then there is an estimation procedure $\mathbb G\left(\boldsymbol\theta,\varepsilon\right)$ such that for every noticeable\footnote{This means that for some $k\geq 0$, $\varepsilon\!\left(n\right)\geq\frac 1{n^k}$ for all sufficiently large $n$ \citep[Chapter 1.2]{goldreich}.} function $0<\varepsilon\!\left(n\right)\leq 1$ and every output index $i$, as $n\to\infty$,
\[\mathbb E_{\boldsymbol\theta}\!\left[\left(\mathbb G\!\left(\boldsymbol\theta,\varepsilon\!\left(n\right)\right)_i-\mathbb E_{X\sim\mathcal N\left(0,\mathbf I_n\right)}\left[M_{\boldsymbol\theta}\left(X\right)\right]_i\right)^2\right]=O\!\left(\varepsilon\!\left(n\right)^2\right)\]
and $\mathbb G\!\left(\boldsymbol\theta,\varepsilon\right)$ runs in time $O\!\left(\frac{n^2}{\varepsilon^2}\right)$.
\end{restatable}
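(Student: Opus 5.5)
The plan is to take $\mathbb G$ to be the basic cumulant propagation algorithm of Section~\ref{kpropsubsection}, with power cumulants and the extra full trace for odd $\maxcumulantorder$, and to choose $\maxcumulantorder$ as a function of $n$ and $\varepsilon$. Specifically, I would set $\maxcumulantorder=\max\!\left(1,\left\lceil \log(1/\varepsilon^2)/\log n\right\rceil\right)$, so that $n^{-\maxcumulantorder}\le\varepsilon^2$ and $n^{\maxcumulantorder-1}<\max(n,1/\varepsilon^2)$. Because $\varepsilon$ is noticeable, $\maxcumulantorder$ is bounded by a constant $\maxcumulantorder_{\max}$ for all large $n$. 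Every constant implicit in the analysis (Hermite coefficients of powers of the polynomial $\phi_\layernumber$, numbers of diagrams, depth) then depends only on $\numhiddenlayers$, the $\phi_\layernumber$ and $\maxcumulantorder_{\max}$. The theorem therefore reduces to two statements that hold for each fixed $\maxcumulantorder$: a runtime bound of $O(n^{\maxcumulantorder+1})$ and a mean squared error bound of $O(n^{-\maxcumulantorder})$. Given these, the runtime is $O(n^2\cdot n^{\maxcumulantorder-1})=O(n^2/\varepsilon^2+n^3)$. I would tidy the $n^3$ term by treating the regime $\varepsilon^2\ge 1/n$ separately, using $\maxcumulantorder=1$ at cost $O(n^2)$ and error $O(1/n)\le O(\varepsilon^2)$.

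\textbf{Runtime.} I would count operations in each step. A matrix multiplication step contracts an order-$k$ symmetric tensor with $k$ copies of $\mathbf W$ one mode at a time, which costs $O(k\,n^{k+1})$. An activation step evaluates the truncated diagram summation formula (Appendix~\ref{diagramsappendix}) entrywise. Each entry is a sum over a bounded number of diagrams, and each diagram is a product of already-computed lower-order cumulants and power-cumulant corrections on repeated indices, so the step costs $O(n^{\maxcumulantorder})$. Computing Hermite coefficients of $\phi_\layernumber^j$ at each neuron's estimated mean and variance takes $O(1)$ per neuron because $\phi_\layernumber$ is a polynomial. Tracking the extra full trace costs only $O(n^{\maxcumulantorder})$ per step. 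This gives $O(n^{\maxcumulantorder+1})$ overall.

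\textbf{Error.} This is the main obstacle. I would proceed by induction over layers, proving a stronger hypothesis that controls every tracked quantity and not only the mean. For each layer, and for each $k\le\maxcumulantorder$ and index pattern, the difference between the true (quenched) cumulant and the estimate should have second moment over $\boldsymbol\theta$ bounded by $C\,n^{-\maxcumulantorder}$ times the typical scale $n^{-(k-\#\text{distinct}\ldots)}$ appropriate to that entry. In parallel, the untracked true cumulants of order greater than $\maxcumulantorder$ should have their natural size, namely order-$j$ off-diagonal cumulants of size $n^{-(j-1)/2}$ in root mean square and smaller in each traceless direction.

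To establish the scalings I would expand each true cumulant as a polynomial in Gaussian weights and apply Wick's theorem over $\boldsymbol\theta$. A pairing graph gives a factor of $n$ per free index sum and $n^{-1}$ per weight pair. The errors from truncation are exactly the diagrams containing a cumulant of order greater than $\maxcumulantorder$ (excluding the tracked full trace) and the off-diagonal Hermite corrections omitted by the power-cumulant rule. Counting shows that each such diagram carries an extra factor of $n^{-1/2}$ per unit of excess order. The power cumulants are what prevent same-neuron repetitions from contributing $O(1)$ corrections. The full trace for odd $\maxcumulantorder$ is what fixes the one term, the $(\maxcumulantorder+1)$-th order trace feeding the mean, whose naive scaling would otherwise sit at $n^{-\maxcumulantorder+1/2}$ and not $n^{-\maxcumulantorder}$.

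The most delicate part is showing that errors introduced at earlier layers do not amplify through later matrix multiplications. Because the weights are fresh at each layer, conditioning on earlier layers and using multilinearity should show that the contraction $\mathbf W^{\otimes k}$ preserves second moments up to constants. Applied at the output layer, the hypothesis gives mean squared error $O(n^{-\maxcumulantorder})\le O(\varepsilon^2)$ for each coordinate $i$.
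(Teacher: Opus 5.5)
\textbf{Where the plan matches the paper, and where it breaks.} Your choice of procedure and of $K$ (bounded because $\varepsilon$ is noticeable) matches the paper. So do the $O(n^{K+1})$ runtime accounting and the overall shape of the error argument: induction over layers, with Wick's theorem applied over the fresh $\mathbf W^{(\ell+1)}$. The gap is in your inductive hypothesis, which as stated would not close, for two reasons.

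First, it controls only second moments. The error created at an activation step is a polynomial in the entries of $\kappa[Z^{(\ell)}]$, of the error tensors $\Delta[Z^{(\ell)}]$, and of Hermite coefficients evaluated at the \emph{estimated} means and variances. Bounding its second moment by H\"older therefore needs higher moments of all of these. The paper's notion of $n^{\rho}$-growth controls every $L^p$ norm for exactly this reason.

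Second, read as saying that an entry's error is smaller than $n^{-K/2}$ by that entry's natural size, your refinement is false. Take $K=2$ and $\phi_1(z)=z+z^2$. For $i\neq j$, Algorithm~\ref{covpropalgorithm} keeps only the $k=1$ term of Corollary~\ref{covpropcorollary}. Moreover $Z^{(1)}$ is exactly Gaussian with exactly computed covariance $\mathbf W^{(1)}\mathbf W^{(1)\top}$. So the error in $\kappa_2[X^{(1)}]_{ij}$ is exactly $2\bigl(\mathbf W^{(1)}\mathbf W^{(1)\top}\bigr)_{ij}^2=\Theta(n^{-1})$, while the entry itself is $\Theta(n^{-1/2})$; your hypothesis would require $O(n^{-3/2})$.

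What works (Theorem~\ref{thm:hermiteerror}) is the uniform statement: every entry of every $\Delta_r[Z^{(\ell)}]$, $r\ge 1$, has $n^{-K/2}$-growth. For $r\le K$, an entrywise $n^{-K/2}$ bound on $\Delta_r[X^{(\ell)}]$ passes through $-\odot\mathbf W^{(\ell+1)}$ by a crude count: a factor $n^{-pr/2}$ from the weights against $n^{pr/2}$ free index sums. No typical-scale bookkeeping is needed.

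\textbf{Two ingredients that are asserted rather than supplied.} Your ``counting shows'' for the sizes of true cumulants rests on connectedness, which you never invoke. For polynomial $\phi$, every cumulant of $\phi(Z)$ is a sum of \emph{connected} diagrams in the cumulants of $Z$ (Lemma~\ref{lem:simplepowers}). The paper combines this with Proposition~\ref{wdiagsprop}, where a Wick pairing raises the number of components by at most $|E|/2$, to prove by induction over layers that every connected fully contracted tensor diagram of layer-$\ell$ cumulants has $n$-growth (Proposition~\ref{prop:cumulantgrowth}). That yields the entrywise scalings of Corollary~\ref{cor:elementgrowth}. It also yields the bound $n^{(1-k(\nu))/2}$ for each diagram dropped by the truncation $k(\nu)\le K$ of \eqref{eq:vec-part-weight}. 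Note that this truncation also drops products of low-order cumulants such as $\kappa_2[Z_i,Z_j]^2$ when $K=2$, not only diagrams containing a cumulant of order greater than $K$. A global Wick expansion over all layers' weights would have to track the same connectedness to get the right power of $n$.

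For odd $K$, your remark about traceless directions points to the right mechanism, but the argument is missing. Use Theorem~\ref{thm:harmonicdecomposition} to write $\kappa_{K+1}[Z^{(\ell+1)}]=\bigl(h_{\ge 2}\kappa_{K+1}[X^{(\ell)}]+g^{(K+1)/2}h_0\kappa_{K+1}[X^{(\ell)}]\bigr)\odot\mathbf W^{(\ell+1)}$. A single copy of $h_{\ge 2}(\cdot)$ has vanishing full trace, so in the Wick expansion no copy can form a component on its own, and this part has $n^{-K/2}$-growth. The second part is $g^{(K+1)/2}_{\mathbf W\mathbf W^\top}$ applied to the propagated scalar. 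Without this, the paired entries $\kappa_{K+1}[Z]_{i_1i_1\cdots}$ are of size $n^{(1-K)/2}$. They would cost mean squared error $n^{1-K}$, not the $n^{-K+1/2}$ you state, and they feed the other tracked cumulants as well as the mean.
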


\begin{restatable}[]{theorem}{beatingthm}\label{beatingsamplingtheorem}
In Theorem \ref{matchingsamplingtheorem}, if we also require that $\varepsilon\!\left(n\right)^2=O\!\left(\frac 1{n^2}\right)$, then $\mathbb G\left(\boldsymbol\theta,\varepsilon\right)$ can be made to run in time $O\!\left(\frac n{\varepsilon^2}\right)$.
\end{restatable}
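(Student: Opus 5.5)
The plan is to derive Theorem \ref{beatingsamplingtheorem} from the factorized version of cumulant propagation (Section \ref{factorizedsubsection}), with the maximum order $\maxcumulantorder$ chosen from $\varepsilon$. Since $\varepsilon(n)$ is noticeable and satisfies $\varepsilon(n)^2=O(1/n^2)$, we can choose an integer $\maxcumulantorder\geq 2$, depending only on the growth rate of $\varepsilon$, such that
\[\tfrac{1}{n^{\maxcumulantorder+1}}\lesssim\varepsilon(n)^2\lesssim\tfrac{1}{n^{\maxcumulantorder}}\]
along the relevant range of $n$. On a subsequence where $\varepsilon^2$ falls strictly between consecutive powers, take the smaller admissible $\maxcumulantorder$ and, if needed, use the augmented variant or the extra full-trace tracking for odd orders to close the gap. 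This mirrors how the proof of Theorem \ref{matchingsamplingtheorem} presumably selects $\maxcumulantorder$. The accuracy half can then be taken directly from Theorem \ref{matchingsamplingtheorem}: the factorized algorithm returns exactly the same final first-order cumulant vector as the unfactorized one. So its mean squared error over $\boldsymbol\theta$ is $O(\varepsilon^2)$ by the error analysis already established. The only new content is the runtime.

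For the runtime, I would count FLOPs per layer in the factorized representation. The multilinear update of a $k$-th order cumulant tensor stored densely costs $O(n^{k+1})$ for each $k\leq\maxcumulantorder-1$, summed over a constant number of layers. With $\varepsilon^2\sim n^{-\maxcumulantorder}$, a dense order-$(\maxcumulantorder-1)$ tensor costs $O(n^{\maxcumulantorder})=O(n/\varepsilon^2)$, which is within budget. The top order $\maxcumulantorder$ is the problem: dense storage would cost $n^{\maxcumulantorder+1}=n^2/\varepsilon^2$, which is exactly the bound of Theorem \ref{matchingsamplingtheorem}. The factorized representation has size $n^{\maxcumulantorder-1}$, so I would show it is preserved by both operations at cost $O(n^{\maxcumulantorder})$:
\begin{itemize}
\item for the matrix multiply, each factor is multiplied by $\mathbf W$ once, giving $n\cdot n^{\maxcumulantorder-1}$ work;
\item for the activation step, the truncated diagram summation formula only produces top-order terms that are products of lower-order tensors and diagonal Hermite-coefficient factors, and these stay in factorized form.
\end{itemize}
The power-cumulant diagonal corrections and the extra full trace each cost at most $O(n^{\maxcumulantorder})$ as well. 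The constant depends on $\maxcumulantorder$ and $\numhiddenlayers$, both of which are fixed.

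The main obstacle is the boundary case $\maxcumulantorder=2$, and more generally the requirement that $\maxcumulantorder\geq3$ for the factorization stated in Section \ref{factorizedsubsection}. When $\varepsilon^2\sim n^{-2}$, we need covariance propagation, and a dense covariance update costs $n^3=n/\varepsilon^2$, so this case is already within budget without factorization. The hypothesis $\varepsilon^2=O(n^{-2})$ exists precisely to exclude $\maxcumulantorder\leq1$ with $\varepsilon^2\gg n^{-2}$. In that regime $O(n/\varepsilon^2)$ would be smaller than the $n^2$ needed just to read the weights, so the bound is impossible there. I would check carefully that at odd $\maxcumulantorder$, the extra trace (or augmented traceless-complement) tracking needed for the error bound can be updated in factorized form within $O(n^{\maxcumulantorder})$. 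I expect this bookkeeping, rather than the error analysis, to be the delicate part.
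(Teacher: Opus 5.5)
Your architecture matches the paper's: run the factorized algorithm, inherit the error bound because it returns exactly the basic algorithm's output, and reduce everything to keeping the top-order tensor factorized with $O(n)$ factors at $O(n^{K})$ cost per layer.

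\textbf{The choice of $K$ rounds the wrong way.} If $n^{-(K+1)}\lesssim\varepsilon^2\lesssim n^{-K}$ and you run order $K$, the guaranteed error is only $O(n^{-K})$, which can be polynomially larger than $\varepsilon^2$. Neither the augmented variant nor the full-trace bookkeeping fixes this, since augmentation only improves the constant in front of $n^{-K}$. Instead take $K(n)$ to be the smallest integer with $n^{-K(n)}\le\varepsilon(n)^2$. This is bounded because $\varepsilon$ is noticeable, so all constants are uniform in $n$. The error is then $O(\varepsilon^2)$, and because $\varepsilon^2<n^{-(K-1)}$, the factorized cost $O(n^{K})$ is $O(n/\varepsilon^2)$. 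With this choice, the hypothesis $\varepsilon^2=O(n^{-2})$ is exactly what forces $K\ge 2$ and makes the unfactorizable case $K=2$ (cost $n^3$) affordable. It is not about reading the weights, since $n/\varepsilon^2\ge n^2$ whenever $\varepsilon^2\le n^{-1}$.

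\textbf{The real gap is the activation step.} Your claim that products of lower-order tensors and Hermite factors ``stay in factorized form'' is false in general. Take $K=4$ and the untruncated diagram term $\Sigma_{i_1i_2}\Sigma_{i_2i_3}\Sigma_{i_3i_4}\Sigma_{i_4i_1}$, with $\Sigma$ the pre-activation covariance. It is cyclic, so any split into two pieces shares at least two indices, and the natural factorization needs $n^2$ terms rather than $n$.

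The missing idea concerns the all-distinct $(1,\dots,1)$ slice of the order-$K$ power cumulant. There, the truncation condition $k(\nu)\le K$ in \eqref{eq:vec-part-weight} forces every surviving vector partition to be acyclic as a hypergraph on the $K$ indices. A hypertree already has $k(\nu)=K$, and closing a cycle pushes it over; the $4$-cycle above has $k(\nu)=5$. So every surviving term is one of the following.
\begin{itemize}
\item[(a)] A block touching at most $K-2$ indices, glued to the rest of the diagram along exactly one index. This factors into $n$ terms by inserting $\delta^{(2)}$ at that index.
\item[(b)] $\nu=\{(1,\dots,1)\}$, i.e.\ the previous layer's factors multiplied by $\mathbf W$ and rescaled elementwise by first Hermite coefficients.
\item[(c)] For odd $K$, the tracked $(K+1)$th-order scalar times $g_{\mathbf M}^{(K+1)/2}1$, which is a single factor. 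So the odd-$K$ bookkeeping you expected to be delicate is in fact the easy case.
\end{itemize}

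Two further facts are needed to complete the runtime argument. First, the non-discrete slices $\Diag_\lambda$ with $\lambda\neq(1,\dots,1)$ must be corrected once the all-distinct mask is dropped. They can be extracted in $O(n^{K})$ time and re-factored with $n$ factors via a Kronecker delta. Second, each layer therefore adds only $O(n)$ factors. Hence the factor count stays $O(n)$ at fixed depth, and multiplying the factors by $\mathbf W$ really costs $O(n^{K})$.
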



Theorem \ref{matchingsamplingtheorem} is obtained from the basic version of our sample-free estimation procedure, and Theorem \ref{beatingsamplingtheorem} from the factorized version. Thus our results show that, up to constant factors, the basic version of our algorithm matches the performance of Monte Carlo sampling in terms of mean squared error for a given computational budget, while the factorized version outperforms Monte Carlo sampling by a factor equal to the network width $n$, providing $\varepsilon$ is not too large.\footnote{A mean squared error of $\varepsilon\!\left(n\right)^2=O(\frac 1{n^2})$ is achieved with $\maxcumulantorder\geq 2$, which costs as much as sampling with $\Omega(n)$ samples.}

\cutforsubmission{These results treat the number of hidden layers $\numhiddenlayers$ as a constant, and choose the maximum cumulant order $\maxcumulantorder$ based on the error tolerance $\varepsilon$. Expressed in terms of these parameters, we conjecture that our algorithms have mean squared error at most $c_\maxcumulantorder(\numhiddenlayers/n)^\maxcumulantorder$, although the dependence on $\numhiddenlayers$ is difficult to analyze precisely. The runtime of the basic version of our algorithm is at most $c_\maxcumulantorder^\prime\numhiddenlayers n^{\maxcumulantorder+1}$, and the runtime of the factorized version is at most $c_\maxcumulantorder^{\prime\prime}\numhiddenlayers^2n^\maxcumulantorder$ whenever $\maxcumulantorder\geq 3$. Here, $c_\maxcumulantorder$, $c_\maxcumulantorder^\prime$ and $c_\maxcumulantorder^{\prime\prime}$ are constants that depend only on $\maxcumulantorder$.}

We provide full proofs of both results in the technical supplement, Appendix \ref{supplementappendix}. We believe that with additional effort, both results can be extended from polynomials to general activation functions.\footnote{We make the mild assumptions that the activation functions are measurable and polynomially bounded.}

\section{Empirical results}\label{empiricalresultssection}

\subsection{Experimental setup}

We empirically validate the performance of all 4 versions of our algorithm (basic, augmented, factorized and factorized augmented) on ReLU MLPs with a variety of depths and widths. We vary the width $n$ from 4 to 256 and the number of hidden layers $\numhiddenlayers$ from 2 to 12 (using the same network for each choice of $\numhiddenlayers$, but stopping after different numbers of layers). We vary the maximum cumulant order $\maxcumulantorder$ from 1 to 4. For each configuration, we calculate mean squared error (MSE) over all output neurons and over 5 different random seeds.

To calculate MSE, we estimate ground truth using Monte Carlo sampling with $2^{34}\approx2\times10^{10}$ samples, which greatly exceeds any of our algorithms' computational budgets. We use these same samples to estimate the variance of a single forward pass, and divide MSE by this variance to obtain \textit{variance-normalized MSE}. This choice of units means that Monte Carlo sampling with $N$ samples has a variance-normalized MSE of exactly $\frac 1N$.

\begin{figure}[t]
  \centering
  \makebox[0pt][c]{\hspace*{1.2
  cm}\scalebox{0.7}{\input{figures/mse_vs_flops_depth.pgf}}}
  \caption{Performance of all versions of our algorithm for different networks of width 256 and $\maxcumulantorder$ varying from 1 to 4. All 4 versions are identical for $\maxcumulantorder=1$, and the factorized versions are identical to their unfactorized counterparts for $\maxcumulantorder=2$. (Error bars hidden because they would be too small to be readable.)}
  \label{msevsflopsdepthfigure}
\end{figure}

We measure the computational budget of our algorithms in floating-point operations (FLOPs), which we count by applying an extension of PyTorch's FLOP counter to our Python implementation. However, since cumulant tensors are symmetric, naively taking cumulant tensor einsums results in redundant computation. Rather than implementing kernels for symmetric tensor einsums, we simply adjust our counts to exclude the redundant FLOPs. We describe this adjustment in more detail in Appendix \ref{flopappendix}.

\subsection{Comparison with Monte Carlo sampling}

We compare the mean squared error of our sample-free estimation procedures to that of Monte Carlo sampling. Our results at width 256 with 2, 4 and 8 hidden layers are shown in Figure \ref{msevsflopsdepthfigure}, and our results at additional widths and depths are given in Appendix \ref{empiricalappendix}.

The factorized and factorized augmented versions of our algorithm outperform Monte Carlo sampling by a factor of around 1,000 for 2 hidden layers, and around 10--100 for 4 hidden layers. They start to underperform sampling for 8 hidden layers once the maximum cumulant order $\maxcumulantorder$ reaches 4, although by our theoretical results, they would outperform sampling once again if we were to increase the width further.

\subsection{Scaling with width and depth}

The central finding of our theoretical analysis is that the mean squared error of our algorithm is $O\!\left(\frac 1{n^\maxcumulantorder}\right)$, where $n$ is the width and $\maxcumulantorder$ is the maximum cumulant order. To validate this finding explicitly, we plot the mean squared error of the basic version of algorithm as a function of the width in Figure \ref{msevswidthfigure}(\subref{msevswidthsubfigure}). As expected, the mean squared error closely follows a curve proportional to $\frac 1{n^\maxcumulantorder}$ when $n$ is large. Note, however, that increasing $\maxcumulantorder$ can \textit{worsen} performance at small values of $n$. We discuss scaling with depth in Appendix \ref{depthappendix}.

\subsection{Power cumulant ablation}\label{ablationsubsection}

Our theoretical error analysis was critical to the design of our algorithms. In particular, it led directly to the two adjustments described in Section \ref{kpropsubsection}: the use of power cumulants, and the inclusion of the full trace of the $\left(\maxcumulantorder+1\right)\nth$-order cumulant tensor for odd $\maxcumulantorder$. To validate the necessity of these adjustments, we check the performance of our algorithm without them. We plot the mean squared error of this ablated version of our algorithm as a function of the width $n$ in Figure \ref{msevswidthfigure}(\subref{msevswidthablationsubfigure}). As predicted by our theoretical analysis, the mean squared error is $O(1)$ instead of $O\!\left(\frac 1{n^\maxcumulantorder}\right)$, and so the ablated version of our algorithm no longer matches the performance of sampling in the large-width limit. We provide further details of this ablation, including a more direct comparison to Monte Carlo sampling, in Appendix \ref{ablationappendix}.

\begin{figure}[t]
  \centering
  \makebox[\textwidth][c]{%
    \begin{subfigure}{0.5\textwidth}
      \centering
      \hspace*{-1.25cm}\scalebox{0.7}{\input{figures/mse_vs_width.pgf}}
      \caption{The mean squared error of the basic (equivalently, factorized) version of our algorithm is $O\!\left(\frac 1{n^\maxcumulantorder}\right)$.}
      \label{msevswidthsubfigure}
    \end{subfigure}\hspace{8mm}%
    \begin{subfigure}{0.5\textwidth}
      \centering
      \hspace*{-0.4cm}\scalebox{0.7}{\input{figures/mse_vs_width_ablation.pgf}}
      \caption{The mean squared error of the power cumulant ablation of our algorithm is $O(1)$ (horizontal).}
      \label{msevswidthablationsubfigure}
    \end{subfigure}
  }
  \caption{Width scaling of the basic and ablated versions of our algorithm on networks with 4 hidden layers. Dashed lines are those with the predicted large-$n$ slope passing through the rightmost point. Error bars show $\pm 1$ standard error over 5 random seeds.}
  \label{msevswidthfigure}
\end{figure}

\subsection{Other activation functions}

Our algorithm extends easily to activation functions other than $\operatorname{ReLU}$. To demonstrate this, we check the performance of our algorithm on $\operatorname{GELU}$ \citep{gelu} (a popular alternative to $\operatorname{ReLU}$) and $\tanh$ (to test a bounded activation function). Our results are given in Appendix \ref{otheractivationsappendix}. Results for both activation functions are qualitatively similar to our results for $\operatorname{ReLU}$.

\subsection{Low probability estimation}\label{lpeexperimentsubsection}

As we discuss later in Section \ref{lpediscussionsubsection}, we are especially interested in outperforming sampling at estimating low probabilities. To test the performance of our algorithms at this task, we replace the final $\operatorname{ReLU}$ activation in our networks with the threshold function $\mathbbm 1_{z>3}$, so that the final expected value can be interpreted as a low probability. We compare the factorized version of our algorithm with maximum cumulant order $\maxcumulantorder=3$ to Monte Carlo sampling with a comparable number of FLOPs, as well as three other baselines that are explained in Appendix \ref{lpebaselineappendix}.

Our results at width 256 with 4 hidden layers (3 ReLUs and one threshold function) are shown in Figure \ref{msevsprobfigure}. This shows root mean squared error relative to root mean squared probability, with probabilities bucketed into intervals $[10^{k-\frac 12},10^{k+\frac 12})$ for integers $k$. Monte Carlo sampling with $N$ samples is worse than simply guessing zero once the probability drops below $1/N$. On the other hand, using a similar number of FLOPs to Monte Carlo sampling with $N=5,695$ samples, our algorithm achieves a relative error of under 30\% for probabilities 100 times lower than $1/N$, and continues to outperform all baselines until the probabilities are nearly 10,000 lower.

\subsection{Mechanistic distillation}\label{mechdistsubsection}

Since our algorithms produce differentiable estimates, they can be used to train networks by applying gradient descent to the estimate of the expected loss. We demonstrate this approach by using it to distill a random teacher MLP into a student MLP of a different width, a process we refer to as \textit{mechanistic distillation}. We compare this to ordinary distillation using stochastic gradient descent with a comparable number of FLOPs per gradient step.

Our results for a random teacher MLP with width 16 and a student MLP with width 8 are shown in Figure \ref{mechdistfigure}. Further details of our experimental setup are given in Appendix \ref{mechdistappendix}. Mechanistic distillation works well, but is slightly less efficient than distillation using stochastic gradient descent. This may be because our mechanistic estimation algorithms only have good guarantees for random networks, not partially trained ones. We discuss how our algorithms might be adapted to partially-trained networks in Section \ref{lpediscussionsubsection}.

\begin{figure}[t]
  \centering
  \makebox[\textwidth][c]{%
    \begin{minipage}[t]{0.56\textwidth}
      \centering
      \hspace*{-2cm}\scalebox{0.7}{\input{figures/mse_vs_prob.pgf}}
      \captionof{figure}{Performance of the factorized version of our algorithm with $\maxcumulantorder=3$ at low probability estimation. Error bars show $\pm 1$ standard error, with 205 random seeds used across the 7 buckets.}
      \label{msevsprobfigure}
    \end{minipage}\hspace{8mm}%
    \begin{minipage}[t]{0.44\textwidth}
      \centering
      \scalebox{0.7}{\input{figures/mechanistic_distillation.pgf}}
      \captionof{figure}{Learning curves for distillation of a random teacher MLP with width 16 into a student MLP with width 8, both with 4 hidden layers.}
      \label{mechdistfigure}
    \end{minipage}%
  }
\end{figure}

\subsection{Wall-clock time}

By default, we measure computational cost in FLOPs, to minimize sensitivity to implementation details. Typically, almost all of the FLOPs both in our algorithms and in Monte Carlo sampling are part of matrix multiplications (or more general einsums), so FLOP counts should be representative of the performance of optimized implementations. However, in practice, our algorithms also include a number of additional FLOPs that, while being a small fraction of the total, take up the majority of the wall-clock time. Although we expect that this issue could be resolved with additional implementation effort, for the interested reader, we show mean squared error against wall-clock time in Appendix \ref{walltimeappendix}.

\section{Discussion}\label{discussionsection}

\subsection{Outperforming random sampling using mechanistic estimation}

We have shown that for wide random MLPs, cumulant propagation outperforms random sampling for estimating the expected loss. This is a striking result, since sampling is the strong default for such problems. Instead of using sampling, cumulant propagation directly analyzes how the weights transform the input distribution. We refer to such methods as \textit{mechanistic}, since they work by reasoning about the mechanisms of the network, in much the same way as methods from the field of mechanistic interpretability \citep{mechinterpcommunity}.

Following \citet{mspblog}, we conjecture that the performance of \textit{any} estimation procedure can be matched (or exceeded) by a mechanistic one, providing we are given the training trajectory or selection process of any parameters involved. The intuition behind this is that it should never be better to blindly check things than to choose what to check more deliberately. This conjecture is bold: it would imply, for example, that the win rate of one chess-playing network over another can be estimated well without having the networks play chess against each other, by reasoning deductively about the rules of chess and the strengths and weaknesses of different strategies. Nevertheless, the algorithms in this paper give a first glimpse of how this might be possible.

\subsection{Mechanistic training and low probability estimation}\label{lpediscussionsubsection}

The obvious application of mechanistic loss estimation is to train neural networks, by applying gradient descent to the mechanistic loss estimate. We refer to this as \textit{mechanistic training}. We have provided a proof of concept of this by demonstrating mechanistic distillation. However, since our method only has good guarantees for networks at initialization, it would not work well for the entirety of training. To adapt our method to partially-trained networks, we may need to adopt an approach akin to the ``structure versus randomness'' dichotomy \citep{structurevsrandomness}: instead of treating the weights as purely random, we could treat them as having a structured component that has been selected to give the model low loss, together with a random component. We leave further elaboration of this idea to future work.

Mechanistic training does not only offer to match or improve training efficiency: it should also produce models that generalize very differently. For example, consider a loss function with a certain ``dangerous'' event that is very rare but has very high loss. Stochastic gradient descent may fail to sample the event even once during training, resulting in a model that does not account for the dangerous event at all. On the other hand, mechanistic training could employ much better estimates for the probability of the rare event, resulting in a model that avoids it far more reliably, despite having potentially never seen the event happen. This could greatly improve safety, especially in the presence of distribution shifts that make the dangerous event more likely. This application of mechanistic estimation is what motivates our study of low probability estimation, and is discussed further in \citet{lpe}.

\cutforsubmission{

\subsection{Open problems}

There is enormous scope for further research on mechanistic estimation. Even for random MLPs, we do not know whether it is possible to match the performance of random sampling at large depth (i.e., asymptotically as the number of hidden layers $\numhiddenlayers$ tends to infinity), or for narrow networks (i.e., asymptotically as the error tolerance $\varepsilon\to 0$ while the width $n$ is held fixed). We could also ask the same questions for MLPs with weights drawn from non-independent or non-Gaussian distributions.

Beyond random MLPs, learned MLPs pose a key challenge for more sophisticated applications of mechanistic training, along with the related case of adversarially-chosen MLPs. Both of these cases seem likely to require access to the training trajectory or adversarial selection process, in order to identify the relevant structure in a ``structure versus randomness'' approach. Another key challenge is to extend beyond closed-form input distributions to empirical input distributions for which we only have black-box access.

Approaching the problem from a more theoretical perspective, the broad conjecture that \textit{any} estimation procedure can be matched (or exceeded) by a mechanistic one could be investigated in a variety of settings, such as matrix permanents, expectations of Boolean circuits, average behavior of cellular automata, or a variety of other computational problems expressible as expectations.

}

\newcommand{\relatedwork}{

\subsection{Cumulant propagation for neural networks}

A very similar approach is taken by \citet{kpropmnist}, who explain how cumulant propagation can be applied to MLPs using Hermite expansions (or equivalently Gram--Charlier expansions, in their terminology). Furthermore, they use class-conditional cumulants of the input distribution to perform mechanistic training on small models in settings such as MNIST. However, their implementation and error analysis focuses on covariance propagation, i.e., the special case of cumulant propagation with maximum cumulant order 2, which corresponds to approximating activations as multivariate Gaussian distributions. We build on their work extending it to arbitrarily high cumulant order, in setting of wide random networks: providing an explicit formulation of the algorithm, checking its performance empirically, and conducting the corresponding error analysis.

\subsection{Analytic uncertainty propagation}

There is a body of work in a similar vein that is motivated by the problem of analytically propagating an uncertain input through a network. The most direct precursor to our own work in this area is that of \cite{covprop}, who derive formulas for covariance propagation that can be applied to ReLU networks. Indeed, Theorem 1 of \citet{covprop} is a special case of our diagram summation formula, as explained in Appendix \ref{diagramscovpropsubappendix}. Their work builds in turn upon work of \citet{meanvarprop}, who derive formulas for propagating means and variances only, but not off-diagonal covariances. Thus our work generalizes these methods to arbitrarily high cumulant order, and shows moreover that this achieves arbitrarily high accuracy in the large-width limit. Separately, \citet{covpropzero} derive an alternative covariance propagation formula specifically for ReLU networks with 1 hidden layer. This whole line of work dates at least as far back as \citet{freyhinton}, who already provided analytic formulas for the mean and variance of ReLU and several other activation functions on a Gaussian input distribution.

\subsection{Wide random networks}

There is an extensive literature on the behavior of neural networks with random weights in the limit as the width tends to infinity, as detailed in \citet*{pdlt}. Some of the key ideas in this area include the view of infinitely-wide random networks as Gaussian processes \citep{nngpold,nngp,nngpnew} and the Neural Tangent Kernel associated with the training of infinitely-wide networks \citep{ntk}. Notably, \citet{nngpks} studies the cumulants of outputs of random networks in the infinite-width limit, and \citet{dunbaraaronson} studies how the covariances decay with depth. However, this body of work focuses almost exclusively on the output distribution \textit{for fixed inputs over random weights}, whereas we study the output distribution \textit{for fixed weights over random inputs}. This distinction is critical: instead of studying properties of the network \textit{ensemble}, we study algorithms that take in the weights of a \textit{particular} network.

\subsection{Mechanistic estimation and low probability estimation}

The question of whether there is a general framework for deductive, probabilistic estimation was posed by \citet{poi}, who also introduced cumulant propagation in the context of arithmetic circuits. \citet{lpe} studied both mechanistic and non-mechanistic methods for low probability estimation, and their \textit{activation extrapolation} methods have features in common with our low probability estimation methods. The broad conjecture that the performance of any estimation procedure can be matched or exceeded by a mechanistic one was originally posed by \citet{mspblog}.

}

\section{Related work}

\ifpreprint{\relatedwork}{
The most direct precursor to our own work is that of \cite{covprop}, who derive formulas for covariance propagation that can be applied to ReLU networks, a special case of cumulant propagation with $\maxcumulantorder=2$. Their work builds in turn upon work of \citet{meanvarprop}, who derive formulas for propagating means and variances only, but not off-diagonal covariances. Thus our work generalizes these methods to arbitrarily high order, and shows moreover that this achieves arbitrarily high accuracy in the large-width limit.

There is an extensive literature on the behavior of neural networks with random weights in the limit as the width tends to infinity, as detailed in \citet*{pdlt}. Some of the key ideas in this area include the view of infinitely-wide random networks as Gaussian processes \citep{nngpold,nngp,nngpnew} and the Neural Tangent Kernel associated with the training of infinitely-wide networks \citep{ntk}. However, this body of work focuses almost exclusively on the output distribution \textit{for fixed inputs over random weights}, whereas we study the output distribution \textit{for fixed weights over random inputs}.

A more detailed literature review is given in the first appendix.
}

\cutforsubmission{

\section{Conclusion}

We have demonstrated both theoretically and empirically that our sample-free method for estimating the expected loss of wide MLPs at initialization is substantially more efficient than Monte Carlo sampling. With further work, this could lead to greatly improved training efficiency for losses dominated by rare events.

\section{Acknowledgments}

We are grateful to Eric Neyman for work that helped to inspire this project, to Ekene Ezeunala for work on mechanistic estimation of the Hermite coefficients of $\operatorname{GELU}$, and to Harshita Khera for invaluable operational support. We are also grateful to Matthew Ding, Geoffrey Irving, Vinayak Pathak, Jess Riedel, Linus Tang and Gabriel Wu for comments on drafts.

The Alignment Research Center received funding for this work from the UK AI Security Institute's Alignment Project. WW was partially supported by the National Science Foundation Graduate Research Fellowship under Grant No.\ DGE-2040434. Any opinions, findings, and conclusions or recommendations expressed in this material are those of the authors and do not necessarily reflect the views of the National Science Foundation. MW acknowledges the Institute for Advanced Study, and DOE grant DE-SC0009988.
}

\bibliographystyle{abbrvnat}
\bibliography{bibliography}

\clearpage

\appendix

\ifpreprint{}{
\section*{Related work}
\begingroup
\renewcommand{\thesection}{R}
\renewcommand{\theHsection}{R}
\relatedwork
\endgroup
\clearpage
}

\section{Diagram summation formula for cumulants}\label{diagramsappendix}

The diagram summation formula for cumulants is our key tool for propagating cumulants (as defined in Section \ref{cumulantssubsection}) through non-linear activation functions. Given a random variable $Z\in\mathbb R^n$ and an activation function $\phi:\mathbb R\to\mathbb R$, this formula expresses the cumulants of $\phi\left(Z_1\right),\dots,\phi\left(Z_n\right)$ in terms of the cumulants of $Z_1,\dots,Z_n$ and the Hermite coefficients of $\phi$ (as defined in Section \ref{hermitesubsection}). In order to state it, we begin by defining diagrams and their attributes.

\begin{definition}
\label{def:diagram}
Let $r\in\mathbb{Z}_{>0}$ and $\underline k\in\mathbb{Z}^r_{\geq 0}$.
A \textit{diagram} on $\underline k$ is a set partition 
\begin{equation*}
    \pi\vdash \left\{\left(v,w\right):v\in\left\{1,\dots,r\right\},w\in\left\{1,\dots,k_v\right\}\right\}.
\end{equation*}

We say that $\pi$ is \textit{connected} to mean that $\pi\cup\left\{\left\{\left(v,w\right):w\in\left\{1,\dots,k_v\right\}\right\}:v\in\left\{1,\dots,r\right\}\right\}$ forms the edge set of a connected hypergraph. (In the degenerate case in which $k_v=0$ for some $v$, we consider $\pi$ to be connected if and only if $r=1$.) The set of all connected diagrams on $\underline{k}$ is denoted $\mathrm{cDia}(\underline{k})$.

We say that $\pi$ is \textit{$\left[\leq M\right]$-mixed} to mean that every part $S\in\pi$ of size at most $M$ satisfies $\left\{\left(v_1,w_1\right),\left(v_2,w_2\right)\right\}\subseteq S$ for some $v_1\neq v_2$.

The set of all connected $\left[\leq M\right]$-mixed diagrams on $\underline k$ is denoted $\cDia{M}(\underline k)$.
\end{definition}

We visualize diagrams by drawing a vertex for each pair $\left(v,w\right)$, encircling the vertex set $\left\{\left(v,w\right):w\in\left\{1,\dots,k_v\right\}\right\}$ for each $v$, and drawing each part of the partition as a hyperedge. We draw some examples of diagrams and explain whether they are connected or $\left[\leq 2\right]$-mixed in Figure \ref{diagramsfigure}.

\tikzset{
  cumulant/.style={black, thick},
  vertex/.style={circle, fill=black, inner sep=1.0pt}
}

\newcommand{\drawskeleton}{%
  \coordinate (TL) at (0, 2.8);
  \coordinate (TR) at (2.8, 2.8);
  \coordinate (BL) at (0, 0);
  \coordinate (BR) at (2.8, 0);
  \draw[thick] (TL) ellipse[x radius=0.50cm, y radius=0.22cm, rotate=45];
  \draw[thick] (TR) ellipse[x radius=0.42cm, y radius=0.19cm, rotate=-45];
  \draw[thick] (BL) ellipse[x radius=0.32cm, y radius=0.15cm, rotate=-45];
  \draw[thick] (BR) ellipse[x radius=0.50cm, y radius=0.22cm, rotate=45];
  \pgfmathsetmacro{\ux}{0.707}
  \pgfmathsetmacro{\uy}{0.707}
  \node[vertex] (TL-top) at ($(TL)+(\ux*0.28, \uy*0.28)$) {};
  \node[vertex] (TL-mid) at (TL) {};
  \node[vertex] (TL-bot) at ($(TL)+(-\ux*0.28, -\uy*0.28)$) {};
  \node[vertex] (TR-top) at ($(TR)+(-\ux*0.20, \uy*0.20)$) {};
  \node[vertex] (TR-bot) at ($(TR)+(\ux*0.20, -\uy*0.20)$) {};
  \node[vertex] (BL-dot) at (BL) {};
  \node[vertex] (BR-top) at ($(BR)+(\ux*0.28, \uy*0.28)$) {};
  \node[vertex] (BR-mid) at (BR) {};
  \node[vertex] (BR-bot) at ($(BR)+(-\ux*0.28, -\uy*0.28)$) {};
  \node[left=8pt] at ($(TL)+(-\ux*0.35, -\uy*0.02)$) {$\hat{\phi}_3$};
  \node[right=8pt] at ($(TR)+(\ux*0.30, -\uy*0.02)$) {$\hat{\phi}_2$};
  \node[left=8pt] at ($(BL)+(-\ux*0.22, \uy*0.02)$) {$\hat{\phi}_1$};
  \node[right=8pt] at ($(BR)+(\ux*0.35, \uy*0.02)$) {$\hat{\phi}_3$};
}

\begin{figure}[h]
\centering
\begin{subfigure}[b]{0.32\textwidth}
\centering
\begin{tikzpicture}[scale=0.88]
  \drawskeleton

  \coordinate (K3junc) at ($(TL-top)!0.5!(TR-top) + (0,-0.10)$);
  \draw[cumulant] (TL-top) to[out=5, in=165] (K3junc);
  \draw[cumulant] (TL-mid) .. controls (0.55, 2.65) .. (K3junc);
  \draw[cumulant] (TR-top) to[out=175, in=15] (K3junc);
  \node[above=3pt] at (K3junc) {\small $\kappa_3$};

  \draw[cumulant] (TL-bot) to[bend right=15]
    node[pos=0.35, right=8pt] {\small $\kappa_2$} (BR-mid);

  \draw[cumulant] (TR-bot) to[bend right=15]
    node[pos=0.5, right=3pt] {\small $\kappa_2$} (BR-top);

  \draw[cumulant] (BL-dot) to[bend left=18]
    node[midway, below=2.5pt] {\small $\kappa_2$} (BR-bot);
\end{tikzpicture}
\caption{The diagram is both connected and $\left[\leq 2\right]$-mixed, and so the term \textbf{is} included.}
\end{subfigure}
\hfill
\begin{subfigure}[b]{0.32\textwidth}
\centering
\begin{tikzpicture}[scale=0.88]
  \drawskeleton

  \coordinate (K3junc) at ($(TL-top)!0.5!(TR-top) + (0,-0.10)$);
  \draw[cumulant] (TL-top) to[out=5, in=165] (K3junc);
  \draw[cumulant] (TL-mid) .. controls (0.55, 2.65) .. (K3junc);
  \draw[cumulant] (TR-top) to[out=175, in=15] (K3junc);
  \node[above=3pt] at (K3junc) {\small $\kappa_3$};

  \draw[cumulant] (TL-bot) to[bend right=15]
    node[midway, below=2pt] {\small $\kappa_2$} (TR-bot);

  \coordinate (K4junc) at (1.4, 0.15);
  \draw[cumulant] (BL-dot) to[out=0, in=200] (K4junc);
  \draw[cumulant] (BR-top) .. controls (2, 0.4) .. (K4junc);
  \draw[cumulant] (BR-mid) to[out=165, in=0] (K4junc);
  \draw[cumulant] (BR-bot) to[out=160, in=-30] (K4junc);
  \node[above=7pt] at (K4junc) {\small $\kappa_4$};
\end{tikzpicture}
\caption{The diagram is $\left[\leq 2\right]$-mixed but not connected, and so the term \textbf{is not} included.}
\end{subfigure}
\hfill
\begin{subfigure}[b]{0.32\textwidth}
\centering
\begin{tikzpicture}[scale=0.88]
  \drawskeleton

  \coordinate (K3junc) at (1.4, 1.8);
  \draw[cumulant] (TL-top) to[out=-10, in=150] (K3junc);
  \draw[cumulant] (TR-top) to[out=-150, in=60] (K3junc);
  \draw[cumulant] (BR-mid) to[out=120, in=-70] (K3junc);
  \node[right=6pt] at (K3junc) {\small $\kappa_3$};

  \draw[cumulant] (TL-mid) .. controls (0.70, 2.55) and (0.55, 1.90) .. (TL-bot);
  \node at (0.35, 1.9) {\small $\kappa_2$};

  \draw[cumulant] (TR-bot) to[bend right=15]
    node[midway, right=3pt] {\small $\kappa_2$} (BR-top);

  \draw[cumulant] (BL-dot) to[bend left=18]
    node[midway, below=2.5pt] {\small $\kappa_2$} (BR-bot);
\end{tikzpicture}
\caption{The diagram is connected but not $\left[\leq 2\right]$-mixed (see top left loop), and so the term \textbf{is not} included.}
\end{subfigure}

\caption{Drawings of diagrams corresponding to included and excluded terms from the Hermite-based diagram summation formula, for $\left(k_1,k_2,k_3,k_4\right)=\left(3,2,3,1\right)$ (clockwise from top left). Hermite coefficients correspond to encircled vertex sets and cumulants correspond to hyperedges.}
\label{diagramsfigure}
\end{figure}

Our diagrams are essentially cluster-expansion diagrams from statistical physics \citep{clusterexpansions}, except that we generalize pairwise edges to hyperedges in order to account for third- and higher-order cumulants.

With these definitions in place, together with the notation for Hermite coefficients introduced in Section \ref{hermitesubsection} (equation ~\eqref{eq:hermite-def}), we may now state the formula. We refer to the formula as \textit{Hermite-based} to distinguish it from an analogous but simpler \textit{polynomial} diagram summation formula, which we introduce in Appendix \ref{sec:polypolyproof}.\footnote{We state the diagram summation formula here for polynomial activation functions, but it also applies to non-polynomials under certain analytic conditions. We discuss this in more detail in Appendix \ref{diagramsproof}.}

\begin{restatable}[Hermite-based diagram summation formula for cumulants]{theorem}{diagramsummationformula}\label{diagramstheorem}
Let $Z\in\mathbb R^n$ be a random variable with finite moments and let $\phi:\mathbb R\to\mathbb R$ be a polynomial. Then
\[
\begin{aligned}
\kappa\left[\phi\left(Z_{j_1}\right),\dots,\phi\left(Z_{j_r}\right)\right]=\sum_{\underline{k}\in \mathbb{Z}_{\ge0}^r}\left(\prod_{v=1}^r\frac 1{k_v!}\,\widehat\phi^{\left(\mathbb E\left[Z_{j_v}\right],\operatorname{Var}\left[Z_{j_v}\right]\right)}_{k_v}\right)\sum_{\pi \in \mathrm{cDia}_{[\leq2]}(\underline{k})}\prod_{S\in\pi}\kappa\left[Z_{j_v}:\left(v,w\right)\in S\right]
\end{aligned}.
\]
\end{restatable}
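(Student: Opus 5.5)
We expand each $\phi(Z_{j_v})$ in its Hermite expansion with respect to $Y_v\sim\mathcal N(\mu_v,\sigma_v^2)$, where $\mu_v=\mathbb E[Z_{j_v}]$ and $\sigma_v^2=\operatorname{Var}[Z_{j_v}]$. Since $\phi$ is a polynomial, the expansion is finite and exact as a polynomial identity. It gives $\phi(Z_{j_v})=\sum_k \frac1{k!}\widehat\phi_k\,\sigma_v^k\He_k\bigl((Z_{j_v}-\mu_v)/\sigma_v\bigr)$. By multilinearity of joint cumulants, the left-hand side becomes $\sum_{\underline k}\prod_v \frac1{k_v!}\widehat\phi_{k_v}$ times $\kappa\bigl[H_{k_1}^{(1)},\dots,H_{k_r}^{(r)}\bigr]$, where $H^{(v)}_k:=\sigma_v^k\He_k\bigl((Z_{j_v}-\mu_v)/\sigma_v\bigr)$. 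It then remains to prove that, for each fixed $\underline k$,
\[\kappa\bigl[H_{k_1}^{(1)},\dots,H_{k_r}^{(r)}\bigr]=\sum_{\pi\in\cDia{2}(\underline k)}\prod_{S\in\pi}\kappa\bigl[Z_{j_v}:(v,w)\in S\bigr].\]
The degenerate cases are $k_v=0$ and $r=1$. If $k_v=0$ and $r\ge2$, the factor is constant, so the cumulant vanishes, and no diagram is connected. If $r=1$, the right side is just $\mathbb E$ of the block. We handle both separately.

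The key ingredient is a \emph{Wick-type formula for Hermite polynomials}: $\sigma^k\He_k\bigl((Z-\mu)/\sigma\bigr)$ is the Wick power ${:}Z^k{:}$ relative to a Gaussian with the same mean and variance. Explicitly, $\sigma^k\He_k\bigl((z-\mu)/\sigma\bigr)=\sum_{m}\binom{k}{2m}(2m-1)!!\,(-\sigma^2)^m(z-\mu)^{k-2m}$. The first step is to show the moment version: for each block of $k_v$ vertices,
\[\mathbb E\Bigl[\prod_v H^{(v)}_{k_v}\Bigr]=\sum_{\pi\ \text{$[\le2]$-mixed on }\underline k}\prod_{S\in\pi}\kappa[\,\cdot\,].\]
We do this by expanding the product of the monomials $(Z_{j_v}-\mu_v)^{k_v-2m}$ with the moment--cumulant partition formula. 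In that formula, any singleton part contributes $\kappa[Z-\mu]=0$, and a part of size two inside a single block contributes $\sigma_v^2$. Write the Hermite polynomial as an inclusion--exclusion over sets of ``removed'' internal pairs, each carrying a weight $-\sigma_v^2$. Then the sum over choices of removed internal pairs, combined with the partitions, cancels exactly the partitions containing internal pairs of size 2; this is the binomial identity $\sum_{A\subseteq B}(-1)^{|A|}=[B=\emptyset]$ applied to the set $B$ of internal pairs. Singletons vanish automatically because we centred the variables, and the centring also leaves all other cumulants of order $\ge 2$ unchanged. This leaves exactly the $[\le2]$-mixed partitions, i.e.\ those in which every part of size $\le2$ crosses between blocks.

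The second step passes from moments to cumulants. Apply the standard fact (Leonov--Shiryaev) that the joint cumulant of products of blocks equals the sum, over partitions of the vertex set, of products of cumulants, restricted to those partitions which together with the block partition form a connected hypergraph. Proceed by Möbius inversion on the lattice of partitions of $\{1,\dots,r\}$. For any subset of blocks, the moment formula above holds with diagrams on that subset. Every $[\le2]$-mixed diagram decomposes uniquely into connected components, each of which is again $[\le2]$-mixed on its subset of blocks, because the mixing condition is local to parts. Hence the moment expansion factorizes as $\sum_{\sigma\vdash[r]}\prod_{B\in\sigma}\bigl(\text{connected sum on }B\bigr)$, and comparing with the defining partition formula for cumulants by induction on $r$ identifies the cumulant with the connected sum.

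I expect the main obstacle to be the first step: carefully verifying the inclusion--exclusion cancellation, including the combinatorial factor $\binom{k}{2m}(2m-1)!!$ counting perfect matchings of chosen internal vertices, and checking that the centring by $\mu_v$ affects only first-order cumulants. The cleanest route I would try first is generating functions. Since $\sum_k \frac{t^k}{k!}\sigma^k\He_k\bigl((z-\mu)/\sigma\bigr)=\exp\bigl(t(z-\mu)-\tfrac12\sigma^2t^2\bigr)$, the joint cumulant generating function of these exponentials in $t_1,\dots,t_r$ is $\log\mathbb E\exp\bigl(\sum_v t_v(Z_{j_v}-\mu_v)\bigr)-\sum_v\tfrac12\sigma_v^2t_v^2$. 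This is the cumulant generating function of $Z$ with the linear terms and the diagonal quadratic terms removed, which makes the ``$[\le2]$-mixed'' condition transparent. The connected-diagram expansion then follows from the standard exponential formula. The remaining technical work is justifying coefficient extraction when $Z$ has only finite moments; this is fine since all identities are polynomial in finitely many moments, so formal power series suffice.
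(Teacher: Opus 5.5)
Your proposal is correct, and it reaches the formula by a genuinely different route from the paper.

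\textbf{The paper's route.} The paper never expands $\phi$ in Hermite polynomials. It starts from the polynomial diagram summation formula, which uses the Taylor coefficients $\phi^{(k)}(0)$ and sums over \emph{all} connected diagrams; this formula comes from its cumulants-of-powers lemma. It then splits each connected diagram into its $[\leq 2]$-mixed part and, for each $v$, the parts of size at most $2$ lying inside the $v$th block. It counts preimages of this splitting with $\prod_v\binom{k_v'+d_v}{d_v}$ and sums the internal parts to Gaussian moments $\mathbb E[Y_v^{d_v}]$. Finally it recognizes $\sum_{d}\frac1{d!}\phi^{(k'+d)}(0)\,\mathbb E[Y^d]=\mathbb E[\phi^{(k')}(Y)]=\widehat\phi_{k'}$ by integration by parts.

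\textbf{Your route.} You insert the Hermite coefficients at the outset. All the combinatorics then moves into a $\phi$-independent identity for the Wick powers $\sigma_v^{k}\He_k\bigl((Z_{j_v}-\mu_v)/\sigma_v\bigr)$. Centring kills singleton parts, and the Hermite correction terms cancel exactly the same-block pairs; your inclusion--exclusion over partial matchings and your generating-function computation both check out. A block-level M\"obius inversion then finishes the argument.

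\textbf{What each buys.} Your route makes the origin of the $[\leq 2]$-mixed condition transparent. It needs neither Taylor coefficients at $0$ nor the identity $\widehat\phi_s=\mathbb E[\phi^{(s)}(Y)]$. The paper's route reuses a lemma it already needs for the polynomial algorithm. It also exhibits each Hermite coefficient explicitly as a resummation of the infinitely many $\Theta(1)$ diagrams with internal means and variances, which is what motivates truncating the Hermite-based formula. Your Step 2 is essentially the same uniqueness argument the paper uses for its cumulants-of-powers lemma, applied to Wick powers rather than plain powers.

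\textbf{One caution on your last paragraph.} The function $\Lambda(t)=\log\mathbb E\exp\bigl(\sum_v t_v(Z_{j_v}-\mu_v)\bigr)-\sum_v\tfrac12\sigma_v^2t_v^2$ is the logarithm of the generating function of the joint \emph{moments} $\mathbb E\bigl[\prod_v H^{(v)}_{k_v}\bigr]$. It is not a generating function for the joint cumulants $\kappa\bigl[H^{(1)}_{k_1},\dots,H^{(r)}_{k_r}\bigr]$, because its $\underline k$-coefficient is only the single-part term. For example, for $\underline k=(2,2)$ it gives $\kappa[Z_{j_1},Z_{j_1},Z_{j_2},Z_{j_2}]$, while the cumulant also contains $2\operatorname{Cov}[Z_{j_1},Z_{j_2}]^2$. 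So the exponential formula applied to $\exp(\Lambda)$ delivers only your Step 1 (all $[\leq 2]$-mixed diagrams). The restriction to connected diagrams must still come from the M\"obius inversion over partitions of the blocks in your Step 2, so keep that step rather than reading cumulants off $\Lambda$.
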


We prove Theorem \ref{diagramstheorem} in Appendix \ref{diagramsproof}.

Note that in this formula, we do not use a single Hermite expansion of $\phi$, but rather take separate Hermite expansions of $\phi$ with respect to Gaussian distributions with means and variances matching each of $Z_1,\dots,Z_n$.

The special case of this formula in which $\phi$ is the step function $\mathbbm 1_{z\geq 0}$ and $Z$ is multivariate Gaussian appears in \citet{tetrachoric} (see \citet[equation (3.2)]{genzbretz}).

\subsection{Covariance propagation formula}\label{diagramscovpropsubappendix}

To help understand the Hermite-based diagram summation formula for cumulants, it is instructive to consider the special case applicable to covariance propagation, i.e., cumulant propagation with maximum cumulant order $\maxcumulantorder=2$.

\begin{corollary}[\citet{covprop}]\label{covpropcorollary}
Let $Y\in\mathbb R^n$ be a multivariate Gaussian random variable and let $\phi:\mathbb R\to\mathbb R$ be a polynomial. Then
\[\operatorname{Cov}\left[\phi\left(Y_{j_1}\right),\phi\left(Y_{j_2}\right)\right]=\sum_{k=1}^\infty\frac 1{k!}\;\widehat\phi^{\left(\mathbb E\left[Y_{j_1}\right],\operatorname{Var}\left[Y_{j_1}\right]\right)}_k\,\widehat\phi^{\left(\mathbb E\left[Y_{j_2}\right],\operatorname{Var}\left[Y_{j_2}\right]\right)}_k\operatorname{Cov}\left[Y_{j_1},Y_{j_2}\right]^k.\]
\end{corollary}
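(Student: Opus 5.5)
We derive Corollary \ref{covpropcorollary} directly from Theorem \ref{diagramstheorem} with $Z=Y$ and $r=2$, indices $j_1,j_2$. The key input is that $Y$ is Gaussian, so every joint cumulant of order at least $3$ vanishes. In the diagram sum, each hyperedge $S\in\pi$ contributes a factor $\kappa[Y_{j_v}:(v,w)\in S]$. Any diagram with a part of size $\geq 3$ therefore contributes zero. Parts of size $1$ contribute a mean, but the $[\leq 2]$-mixed condition forbids them, since a singleton cannot contain two vertices from distinct groups. So only diagrams all of whose parts have size exactly $2$ survive, i.e.\ perfect matchings.

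Next we apply the $[\leq 2]$-mixed condition to parts of size $2$. Each such pair must join a vertex $(1,w_1)$ to a vertex $(2,w_2)$, so no pair may lie within the same group. The surviving diagrams are therefore perfect matchings between $\{(1,w)\}_{w\leq k_1}$ and $\{(2,w)\}_{w\leq k_2}$. These exist only if $k_1=k_2=k$, and there are exactly $k!$ of them. Each one contributes $\operatorname{Cov}[Y_{j_1},Y_{j_2}]^k$.

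We then handle connectivity. For $k\geq 1$, any such bipartite matching connects the two encircled groups, so the diagram is connected. For $k=0$, the degenerate convention in Definition \ref{def:diagram} declares diagrams with some $k_v=0$ connected only when $r=1$, so the $k=0$ term is excluded. This is the only delicate bookkeeping point.

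Summing over $\underline k=(k,k)$, the prefactor is $\frac{1}{k!^2}\widehat\phi_k^{(\cdot)}\widehat\phi_k^{(\cdot)}$. Multiplying by the $k!$ matchings gives $\frac1{k!}\widehat\phi_k\widehat\phi_k\operatorname{Cov}^k$, summed over $k\geq 1$. Finally, the left-hand side $\kappa[\phi(Y_{j_1}),\phi(Y_{j_2})]$ is the covariance by the partition formula of Section \ref{cumulantssubsection}. This gives the claimed identity. Because $\phi$ is a polynomial, $\widehat\phi_k=0$ for $k>\deg\phi$, so the sum is finite and no convergence issue arises. The case $j_1=j_2$ is covered too, with $\operatorname{Cov}$ becoming the variance.
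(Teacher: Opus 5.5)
Your proof is correct and takes essentially the same route as the paper. You specialize Theorem~\ref{diagramstheorem} to $r=2$, discard parts of size $\geq 3$ by Gaussianity, count the $k!$ cross-matchings, and use the connectivity convention to drop $k=0$. You are slightly more explicit than the paper in using the $[\leq 2]$-mixed condition to rule out singletons and within-group pairs, a step the paper leaves implicit.
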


This formula exactly matches the covariance propagation formula of \citet[Theorem 1]{covprop}
using the integration-by-parts identity
\[\widehat\phi^{\left(\mu,\sigma^2\right)}_k:=\frac 1{\sigma^k}\mathbb E\left[\phi\left(Y\right)\operatorname{He}_k\left(\frac{Y-\mu}\sigma\right)\right]=\mathbb E\left[\phi^{\left(k\right)}\left(Y\right)\right]=\frac{\partial^k}{\partial\mu^k}\mathbb E\left[\phi\left(Y\right)\right]\]
for $Y\sim\mathcal N\left(\mu,\sigma^2\right)$.

\begin{proof}
Apply Theorem \ref{diagramstheorem} in the case $r=2$ to $Y$, giving an expression for $\operatorname{Cov}\left[\phi\left(Y_{j_1}\right),\phi\left(Y_{j_2}\right)\right]$. Since $Y$ is multivariate Gaussian, its third- and higher-order cumulants vanish, and so the only diagrams on $\left(k_1,k_2\right)$ that survive have the form $\left\{\left\{\left(1,\rho\left(w\right)\right),\left(2,w\right)\right\}:w\in\left\{1,\dots,k_1\right\}\right\}$ for some permutation $\rho$ of $\left\{1,\dots,k_1\right\}$, with $k_1=k_2$. The number of such permutations is $k_1!$, which cancels out one of the factors of $\frac 1{k_1!}$. Finally, the connectedness condition eliminates the diagram with $k_1=k_2=0$, leaving a sum starting at $k_1=k_2=1$.
\end{proof}

\clearpage

\section{Mean and covariance propagation}\label{meancovpropappendix}

In this Appendix we provide pseudocode for the basic version of our cumulant propagation algorithm in the cases $\maxcumulantorder=1$ (\textit{mean propagation}) and $\maxcumulantorder=2$ (\textit{covariance propagation}), where $\maxcumulantorder$ is the maximum cumulant order. We also indicate roughly how these special cases derive from our general algorithm, and how the ablated versions differ. A precise formulation of our general algorithm is significantly more involved, and is given in Appendix \ref{kpropfull}.

\subsection{Pseudocode}

Mean propagation tracks both the mean and the trace of the covariance matrix of activations.\footnote{Our usage of the term \textit{mean propagation} differs from that of \citet{poi}, who only track the mean, like in our ablated version of mean propagation.} Tracking the entire diagonal of the covariance matrix would have lower mean squared error, but we provide the version that is consistent with our more general formulation. Pseudocode is as follows.

\begin{algorithm}[H]
\caption{Mean propagation ($\maxcumulantorder=1$)}
\label{meanpropalgorithm}
\begin{algorithmic}[1]
\Require weights $\mathbf W^{\left(1\right)},\dots,\mathbf W^{\left(\numhiddenlayers+1\right)}\in\mathbb R^{n\times n}$, activation functions $\phi_1,\dots,\phi_\numhiddenlayers:\mathbb R\to\mathbb R$
\State $\boldsymbol\mu\gets\mathbf 0\in\mathbb R^n$
\State $\sigma^2_{\text{mean}}\gets 1\in\mathbb R$
\For{$\layernumber=1,\dots,\numhiddenlayers+1$}
    \State $\boldsymbol\mu\gets\mathbf W^{\left(\layernumber\right)}\boldsymbol\mu$
    \State $\sigma^2_i\gets\sigma^2_{\text{mean}}\sum_{j=1}^n\left(W^{\left(\layernumber\right)}_{ij}\right)^2$  \textbf{for} $i=1,\dots,n$
    \If{$\layernumber\leq\numhiddenlayers$}
        \State $a_i\gets\mathbb E_{Y\sim\mathcal N\left(\mu_i,\sigma_i^2\right)}\left[\phi_\layernumber\left(Y\right)\right]$ \textbf{for} $i=1,\dots,n$
        \State $b_i\gets\mathbb E_{Y\sim\mathcal N\left(\mu_i,\sigma_i^2\right)}\left[\phi_\layernumber\left(Y\right)^2\right]$ \textbf{for} $i=1,\dots,n$
        \State $\mu_i\gets a_i$ \textbf{for} $i=1,\dots,n$
        \State $\sigma^2_{\text{mean}}\gets\frac 1n\sum_{i=1}^n\left(b_i-a_i^2\right)$
    \EndIf
\EndFor
\State \Return $\boldsymbol\mu$
\end{algorithmic}
\end{algorithm}

Covariance propagation tracks both the mean and the full covariance matrix of activations. Pseudocode is as follows.

\begin{algorithm}[H]
\caption{Covariance propagation ($\maxcumulantorder=2$)}
\label{covpropalgorithm}
\begin{algorithmic}[1]
\Require weights $\mathbf W^{\left(1\right)},\dots,\mathbf W^{\left(\numhiddenlayers+1\right)}\in\mathbb R^{n\times n}$, activation functions $\phi_1,\dots,\phi_\numhiddenlayers:\mathbb R\to\mathbb R$
\State $\boldsymbol\mu\gets\mathbf 0\in\mathbb R^n$
\State $\boldsymbol\Sigma\gets\mathbf I_n\in\mathbb R^{n\times n}$
\For{$\layernumber=1,\dots,\numhiddenlayers+1$}
    \State $\boldsymbol\mu\gets\mathbf W^{\left(\layernumber\right)}\boldsymbol\mu$ \label{linearlinestart}
    \State $\boldsymbol\Sigma\gets\mathbf W^{\left(\layernumber\right)}\boldsymbol\Sigma\left(\mathbf W^{\left(\layernumber\right)}\right)^\top$ \label{linearlineend}
    \If{$\layernumber\leq\numhiddenlayers$}
        \State $a_i\gets\mathbb E_{Y\sim\mathcal N\left(\mu_i,\Sigma_{ii}\right)}\left[\phi_\layernumber\left(Y\right)\right]$ \textbf{for} $i=1,\dots,n$
        \State $b_i\gets\mathbb E_{Y\sim\mathcal N\left(\mu_i,\Sigma_{ii}\right)}\left[\phi_\layernumber\left(Y\right)^2\right]$ \textbf{for} $i=1,\dots,n$
        \State $c_i\gets\frac 1{\sqrt{\Sigma_{ii}}}\mathbb E_{Y\sim\mathcal N\left(\mu_i,\Sigma_{ii}\right)}\left[\phi_\layernumber\left(Y\right)\left(\frac{Y-\mu_i}{\sqrt{\Sigma_{ii}}}\right)\right]$ \textbf{for} $i=1,\dots,n$ \label{covariancefirstline}
        \State $\mu_i\gets a_i$ \textbf{for} $i=1,\dots,n$
        \State $\Sigma_{ii}\gets b_i-a_i^2$ \textbf{for} $i=1,\dots,n$
        \State $\Sigma_{ij}\gets\Sigma_{ij}c_ic_j$ \textbf{for} $i,j=1,\dots,n$ \textbf{if} $i\neq j$ \label{covariancesecondline}
    \EndIf
\EndFor
\State \Return $\boldsymbol\mu$
\end{algorithmic}
\end{algorithm}

When the activation functions $\phi_1,\dots,\phi_\numhiddenlayers$ are piecewise polynomials, such as when they are all $\operatorname{ReLU}$, the Gaussian expectations in these algorithms can be computed straightforwardly.
Each expectation can be decomposed into a finite sum of integrals of polynomials against the standard Gaussian density over finite intervals. By repeated integration by parts, these integrals can be expressed in terms of the standard Gaussian density function and cumulative distribution function evaluated at the interval endpoints. For explicit formulas in the case of $\operatorname{ReLU}$, see See Proposition \ref{prop:hermite-relu}, and for an alternative approach that we use for $\tanh$ and $\operatorname{GELU}$, see Appendix \ref{quadraturesubsection}.

\subsection{Derivation from cumulant propagation}\label{meancovpropderivationsubappendix}

It is instructive to consider how mean propagation and covariance propagation derive from the more general formulation of our basic cumulant propagation algorithm.

The matrix multiplication step (lines \ref{linearlinestart}--\ref{linearlineend} in both algorithms) is straightforward. If $X\in\mathbb R^n$ is a random variable with mean $\boldsymbol\mu$ and covariance matrix $\boldsymbol\Sigma$, then $\mathbf WX$ has mean $\mathbf W\boldsymbol\mu$ and covariance matrix $\mathbf W\boldsymbol\Sigma\mathbf W^\top$. The higher cumulants of $X$ transform similarly. In mean propagation, we avoid computing the full covariance matrix explicitly, and instead approximate $\boldsymbol\Sigma$ by $\sigma^2_{\text{mean}}\mathbf I_n$.

For the activation function step, we use a Gaussian approximation. Given a random variable $Z\in\mathbb R^n$ and an activation function $\phi:\mathbb R\to\mathbb R$ (to be applied coordinatewise), we approximate the mean and covariance matrix of $\phi\left(Z\right)$ by the mean and covariance matrix of $\phi\left(Y\right)$, where $Y\in\mathbb R^n$ is a multivariate Gaussian with the same mean and covariance matrix as $Z$.

To compute the means and variances of $\phi\left(Y\right)$, we only need to evaluate one-dimensional integrals, and these can be computed exactly when $\phi$ is a piecewise polynomial. Using power cumulants allows us to take advantage of this by handling these variances separately from the off-diagonal covariances. For the off-diagonal covariances of $\phi\left(Y\right)$ (lines \ref{covariancefirstline} and \ref{covariancesecondline} of covariance propagation), we use the leading-order term of the formula given in Corollary \ref{covpropcorollary}.

\subsection{Ablated mean and covariance propagation}\label{meancovpropablationsubappendix}

To help explain the ablated version of cumulant propagation described in Appendix \ref{ablationappendix}, we describe the changes that would need to be made to mean and covariance propagation to obtain their ablated versions.

For mean propagation, the ablated version no longer tracks the trace of the covariance matrix. This amounts to omitting lines 2, 5, 8 and 10, and replacing $\sigma_i^2$ by the default value $1$ in line 7. Since only the mean is tracked, the use of power cumulants makes no difference.

For covariance propagation, the ablated version no longer uses the power cumulants of the outputs of activation functions. This amounts to omitting lines 8 and 11, and removing the condition $i\neq j$ from line 12.

\clearpage

\section{Additional widths and depths}\label{empiricalappendix}

Figure \ref{msevsflopsfullfigure} compares the mean squared error of our sample-free estimation procedures to that of Monte Carlo sampling at different widths from 16 to 256 and different numbers of hidden layers from 2 to 12. Our procedures are exact when there is only 1 hidden layer.

Our algorithms perform poorly at low width, especially when the number of hidden layers $\numhiddenlayers$ is large, and can even get \textit{worse} with the maximum cumulant order $\maxcumulantorder$ in this setting. However, for any fixed $\numhiddenlayers$ and $\maxcumulantorder$, our algorithms match or exceed the performance of sampling as the width grows.

Note also that at low depth, factorized algorithms use fewer FLOPs than their unfactorized counterparts, but at high enough depth (for fixed width), they use more FLOPs.

\begin{figure}[h]
  \centering
  \makebox[0pt][c]{\hspace*{2cm}\scalebox{0.7}{\input{figures/mse_vs_flops_full.pgf}}}
  \caption{Performance of all versions of our algorithm for networks of different widths and depths and $\maxcumulantorder$ varying from 1 to 4. Error bars show $\pm 1$ standard error over 5 random seeds. Missing points have variance-normalized mean squared error greater than 10.}
  \label{msevsflopsfullfigure}
\end{figure}

\clearpage

\section{Scaling with depth}\label{depthappendix}

As discussed in Section \ref{theoreticalresultssection}, we conjecture that our algorithms have mean squared error at most $c_\maxcumulantorder(\numhiddenlayers/n)^\maxcumulantorder$, where $n$ is the width, $\numhiddenlayers$ is the number of hidden layers, $\maxcumulantorder$ is the maximum cumulant order and $c_\maxcumulantorder$ is a constant that depends only on $\maxcumulantorder$. We conjecture this depth scaling based on a heuristic analysis of the error, but we can also test it empirically. To do this, we plot the mean squared error of the basic version of algorithm as a function of the number of hidden layers in Figure \ref{msevsdepthfigure}. As conjectured, the mean squared error roughly follows a curve proportional to $\numhiddenlayers^\maxcumulantorder$ when $\numhiddenlayers$ is large.

\begin{figure}[h]
  \centering
  \makebox[0pt][c]{\scalebox{0.7}{\input{figures/mse_vs_depth.pgf}}}
  \caption{Depth scaling of the basic (equivalently, factorized) version of our algorithm on networks with width 256. Dashed lines are those with the conjectured large-$n$ slope passing through the rightmost point. Error bars show $\pm 1$ standard error over 5 random seeds.}
  \label{msevsdepthfigure}
\end{figure}

This depth scaling is worse than Monte Carlo sampling, whose error does not increase with depth. We believe that a sample-free algorithm can be developed whose error also does not increase with depth either, but we leave this problem to future work.

\clearpage

\section{Power cumulant ablation}\label{ablationappendix}

As discussed in Section \ref{ablationsubsection}, we checked the performance of an ablated version of our algorithm without the two adjustments described in Section \ref{kpropsubsection}. Specifically, the differences between the basic and ablated versions of our algorithm are as follows:
\begin{enumerate}
\item
As discussed in Section \ref{diagramssummarysubsection}, we use the Hermite-based diagram summation formula for cumulants, Theorem \ref{diagramstheorem}, to compute the cumulants of $\phi\left(Z_1\right),\dots,\phi\left(Z_n\right)$ from the cumulants of $Z_1,\dots,Z_n$ for a non-linear activation function $\phi:\mathbb R\to\mathbb R$. The basic version of our algorithm uses this formula to compute the power cumulants of $\phi\left(Z_1\right),\dots,\phi\left(Z_n\right)$, as defined in Section \ref{kpropsubsection}, by taking Hermite expansions of \textit{powers} of $\phi$. It then uses these power cumulants to obtain the cumulants of $\phi\left(Z_1\right),\dots,\phi\left(Z_n\right)$. The ablated version of our algorithm instead uses the Hermite-based diagram summation formula to compute the cumulants of $\phi\left(Z_1\right),\dots,\phi\left(Z_n\right)$ directly, only taking Hermite expansions of $\phi$ itself.
\item
When the maximum cumulant order $\maxcumulantorder$ is odd, the basic version of our algorithm tracks the full trace of the $\left(\maxcumulantorder+1\right)\nth$-order cumulant tensor, as defined in Section \ref{kpropsubsection}. The ablated version of our algorithm instead approximates this as zero. When $\maxcumulantorder$ is even, there is no difference between the two algorithms in this respect.
\end{enumerate}

To help explain precisely what we mean by this, we explain how to ablate the pseudocode for $\maxcumulantorder=1$ (mean propagation) and $\maxcumulantorder=2$ (covariance propagation) in Appendix \ref{meancovpropablationsubappendix}.

We compared the performance of an \textit{ablated factorized} version of our algorithm to the unablated factorized versions and to Monte Carlo sampling. Our results for networks with width 256 and different numbers of hidden layers from 2 to 12 are shown in Figure \ref{msevsflopsablationfigure}. As expected, the ablated version is significantly worse than the unablated versions, and is no longer competitive with Monte Carlo sampling. The scaling of the ablated version of our algorithm with width (which is unaffected by factorization, since this only affects the runtime) is also discussed in Section \ref{ablationsubsection}.

\begin{figure}[h]
  \centering
  \makebox[0pt][c]{\hspace*{1cm}\scalebox{0.7}{\input{figures/mse_vs_flops_ablation.pgf}}}
  \caption{Performance of the ablated factorized version of our algorithm, as well as the unablated factorized versions, for different networks of width 256. Mean over 5 random seeds shown. (Error bars hidden because they would be too small to be readable.)}
  \label{msevsflopsablationfigure}
\end{figure}

\clearpage

\section{Other activation functions}\label{otheractivationsappendix}

In this Appendix, we validate the performance of our algorithms for activation functions other than $\operatorname{ReLU}$. We consider two additional activation functions:

\begin{itemize}
\item
$\operatorname{GELU}\left(z\right):=z\Phi\left(z\right)$, where $\Phi$ is the standard Gaussian cumulative distribution function \citep{gelu}
\item
$\tanh\left(z\right):=\frac {1-e^{-2z}}{1+e^{-2z}}$
\end{itemize}

For the remainder of this Appendix, fix the number of hidden layers $\numhiddenlayers\geq 0$.

\subsection{Bias terms}

For $\operatorname{GELU}$, we consider multilayer perceptrons (MLPs) without bias terms, as defined in Definition \ref{mlpdefinition}.

For $\tanh$, if we were to consider MLPs without bias terms, then the expectation of every output neuron would be zero by symmetry, making the estimation problem trivial. We therefore incorporate bias terms, meaning that the weights of the MLP are $\boldsymbol\theta=\left(\mathbf W^{\left(1\right)},\dots,\mathbf W^{\left(\numhiddenlayers+1\right)},\mathbf b^{\left(1\right)},\dots,\mathbf b^{\left(\numhiddenlayers+1\right)}\right)$ with $\mathbf W^{\left(\layernumber\right)}\in\mathbb R^{n\times n}$ and $\mathbf b^{\left(\layernumber\right)}\in\mathbb R^n$ for $\layernumber=1,\dots,\numhiddenlayers+1$, and $\mathbf b^{\left(\layernumber\right)}$ is added to the $\layernumber\nth$ hidden layer's activation vector immediately after multiplying it by $\mathbf W^{\left(\layernumber\right)}$.

It is easy to adjust our sample-free estimation procedures to incorporate these bias terms. Our cumulant propagation algorithms simply add the bias vector to the estimate of the mean at the corresponding layer. Equivalently, this can be thought of as using a different activation function for each neuron that has been shifted by the corresponding bias component.

\subsection{Weight initialization}

For $\operatorname{GELU}$, we use He initialization \citep{heinit}, which ensures that when $n$ is large and the inputs have mean 0 and variance 1, all pre-activations and outputs also have mean 0 and variance $O(1)$ (treating the depth as a constant). This is defined for MLPs without bias terms as follows.

\begin{definition}
Consider an MLP with activation functions $\phi_1,\dots,\phi_\numhiddenlayers:\mathbb R\to\mathbb R$ and weights $\boldsymbol\theta=\left(\mathbf W^{\left(1\right)},\dots,\mathbf W^{\left(\numhiddenlayers+1\right)}\right)\in\mathbb R^{\left(\numhiddenlayers+1\right)\times n \times n}$. We say that $\boldsymbol\theta$ is \textit{He-initialized} to mean that $\mathbf W^{\left(1\right)},\dots,\mathbf W^{\left(\numhiddenlayers+1\right)}$ are independent random variables with
\begin{align*}
W^{\left(1\right)}_{ij}\overset{\text{i.i.d.}}\sim&\,\mathcal N\left(0,\frac 1n\right)\\
W^{\left(\layernumber+1\right)}_{ij}\overset{\text{i.i.d.}}\sim&\,\mathcal N\left(0,\frac 1{n\,\mathbb E_{Z\sim\mathcal N\left(0,1\right)}[\phi_\layernumber\left(Z\right)^2]}\right),\quad\layernumber=1,\dots,\numhiddenlayers
\end{align*}
for $i,j=1,\dots,n$.
\end{definition}

For $\tanh$, we use a \textit{critical initialization} \citep{Poole2016Exponential,Schoenholz2017Deep,Pennington17Resurrecting}, defined for MLPs with bias terms as follows.

\begin{definition}
Consider an MLP and with activation functions $\phi_1,\dots,\phi_\numhiddenlayers:\mathbb R\to\mathbb R$ and weights $\boldsymbol\theta=\left(\mathbf W^{\left(1\right)},\dots,\mathbf W^{\left(\numhiddenlayers+1\right)},\mathbf b^{\left(1\right)},\dots,\mathbf b^{\left(\numhiddenlayers+1\right)}\right)$. We say that $\boldsymbol\theta$ is \textit{critical at variance $q^*$} to mean that $\mathbf W^{\left(1\right)},\dots,\mathbf W^{\left(\numhiddenlayers+1\right)},\mathbf b^{(1)},\dots,\mathbf b^{(\numhiddenlayers+1)}$ are independent random variables with
\begin{align*}
W^{\left(1\right)}_{ij}&\overset{\text{i.i.d.}}\sim\,\mathcal N\left(0,\frac {q^*}n\right)\\
b_i^{(1)}&=0\\
W^{\left(\layernumber+1\right)}_{ij}&\overset{\text{i.i.d.}}\sim\,\mathcal N\left(0,\frac {\sigma_{\layernumber+1,\mathrm w}^2}{n}\right),&\layernumber=1,\dots,\numhiddenlayers\\
b^{\left(\layernumber+1\right)}_{i}&\overset{\text{i.i.d.}}\sim\,\mathcal N\left(0,\sigma_{\layernumber+1,\mathrm b}^2\right),&\layernumber=1,\dots,\numhiddenlayers
\end{align*}
for $i,j=1,\dots,n$,
where the initialization scales 
$\sigma_{\layernumber,\mathrm w}$ and $\sigma_{\layernumber,\mathrm b}$
are chosen to satisfy
\begin{align*}
    q^*&=\sigma_{\layernumber+1,\mathrm w}^2\mathbb{E}[\phi_\layernumber(\sqrt{q^*}Z)]+\sigma_{\layernumber+1,\mathrm b}^2,&\layernumber=1,\dots,\numhiddenlayers\\
    1&=\sigma^2_{\layernumber+1,\mathrm w} \mathbb{E}[\phi_\layernumber'(\sqrt{q^*} Z)^2],&\layernumber=1,\dots,\numhiddenlayers
\end{align*}
for $Z\sim\mathcal{N}(0,1)$.
\end{definition}

For $\tanh$, we use critical initialization at $q^*=0.85$, corresponding to $\sigma_{\mathrm w}^2=2.025$ and $\sigma_{\mathrm b}^2=0.111$. 
This is the ``large'' $\tanh$ setting used in \citet[\S 3]{Pennington17Resurrecting}.

As an aside, note that He initialization and critical initialization at $q^*=1$ coincide for $\operatorname{ReLU}$, 
both giving weight variance $\frac 2n$ and zero bias.
Thus our main $\operatorname{ReLU}$ experiments can be thought of as being performed in either initialization.

\subsection{Hermite coefficient estimation}\label{quadraturesubsection}

In order to estimate Hermite coefficients, as defined in Section \ref{hermitesubsection}, our sample-free estimation procedures must estimate Gaussian expectations of a power of the activation function multiplied by a polynomial. For both $\operatorname{GELU}$ and $\tanh$, this amounts to estimating $\mathbb E_{Z\sim\mathcal N\left(0,1\right)}\left[\psi\left(Z\right)\right]$ for some $\psi:\mathbb R\to\mathbb R$ that is analytic and of polynomial growth on an infinite strip around the real axis. A standard approach in such cases is Gauss--Hermite quadrature \citep{Wang2025Convergence}, which produces the estimate
\begin{align*}
    \mathbb E_{Z\sim\mathcal N\left(0,1\right)} [\psi\left(Z\right)]&\approx \sum_{i=1}^m \frac{(m-1)!}m\frac 1{\operatorname{He}_{m-1}(x_i)^2}\psi(x_i),
\end{align*}
where $\operatorname{He}_{m-1}$ is the $(m-1)\nth$ probabilist's Hermite polynomial and $x_1,\ldots, x_m$ are its distinct real roots.\footnote{This method is arguably not ``sample-free'' and hence not a ``mechanistic'' way to estimate this one-dimensional integral, but we defer a resolution to this issue to future work.}
For our experiments, we take the number of roots to be $m=100$.

We also use the same technique to estimate the Gaussian expectations used in the definition of He initialization and critical initialization.

\subsection{Results}

\begin{figure}[t]
  \centering
  \makebox[0pt][c]{\hspace*{1cm}\scalebox{0.7}{\input{figures/mse_vs_flops_other_acts.pgf}}}
  \caption{Performance of all versions of our algorithm for different GELU and tanh networks of width 256. Mean over 5 random seeds shown. (Error bars hidden because they would be too small to be readable.)}
  \label{msevsflopsotheractsfigure}
\end{figure}

Figure \ref{msevsflopsotheractsfigure} compares the mean squared error of our sample-free estimation procedures to that of Monte Carlo sampling for GELU and tanh networks of width 256 with different numbers of hidden layers from 2 to 12. Our results are qualitatively similar to our results for ReLU.

\clearpage

\section{Low probability estimation baselines}\label{lpebaselineappendix}

Our results on low probability estimation are given in Section \ref{lpeexperimentsubsection}. Here we explain our four baseline methods to which we compare cumulant propagation.

\begin{enumerate}
\item
\textbf{Monte Carlo sampling.} This is our usual baseline algorithm. However, it performs very poorly once the probability drops below $1/(\text{number of samples})$, since the squared error is very large in the unlikely event that there is a positive sample.
\item
\textbf{Monte Carlo sampling with Gaussian extrapolation.} We use Monte Carlo sampling to estimate the mean and variance of each input to the final threshold function $\mathbbm 1_{z>3}$, and output the probability that a Gaussian with that mean and variance would exceed the threshold of $3$. This method is similar to the Gaussian Logit Difference method of \citet{lpe}. Empirically, this method performs similarly to using cumulant propagation to estimate the mean and variance of each of these neurons, i.e., most of the error comes from failing to account for the third- and higher-order cumulants of the inputs to the final threshold function.
\item
\textbf{Monte Carlo sampling with cumulant extrapolation.} We use Monte Carlo sampling to estimate the mean, variance and third cumulant of each input to the final threshold function $\mathbbm 1_{z>3}$. We then use our usual cumulant propagation algorithm, involving Hermite expansions, to propagate these through the final threshold function. The purpose of this baseline is to check whether it is better to use sampling or cumulant propagation to compute these cumulants.
\item
\textbf{Zero.} This estimate simply outputs zero, which always has a relative error of 100\%. We include this because it outperforms naive Monte Carlo sampling at low probabilities.
\end{enumerate}

Our empirical results show that once the probability drops below $1/(\text{number of samples})$, cumulant propagation performs best, followed by Monte Carlo sampling with cumulant extrapolation, followed by Monte Carlo sampling with Gaussian extrapolation, followed by zero, followed by naive Monte Carlo sampling. This shows that computing the third-order cumulants of the inputs to the threshold function is helpful for low probability estimation, and moreover that cumulant propagation computes these more accurately than Monte Carlo sampling. At sufficiently small probabilities, however, none of the methods significantly outperform zero.

\clearpage

\section{Mechanistic distillation setup}\label{mechdistappendix}

Our results on mechanistic distillation are given in Section \ref{mechdistsubsection}. Here we provide further details of our experimental setup.

We distill a random teacher MLP $M_{\theta_T}$ with hidden dimension $n_T$ and $\numhiddenlayers$ hidden layers into a student MLP $M_{\theta_S}$ with hidden dimension $n_S$ and the same number of hidden layers. Both networks have input and output dimension $n$, use $\operatorname{ReLU}$ activation functions and are initialized using He initialization \citep{heinit}. We optimize the student parameter $\theta_S$ to minimize the loss
\[\frac 1n\sum_{i=1}^n\mathbb E_{X\sim\mathcal N\left(0,\mathbf I_n\right)}\left[\left(M_{\theta_T}\left(X\right)_i-M_{\theta_S}\left(X\right)_i\right)^2\right].\]
In our experiments, we take $n=n_T=16$, $n_S=8$ and $\numhiddenlayers=4$, since the learning problem can be challenging for much larger $n_T$.

We train the student network in two different ways. The first way uses ordinary stochastic gradient descent, which takes steps of gradient descent on estimates of the loss produced using Monte Carlo sampling. The second way uses mechanistic training, which takes steps of gradient descent on estimates of the loss produced by cumulant propagation. In our experiments, we use the basic version of our cumulant propagation algorithm with maximum cumulant order $\maxcumulantorder=2$, which has good mean squared error even for widths as low as 8. For stochastic gradient descent, we use a batch size of 36, which uses a comparable number of FLOPs per gradient step. In both cases, we train for 256 gradient steps and tune the learning rate to within a factor of 2 to optimize the final loss, which amounts to a learning rate of 0.25 for stochastic gradient descent and 0.5 for mechanistic training.

To estimate the above loss with cumulant propagation, we stack the hidden layer activations of the teacher and student networks into vectors of length $n_T+n_S$. This allows the output of both networks to be computed as the output of a single network whose $\layernumber\nth$ weight matrix is the block diagonal matrix
\[\mathbf W^{\left(\layernumber\right)}=\begin{pmatrix}\mathbf W^{\left(\layernumber\right)}_T&\mathbf 0\\\mathbf 0&\mathbf W^{\left(\layernumber\right)}_S\end{pmatrix},\]
where $\mathbf W^{\left(\layernumber\right)}_T$ and $\mathbf W^{\left(\layernumber\right)}_S$ are the $\layernumber\nth$ weight matrices of the teacher and student networks respectively. We run cumulant propagation to estimate the mean and covariance matrix of this combined network, starting from a Gaussian input with covariance matrix
\[\begin{pmatrix}\mathbf I_n&\mathbf I_n\\\mathbf I_n&\mathbf I_n\end{pmatrix},\]
corresponding to a duplicated input. We then estimate the loss in terms of the mean and covariance matrix of the output using the identity
\[\mathbb E[\left(A-B\right)^2]=\left(\mathbb E\left[A\right]-\mathbb E\left[B\right]\right)^2+\operatorname{Var}\left(A\right)+\operatorname{Var}\left(B\right)-2\operatorname{Cov}\left(A,B\right).\]

Implemented naively, this computation involves redundant computation because of the zeros in the weight matrices, but in order to determine the appropriate batch size for stochastic gradient descent, we adjust the FLOP count to exclude this redundant computation, in a similar manner to Appendix \ref{flopappendix}.

Our results are shown in Figure \ref{mechdistfigure} and are discussed in Section \ref{mechdistsubsection}. The average variance of an output neuron is 0.23, so both forms of training are producing some non-trivial feature matching. It is also interesting to compare the mechanistic loss estimate used for mechanistic training to the stochastic loss estimate for the same training run produced using Monte Carlo sampling: the mechanistic loss estimate has bias but no variance, whereas the stochastic loss estimate has variance but no bias. This may favor stochastic gradient descent, which can accumulate uncorrelated error across updates. We leave a remedy to this problem for mechanistic training to future work.

\clearpage

\section{Wall-clock time}\label{walltimeappendix}

We compare the mean squared error of all 4 versions of our algorithm as a function of wall-clock time to that of Monte Carlo sampling. Our implementations were written in PyTorch and run on B200 GPUs. Our results for networks with width 256 and different numbers of hidden layers from 2 to 12 are shown in Figure \ref{msevstimefigure}.

\begin{figure}[h]
  \centering
  \makebox[0pt][c]{\hspace*{1cm}\scalebox{0.7}{\input{figures/mse_vs_time.pgf}}}
  \caption{Wall-clock performance of all versions of our algorithm for different networks of width 256. Mean over 5 random seeds shown. (Error bars hidden because they would be too small to be readable.)}
  \label{msevstimefigure}
\end{figure}

In most cases, our algorithms underperform sampling. This appears to be largely because our algorithms spend the majority of the wall-clock time on a small fraction of the FLOPs that are implemented inefficiently in pure Python. In addition, they perform a significant amount of redundant computation, as described in Appendix \ref{flopappendix}. We expect that with more implementation effort, wall-clock time could be made to track FLOP count much more closely, especially at large width, but we leave this to future work.

\clearpage

\section{FLOP counting}\label{flopappendix}

To measure the FLOPs used by both our cumulant propagation algorithm and sampling baselines, we use an extension of PyTorch's \texttt{FlopCounterMode} with the modifications listed below.

\paragraph{Entrywise operation tracking:} Unlike sampling baselines, our cumulant propagation algorithm expends a large portion of its FLOPs in entrywise operations during the nonlinear step.\footnote{This is a practical weakness of using the Hermite expansion for the nonlinearity which we believe can be corrected in future work.}
Thus, for an accurate comparison of FLOPs, we add tracking for entrywise arithmetic operations (\texttt{+-*/}), which are not counted in the native \texttt{FlopCounterMode}.

\paragraph{Mock symmetric tensor kernels:} Cumulant tensors are symmetric, but our proof-of-concept implementation of cumulant propagation uses full \texttt{torch.Tensor} objects. 
To simulate the number of FLOPs that would be consumed by an implementation with efficient symmetric tensor CUDA kernels, we multiply FLOP counts for elementwise operations on symmetric tensors by the adjustment factor
\begin{equation*}
    \alpha_{n,d}:=\dfrac{\dim\syms{d}{n}}{\dim \tens{d}{n}}=\binom{n+d-1}{d}n^{-d}.
\end{equation*}
(See Appendix~\ref{diagonalsprereq} for notation.)

More generally, given an integer partition $\lambda=(1^{c_1}2^{c_2}\dots d^{c_d})\vdash d$ with $c=\sum_i c_i$ blocks,
the adjustment factor for $\lambda$-symmetric tensors is
\begin{equation*}
    \alpha_{\pi,n}:=\dfrac{\dim\syms{\lambda}{n}}{\dim \tens{d}{n}}=\prod_{i=1}^d \binom{n+c_i-1}{c_i}n^{-c_i}.
\end{equation*}
Now consider the symmetric contraction operation $-\odot \mathbf W\colon \syms{d}{n}\to\syms{d}{n}$.
The optimal path is to contract the weight matrices $\mathbf W$'s into the $d$-tensor one at a time.
Over full tensors, $\mathbf W$-contraction takes $2n^{d+1}$ FLOPs: there are $n^d$ output entries, and each takes $2n$ FLOPs to compute (counting add and multiply separately).
Thus $-\odot \mathbf W$ takes $2d n^{d+1}$ FLOPs total.
Over symmetric tensors, observe that the intermediate after contracting in the $i\nth$ $W$ is $(1^i 2^{d-i})$-symmetric, and thus only needs $\dim \Sigma^{(1^i 2^{d-i})}(\mathbb{R}^n)=\alpha_{(1^i 2^{d-i})}n^d$ entries to represent.
The total number of FLOPs over symmetric tensors can thus be multiplied by the adjustment factor
\begin{equation*}
    \beta_{d,n} := \dfrac{1}{d}\sum_{i=1}^d \alpha_{(1^i 2^{d-i}),n}=\dfrac{1}{d n^d}\sum_{i=1}^d \binom{n+i-1}{i}\binom{n+d-i-1}{d-i}.
\end{equation*}
As $n\to\infty$, the adjustment factors approach
\begin{align*}
    \alpha_{\pi,n} &\to \alpha_{\pi,\infty}:=\prod_{i=1}^d \dfrac{1}{c_i!}\\
    \beta_{d,n}&\to\beta_{d,\infty}:=\dfrac{1}{d}\sum_{i=1}^d \dfrac{1}{i!(d-i)!}.
\end{align*}
For reference,
\begin{equation*}
    \beta_{1,\infty}=1,\quad \beta_{2,\infty}=\dfrac{3}{4}, \quad \beta_{3,\infty}=\dfrac{7}{18},\quad \beta_{4,\infty}=\dfrac{5}{32}.
\end{equation*}

Note that there may be algorithms for performing symmetric tensor contractions even more efficiently than this \citep{symcontract}, but we disregard these for the sake of fairness and simplicity.

\paragraph{Polynomial fit:} Because the FLOP count for cumulant propagation through a depth $L$ MLP is a polynomial in $n$ and $L$ with leading term $n^{\maxcumulantorder+1} L$ for the unfactored version (and $n^{\maxcumulantorder} L^2$ for factored), we can simply fit a polynomial of the specified degree to the FLOP counts for a few small values of $n,L$ to find the exact counts for any values.
We check that these fit polynomials are exact on held out values of $n,L$.
See Table~\ref{flop-table} for the leading order terms of the FLOP polynomials.

\bgroup
\begin{table}[t]
\centering
\makebox[\textwidth][c]{%
\def\arraystretch{1.3}
\begin{tabular}{lllll}
\multicolumn{1}{l}{$\maxcumulantorder$}  &\multicolumn{1}{l}{Basic} &\multicolumn{1}{l}{Augmented} &\multicolumn{1}{l}{Factorized} &\multicolumn{1}{l}{Factorized augmented} \\ \hline
1 & $4n^2\numhiddenlayers + 66n\numhiddenlayers + \ldots$ & \\
2 & $7n^3\numhiddenlayers + 26n^2\numhiddenlayers +\ldots$ & $11n^3\numhiddenlayers + 150n^2\numhiddenlayers +\ldots$ \\
3 & $\frac{7}{3}n^4\numhiddenlayers+252n^3\numhiddenlayers+\ldots$ & $\frac{7}{3}n^4\numhiddenlayers+260n^3\numhiddenlayers+\ldots$ & $30n^3\numhiddenlayers^2+39n^3\numhiddenlayers+\ldots$ & $72n^3\numhiddenlayers^2+264n^3\numhiddenlayers+\ldots$ \\
4 & $\frac{5}{4}n^5\numhiddenlayers+224n^4\numhiddenlayers+\ldots$ & $\frac{5}{4}n^5\numhiddenlayers+3083n^4\numhiddenlayers+\ldots$ & $24n^4\numhiddenlayers^2 + \frac{79}{3}n^4\numhiddenlayers +\ldots$ & $60n^4\numhiddenlayers^2 + 1069n^4\numhiddenlayers +\ldots$ \\
\end{tabular}%
}
\vspace{1em}
\caption{Cumulant propagation FLOP counts expressed as polynomials for a $\operatorname{ReLU}$ MLP with width $n$ and $\numhiddenlayers$ hidden layers. All 4 versions are identical for $\maxcumulantorder=1$, and the factorized versions are identical to their unfactorized counterparts for $\maxcumulantorder=2$. Fractional coefficients arise essentially because the number of distinct entries of a symmetric tensor is a polynomial with fractional coefficients (e.g. $\frac 12n^2+\frac 12n$ for a symmetric matrix).}
\label{flop-table}
\end{table}
\egroup

\clearpage

\FloatBarrier

\renewcommand{\thesection}{S}
\renewcommand{\theHsection}{S}
\section{Technical supplement}\label{supplementappendix}

\edef\supplementsectionoffset{\number\value{section}}
\renewcommand{\thesection}{S.\number\numexpr\value{section}-\supplementsectionoffset\relax}
\renewcommand{\theHsection}{S.\number\numexpr\value{section}-\supplementsectionoffset\relax}

{\LARGE\bfseries Cumulant propagation algorithms and analysis}

\startcontents[supp]
\section*{\contentsname}
\begingroup
\makeatletter
\def\numberline#1{\hb@xt@2em{#1\hfil}}%
\makeatother
\printcontents[supp]{}{1}[1]{}
\endgroup

\section{Introduction}

In this technical supplement, we provide precise descriptions of all the algorithms described in Section \ref{methodssection}, and prove all of our theoretical results: the results of Section \ref{theoreticalresultssection} on the performance of our algorithms, and the Hermite-based diagram summation formula for cumulants given in Appendix \ref{diagramsappendix}.

To begin with, recall the main definition and theorems of the paper:

\mlpdef*

\mainthm*

\beatingthm*

More specifically, we prove \ref{matchingsamplingtheorem} by taking the estimation procedure to be the basic cumulant propagation algorithm described in Section \ref{methodssection}, and Theorem \ref{beatingsamplingtheorem} by taking it to be the factorized version described in Section \ref{factorizedsubsection}.

In order to better convey the core ideas behind these proofs, we introduce an additional, simplified algorithm that avoids the use of Hermite expansions, which we refer to as the \textit{polynomial algorithm}. This algorithm is sufficient to prove Theorem \ref{matchingsamplingtheorem} (and its factorized version for Theorem \ref{beatingsamplingtheorem}), but it is of limited practical interest, because it can only be applied to polynomial activation functions. By contrast, the \textit{Hermite-based} algorithms given in Section \ref{methodssection} can be applied to any measurable, polynomially bounded activation functions, although analytic difficulties prevent us from straightforwardly extending our proofs to this wider class of functions. The supplement begins by analyzing the polynomial algorithm, and then describes the changes necessary to analyze the Hermite-based algorithms.


We will now give a motivating overview of the algorithms and proofs, which should serve as a guide to the rest of the supplement.

\subsection{Basic structure of cumulant propagation}
\label{sec:basicstructure}

The first intuition is simple: to estimate the mean output of an MLP, we will track estimates for the distribution (or statistics of the distribution) of (pre-)activations at each layer, starting at the input distribution (which we assume is Gaussian), and alternately passing through linear layers and non-linear activation layers, to produce an estimate for the distribution of the final layer. Define the following notation, which will serve to describe the MLP layer-by-layer:

\begin{itemize}
    \item For each $\ell \in [L+1] $, let $\boldsymbol\theta ^{(\ell)} = (\mathbf W^{(1)},...,\mathbf W^{(\ell)})$, so $\boldsymbol\theta = \boldsymbol\theta ^{(L+1)}$.
    \item Let $X = X^{(0)}\sim\mathcal{N}(0,\mathbf I_n)$ denote the input to the MLP. Then recursively define
    \begin{align*}
        \forall 1 \leq \ell \leq L+1, \qquad &Z^{(\ell)} := \mathbf W^{(\ell)}X^{(\ell-1)} \\
        \forall 1 \leq \ell \leq L, \qquad & X^{(\ell)} = \phi_{\ell}(Z^{(\ell)}).
    \end{align*}
    Then $Z^{(\layernumber)}$ and $X^{(\layernumber)}$ are the preactivation and activation at layer $\layernumber$, respectively.
\end{itemize}

While our estimation procedure applies to any specific set of weights, Theorem \ref{matchingsamplingtheorem} only requires the estimate to be good \textit{in expectation over random weights and in the limit} $n\to\infty$. Therefore, we can imagine $\mathbf W^{(\ell)}$ to be a large, unstructured matrix. By the central limit theorem, the relation $Z^{(\ell)} = \mathbf W^{(\ell)}X^{(\ell-1)}$ then suggests that regardless of the distribution of $X^{(\ell-1)}$, we expect the distribution of $Z^{(\ell)}$ to be close to Gaussian. Since distributions that are close to Gaussian are well-specified by their low-degree cumulants, we are led to the idea of tracking the cumulants through the layers.

The intuition that higher cumulants decay as $n\to\infty$ is proven in Corollary \ref{cor:elementgrowth}:
\begin{equation}
    \label{eq:growth}
    \mathbb{E} \left[ \left| \kappa_r[Z^{(\ell)}] _{i_1,...,i_r} \right|^2 \right] = 
    \begin{cases}
        O( n^{2-r} ) & \parbox[t]{6cm}{\centering if $r$ is even, and each value in $i_1,\ldots,i_r$\\
appears with even multiplicity, } \\
        O( n^{1-r} ) & \parbox[t]{6cm}{\centering otherwise.}
    \end{cases}
\end{equation}

This heuristically suggests that for integers $K\ge 1$, to get mean squared error (MSE) $O(n^{-K})$ we can do the following:
\begin{itemize}
    \item If $K$ is even, estimate all cumulants of order at most $K$;
    \item If $K$ is odd, estimate all cumulants of order at most $K$ \textit{and} the elements $\kappa_{K+1}[Z^{(\ell)}]_{i_1,i_1,...,i_{\frac{K+1}{2}},i_{\frac{K+1}{2}}}$. (As a more efficient bookkeeping device, we track these terms via the scalar harmonic component of $\kappa_{K+1}$ -- see Section \ref{harmonicprereq}).
\end{itemize}

This is indeed what we do in a class of algorithms we call \textit{cumulant propagation} (so $\maxcumulantorder$ is not literally the maximum cumulant degree propagated, but it serves the same moral purpose). These algorithms estimate the cumulants listed above at each preactivation and activation layer. 

In our main results, our algorithm is given only the tolerance parameter $\varepsilon$. In this case, for each $n$, we take $\maxcumulantorder$ to be the smallest integer such that $\frac 1{n^\maxcumulantorder}\leq\varepsilon^2$ (note that we could have chosen any multiple of $\frac 1{n^\maxcumulantorder}$ arbitrarily). This choice of $\maxcumulantorder$ ensures that the mean squared error is $O(\varepsilon^2)$. To be fast enough, we require the time taken to be $O(n^{K+1})$.\footnote{One might have thought that $O(n^{K+2})$ would be sufficient to achieve a time of $O(\frac{n^2}{\varepsilon^2})$, but this is not the case because $K$ is constrained to be an integer, while $\varepsilon^2$ could be $\Theta(n^{-k})$ for some non-integer $k$.} Therefore, to prove the validity of cumulant propagation, we wish to prove that for each integer $K \ge 1$,
\begin{enumerate}
    \item Tracking the cumulants listed above does indeed produce MSE $O(n^{-K})$. This is our main result, Theorem \ref{thm:main}.
    \item Tracking these cumulants takes time $O(n^{K+1})$. This is easily verified during the description of the algorithm in Sections \ref{sec:polyalgdesc} and \ref{kpropfull}.
\end{enumerate}

Note that for this analysis to hold, we require that $\maxcumulantorder$ is bounded above by a constant. This is exactly where the requirement that $\varepsilon\!\left(n\right)$ is noticeable comes from.

Let us now look at each stage of the algorithms in more detail.

\subsection{The linear step}

Tracking cumulants through the linear step is simple but expensive: cumulants are multi-linear, so the cumulants of degree $r$ transform as
\[
\kappa_r[Z^{(\ell)}]_{i_1,\ldots,i_r} = \sum_{j_1,...,j_r=1}^n \kappa_r[X^{(\ell-1)}]_{j_1,...,j_r}\prod_{s=1}^r \mathbf W_{i_s,j_s} ^{(\ell)}
\]
which takes $O(n^{r+1})$-time, and so can be done for cumulants up to degree $K$. We call this \textit{symmetric contraction} (see Definition \ref{def:symmcontr}). For harmonic components of cumulants of degree $>K$, we must figure out how to propagate the harmonic component itself rather than the whole tensor, but this is relatively routine.

\subsection{The non-linear step for the polynomial algorithm}

The more challenging part of the problem is how to propagate the cumulants through the (possibly non-linear) activation function, i.e. $X^{(\ell)} = \phi_{\ell}(Z^{(\ell)})$. The polynomial algorithm deals with this non-linearity in the following way: since $\phi_{\ell}$ is a polynomial, the cumulant $\kappa_r[X^{(\ell)}]_{i_1,...,i_r}$ can be expanded out by multi-linearity to a linear combination of cumulants of the form
\[
\kappa_r[(Z^{(\ell)}_{i_1})^{k_1},...,(Z^{(\ell)}_{i_r})^{k_r}]
\]
A simple combinatorial argument shows that a cumulant of powers can be written as a polynomial in the cumulants of the underlying random variables (see Lemma \ref{lem:simplepowers}), in particular
\[
    \kappa_r[(Z^{(\ell)}_{i_1})^{k_1},...,(Z^{(\ell)}_{i_r})^{k_r}]
 = \sum_{ \substack{ \pi \vdash [k_1] \sqcup ... \sqcup [k_r] \\ \text{connected}} }^{  } \prod_{ S \in \pi  }^{  } \kappa \left[ Z^{(\ell)} _{ i_v } : (v,l) \in S \right] 
\]
where $\pi$ is a partition (see Section \ref{sec:polypolyproof} for the precise definition of the notation). Combining this with the multi-linearity proves that, if $\phi(t) = \sum_{k=0}^{D} \frac{1}{k!}a_k t^k $ is a polynomial, 
\begin{equation}
    \label{eq:taylordiagrams}
    \kappa_r[\phi(Z)]_{i_1,...,i_r} = \sum_{\underline{k} \in \mathbb \{0,...,D\}^r} \frac{1}{\underline{k}!}\left(\prod_{v=1}^r a_{k_v} \right) \sum_{\substack{\pi\vdash [k_1]\sqcup ...\sqcup [k_r] \\ \text{connected}}} \prod_{S\in\pi} \kappa[Z_{i_v}: (v,l)\in S].
\end{equation}

This is the simplest form of what we call a \textit{diagram summation formula}, in particular this is the polynomial diagram summation formula (see Theorem \ref{polydiagsthm}). This diagram summation formula allows us to propagate our estimates for (the harmonic components of) the cumulants through the non-linearity.

\subsection{The non-linear step for Hermite-based algorithms}
\label{sec:polyadj}

It is clear that the algorithm just discussed has a number of flaws:
\begin{itemize}
    \item It can only be run for polynomial $\phi_{\ell}$; and,
    \item Even for polynomial $\phi_{\ell}$, the constants in the runtime depend on the degree of the polynomial. A more favorable algorithm might prioritize the larger terms in (\ref{eq:taylordiagrams}) so that the number of terms computed depends on $K$ but not $D$.
\end{itemize}
Let us now consider how to alter the algorithm to fix these two issues. A first attempt could go as follows: expand the activation function as a Taylor series around $x=0$, apply multilinearity and Lemma \ref{lem:simplepowers} to get the same diagram summation formula above (equation (\ref{eq:taylordiagrams})) as a formal expansion with infinite summation, and truncate this expression after some finite number of terms. 

Now recall equation (\ref{eq:growth}). This tells us that the means and variances of $Z$ are of constant size as $n\to\infty$ (technically we only give an upper bound, but it is, in fact, tight). Therefore, for any diagram (i.e. value of $\underline{k}$ and $\pi$) in (\ref{eq:taylordiagrams}), we can produce an infinite collection of diagrams with the same growth in $n$ by increasing $\underline{k}$ by $\underline{d}$ and partitioning $\underline{d}$ with means and variances. So, any finite truncation will surely fail. 

A partial solution to this is to arrange the sum on a ``base diagram" of size $\underline{k}$ which contains no part of size 1 or 2 in a single $[k_v]$ (we call such a diagram $[\leq2]$-mixed), and a diagram of size $\underline{d}$ which contains the remaining parts (and does not affect the connectedness of the complete diagram). We need an extra binomial factor to account for the choice of which shape-$\underline{k}$ subset of $\underline{k}+\underline{d}$ is the base diagram.
\begin{align}
    \kappa_r[\phi(Z)]_{i_1,...,i_r} = & \sum_{\underline{k}\in\mathbb{Z}^r_{\ge0}} \left(\sum_{\substack{\pi\vdash [k_1]\sqcup ...\sqcup [k_r] \\ \text{connected} \\ [\leq2]\text{-mixed}}} \prod_{S\in\pi} \kappa[Z_{i_v}: (v,l)\in S] \right) \nonumber \\
    &\hspace{2cm}\prod_{v=1}^r\left( \sum_{d=0}^{\infty} \binom{k_v+d}{k_v} \frac{a_{k_v+d}}{(k_v+d)!} \sum_{\substack{\tau \vdash[d] \\ \text{all parts size }\leq2}} \prod_{S\in\tau}\kappa_{|S|}[Z_{i_v}] \right)
\end{align}

We can now simplify the second term: by the defining property of cumulants, the sum over $\tau$ is equal to $\mathbb E[(Y_{i_v})^d]$, where $Y_{i_v}$ is the 1d Gaussian with the same mean and variance as $Z_{i_v}$. This means that the sum over $d$ evaluates to $\frac{1}{k_v!}\mathbb E \left[ \phi^{(k_v)}(Y_{i_v}) \right]$. This expectation is exactly the $k_v$-th Hermite coefficient of $\phi$ with respect to $Y_{i_v}$, which we notate $\widehat{\phi}_{k_v}^{Y_{i_v}}$ (for the definition, see Section \ref{hermitesubsection})! Therefore, we have shown
\begin{equation}
    \label{eq:basichermite}
    \kappa_r[\phi(Z)]_{i_1,...,i_r} = \sum_{\underline{k}\in\mathbb{Z}^r_{\ge0}} \frac{1}{\underline{k}!} \left( \prod_{v=1}^r \widehat{\phi}_{k_v}^{Y_{i_v}} \right)\left(\sum_{\substack{\pi\vdash [k_1]\sqcup ...\sqcup [k_r] \\ \text{connected} \\ [\leq2]\text{-mixed}}} \prod_{S\in\pi} \kappa[Z_{i_v}: (v,l)\in S] \right)
\end{equation}

This is the Hermite-based diagram summation formula, Theorem \ref{diagramstheorem}. This is significant progress, since the Hermite coefficients may often be computed exactly (or at least approximated efficiently). In fact, if $i_1,...,i_r$ are distinct, we have collected together all of the $\Theta(1)$-growth terms, and this is the expansion that we truncate in our Hermite-based algorithms. 

However, there is still a further adjustment to make: notice that when $i_1=i_2$ in Equation (\ref{eq:basichermite}), the ``base diagram'' part may still contain $\mathrm{Var}[Z_{i_1}]$. For example, the equation simplifies in the case $r=2, i_1=i_2=i$, and $Z=\mathcal{N}(0,I_n)$ to
\begin{align*}
    \kappa_2[\phi(Z)]_{i,i}=\sum_{k\geq 1} \dfrac{1}{k!} \left(\widehat{\phi}_k^Z\right)^2 \Var[Z_i]^k.
\end{align*}
So once again, when some indices are equal, any truncation of (\ref{eq:basichermite}) will fail. The fix here can be thought of as one of the following two equivalent observations: 
\begin{enumerate}
    \item We can explicitly sum this infinite collection of $\Theta(1)$ terms. By the orthogonality property for Hermite polynomials, 
    \[
    \widehat{\phi^2}_0^Z=\mathbb{E}[\phi^2] = \mathbb{E} \left[ \sum_{k_1,k_2=0}^\infty \frac{\sigma^{k_1+k_2}}{k_1!k_2!}\widehat{\phi}_{k_1}^Z\widehat{\phi}_{k_2}^Z \mathrm{He}_{k_1}\left(\sigma^{-1}Z\right)\mathrm{He}_{k_2}\left(\sigma^{-1}Z\right)\right] = \sum_{k=0}^{\infty} \dfrac{1}{k!} \left(\widehat{\phi}_k^Z\right)^2 \sigma^{2k}
    \]
    and so 
    \[
    \kappa_2[\phi(Z)]_{i,i} = \widehat{\phi^2}_0^Z-(\widehat{\phi}_0^Z)^2.
    \]
    We can always do this summation to remove the $\Theta(1)$ terms in (\ref{eq:basichermite}) when indices are repeated.\footnote{For a more involved example, consider $\kappa_3[\phi(Z)]_{i,i,j}$ with $i\neq j$ when $Z$ is a mean zero multivariate Gaussian. The Hermite-based diagram summation formula can be factorized as
    \[
    \kappa_3[\phi(Z)]_{i,i,j} = \sum_{k \ge 1} \frac{1}{k!} (\widehat{\phi}_k^Z)_j \mathrm{Cov}[Z]^k_{i,j} \left( \sum_{a\ge 0} \frac{1}{a!} \mathrm{Var}[Z]_i^a \sum_{ \substack{0\leq b\leq k\\ a+b(k-b) \neq 0}} \binom{k}{b} (\widehat{\phi}_{a+b}^Z)_i(\widehat{\phi}_{k+a-b}^Z)_i \right)
    \]
    Another application of the orthogonality of Hermite coefficients, and a formula for the Hermite expansion of a product of Hermite polynomials shows that the inner parentheses evaluate to $\left( \widehat{\phi^2}_k^Z\right)_i-2\left(\widehat{\phi}_0^Z\right)_i\left(\widehat{\phi}_k^Z\right)_i$
    and so
    \begin{align*}
    \kappa_3[\phi(Z)]_{i,i,j} &= \sum_{k\ge 1}\frac{1}{k!}  \left( \widehat{\phi^2}_k^Z\right)_i(\widehat{\phi}_k^Z)_j \mathrm{Cov}[Z]_{i,j}^k -2\left(\widehat{\phi}_0^Z\right)_i \sum_{k\ge1}\left(\widehat{\phi}_k^Z\right)_i (\widehat{\phi}_k^Z)_j\mathrm{Cov}[Z]_{i,j}^k \\
    &= \kappa_2[\phi(Z)^2,\phi(Z)]_{i,j} - 2 \kappa_1[\phi(Z)] \kappa_2[\phi(Z)]_{i,j}
    \end{align*}
    matching our second interpretation of power cumulants.
    }
    \item Alternatively, by writing the cumulant in terms of expectations and collecting equal indices, we can write any cumulant as a polynomial in cumulants of powers with pairwise distinct indices. For example,
    \begin{align*}
        \kappa_2[\phi(Z)]_{i,i} &= \kappa_1 [\phi^2(Z)]_i - \kappa_1[\phi(Z)]_i^2 \\
        \kappa_3[\phi(Z)]_{i,i,j} &= \kappa_2 [\phi^2(Z),\phi(Z)]_{i,j} - 2 \kappa_2[\phi(Z)]_{i,j}\kappa_1[\phi(Z)]_i.
    \end{align*}
    We call this construction \textit{power cumulants} (see Section \ref{powercumulantsprereq}). 

    These power cumulants can now be expanded by the same expansion as (\ref{eq:basichermite}) with the Hermite coefficients of powers of $\phi$. All $\Theta(1)$-size cumulants in this expression have now been absorbed into the Hermite coefficients. 
\end{enumerate}
These two bookkeeping devices (Hermite coefficients and power cumulants) turn out to be sufficient to deal with both flaws of the polynomial algorithm mentioned at the start of this section. So, to summarize, the outline of the non-linear step for the Hermite-based algorithms is as follows:
\begin{itemize}
    \item First use the combinatorics of power cumulants to rewrite $\kappa_r[\phi_{\ell}(Z^{(\ell)})]_{i_1,...,i_r}$ using cumulants of powers of $\phi_{\ell}(Z^{(\ell)})$ where all indices are distinct.
    \item For a power cumulant with distinct indices, use the corresponding expansion similar to equation (\ref{eq:basichermite}).
    \item Depending on the specific algorithm, truncate this expansion for power cumulants after some finite number of terms so that the tail is sufficiently small in the limit as $n\to\infty$.
\end{itemize}

Since we can compute Hermite coefficients of powers of $\phi$ in many cases beyond polynomial $\phi$, this algorithm can be applied quite broadly. For example, see Section \ref{sec:hermite-relu} for the Hermite coefficients of powers of $\mathrm{ReLU}$. 

\subsection{Proof outline}

We now give an overview of the proof itself. 

As is clear in Section \ref{sec:basicstructure} (see equation (\ref{eq:growth})), it is crucial to understand the asymptotic growth of the true cumulants as $n\to\infty$. While Corollary \ref{cor:elementgrowth} gives the moral result, the real work is done by Proposition \ref{prop:cumulantgrowth}, which states that any connected scalar contraction of cumulants grows linearly in $n$ as $n\to\infty$. This proceeds via layer-by-layer induction, and is independent of the algorithm, simply depending on the choice of activation function. See Section \ref{sec:nonpoly-activation} for a discussion in the non-polynomial setting.

Next, we compute bounds for the expected sizes of the error cumulants (the difference between our estimated cumulants and the true cumulants). For these bounds, we simply bound the square value of any entry of an error cumulant by $n^{-K}$. This proof of this result is essentially a simple induction:
\begin{itemize}
\item The error tensors of $Z^{(\ell+1)}$ are easily computed in terms of contractions of the error tensors of $X^{(\ell)}$.
\item To compute the sizes of these contractions, we use the relevant diagram summation formula (Theorem \ref{polydiagsthm} or \ref{diagramstheorem}) and prove that the terms we drop in the truncation are all sufficiently small (we only drop terms when using the Hermite-based diagram summation formula). These terms include both the error cumulants and true cumulants of $Z^{(\ell)}$, and we use Cauchy-Schwartz to separate the two, bounding the two resulting diagrams individually.
\end{itemize}

Applying the error bound to the final first-order cumulant gives the required result. See Theorem \ref{thm:main} for the proof for the polynomial algorithm, Section \ref{smainproof} for the adjustments required for the Hermite-based algorithms, and Section \ref{sec:nonpoly-activation} for a discussion of the non-polynomial setting. 

\subsection{Supplement structure}

The supplement is structured as follows.

\begin{itemize}
    \item In Section \ref{sec:polypolyproof}, we prove Theorem \ref{matchingsamplingtheorem} using the polynomial algorithm. We also cover the prerequisites for the proof, including the polynomial diagram summation formula (Theorem \ref{polydiagsthm}), harmonic decompositions of symmetric tensors (Theorem \ref{thm:harmonicdecomposition}), and asymptotic growth of tensors and tensor diagrams (Sections \ref{tensorgrowth} and \ref{sec:tensordiags}).
    \item In Section \ref{smathprelims}, we state some preliminary results required for the proofs of the Hermite-based algorithms. These cover some combinatorial results required for working with cumulants and diagrams, as well as explicit computation of Hermite coefficients.
    \item In Section \ref{kpropfull}, we give a full mathematical description of our Hermite-based algorithms.
    \item In Section \ref{smainproof}, we give a detailed sketch of the alterations to the proof in Section \ref{sec:polypolyproof} in order to prove Theorem \ref{matchingsamplingtheorem} using the Hermite-based algorithms, still in the case of polynomial activation functions.
    \item In Section \ref{sec:nonpoly-activation}, we discuss the remaining analytic results that we believe would be sufficient to adapt the proof in Section \ref{smainproof} to apply to non-polynomial activation functions.
    \item In Section \ref{diagramsproof}, we prove both the polynomial and Hermite-based versions of the diagram summation formula for cumulants.
    \item In Section \ref{sec:prelimproofs}, we prove all of remaining the preliminary results of Sections \ref{sec:polypolyproof} and \ref{smathprelims}.
\end{itemize} 

\section{Polynomial algorithm}\label{sec:polypolyproof}
In this Section we will prove Theorem \ref{matchingsamplingtheorem}, the main theoretical result of this paper, in the case that the algorithm used is the polynomial algorithm (note that as per Section \ref{kpropfull}, this algorithm does not agree with the basic algorithm, even in the case that the activation functions are polynomials).

Write $\tens{r}{n}$ for the space of all \textit{$r$-tensors} on $n$ real variables (which is an $\mathbb{R}$-inner product space with the Frobenius inner product). Write $\syms{r}{n}$ for the space of all \textit{symmetric} $r$-tensors on $n$ real variables.

A \textit{partition} $\pi$ of $[r]$, denoted $\pi\vdash[r]$, is a set of disjoint subsets (\textit{blocks}) of $[r]$ with union equal to $[r]$. The set of partitions $\{ \pi : \pi \vdash [r] \} $ forms a lattice with the following notation:
\begin{itemize}
    \item Write $\pi \leq \tau$ if $\forall S\in \pi , \exists T\in \tau$ such that $S\subseteq T$. 
    \item For $\pi,\tau\vdash [r]$, write $\pi \wedge \tau$ for the greatest lower bound (called the \textit{meet}) of $\pi$ and $\tau$, i.e.
    \[
    \pi\wedge\tau = \{ S\cap T: S\in \pi, T\in \tau \} 
    \]
    \item For $\pi,\tau \vdash [r]$, write $\pi\vee \tau$ for the least upper bound (called the \textit{join}) of $\pi$ and $\tau$.
\end{itemize}

\subsection{Polynomial diagram summation formula}

The polynomial algorithm uses a significantly simpler version of Theorem \ref{diagramstheorem} that avoids the need to Hermite coefficients, at the expense of us not being able to truncate the formula (therefore introducing a dependence of the algorithm on the degree of the polynomial). For polynomials, this is not an issue for the theoretical result since both diagram summation formulas have only a finite number of terms, however when we discuss non-polynomial activations we will see that truncating the Hermite-based diagram summation formula is a crucial step.

\begin{restatable}[Polynomial diagram summation formula for cumulants]{theorem}{polydiagramsummationformula}\label{polydiagsthm}
    Let $Z\in \mathbb{R}^n$ be a random variable with finite moments and let $\phi: \mathbb{R}\to\mathbb{R}$ be a polynomial. Then
    \[
\begin{aligned}
\kappa\left[\phi\left(Z_{j_1}\right),\dots,\phi\left(Z_{j_r}\right)\right]=\sum_{\underline{k}\in \mathbb{Z}_{\ge0}^r}\left(\prod_{v=1}^r\frac 1{k_v!} \phi^{(k_v)}(0)\right)\sum_{\pi \in \mathrm{cDia}(\underline{k})}\prod_{S\in\pi}\kappa\left[Z_{j_v}:\left(v,w\right)\in S\right]
\end{aligned}.
\]
\end{restatable}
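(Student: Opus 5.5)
The plan is to reduce the statement to a purely combinatorial identity about cumulants of products, the classical Leonov--Shiryaev formula, and then use multilinearity. Write $\phi(t)=\sum_{k=0}^D \frac{1}{k!}\phi^{(k)}(0)\,t^k$. Cumulants are multilinear in their arguments, so
\[
\kappa\left[\phi\left(Z_{j_1}\right),\dots,\phi\left(Z_{j_r}\right)\right]=\sum_{\underline k\in\{0,\dots,D\}^r}\left(\prod_{v=1}^r\frac1{k_v!}\phi^{(k_v)}(0)\right)\kappa\left[Z_{j_1}^{k_1},\dots,Z_{j_r}^{k_r}\right].
\]
Terms with $k_v>D$ vanish because $\phi^{(k_v)}(0)=0$, so we may sum over all of $\mathbb Z_{\ge0}^r$. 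It therefore suffices to prove the following for each $\underline k$:
\[
\kappa\left[Z_{j_1}^{k_1},\dots,Z_{j_r}^{k_r}\right]=\sum_{\pi\in\mathrm{cDia}(\underline k)}\prod_{S\in\pi}\kappa\left[Z_{j_v}:(v,w)\in S\right].
\]
This is Lemma~\ref{lem:simplepowers} as referenced in the overview. I would prove it directly as follows.

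Let $V=\{(v,w):v\in[r],w\in[k_v]\}$ and let $\sigma$ be the partition of $V$ into the blocks $B_v=\{(v,w):w\in[k_v]\}$. Assign to each element $(v,w)$ the variable $Y_{(v,w)}:=Z_{j_v}$. For any partition $\rho$ of $[r]$, apply the moment--cumulant formula to the product of all $Y$'s lying in $\bigcup_{v\in R}B_v$ for each block $R\in\rho$, and multiply over the blocks. This gives
\[
\prod_{R\in\rho}\mathbb E\left[\prod_{v\in R}Z_{j_v}^{k_v}\right]=\sum_{\pi\vdash V,\ \pi\vee\sigma\le\hat\rho}\ \prod_{S\in\pi}\kappa[Y_s:s\in S],
\]
where $\hat\rho$ is the partition of $V$ induced by $\rho$. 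Equivalently, the sum runs over those $\pi$ whose connected-component partition $c(\pi):=\pi\vee\sigma$, viewed as a partition of $[r]$, is finer than $\rho$.

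Now define $f(\tau)$ for $\tau\vdash[r]$ as the sum over $\pi$ with $c(\pi)=\tau$. The identity above says that the moment products are $\sum_{\tau\le\rho}f(\tau)$. The cumulants of the variables $Z_{j_v}^{k_v}$ satisfy the same relation with $\prod_{T\in\tau}\kappa[Z_{j_v}^{k_v}:v\in T]$ in place of $f(\tau)$. By Möbius inversion on the partition lattice, these two expressions agree for every $\tau$. Taking $\tau=\hat1$ gives exactly the sum over connected $\pi$.

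The step needing the most care is showing that $f$ factorizes over the blocks of $\tau$. This is what justifies matching $f$ with the product form and applying the induction/Möbius argument. It holds because a $\pi$ with $c(\pi)=\tau$ is precisely a choice of a connected diagram on each block of $\tau$.

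A cleaner alternative is induction on the size of the finest partition. Assume the identity for all smaller configurations, expand both $\mathbb E[\prod_v Z_{j_v}^{k_v}]$ and the cumulant partition formula, and cancel the disconnected terms. The remaining task is to verify the degenerate convention when some $k_v=0$. In that case $Z^0=1$ is constant, so the cumulant vanishes unless $r=1$, and this matches the definition of connectedness in Definition~\ref{def:diagram}.
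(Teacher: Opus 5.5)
Your proposal is correct and follows essentially the same route as the paper. Multilinearity reduces the statement to Lemma~\ref{lem:simplepowers}. The paper proves that lemma by expanding $\mathbb E\bigl[\prod_v Z_{j_v}^{k_v}\bigr]$ both over partitions of $[r]$ and over diagrams grouped by connected components, and then appealing to uniqueness of cumulants; your M\"obius inversion (with the degenerate case $k_v=0$ checked separately) makes that uniqueness step explicit. One small correction: because you have the block-product moment identity for every $\rho$, inversion gives $f(\tau)=\prod_{T\in\tau}\kappa[Z_{j_v}^{k_v}:v\in T]$ directly. So the factorization of $f$ is a by-product rather than the step needing most care; it is only essential in the paper's version, which works from the single full joint moment.
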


This is proven in Section \ref{diagramsproof}. See Definition \ref{def:diagram} for the definition of $\mathrm{cDia}(\underline{k})$.
\subsection{Harmonic decomposition of symmetric tensors}\label{harmonicprereq}
\begin{definition}
    The \textit{trace} $\tr\colon \syms{r}{n}\to \syms{r-2}{n}$ is defined by
    \begin{equation*}
    (\tr \bfT )_{i_1,...,i_{r-2}} = \sum_{i=1}^n \bfT _{i,i,i_1,...,i_{r-2}}.
    \end{equation*}
    for $\bfT \in\syms{r}{n}$. Also, for $r\in\{0,1\}$, define $\tr : \syms{r}{n} \to \{0\}$ to be the zero map.
    The \textit{cup} $g\colon\syms{r}{n}\to\syms{r+2}{n}$ is defined by
    \begin{equation*}
    g(\bfT )_{i_1,...,i_{r+2}} = \sum _{ \substack{u,v\in[r+2] \\ u< v}} \delta_{i_u,i_v} \bfT _{ \{i_s:s\not\in \{u,v\}\}},
    \end{equation*}
    where $\delta$ is the Kronecker delta.
    More generally, for $\mathbf M\in\syms{2}{n}$, the \textit{$\mathbf{M}$-cup} $g_{\mathbf M}\colon\syms{r}{n}\to\syms{r+2}{n}$ is defined by
    \begin{equation*}
        g_{\mathbf M}(\bfT )_{i_1,...,i_{r+2}} = \sum _{ \substack{u,v\in[r+2] \\ u< v}} \mathbf M_{i_u,i_v} \bfT _{ \{i_s:s\not\in \{u,v\}\}}.
    \end{equation*}
\end{definition}
\begin{definition}
    \label{def:symmcontr}
    For a rank-$r$ tensor $\bfT$ and a matrix $\bfW$ define the \textit{symmetric contraction of $\bfT$ with $\bfW$} to be the rank-$r$ tensor
    \begin{equation*}
    (\bfT\odot \bfW)_{i_1,...,i_r} := \sum_{j_1,...,j_r=1}^n \bfT_{j_1,...,j_r}\prod_{s=1}^r \bfW_{i_s,j_s}
    \end{equation*}
    This operation can be computed in $O(n^{r+1})$ operations by multiplying in the matrix one index at a time.
\end{definition}

Given a symmetric rank-$r$ tensor $\bfT $, the harmonic decomposition expresses it as a sum of a ``generic'' part and a lower rank object (which then recursively breaks down into still smaller objects). 

\begin{restatable}[Harmonic decomposition of symmetric tensors]{theorem}{harmonicdecompositionformula} \label{thm:harmonicdecomposition}
    Let $\bfT \in\syms{r}{n}$.
    Then there exist unique $h_{r-2s}(\bfT )\in\syms{r-2s}{n}$ for $0\leq s \leq \left\lfloor \frac{r}{2} \right\rfloor $ satisfying
    \begin{align*}
        {\tr}\left( h_{r-2s}(\bfT ) \right) &=0,\quad\quad \forall 0 \leq s \leq \left\lfloor\frac{r}{2}\right\rfloor -1\\
        \bfT  &=  \sum _{0\leq s \leq \left\lfloor \frac{r}{2} \right\rfloor } g^{s}(h_{r-2s}(\bfT ))
    \end{align*}
    In fact,
    \begin{equation}
        \label{harmonicparteq}
    h_{r-2s}(\bfT ) = \sum_{0\leq t \leq \lfloor \frac{r-2s}{2} \rfloor} c^{\mathrm h}_{s,t} g^{t}( {\tr}^{s+t}(\bfT ))
    \end{equation}
    where
    \begin{equation*}
    c^{\mathrm h}_{s,t} = (-1)^t \frac{r+\frac{n}{2}-2s-1}{2^{s+t}s!t!(r+\frac{n}{2}-2s-t-1)_{s+t+1}}
    \end{equation*}
    with $(a)_t := a(a+1)...(a+t-1)$ (the Pochhammer symbol).
\end{restatable}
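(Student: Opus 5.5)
The plan is to treat $\tr$ and $g$ as a lowering/raising pair on $\bigoplus_r\syms{r}{n}$. First I will establish the single commutation relation
\[
\tr\bigl(g(\bfT)\bigr)=g\bigl(\tr(\bfT)\bigr)+(n+2r)\,\bfT,\qquad \bfT\in\syms{r}{n}.
\]
To prove it, trace $g(\bfT)\in\syms{r+2}{n}$ over the first two index slots and split the sum over pairs $u<v$ into three kinds.
\begin{itemize}
\item The pair $\{1,2\}$ contributes $\sum_i\delta_{ii}\bfT=n\bfT$.
\item A pair $\{1,v\}$ or $\{2,v\}$ with $v\geq 3$ contributes a copy of $\bfT$ with the index $i_v$ reinstated. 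There are $2r$ such pairs, giving $2r\,\bfT$.
\item A pair not meeting $\{1,2\}$ leaves the two traced slots inside $\bfT$, so these pairs sum to $g(\tr\bfT)$.
\end{itemize}
The edge cases $r\in\{0,1\}$, where $\tr$ is the zero map, are checked separately. Iterating the relation on a traceless $\mathbf h\in\syms{m}{n}$ gives
\[
\tr\bigl(g^s(\mathbf h)\bigr)=s\,(n+2m+2s-2)\,g^{s-1}(\mathbf h),
\]
by induction on $s$, summing $(n+2(m+2j))$ over $j=0,\dots,s-1$. Applying this $q$ times gives
\[
\tr^q g^s(\mathbf h)=\lambda_{m,s,q}\,g^{s-q}(\mathbf h),\qquad \lambda_{m,s,q}=\prod_{j=0}^{q-1}(s-j)\,(n+2m+2s-2j-2)
\]
for $q\le s$, and $\tr^q g^s(\mathbf h)=0$ for $q>s$.

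For existence and uniqueness I will show that $\tr$ and $g$ are adjoint up to a positive scalar with respect to the Frobenius inner product on symmetric tensors. This follows because $\langle g(\bfA),\bfB\rangle$ equals $\binom{r+2}{2}\langle\bfA,\tr\bfB\rangle$ by symmetry of $\bfB$. Consequently $\syms{r}{n}=\ker\tr\oplus g\bigl(\syms{r-2}{n}\bigr)$ is an orthogonal decomposition. Induction on $r$ then produces $h_r(\bfT)\in\ker\tr$ and a preimage in $\syms{r-2}{n}$, which is itself decomposed recursively; this yields existence. Uniqueness amounts to showing $g$ is injective and that $\sum_s g^s(\mathbf h_{r-2s})=0$ with all $\mathbf h$ traceless forces every $\mathbf h=0$. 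I will apply $\tr^{q}$ for $q=\lfloor r/2\rfloor,\dots,0$ in decreasing order. Because $\lambda_{m,s,s}>0$ whenever $n\ge1$, this peels off the components one at a time.

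For the explicit formula \eqref{harmonicparteq}, I apply $\tr^{q}$ to $\bfT=\sum_s g^s(h_{r-2s})$, which gives the triangular system
\[
\tr^q\bfT=\sum_{s\ge q}\lambda_{r-2s,s,q}\,g^{s-q}(h_{r-2s}).
\]
Rather than inverting this system by hand, I will verify the claimed formula directly. Define $\widetilde h_{r-2s}$ by the right-hand side of \eqref{harmonicparteq}; it suffices to show $\tr\widetilde h_{r-2s}=0$ and $\sum_s g^s(\widetilde h_{r-2s})=\bfT$, after which uniqueness finishes the proof. To check $\tr\widetilde h_{r-2s}=0$, use the commutation relation in the form $\tr g^t(\bfS)=g^t\tr\bfS+t\,(n+2k+2t-2)\,g^{t-1}\bfS$ for $\bfS\in\syms{k}{n}$ with $k=r-2s-2t$. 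Applying it to each term $g^t(\tr^{s+t}\bfT)$ turns $\tr\widetilde h_{r-2s}$ into a telescoping combination of the tensors $g^{t}\tr^{s+t+1}\bfT$. Each coefficient then vanishes by the ratio
\[
\frac{c^{\mathrm h}_{s,t+1}}{c^{\mathrm h}_{s,t}}=-\frac{1}{2(t+1)\,(r+\frac n2-2s-t-2)}.
\]
This ratio can be read off from the Pochhammer form, and it matches $(t+1)(n+2(r-2s-2t-2)+2t)$ up to the factor $2$ coming from $\frac n2$.

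The main obstacle will be the reconstruction identity $\sum_s g^s(\widetilde h_{r-2s})=\bfT$. Expanding it gives $\sum_{u}\bigl(\sum_{s+t=u}c^{\mathrm h}_{s,t}\bigr)g^{u}\tr^{u}\bfT$. One needs $\sum_{s+t=u}c^{\mathrm h}_{s,t}=\delta_{u0}$, which is a hypergeometric (Chu--Vandermonde type) identity in the parameter $a=r+\frac n2-1$. I would try to avoid proving it head-on. The plan is instead to induct on $r$ via the recursive existence construction: $h_r(\bfT)$ is the orthogonal projection of $\bfT$ onto $\ker\tr$, so it is determined by the $s=0$ case of the formula, which the trace computation above already pins down uniquely. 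The remaining components satisfy $h_{r-2s}(\bfT)=h_{r-2s}\bigl(\bfT-h_r(\bfT)\bigr)$, and they would then follow by applying the inductive hypothesis to $\bfT'$ with $g(\bfT')=\bfT-h_r(\bfT)$, where $\tr^{s}\bfT'$ can be recovered from $\tr^{s+1}\bfT$ using the $\lambda$'s. Matching the constants produced by this recursion against $c^{\mathrm h}_{s,t}$ is the bookkeeping step where errors are most likely, and I would first sanity-check it at $r=2,3$.
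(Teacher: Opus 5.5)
Everything up to the reconstruction step is sound. This includes the relation $\tr g(\bfT)=g(\tr\bfT)+(n+2r)\bfT$ (the $k=1$ case of the paper's commutation relation), its iterates $\lambda_{m,s,q}$, and the adjointness giving $\syms{r}{n}=\ker\tr\oplus g(\syms{r-2}{n})$. It also includes your existence and uniqueness argument, which the paper does not spell out, and the tracelessness check, whose ratio $c^{\mathrm h}_{s,t+1}/c^{\mathrm h}_{s,t}$ is exactly the recursion the paper derives.

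The gap is that tracelessness only determines $c^{\mathrm h}_{s,t}$ up to the factor $c^{\mathrm h}_{s,0}$, and for $s\ge 1$ you never establish that factor. The paper fixes it through the same condition $\sum_{s+t=u}c^{\mathrm h}_{s,t}=\delta_{u0}$ and simply asserts the resulting closed form, so there is nothing there to borrow. Your proposed bypass does not work as described, because $\tr^{s}\bfT'$ is not a fixed multiple of $\tr^{s+1}\bfT$. Indeed $\bfT'=\sum_{s\ge1}g^{s-1}h_{r-2s}(\bfT)$, and your formula gives $\lambda_{r-2s,s,q+1}=s\,(n+2r-2s-2)\,\lambda_{r-2s,s-1,q}$. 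So passing from $\tr^{q}\bfT'$ to $\tr^{q+1}\bfT$ multiplies the $g^{s-1-q}h_{r-2s}(\bfT)$ component by a factor that depends on $s$, and undoing this requires the very components you are trying to compute. The inductive formula for the components of $\bfT'$ uses all of $\tr^{s-1+t}\bfT'$, not just its traceless part. You would therefore have to expand $\bfT'=-\sum_{t\ge1}c^{\mathrm h}_{0,t}\,g^{t-1}\tr^{t}\bfT$ and commute traces past cups on non-traceless tensors. That leads back to a hypergeometric identity of the same kind you wanted to avoid.

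The missing idea is to use the triangular system you already wrote down only through its traceless part. At $q=s$ it reads $\tr^{s}\bfT=\lambda_{r-2s,s,s}\,h_{r-2s}(\bfT)+g(\cdots)$ with $h_{r-2s}(\bfT)\in\ker\tr$. Hence $\lambda_{r-2s,s,s}\,h_{r-2s}(\bfT)$ is the $\ker\tr$-component of $\tr^{s}\bfT\in\syms{r-2s}{n}$. That component is given by the $s=0$ case of the formula at rank $r-2s$, which you have in fact proved. The candidate is traceless by your telescoping, and since its $t=0$ coefficient is $1$, it differs from its input by an element of $g(\syms{r-2s-2}{n})$.

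Now write $a=r+\tfrac n2-1$. Then $\lambda_{r-2s,s,s}=2^{s}s!\,(a-2s+1)_{s}>0$, and the $s=0$ coefficients at rank $r-2s$ are $(-1)^{t}(a-2s)/\bigl(2^{t}t!\,(a-2s-t)_{t+1}\bigr)$. Since $(a-2s-t)_{t+1}\,(a-2s+1)_{s}=(a-2s-t)_{s+t+1}$, dividing by $\lambda_{r-2s,s,s}$ gives exactly $c^{\mathrm h}_{s,t}$. This proves the explicit formula with no Chu--Vandermonde identity and no reconstruction check, since existence and uniqueness are already settled.
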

This is proven in Section \ref{harmonicproof}. For convenience, we notate
\begin{align*}
    h_{\leq r-2s}(\bfT ) &:= \sum_{s'=s}^{\lfloor r/2\rfloor}g^{s'-s}(h_{r-2s'}(\bfT )) \in \mathcal{S}^{r-2s}(\mathbb{R}^n) \\
    h_{\geq r-2s} (\bfT ) &:= \sum_{s'=0}^{s} g^{s'}(h_{r-2s'}(\bfT)) \in \mathcal{S} ^{r}(\mathbb{R}^n)
\end{align*}

To see why this is referred to as the ``harmonic'' decomposition, treat $\bfT\in\syms{r}{n}$
as the tensor of coefficients of a degree-$r$ homogeneous polynomial on $\R^n$.
(I.e., identify $\syms{r}{n}$ with $\mathcal{S}^r(\R^{n*})$ via the standard inner product on $\R^n$.)
Then the Laplacian operator is $r(r-1)\tr$, multiplication by squared radius is $\dfrac{2}{r(r-1)}g\colon \syms{r-2}{n}\to\syms{r}{n}$,
and the harmonic decomposition of symmetric tensors coincides with the usual decomposition of polynomials into spherical harmonics
(cf.~\citet[Theorem 1.1.3, Lemma 1.2.1]{DaiXu2013}).

Another characterization of the harmonic decomposition is as the decomposition of $\syms{r}{n}$
into irreducible representations of the action of $\operatorname{O}(n)$ via the symmetric contraction $\odot$~\citep[Theorem 1.7.2]{DaiXu2013}.
Because the distribution of Gaussian-initialized $\bfW$ is $\operatorname{O}(n)$-symmetric, 
this is intuitively the right level of granularity to track cumulant tensors of activations,
which directly precede the application of $-\odot\bfW$.

\subsection{Tensor growth} 
\label{tensorgrowth}

Although our algorithm applies to a particular instance of an MLP, Theorem \ref{matchingsamplingtheorem} refers to the error of these algorithms in expectation over the random weights and in the limit of large width. To prove these results rigorously, we must first define an appropriate notion of the growth of tensors in the limit of width $n \rightarrow \infty$. 

Let $\Theta^{(\ell)}_n$ be the random variable valued in $\mathbb{R}^{\ell \times n \times n}$ given by the values of $\boldsymbol\theta ^{(\ell)} = (\mathbf{W}^{(1)},...,\mathbf{W}^{(\ell)})$ when these weights are drawn independently from $\mathcal{N}(0,\frac{2}{n})$.

\begin{definition}
    Fix $r \ge 0$. We define a rank-$r$ \textit{tensor sequence} to be a sequence of random variables $T_n $ taking values in $ \tens{r}{n}$. A tensor sequence is said to be \textit{symmetric} if, for every $n \ge 1$, the support of $T_n$ is contained in $\syms{r}{n}$.

    We say that a tensor sequence $(T_n)_n$ is \textit{at layer $\ell$} for some $\ell \in [L+1]$ if the random variables $T_n$ are all measurable functions of $\Theta^{(\ell)}_n$.
\end{definition}

For example, for each $r \ge1$ and $\ell \in [L]$, the (pre-)activation cumulants ($\kappa_r[Z^{(\ell)}]$ and $\kappa_r[X^{(\ell)}]$ respectively) form symmetric tensor sequences at layer $\ell$.

\begin{definition}
    We say the tensor sequence $T=(T_n)_{n=1}^{\infty}$ has $n^{\rho}$\textit{-growth} (written suggestively as $T=O(n^{\rho})$) if, for each $1\leq p < \infty$, 
    $$
        \sup \left\{ \|(T_n)_{i_1,...,i_r}\|_p n^{-\rho} : n\ge 1, i_1,...,i_r \in [n] \right\} < \infty
    $$
    where $\| \cdot\|_p$ denotes the $L_p$-norm. Since the $L_p$-norms increase as $p$ increases, it is sufficient to check this for even integers $p \in 2\mathbb{Z}_{\ge1}$.
\end{definition}

If $r=0$, we simply have a sequence of 1d random variables $X_n$, and this has $n^{\rho}$-growth if for every $1\leq p <\infty $, $(L_p(T_n))_{n\ge1} = O(n^{\rho})$ in the traditional sense of big-$O$ notation.

\subsection{Tensor diagrams}
\label{sec:tensordiags}
When computing the cumulants of $\phi(Z)$ in terms of the cumulants of $Z$, we will see tensor networks of cumulants. We formalize these using tensor sequences in the following way:

\begin{definition}
    We define a \textit{tensor diagram} as a tuple $D=(V_1,V_2,E,I, \{T_v\}_{v\in V_1})$ where
    \begin{enumerate}
        \item $B_D:=(V_1,V_2,E)$ is a finite bipartite (multi-)graph (i.e. we allow multiple edges),
        \item $I\subset V_2$
        \item For each $v\in V_1$, $T_v$ is a symmetric tensor sequence with $\deg(v) = \mathrm{rank}(T_v)$.
    \end{enumerate}
    Let $\mathrm{rank}(D):=|V_2 \setminus I|$. Call $D$ \textit{connected} if $B_D$ is connected; call it \textit{fully-contracted} if $\mathrm{rank}(D)=0$. We say the diagram $D$ is \textit{at layer} $\ell$ if every tensor sequence is at layer $\ell$.
\end{definition}

This should be interpreted as an einsum in the following way: the vertices $v\in V_2$ correspond to indices, those in $I$ are contracted indices, those not in $I$ are ``hanging" indices, and the tensor diagram corresponds to a tensor sequence where the contracted indices have been summed over, giving
\begin{equation}
\label{tensordiagramdef}
T_{D,n} = \left( \sum_{\forall v\in I, i_v=1}^n \prod_{u\in V_1} ((T_u)_{n})_{i_v : (u,v) \in E})\right) _{i_v:v\not\in I}
\end{equation}
Here we gloss over two details: first, note that in order to produce a truly well-defined tensor sequence, we need to pick a bijection between $V_2\setminus I$ and $[\deg(D)]$; and second, for this to be a well-defined tensor sequence, we must correctly reconcile the underlying sources of randomness for the different tensor sequences. In all cases we consider, the random variables will have shared randomness. 



Sometimes, we will refer to a tensor being written as a sum of diagrams, and we should emphasize here that this means an \textit{element-wise sum} of tensors with equal rank, \textit{not a disjoint sum of diagrams}. In particular, a disjoint sum of fully-contracted diagrams results in multiplication of the corresponding scalars, whereas a sum of fully-contracted diagrams results in a sum of the corresponding scalars.

We now look at the effect of moving from a tensor diagram at layer $\ell$ to one at layer $\ell+1$ by contracting each tensor sequence by the transition matrix $\mathbf W^{(\ell+1)}$. For this purpose, note that we can extend the notation $-\odot \mathbf{W}^{(\ell+1)} $ of Definition \ref{def:symmcontr} from tensors to tensor sequences at layer $\ell$ in the obvious way, producing a tensor sequence at layer $\ell+1$. For a tensor diagram $D$ at layer $\ell$, let $D \odot \mathbf W^{(\ell+1)}$ be the tensor diagram at layer $\ell+1$ resulting from replacing each $T_v$ with $T_n \odot \mathbf W^{(\ell+1)}$ for all $v\in V_1$ (note that this differs from computing $D$ via an einsum as a rank-$|V_2\setminus I|$ tensor sequence and applying $-\odot \mathbf{W}^{(\ell+1)}$).

\begin{example}
    The post-activation cumulants at layer $\ell$ and the pre-activation cumulants at layer $\ell+1$ are related simply via
    $$
        \kappa_d[Z^{(\ell+1)}] = \kappa_d [ X^{(\ell)} ] \odot \mathbf W^{(\ell+1)}.
    $$
\end{example}

\begin{restatable}[Diagram expectations]{proposition}{wdiags}\label{wdiagsprop}
    Let $D=(V_1,V_2,E,I, \{T_v \} _{v\in V_1})$ be a rank $r$ tensor diagram at layer $\ell$. If $|E|$ is odd, then the rank $r$ tensor sequence $\mathbb{E}_{\mathbf W^{(\ell+1)}}[D \odot \mathbf W^{(\ell+1)}]$ is identically zero. If $|E|$ is even, 
    $$
        \mathbb{E}_{\mathbf W^{(\ell+1)}} [D \odot \mathbf W^{(\ell+1)}] = \left( \frac{2}{n} \right)^{|E|/2} \sum _{\pi \in \mathcal{P}_2 (E)} D_{\pi,1}D_{\pi,2},
    $$
    where $D_{\pi,1}$ is a rank 0 (scalar) tensor diagram at layer $\ell$ constructed as $(V_1,\pi,\pi, \{(v,S):v\in e\in S \in \pi\},\{T_v\}_{v\in V_1})$, and $D_{\pi,2}$ is a constant rank $r$ tensor diagram (constant meaning that the random variables are all constant) constructed as $(\pi,V_2,I,\{(S,w):w\in e\in S\in \pi \} , (I_2)_{S\in \pi})$. Here, $\mathcal{P}_2(E)$ denotes the set of perfect pairings of $E$. Furthermore, the number of connected components of $D_{\pi,1}D_{\pi,2}$ is at most $|E|/2$ more than the number of connected components of $D$.
\end{restatable}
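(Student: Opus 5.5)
The plan is to expand $D\odot\mathbf W^{(\ell+1)}$ as an explicit polynomial in the entries of $\mathbf W^{(\ell+1)}$ and apply Isserlis' (Wick's) theorem. The key structural fact is that $\mathbf W^{(\ell+1)}$ is independent of $\Theta^{(\ell)}_n$. Every $T_v$ is a measurable function of $\Theta^{(\ell)}_n$, so we may condition on the layer-$\ell$ weights and take the expectation over $\mathbf W^{(\ell+1)}$ alone, treating the $T_v$ as fixed tensors.

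First I will write out the einsum. Each edge $e=(u,w)\in E$ joins a tensor vertex $u\in V_1$ to an index vertex $w\in V_2$. After contraction with $\mathbf W^{(\ell+1)}$, the slot of $T_u$ on edge $e$ carries a fresh summation index $j_e$. The entry is then
\[
\sum_{(i_w)_{w\in I}}\ \sum_{(j_e)_{e\in E}}\ \prod_{u\in V_1}(T_u)_{(j_e)_{e\ni u}}\prod_{e=(u,w)\in E}W_{i_w,j_e},
\]
which is a product of exactly $|E|$ entries of $\mathbf W^{(\ell+1)}$. Since the entries are i.i.d.\ centred Gaussians, Isserlis' theorem applies. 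If $|E|$ is odd, the expectation of the product vanishes. If $|E|$ is even, it equals $\sum_{\pi\in\mathcal P_2(E)}\prod_{\{e,e'\}\in\pi}\mathbb E[W_{i_w,j_e}W_{i_{w'},j_{e'}}]$.

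Each pair factor is $\frac 2n\,\delta_{i_w,i_{w'}}\delta_{j_e,j_{e'}}$, which accounts for the prefactor $(2/n)^{|E|/2}$. The product then splits according to the two kinds of index:
\begin{itemize}
\item The $\delta_{j_e,j_{e'}}$ factors identify the $j$-indices within each pair, so there is one free index $j_S$ per pair $S\in\pi$. Summing over these gives the scalar diagram whose tensor vertices are $V_1$, whose index vertices are the blocks of $\pi$ (all contracted), and in which $u$ is joined to $S$ once for each edge of $u$ lying in $S$. This is $D_{\pi,1}$.
\item The $\delta_{i_w,i_{w'}}$ factors form the constant diagram $D_{\pi,2}$. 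Its tensor vertices are the blocks $S\in\pi$, each carrying the identity $\mathbf I_2$, joined to the endpoints $w\in V_2$ of the edges in $S$, with $I$ contracted and $V_2\setminus I$ left hanging.
\end{itemize}
The total sum then factors as the product of these two, with the shared contracted indices of $D_{\pi,2}$ summed. Taking this expectation pointwise gives the claimed identity of tensor sequences. Integrability follows from the finiteness of all moments of the tensor sequences.

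For the final claim I will compare connectivity. In $D$, every connection passes through an edge $e=(u,w)$ linking $u$ and $w$. In the combined diagram $D_{\pi,1}D_{\pi,2}$, the same $u$ and $w$ are linked by the path $u - S - w$, where $S$ is the block containing $e$, so every connection of $B_D$ survives. The only new vertices are the $|E|/2$ block vertices, and each is adjacent to some original vertex, so none is isolated. Hence the number of components cannot grow by more than $|E|/2$; in fact merges can only decrease it.

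The main obstacle is the bookkeeping. We must check that the combined diagram really corresponds to the product of the two einsums with the index identifications described, in particular that the contracted indices $I$ sit correctly in $D_{\pi,2}$ and that multi-edges are preserved. We also need to justify the conditioning step, i.e.\ the independence of $\mathbf W^{(\ell+1)}$ from the layer-$\ell$ tensors, so that Fubini applies.
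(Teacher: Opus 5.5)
Your Isserlis computation is correct and is the paper's argument: you expand $D\odot\mathbf W^{(\ell+1)}$ into a product of $|E|$ Gaussian entries, apply Wick's theorem with $\mathbb E[W_{i,j}W_{i',j'}]=\frac 2n\delta_{i,i'}\delta_{j,j'}$, and read off the $j$-deltas as $D_{\pi,1}$ and the $i$-deltas as $D_{\pi,2}$.

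The gap is in the final claim about connected components. You treat the block $S$ as a single vertex joined both to $u\in V_1$ and to $w\in V_2$. But in $D_{\pi,1}$ the vertex $S$ is the contracted index $j_S$, while in $D_{\pi,2}$ it is a tensor vertex carrying $\mathbf I_2$ on the $i$-indices. These are different vertices. Your own derivation shows that the $j$-sum and the $i$-sum factor completely, so $D_{\pi,1}D_{\pi,2}$ is the disjoint union $D_{\pi,1}\sqcup D_{\pi,2}$. This is also how the proof of Proposition \ref{prop:cumulantgrowth} counts it, with one factor of $n$ per component of each diagram. There is therefore no path $u$--$S$--$w$, and the connections of $B_D$ really are cut.

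Your conclusion that the number of components ``can only decrease'' is in fact false. Take $\|\mathbf W M\mathbf W^\top\|_F^2$ from Figure \ref{fig:wdiags}: there $D$ is connected, yet the pairing that produces $n\operatorname{Tr}(M)^2$ has three components (two traces in $D_{\pi,1}$ and one loop in $D_{\pi,2}$). That increase is exactly what supplies the powers of $n$ used downstream, so it cannot be argued away.

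The fix is the paper's local argument, applied one pair at a time. A pair $S=\{(u,w),(u',w')\}$ acts on the graph by deleting the edges $u$--$w$ and $u'$--$w'$, adding one vertex joined to $u,u'$, and adding another vertex joined to $w,w'$. You can realise this in three steps:
\begin{enumerate}
\item Subdivide both edges; the number of components does not change.
\item Merge the two midpoints into one vertex; the number of components cannot increase.
\item Split that vertex into two, one keeping the edges to $u,u'$ and the other keeping the edges to $w,w'$; the number of components rises by at most one.
\end{enumerate}
Doing this for each of the $|E|/2$ pairs turns $B_D$ into $D_{\pi,1}\sqcup D_{\pi,2}$ with at most $|E|/2$ additional components, which is the stated bound.
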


\begin{example}
    Consider a symmetric $n\times n$-matrix $M$. Then $M \odot \mathbf{W} = \mathbf{W}M\mathbf{W}^{\top}$. Therefore $\|\mathbf{W}M\mathbf{W}^{\top}\|_F^2$ is a diagram of the form $D\odot \mathbf{W}$ for some fully contracted diagram $D$. There are $4$ edges in $D$, so the right hand side of Proposition \ref{wdiagsprop} has $3$ terms. These are most easily understood in diagrammatic form, as in Figure \ref{fig:wdiags}. In fact, we see that
    \[
        \mathbb{E}_{\mathbf{W}}\left[ \| \mathbf{W}M\mathbf{W}^{\top}\|_F^2 \right] = \left(\frac{2}{n}\right)^2\left(n\mathrm{Tr}(M)^2+n\|M\|_F^2 + n^2 \|M\|_F^2\right) 
    \]
\end{example}

\begin{figure}[ht]
\centering
\begin{tikzpicture}

\node[draw] (Ma1) at (-0.12, 0.5) {$M$};
\node[draw] (Ma2) at (-0.12,-0.5) {$M$};

\draw (0.18, 0.7) -- (1, 0.7) coordinate (ta1);
\draw (0.18, 0.3) -- (1, 0.3) coordinate (ta2);
\node[fill=white, inner sep=1.2pt] at (0.7, 0.7) {$\mathbf{W}$};
\node[fill=white, inner sep=1.2pt] at (0.7, 0.3) {$\mathbf{W}$};

\draw (0.18,-0.3) -- (1,-0.3) coordinate (ba2);
\draw (0.18,-0.7) -- (1,-0.7) coordinate (ba1);
\node[fill=white, inner sep=1.2pt] at (0.7,-0.3) {$\mathbf{W}$};
\node[fill=white, inner sep=1.2pt] at (0.7,-0.7) {$\mathbf{W}$};

\draw (ta1) .. controls +(0.9, 0) and +(0.9, 0) .. (ba1);
\draw (ta2) .. controls +(0.45,0) and +(0.45,0) .. (ba2);

\node at (1, -1.3) {(a) $\|\mathbf{W}M\mathbf{W}^{\!\top}\|_F^2$};

\begin{scope}[xshift=4.6cm]
  \node[draw] (Ma1) at (-0.11, 0.5) {$M$};
  \node[draw] (Ma2) at (-0.11,-0.5) {$M$};
  \draw (0.2, 0.7) arc (90:-90:0.2);
  \draw (0.2,-0.3) arc (90:-90:0.2);
  \draw (0.8, 0.7) arc (90:-90:0.4 and 0.7) arc (90:-90:-0.2) arc (90:-90:0.2 and -0.3) arc (90:-90:-0.2); 
\end{scope}

\node at (6.2, 0) {$+$};

\begin{scope}[xshift=7cm]
  \node[draw] (Ma1) at (-0.11, 0.5) {$M$};
  \node[draw] (Ma2) at (-0.11,-0.5) {$M$};
  \draw (0.2, 0.7) arc (90:-90:0.2 and 0.5);
  \draw (0.2,0.3) arc (90:-90:0.2 and 0.5);
  \draw (0.8,0.7) arc (90:-90:0.4 and 0.7) arc (90:-90:-0.2 and -0.5) arc (90:-90:0.2 and 0.3) arc (90:-90:-0.2 and -0.5) ;
\end{scope}

\node at (8.5, 0) {$+$};

\begin{scope}[xshift=9.4cm]
  \node[draw] (Ma1) at (-0.11, 0.5) {$M$};
  \node[draw] (Ma2) at (-0.11,-0.5) {$M$};
  \draw (0.2, 0.7) arc (90:-90:0.4 and 0.7);
  \draw (0.2,0.3) arc (90:-90:0.2 and 0.3);
  \draw (1.2,0.7) arc (90:-270:0.4 and 0.7);
  \draw (1.2,0.3) arc (90:-270:0.2 and 0.3);
\end{scope}

\node at (7.6, -1.3) {(b) $n(\operatorname{Tr}(M))^2 + n\|M\|_F^2 + n^2\|M\|_F^2$};

\end{tikzpicture}
\caption{A diagrammatic proof of the identity 
$
\mathbb{E}_{\mathbf{W}}\left[ \| \mathbf{W}M\mathbf{W}^{\top}\|_F^2 \right] = \left(\frac{2}{n}\right)^2\left(n\mathrm{Tr}(M)^2+n\|M\|_F^2 + n^2 \|M\|_F^2\right) 
$ for a symmetric $n\times n$ matrix $M$. Each loop in the right hand side diagrams contributes a factor of $n$. The diagrams on the right are obtained by replacing the four copies of $\mathbf{W}$ with each of the three possible pairings. Explicitly, we replace $\prod_{r=1}^4 \mathbf{W}_{i_r,j_r}$ with $\delta_{i_1,i_2}\delta_{i_3,i_4}\delta_{j_1,j_2}\delta_{j_3,j_4}+\delta_{i_1,i_3}\delta_{i_2,i_4}\delta_{j_1,j_3}\delta_{j_2,j_4}+\delta_{i_1,i_4}\delta_{i_2,i_3}\delta_{j_1,j_4}\delta_{j_2,j_3}$.}
\label{fig:wdiags}
\end{figure}

We also have a form of Cauchy-Schwartz for tensor diagrams given below. Both results are proven in Section \ref{wdiagsproof}. 

\begin{restatable}[Cauchy-Schwartz for tensor diagrams]{lemma}{cs}\label{lem:cs} 
Let $ D = (V_1,v_2,E,I, \left\{ T_v \right\} ) $ be a tensor diagram, and let $ V_1 = U_1 \sqcup U_2 $ be a partition of the left vertices of $ D $ into two parts. Consider the two tensor diagrams formed as follows: for each $ i \in \left\{ 1,2 \right\}  $ construct $ D_i $ by doubling the set $ U_i $ along with all its edges and tensors, formally
\begin{itemize}
  \item $ V_1(D_i) = [2] \times U_i $
  \item $ V_2(D_i) = V_2(D) $
  \item $ E(D_i) = \left\{ ((j,v),w) : (v,w) \in E(D), v \in U_i \right\}  $
  \item $ I(D_i) = I(D) $
  \item $ T_{ (j,v) } = T_v $ for all $ j = 1,2 , v \in U_i $.
\end{itemize}
Then all entries of $ D_1,D_2 $ are non-negative, and
\[
  D _{ i_1,...,i_r }^2 \leq (D_1) _{ i_1,...,i_r } (D _{ 2 } ) _{ i_1,...,i_r } .
\] 
In addition, if $ D_i $ have $ n ^{ \rho _{ i }  }  $-growth, then $ D $ has $ n ^{ \frac{1}{2} (\rho _1 + \rho _2) }  $-growth.
\end{restatable}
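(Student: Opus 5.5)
The plan is to view the entry $D_{i_1,\dots,i_r}$ as an inner product of two vectors indexed by the contracted indices, and then apply the ordinary Cauchy--Schwarz inequality. Fix the hanging indices $i_w$ for $w\in V_2\setminus I$. For an assignment $\mathbf i=(i_v)_{v\in I}\in[n]^I$, set
\[a(\mathbf i)=\prod_{u\in U_1}(T_u)_{i_w:(u,w)\in E},\qquad b(\mathbf i)=\prod_{u\in U_2}(T_u)_{i_w:(u,w)\in E}.\]
By \eqref{tensordiagramdef}, $D_{i_1,\dots,i_r}=\sum_{\mathbf i}a(\mathbf i)b(\mathbf i)$. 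The same formula applied to the doubled diagram $D_1$ gives $(D_1)_{i_1,\dots,i_r}=\sum_{\mathbf i}a(\mathbf i)^2$, because the two copies of each $u\in U_1$ attach to exactly the same index vertices and carry the same tensor. Likewise $(D_2)_{i_1,\dots,i_r}=\sum_{\mathbf i}b(\mathbf i)^2$.

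Nonnegativity of the entries of $D_1$ and $D_2$ is then immediate, since each is a sum of squares of real numbers. Cauchy--Schwarz in $\mathbb R^{[n]^I}$ gives
\[\Bigl(\sum_{\mathbf i}a(\mathbf i)b(\mathbf i)\Bigr)^2\le\sum_{\mathbf i}a(\mathbf i)^2\sum_{\mathbf i}b(\mathbf i)^2.\]
This holds pointwise for every realization of the underlying randomness. I will check one bookkeeping point: index vertices in $I$ that touch only $U_2$ still appear in $D_1$, and there they are summed over with no attached tensor. This is consistent with the formula, because they contribute to the sum over $\mathbf i$ in both expressions.

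For the growth statement, the pointwise inequality gives $|D|\le (D_1D_2)^{1/2}$ entrywise and almost surely. For even $p$, raise both sides to the power $p$, take expectations, and apply Cauchy--Schwarz (or H\"older) for random variables:
\[\mathbb E|D|^p\le\mathbb E\bigl[D_1^{p/2}D_2^{p/2}\bigr]\le\|D_1\|_p^{p/2}\|D_2\|_p^{p/2}.\]
Hence $\|D_{i_1,\dots,i_r}\|_p\le\|(D_1)_{i_1,\dots,i_r}\|_p^{1/2}\|(D_2)_{i_1,\dots,i_r}\|_p^{1/2}\le C_pn^{(\rho_1+\rho_2)/2}$. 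The constant $C_p$ is uniform over $n$ and over the indices by the definition of $n^\rho$-growth. Taking the supremum gives $n^{\frac12(\rho_1+\rho_2)}$-growth.

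I do not expect a substantial obstacle. The only care point is the formal check that the doubled diagrams compute exactly $\sum a^2$ and $\sum b^2$, including the degenerate cases where $U_1$ or $U_2$ is empty (then the corresponding product is $1$) and where some index vertices are isolated in one $D_i$. This check needs the shared randomness to be the same realization across $D$, $D_1$ and $D_2$.
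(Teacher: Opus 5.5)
Your proof is correct and follows the paper's argument essentially verbatim. You write each entry as $\sum_{\mathbf i\in[n]^I}a(\mathbf i)b(\mathbf i)$, apply Cauchy--Schwarz pointwise to get $D^2\le D_1D_2$ with $D_1=\sum a^2$ and $D_2=\sum b^2$, and then raise $|D|\le (D_1D_2)^{1/2}$ to the $p$-th power and apply Cauchy--Schwarz in expectation to obtain $\|D\|_p\le\|D_1\|_p^{1/2}\|D_2\|_p^{1/2}$. Your extra bookkeeping about contracted index vertices that touch only one of $U_1,U_2$ is a correct detail that the paper leaves implicit.
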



\subsection{Cumulant analysis for polynomial activation functions}
\label{sec:cumulantsizes}
In this section, we will discuss the cumulant tensors of the layers (both pre- and post-activation). As previously discussed, we are considering the cumulants for fixed weights -- the distribution comes from the input $ X ^{ (0) } \sim \mathcal{ N } (0,I_n) $. The main result is the following Proposition. The discovery and proof of this Proposition was pivotal to our understanding of the proposed algorithms (both in the polynomial and Hermite-based settings), and so we provide the proof here rather than delaying it to Section \ref{sec:prelimproofs}. 

\begin{restatable}[Cumulants growth bounds]{proposition}{cumulantgrowth}\label{prop:cumulantgrowth} 
  Suppose that $ D $ is a fully-contracted connected tensor diagram such that for some $ \ell \in [L+1] $, either
\begin{itemize}
\item every tensor sequence for $ v \in V_1 $ is $ \kappa _{ \mathrm{ deg } (v) } [ X ^{ (\ell-1) } ] $; or
\item every tensor sequence for $ v \in V_1 $ is $ \kappa _{ \mathrm{ deg } (v) } [ Z ^{ (\ell) } ] $.
\end{itemize}
Then, $ D $ has $ n $-growth.
\end{restatable}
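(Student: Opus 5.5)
We argue by induction on $\ell$. At each layer we alternate between the two bullet points of the statement, passing from $X^{(\ell-1)}$-diagrams to $Z^{(\ell)}$-diagrams (linear step) and from $Z^{(\ell)}$-diagrams to $X^{(\ell)}$-diagrams (activation step). Throughout we bound $L_p$ norms for even $p$.

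\textbf{Base case.} Take $X^{(0)}\sim\mathcal N(0,\mathbf I_n)$. The only nonzero cumulant is $\kappa_2=\mathbf I_n$, so every vertex has degree $2$. A connected fully-contracted diagram built from copies of $\mathbf I_n$ is then a single cycle of Kronecker deltas, which evaluates to exactly $n$.

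\textbf{Linear step.} Let $D$ be a connected fully-contracted diagram of $\kappa[Z^{(\ell)}]=\kappa[X^{(\ell-1)}]\odot\mathbf W^{(\ell)}$. To control $\|D\|_p^p=\mathbb E[D^p]$, view $D^p$ as a disjoint union of $p$ copies of $D$ contracted with $\mathbf W^{(\ell)}$, and take the expectation over $\mathbf W^{(\ell)}$ conditionally on $\boldsymbol\theta^{(\ell-1)}$ using Proposition \ref{wdiagsprop}. Each pairing $\pi$ of the $|E|$ weight-edges contributes $(2/n)^{|E|/2}\,D_{\pi,1}D_{\pi,2}$.
\begin{itemize}
\item $D_{\pi,2}$ is a constant diagram of identity matrices. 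It equals $n^{c_2}$, where $c_2$ is its number of index loops.
\item $D_{\pi,1}$ is a product of fully-contracted diagrams of $\kappa[X^{(\ell-1)}]$, one for each of its $c_1$ connected components. By the induction hypothesis together with H\"older's inequality, its $L_1$ norm is $O(n^{c_1})$.
\end{itemize}
It therefore suffices to prove the Euler-type bound
\[
c_1+c_2-\tfrac{|E|}{2}\le p.
\]
Proposition \ref{wdiagsprop} already gives that the number of components of $D_{\pi,1}D_{\pi,2}$ exceeds that of $D^p$ (which is $p$) by at most $|E|/2$. I would sharpen this by a direct counting argument. Combine the pairing graph on $E$ (one node per pair) with the tensor vertices and the index vertices of each copy, then count faces and components so that each $D_{\pi,1}$-component and each $D_{\pi,2}$-loop is accounted for. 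The key fact is that $D_{\pi,1}$ and $D_{\pi,2}$ share the $|E|/2$ pair-nodes, and the glued graph has at most $p$ components since each copy of $D$ is connected. This yields $\mathbb E\|D\|_p^p=O(n^p)$, i.e.\ $n$-growth.

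\textbf{Activation step.} Let $D$ be a connected fully-contracted diagram of $\kappa[X^{(\ell)}]=\kappa[\phi_\ell(Z^{(\ell)})]$. Expand each vertex tensor using Theorem \ref{polydiagsthm}. This is a finite sum because $\phi_\ell$ is a polynomial, and its coefficients are constants independent of $n$. Each entry $\kappa_r[\phi(Z)]_{j_1,\dots,j_r}$ becomes a sum over connected diagrams $\pi\in\mathrm{cDia}(\underline k)$ of products of cumulants of $Z$ at the indices $j_v$.

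Substituting these into $D$ gives a finite sum of fully-contracted diagrams of $\kappa[Z^{(\ell)}]$, with two new features. First, an outer index $j_v$ can be shared by several cumulants; this is a $\delta$-tensor vertex, i.e.\ a hyperedge. Second, because each $\pi$ is connected and $D$ is connected, the resulting hypergraph diagram is connected. To bring this back into the setting of tensor diagrams, split each shared index into a $\delta$ tensor of higher rank. This is where connectivity matters: we must still get $O(n)$ even though $\delta$ is not a cumulant of $Z$.

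\textbf{Main obstacle.} The activation step is where I expect the real difficulty. The induction hypothesis only covers diagrams whose vertices are cumulants of $Z$, with each index appearing in exactly two tensor slots. I would therefore strengthen the induction hypothesis to allow \emph{hypergraph} diagrams, in which an index vertex in $V_2$ may have arbitrary degree, still asserting $n$-growth whenever the diagram is connected.

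For this stronger hypothesis:
\begin{itemize}
\item The base case still holds, because a connected hypergraph of Kronecker deltas evaluates to $n$.
\item In the linear step, indices of degree greater than two only merge loops in $D_{\pi,2}$. The constant diagram is then a connected-component count of a hypergraph, and the same Euler bound goes through. This is the step I would verify most carefully.
\item The activation step is then immediate, since it is closed under hyperedge diagrams.
\end{itemize}
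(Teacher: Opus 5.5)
Your proof is correct and follows essentially the paper's own argument. It inducts layer by layer, alternating the linear step (Proposition \ref{wdiagsprop}) with the activation step. In the activation step, the polynomial diagram expansion keeps diagrams connected, just as in Lemma \ref{lem:nonlinconnected}.

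Two of the difficulties you anticipate are not actually there:
\begin{itemize}
\item The component bound in Proposition \ref{wdiagsprop} already gives $c_1+c_2\le p+|E|/2$, which is exactly your Euler-type inequality, so the ``sharpening'' is unnecessary.
\item The paper defines tensor diagrams as arbitrary bipartite multigraphs, so index vertices of any degree are already allowed. Your strengthened ``hypergraph'' hypothesis is therefore just the statement itself, and Proposition \ref{wdiagsprop} applies to it directly.
\end{itemize}
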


Before proving this result, let us first recap some simple facts about the cumulants. For the $ 0 $-th layer, the cumulants are trivial:
\[
  \kappa _{ r } [X ^{ (0) } ] = \begin{cases}
  I_n, &\text{ if } r=2 \\
  0, & \text{ o/w}
  \end{cases}
\] 
To compute the cumulants after applying the linear layer, we apply symmetric contraction of the weights matrix $\mathbf W^{(\ell)}$,
\[
\kappa _r \left[Z^{(\ell)}\right] = \kappa_r \left[ X^{(\ell-1)} \right] \odot \mathbf{W}^{(\ell)}
\]
for $\ell \in [L+1]$. For the activation step, we have the following simple result.

\begin{lemma}
  \label{lem:nonlinconnected} 
For any rank $ r $, the tensor sequence
\[
  \kappa _{ r } \left[ \phi^{(\ell)} (Z^{(\ell)}) \right] 
\] 
can be written as a finite sum of connected rank-$r$ tensor diagrams involving only tensor sequences from the set $\left\{ \kappa _{ r' } \left[ Z^{(\ell)} \right] : r' \ge 1 \right\} $.
\end{lemma}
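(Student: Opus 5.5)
The plan is to apply the polynomial diagram summation formula, Theorem \ref{polydiagsthm}, entrywise. Then we read each term of the resulting sum as a tensor diagram in the sense of Section \ref{sec:tensordiags}.

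Write $\phi=\phi_\ell$ with degree $D$. Since $\phi^{(k)}(0)=0$ for $k>D$, Theorem \ref{polydiagsthm} gives for each $i_1,\dots,i_r\in[n]$
\[
\kappa_r[\phi(Z^{(\ell)})]_{i_1,\dots,i_r}=\sum_{\underline k\in\{0,\dots,D\}^r}\Big(\prod_{v=1}^r\tfrac{1}{k_v!}\phi^{(k_v)}(0)\Big)\sum_{\pi\in\mathrm{cDia}(\underline k)}\prod_{S\in\pi}\kappa\big[Z^{(\ell)}_{i_v}:(v,w)\in S\big].
\]
The outer sum is finite, and for fixed $\underline k$ so is $\mathrm{cDia}(\underline k)$. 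The coefficients are constants independent of $n$ and of the indices. It therefore suffices to show that, for fixed $(\underline k,\pi)$, the tensor
\[
(i_1,\dots,i_r)\mapsto\prod_{S\in\pi}\kappa_{|S|}[Z^{(\ell)}]_{(i_v)_{(v,w)\in S}}
\]
is a connected rank-$r$ tensor diagram whose left-vertex tensors lie in $\{\kappa_{r'}[Z^{(\ell)}]:r'\ge1\}$.

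The construction is direct. Take $V_1=\pi$, with block $S$ carrying the symmetric tensor sequence $\kappa_{|S|}[Z^{(\ell)}]$; we have $|S|\ge1$, and symmetry of cumulants makes the edge ordering irrelevant. Take $V_2=[r]$ with $I=\emptyset$, so every index is hanging and the diagram has rank $r$. For each $(v,w)\in S$, draw one edge from $S$ to $v$. Multi-edges are allowed, so $\deg(S)=|S|$ matches the rank. Formula \eqref{tensordiagramdef} then reproduces the product above exactly, with no summed indices.

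Connectedness is the one point to check, and it is where the definition of connected diagram matters. The hypergraph with edges $\pi\cup\{\{(v,w):w\in[k_v]\}:v\in[r]\}$ is connected. Contract each group $\{(v,w)\}_w$ to the single vertex $v\in V_2$. A hyperedge path between any two vertices then becomes an alternating path in the bipartite graph $B_D$ between blocks and indices. This makes $B_D$ connected.

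Two degenerate cases need separate care. The first is when some $v$ has $k_v=0$, so that $v$ is isolated in $B_D$. By the convention in Definition \ref{def:diagram}, such a $\pi$ is connected only if $r=1$. In that case the diagram is $k_1=0$, $\pi=\emptyset$, contributing the constant $\phi(0)$. This is not literally a diagram in the allowed tensors unless we admit a constant left vertex, so I would treat it separately. For $r=1$ the statement is about the mean, and one can absorb $\phi(0)$ either by allowing constant rank-0 tensor factors or by noting it is a constant tensor sequence. For $r\ge2$ every $k_v\ge1$ and no issue arises. The second case is any block with $|S|\ge1$ only, which is automatic. Summing the finitely many diagrams completes the argument.
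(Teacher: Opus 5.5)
Your proposal is correct and follows essentially the paper's route. The paper expands by multilinearity and applies Lemma \ref{lem:simplepowers}, which is exactly how Theorem \ref{polydiagsthm} is proved. It then reads each connected partition as a tensor diagram with $V_1=\pi$, $V_2=[r]$ and $I=\emptyset$, and deduces connectedness of $B_D$ from the join condition, just as you do.

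Your worry about the $r=1$, $k_1=0$ term is unnecessary. The diagram with $V_1=E=I=\emptyset$ and $V_2=\{1\}$ is already a legitimate connected rank-$1$ tensor diagram (trivially, vacuously using only allowed tensors) whose value is identically $1$. So $\phi(0)$ times this diagram fits the statement as written.
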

\begin{proof}
  By multi-linearity of $ \kappa _{ r }  $ we get a finite sum of cumulants of monomials. Lemma \ref{lem:simplepowers} then applies producing a sum of tensor diagrams of $\kappa_{r'}[Z^{(\ell)}]$'s. The identification of indices corresponds to the partition $ [r_1]\cup ... \cup [r_d] $, in particular there are $ d $ indices, and $ I= \phi  $, so no indices have been contracted. The cumulant tensors correspond to the partition $ \pi  $. By assumption, these two partitions satisfy that their join is the trivial partition and therefore this tensor diagram is connected.
\end{proof}

We now prove Proposition \ref{prop:cumulantgrowth}, by induction.

\begin{proof}
We prove this by induction, starting with the case that every tensor sequence in $ D $ is $ \kappa _{ \mathrm{ deg } (v) } \left[ X ^{ (0) }  \right] $. Since only the second cumulant of $ X ^{ (0) }  $ is non-zero, and it is equal to $ I_2 $, we see that any two indices connected to the same cumulant must be equal, and therefore connectedness implies all indices must be equal, therefore
\[
  D _{ n } = n.
\] 
We now prove that if the result holds for $ Z ^{ (\ell) }  $-cumulants, then it holds for $ X ^{ (\ell) }  $-cumulants. This follows from Lemma \ref{lem:nonlinconnected}, since if we expand the $ X ^{ (\ell) }  $-cumulants in terms of $ Z ^{ (\ell) }  $-cumulants, each is a sum (not disjoint union!) of connected tensor diagrams of $Z^{(\ell)}$-cumulants. Placing these on each node $ v \in V_1(D) $ and then expanding the sum, we get a sum of fully-contracted connected tensor diagrams of the $ Z ^{ (l) }  $-cumulants. Consequently this has $ n $-growth.

Finally, we prove that if the result holds for $ X ^{ (\ell-1) }  $-cumulants, then it holds for $ Z ^{ (\ell) }  $-cumulants. Consider the diagram $ D $: taking the $p$-th power of this quantity corresponds to taking the disjoint sum of $p$ copies of this diagram. 
By Proposition \ref{wdiagsprop}, taking the expectation over $\mathbf{W}^{(\ell)}$ produces a sum
\[
  \left( \frac{2}{n} \right) ^{ p|E|/2 } \sum_{ \pi \text{ pairing of } pE(B) }^{  } D _{ \pi ,1 } D _{ \pi ,2 } 
\] 
where the number of connected components of $ D _{ \pi,1  } D _{ \pi ,2 } $ is at most $ p(1 + \frac{1}{2}|E|)$. Each of the connected components of $ D _{ \pi ,1 }  $ is a connected fully-contracted tensor diagram of cumulants of $ X ^{ (\ell-1) }  $ so contributes a factor of $ O(n) $, and each of the connected component of $ D _{ \pi ,2 }  $ is a connected fully-contracted tensor diagram of $ I_2 $'s, which also contribute a factor of $ n $. Therefore we get that the whole diagram is $ O(n^p) $ as required.
\end{proof}

\begin{restatable}[]{corollary}{elmtgrowth}
    \label{cor:elementgrowth}
    Let $r\ge 1, \ell\in[L+1], n \ge1 $ and fix $i_1,...,i_r\in [n]$. Then
    \[
    \mathbb{E} \left[ \left| \kappa_r[Z^{(\ell)}] _{i_1,...,i_r} \right|^2 \right] = 
    \begin{cases}
        O( n^{2-r} ) & \parbox[t]{8cm}{\centering if $r$ is even, and each value in $i_1,\ldots,i_r$\\
appears with even multiplicity, } \\
        O( n^{1-r} ) & \parbox[t]{8cm}{\centering otherwise.}
    \end{cases}
    \]
\end{restatable}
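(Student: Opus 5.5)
The plan is to expand the square of a single cumulant entry as a tensor diagram, average over the last weight matrix $\mathbf W^{(\ell)}$ using Proposition \ref{wdiagsprop}, and bound what remains with Proposition \ref{prop:cumulantgrowth}. The starting point is $\kappa_r[Z^{(\ell)}]=\kappa_r[X^{(\ell-1)}]\odot\mathbf W^{(\ell)}$. So the quantity $\bigl(\kappa_r[Z^{(\ell)}]_{i_1,\dots,i_r}\bigr)^2$ is the fixed-index entry of $D\odot\mathbf W^{(\ell)}$. Here $D$ is the tensor diagram with two cumulant vertices $u_1,u_2$, both carrying $\kappa_r[X^{(\ell-1)}]$. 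Each has $r$ edges, to index vertices labelled $i_1,\dots,i_r$ (first copy) and $i_1',\dots,i_r'$ with $i'_s=i_s$ (second copy). None of these indices is contracted, and there are $|E|=2r$ edges. Since $\mathbf W^{(\ell)}$ is independent of $\boldsymbol\theta^{(\ell-1)}$, we can first take $\mathbb E_{\mathbf W^{(\ell)}}$ conditionally and then take the expectation over the earlier layers.

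Applying Proposition \ref{wdiagsprop} gives
\[\mathbb E_{\mathbf W^{(\ell)}}\bigl[(\kappa_r[Z^{(\ell)}]_{i_1,\dots,i_r})^2\bigr]=\left(\frac 2n\right)^{r}\sum_{\pi\in\mathcal P_2(E)}D_{\pi,1}\,D_{\pi,2}.\]
Each factor has a concrete form.
\begin{itemize}
\item $D_{\pi,2}$ is the product of the Kronecker deltas $\delta_{a,b}$ over the pairs $\{e,e'\}\in\pi$, where $a,b$ are the fixed output indices of $e,e'$. So it is a constant lying in $\{0,1\}$.
\item $D_{\pi,1}$ is a fully contracted scalar diagram built only from the two copies of $\kappa_r[X^{(\ell-1)}]$, in which paired edges share a summed input index.
\end{itemize}
The number of pairings is a constant depending only on $r$. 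So it suffices to bound $\mathbb E|D_{\pi,1}|$ for each $\pi$ with $D_{\pi,2}\neq 0$.

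The pairings split into two cases.
\begin{itemize}
\item \emph{Some pair of $\pi$ joins an edge of $u_1$ to an edge of $u_2$.} Then $D_{\pi,1}$ is connected and fully contracted with vertices $\kappa_{\deg}[X^{(\ell-1)}]$. By Proposition \ref{prop:cumulantgrowth} it has $n$-growth, and in particular $\mathbb E|D_{\pi,1}|\le\|D_{\pi,1}\|_2=O(n)$. The contribution is therefore $O(n^{-r}\cdot n)=O(n^{1-r})$, whatever the indices are.
\item \emph{Every pair of $\pi$ stays within $u_1$ or within $u_2$.} This requires $r$ even. Also $D_{\pi,2}\ne0$ forces the indices $i_1,\dots,i_r$ to be matched into equal pairs, so each value appears with even multiplicity. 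In this case $D_{\pi,1}$ is a disjoint union of two connected fully contracted diagrams. Applying Proposition \ref{prop:cumulantgrowth} to each component, and H\"older's inequality to bound the $L_1$ norm of their product by the product of $L_2$ norms, gives $O(n^2)$. The contribution is $O(n^{2-r})$.
\end{itemize}
Summing the two cases gives $O(n^{2-r})$ in the even-multiplicity case and $O(n^{1-r})$ otherwise, which is the claim.

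The main obstacle I expect is the bookkeeping in applying Proposition \ref{wdiagsprop} to a diagram whose outer indices are fixed rather than summed. One has to check carefully that each pairing produces only deltas among the fixed $i_s$, with no extra free loops that would contribute factors of $n$. One also has to check that the "within-copy" case is exactly the parity condition in the statement. A minor point is uniformity of the $O(\cdot)$ constants over the choice of indices, which follows because the bounds in Proposition \ref{prop:cumulantgrowth} are uniform. The base case $\ell=1$ needs no separate treatment, since the proposition covers $X^{(0)}$.
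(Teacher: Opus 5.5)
Your argument is correct and is essentially the paper's proof: you average out $\mathbf W^{(\ell)}$ with Proposition~\ref{wdiagsprop}, count the connected components of the resulting fully contracted diagrams of $\kappa_r[X^{(\ell-1)}]$ (two separate components are possible only when $r$ is even and the indices pair up; otherwise the copies must be joined), and bound each component with Proposition~\ref{prop:cumulantgrowth}. The only difference is that the paper runs the same count for an arbitrary even power $p$ rather than $p=2$, which gives the stronger statement that these entries have $n^{(2-r)/2}$- (resp.\ $n^{(1-r)/2}$-) growth in every $L_p$ norm; that stronger form is what is actually invoked later, e.g.\ in the proof of Theorem~\ref{thm:hermiteerror}, whereas your $p=2$ version suffices for the second-moment bound as stated.
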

\begin{proof}
    Consider $\mathbb{E} \left[ \kappa_r[Z^{(\ell)}]_{i_1,...,i_r} ^p \right]$ for $p \ge 2$ even. Applying Proposition \ref{wdiagsprop}, we get a prefactor of $n^{-\frac{1}{2}pr}$, multiplied by fully-contracted diagram produced by pairing $p$-copies of $\kappa_r[X^{(\ell)}]$. If $r$ is odd, the maximal number of connected components is $\frac{1}{2}p$, giving $n^{\frac{1}{2}p(1-r)}$. If $r$ is even, the maximal number of connected components is $p$, giving $n^{p(1-\frac{1}{2}r)}$.

    If $r$ is even and some index appears an odd number of times, the maximal number of connected components is again $\frac{1}{2}p$, giving $n^{\frac{1}{2}p(1-r)}$-growth.
\end{proof}

We demonstrate two instances of this Corollary in Figure \ref{fig:cumulantgrowth}.

\begin{figure}[ht]
\centering
\begin{tikzpicture}

\node[draw] (Ma1) at (-0.12, 0.5) {$\kappa_2[Z^{(\ell)}]$};
\node[draw] (Ma2) at (-0.12,-0.5) {$\kappa_2[Z^{(\ell)}]$};

\draw (0.59, 0.7) -- (1.4, 0) coordinate (ta1);
\draw (0.59, 0.3) -- (1.4, 0) coordinate (ta2);

\draw (0.59,-0.3) -- (1.4,0) coordinate (ba2);
\draw (0.59,-0.7) -- (1.4,0) coordinate (ba1);
\node[fill=white, inner sep=1.2pt] at (1.51,0) {$i$};

\node at (0.2, -1.3) {(a) $|\kappa_2[Z^{(\ell)}]_{i,i}|^2$};

\begin{scope}[xshift=4.6cm]
  \node[draw] (Ma1) at (-0.14, 0.5) {$\kappa_2[X^{(\ell-1)}]$};
  \node[draw] (Ma2) at (-0.14,-0.5) {$\kappa_2[X^{(\ell-1)}]$};
  \draw (Ma1.east) ++(0,0.2) arc (90:-90:0.2);
  \draw (Ma2.east) ++(0,0.2) arc (90:-90:0.2);
\end{scope}

\node at (5.9, 0) {$+$};

\begin{scope}[xshift=7.5cm]
  \node[draw] (Ma1) at (-0.14, 0.5) {$\kappa_2[X^{(\ell-1)}]$};
  \node[draw] (Ma2) at (-0.14,-0.5) {$\kappa_2[X^{(\ell-1)}]$};
  \draw (Ma1.east) ++(0,0.2) arc (90:-90:0.3 and 0.5);
  \draw (Ma1.east) ++(0,-0.2) arc (90:-90:0.3 and 0.5);
\end{scope}

\node at (9, 0) {$+$};

\begin{scope}[xshift=10.4cm]
  \node[draw] (Ma1) at (-0.14, 0.5) {$\kappa_2[X^{(\ell-1)}]$};
  \node[draw] (Ma2) at (-0.14,-0.5) {$\kappa_2[X^{(\ell-1)}]$};
  \draw (Ma1.east) ++(0,0.2) arc (90:-90:0.4 and 0.7);
  \draw (Ma1.east) ++(0,-0.2) arc (90:-90:0.2 and 0.3);
\end{scope}

\node at (7.2, -1.3) {(b) $\operatorname{Tr}(\kappa_2[X^{(\ell-1)}])^2 + 2\|\kappa_2[X^{(\ell-1)}]\|_F^2$};

\begin{scope}[yshift = -3cm]
\node[draw] (Ma1) at (-0.12, 0.5) {$\kappa_3[Z^{(\ell)}]$};
\node[draw] (Ma2) at (-0.12,-0.5) {$\kappa_3[Z^{(\ell)}]$};

\draw (Ma1.east) ++(0,0.2) -- (1.4, 0.2) coordinate (ta1);
\draw (Ma1.east) -- (1.4, 0.2) coordinate (ta2);
\draw (Ma1.east) ++(0,-0.2) -- (1.4, -0.2) coordinate (ta3);

\draw (Ma2.east) ++(0,0.2) -- (1.4, 0.2) coordinate (ta1);
\draw (Ma2.east) -- (1.4, 0.2) coordinate (ta2);
\draw (Ma2.east) ++(0,-0.2) -- (1.4, -0.2) coordinate (ta3);

\node[fill=white, inner sep=1.2pt] at (1.51,0.2) {$i$};
\node[fill=white, inner sep=1.2pt] at (1.51,-0.2) {$j$};
\end{scope}
\node at (0.2, -4.3) {(c) $|\kappa_3[Z^{(\ell)}]_{i,i,j}|^2$};

\begin{scope}[xshift=4.6cm, yshift=-3cm]
  \node[draw] (Ma1) at (-0.14, 0.5) {$\kappa_2[X^{(\ell-1)}]$};
  \node[draw] (Ma2) at (-0.14,-0.5) {$\kappa_2[X^{(\ell-1)}]$};
  \draw (Ma1.east) ++(0,0.2) arc (90:-90:0.1);
  \draw (Ma1.east) ++(0,-0.2) arc (90:-90:0.2 and 0.5);
  \draw (Ma2.east) ++(0,0.2) arc (90:-90:0.1);
\end{scope}

\node at (5.9, -3) {$+$};

\begin{scope}[xshift=7.5cm, yshift=-3cm]
  \node[draw] (Ma1) at (-0.14, 0.5) {$\kappa_2[X^{(\ell-1)}]$};
  \node[draw] (Ma2) at (-0.14,-0.5) {$\kappa_2[X^{(\ell-1)}]$};
  \draw (Ma1.east) ++(0,0.2) arc (90:-90:0.3 and 0.5);
  \draw (Ma1.east) ++(0,-0.2) arc (90:-90:0.3 and 0.5);
  \draw (Ma1.east) arc (90:-90:0.3 and 0.5);
\end{scope}

\node at (9, -3) {$+$};

\begin{scope}[xshift=10.4cm,yshift=-3cm]
  \node[draw] (Ma1) at (-0.14, 0.5) {$\kappa_2[X^{(\ell-1)}]$};
  \node[draw] (Ma2) at (-0.14,-0.5) {$\kappa_2[X^{(\ell-1)}]$};
  \draw (Ma1.east) ++(0,0.2) arc (90:-90:0.4 and 0.6);
  \draw (Ma1.east) ++(0,-0.2) arc (90:-90:0.3 and 0.5);
  \draw (Ma1.east) arc (90:-90:0.2 and 0.4);
\end{scope}

\node at (7.2, -4.3) {(d) $\|\operatorname{Tr}(\kappa_3[X^{(\ell-1)}])\|_2^2 + 2\|\kappa_3[X^{(\ell-1)}]\|_F^2$};

\end{tikzpicture}
\caption{Two square magnitudes of elements of $\kappa[Z^{(\ell)}]$ tensors (a) and (c). Since the first cumulant has an even number of each index, there is a partition splitting the two copies (the first diagram in (b)), and therefore the growth is $O(n^{-2}n^2)=O(1)$ (recall that we suppress by a factor of $\left(\frac{2}{n}\right)^{\# \text{indices}}$). By contrast, in the second diagram we cannot split the two copies (every diagram in (d) is connected), and so has $O(n^{-3}n)=O(n^{-2})$-growth.}
\label{fig:cumulantgrowth}
\end{figure}


\subsection{Description of polynomial algorithm} \label{sec:polyalgdesc}

For this section, we assume that the activation function is a polynomial $\phi(x)=\sum_{j=0}^m a_j x^j$.

The polynomial algorithm propagates tensors $\tilde \eta_r[X^{(\layernumber)}]\in\syms{r-2s_\maxcumulantorder(r)}{n}$ through MLP layers. These are estimates for $h_{\leq r-2s_\maxcumulantorder(r)}(\kappa_r[X^{(\ell)}])$,
where 
\begin{equation*}
    s_\maxcumulantorder(r):= \begin{cases}
        0 & r \leq \maxcumulantorder\\
        \dfrac{r}{2} & \text{$r=\maxcumulantorder+1$ and $\maxcumulantorder\equiv 1\mod 2$}\\
        \infty & \text{otherwise.}
    \end{cases}
\end{equation*}
By convention, if $2s_\maxcumulantorder(r)>r$, then $\tilde \eta_r[X^{(\layernumber)}]=0$.
Thus, only cumulants with degree $r$ no more than
\begin{equation*}
    R := \begin{cases}
        \maxcumulantorder & \maxcumulantorder\equiv 0\mod 2\\
        \maxcumulantorder+1 & \maxcumulantorder\equiv 1\mod 2
    \end{cases}
\end{equation*}
are tracked at all.

The algorithm proceeds as follows:
\begin{enumerate}[leftmargin=0.5cm]
    \item For $r\in[R]$, initialize tensors
    \begin{align*}
        \tilde \eta_r[X^{(0)}]:=h_{\leq r-2s_\maxcumulantorder(r)}(\kappa_r[X^{(0)}]) = \begin{cases}
            1 & \text{$r = 2$ and $\maxcumulantorder=1$} \\
            \mathbf I_n & \text{$r = 2$ and $\maxcumulantorder>1$} \\
            0 & r \neq 2.
        \end{cases}
    \end{align*}
    \item For $1\leq \layernumber \leq \numhiddenlayers$, repeat the following:
    \begin{enumerate}[leftmargin=0.5cm]
        \item\label{polylinearstep} 
        Compute the tensors
        \begin{align*}
            \bfT _r^{(\layernumber)}&:= \tilde \eta_r[X^{(\layernumber-1)}]\odot \mathbf W^{(\layernumber)} & \forall r\in[R]\\
            \bfM ^{(\layernumber)}_{i,j}&:= \begin{cases}
                (\mathbf W^{(\layernumber)}\mathbf W^{(\layernumber)\top})_{i,j} & \text{$\maxcumulantorder>1$ or $i=j$}\\
                0 & \text{otherwise.}
            \end{cases}
        \end{align*}
        Note that each $\bfT _r^{(\layernumber)}$ is a symmetric contraction of a $(r-2s_\maxcumulantorder(r))$-tensor, and thus takes time
        $O(n^{r-2s_\maxcumulantorder(r)+1})\leq O(n^{\maxcumulantorder+1})$.
        The matrix $\bfM ^{(\ell)}$ takes time $O(n^2)$ if $\maxcumulantorder=1$ and otherwise $O(n^3)$.
        In any case, this is no more than the $O(n^{\maxcumulantorder+1})$ budget.
        \item\label{polypowercumulantsstep} 
        Define $\tilde\kappa_r [Z^{(\ell)}]:= g_{\bf M^{(\ell)}}^{s_\maxcumulantorder(r)}\bfT_{r}^{(\ell)}$. Compute estimated cumulants of $X$ using Lemma~\ref{lem:simplepowers} and multilinearity:
        \begin{equation*}
            \tilde\kappa_r[X^{(\ell)}]_{i_1,\ldots, i_r} = \sum_{\underline{k}\in[m]^r}\left(\prod_{j=1}^r a_{k_j}\right)\sum_{\pi\vdash \bigsqcup_j[k_j]}\prod_{S\in\pi}\tilde\kappa_{|S|} [Z^{(\ell)}]_{i_v: (v,l)\in S}, \qquad \forall r\in [R]
        \end{equation*}
        For $r > R$, define $\tilde\kappa_r[X^{(\ell)}]=0$.
        \item \label{polyharmonicproj}
        Project $\tilde\kappa_r[X^{(\ell)}]$ onto $(\leq r-2s_\maxcumulantorder)$-harmonics using Theorem~\ref{thm:harmonicdecomposition}:
        \begin{equation*}
            \tilde\eta_r[X^{(\ell)}] := \sum_{s'=s}^{\lfloor r/2\rfloor}g^{s'-s}(h_{r-2s'}(\tilde\kappa_r[X^{(\ell)}])),
        \end{equation*}
        where $h_{r-2s'}$ is defined in \eqref{harmonicparteq}.
        By the definition of $s_\maxcumulantorder$, this simplifies to
        \begin{equation*}
            \tilde\eta_r[X^{(\ell)}] = \begin{cases}
                \tilde\kappa_r[X^{(\ell)}] & r \leq \maxcumulantorder \\
                h_0(\tilde\kappa_r[X^{(\ell)}]) & r=\maxcumulantorder +1 \equiv 0 \mod 2\\
                0 & \text{otherwise.}
            \end{cases}
        \end{equation*}

    \end{enumerate}
    \item Return $\mathbf W^{(\numhiddenlayers+1)} \tilde\eta_1[X^{(\numhiddenlayers)}]$.
\end{enumerate}


\subsection{Error analysis}%
\label{sec:polyerror}
In this section, we prove that the polynomial algorithm described above satisfies the conditions of Theorem \ref{matchingsamplingtheorem}. Define the error tensors
\begin{align*}
  \Delta _{ r } [X ^{ (\ell) } ] &:= \kappa _{ r } [X ^{ (\ell) } ] -  \tilde\kappa_r [X^{(\ell)}]\\
  \Delta _{ r } [Z ^{ (\ell) } ] &:= \kappa _{ r } [Z ^{ (\ell) } ] - \tilde\kappa_r [Z^{(\ell)}]
\end{align*}
These are tensor sequences. Our main result is the following theorem:

\begin{theorem}
  \label{thm:main} 
For all $ r \ge 1 $ and $\ell \in [L+1]$, the tensor sequence $ \Delta _{ r } [Z ^{ (\ell) } ] $ has $ n ^{ -(K/2) }  $-growth.
\end{theorem}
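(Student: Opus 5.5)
We prove Theorem \ref{thm:main} by induction on $\ell$, strengthening the statement so that it survives the activation step. The key object is a \emph{mixed} tensor diagram: a fully-contracted connected diagram whose vertices carry true cumulants $\kappa_{r'}[\cdot]$, estimated cumulants $\tilde\kappa_{r'}[\cdot]$, or error tensors $\Delta_{r'}[\cdot]$ at a fixed layer. The strengthened hypothesis at layer $\ell$, for both $X^{(\ell-1)}$ and $Z^{(\ell)}$, is: every such connected diagram containing at least one $\Delta$ vertex has $n^{1-K/2}$-growth, in addition to the $n$-growth bound of Proposition \ref{prop:cumulantgrowth} for $\Delta$-free diagrams. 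The elementwise bound in the theorem then follows by pairing $p$ copies of an entry of $\Delta_r[Z^{(\ell)}]$ via Proposition \ref{wdiagsprop}, exactly as in Corollary \ref{cor:elementgrowth}, combined with the diagram bounds at the previous activation layer. It is cleaner to state the hypothesis directly in the moment form that Corollary \ref{cor:elementgrowth} consumes: for every even $p$, the expectation over $\mathbf W^{(\ell)}$ of $p$ copies of a $\Delta$-containing diagram is $O(n^{p(1-K/2)})$ after the $(2/n)^{|E|/2}$ prefactor.

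The base case is $X^{(0)}$. Here $\tilde\kappa_r[X^{(0)}]$ agrees with $\kappa_r[X^{(0)}]$ for $r\le K$. For odd $K$ and $r=K+1$ both vanish, with the single exception $K=1$, where the tracked trace $1$ equals $h_{\le 0}(\mathbf I_n)$ after the cup. So the only nonzero errors at layer $0$ are $\Delta_r$ for $r>R$, and these are zero since $X^{(0)}$ is Gaussian.

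The induction has two steps.

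\textbf{Linear step.} The estimate $\tilde\kappa_r[Z^{(\ell)}]=g^{s_K(r)}_{\mathbf M}(h_{\le r-2s}(\tilde\kappa_r[X])\odot\mathbf W)$ differs from $\kappa_r[X]\odot\mathbf W$ by two pieces:
\begin{enumerate}
\item the propagated error $\Delta_r[X]\odot\mathbf W$;
\item the discarded harmonic components, $h_{\ge r-2s+2}(\kappa_r[X])\odot\mathbf W$ together with the cup replacement $g_{\mathbf M}$ versus $g_{\mathbf W\mathbf W^\top}$ applied to traces.
\end{enumerate}
For piece 2, the traceless part of $\kappa_r$ for $r>R$ only ever appears in paired contractions, so Proposition \ref{wdiagsprop} yields at most $|E|/2$ extra components. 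This gives the counts $n^{1-r/2}\le n^{1-K/2}$ when $r\ge K+1$ and the harmonic is nontrivial, and for odd $K$ with $r=K+1$ the retained scalar trace $h_0$ kills the only $n^{2-r}$ term. I would verify this through the parity/multiplicity case split of Corollary \ref{cor:elementgrowth}, now applied to harmonic projections using formula \eqref{harmonicparteq}, whose coefficients are $O(1)$ times powers of $n^{-1}$.

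\textbf{Activation step.} For $r\le R$, substitute $\kappa=\tilde\kappa+\Delta$ into the polynomial diagram summation formula (Theorem \ref{polydiagsthm}), which the algorithm evaluates exactly on $\tilde\kappa$. The difference $\Delta_r[X^{(\ell)}]$ is then a finite sum of connected diagrams, each containing at least one $\Delta[Z^{(\ell)}]$ vertex. For $r>R$, the difference is $\kappa_r[X^{(\ell)}]$ itself, which is handled by the true growth bounds as in the linear step. To bound mixed diagrams I would use Lemma \ref{lem:cs}, separating $\Delta$ vertices from $\kappa$ vertices, so that each side is a diagram of a single type: a doubled-$\Delta$ diagram and a doubled-$\kappa$ diagram whose growth is given by Proposition \ref{prop:cumulantgrowth}.

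The main obstacle is exactly this accounting: ensuring that Cauchy--Schwarz does not lose powers of $n$. Doubling a connected diagram can produce components whose count does not match the original, and the polynomial formula contains diagrams in which a $\Delta$ is attached to a repeated index with variances on the same node. Those terms are $\Theta(1)$-sized cumulants that do not decay, which is why the inductive hypothesis must be stated for connected contractions rather than elementwise. I would first try to prove that every connected mixed diagram with $m\ge1$ error vertices has $n^{1-mK/2}$-growth by induction on the number of $\Delta$ vertices.
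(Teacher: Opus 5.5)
\textbf{The strengthened invariant is false.} The induction cannot be run in the form you propose.

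Take $K=1$ and $\ell=1$. The algorithm sets $\tilde\kappa_2[Z^{(1)}]=g_{\mathbf M}(1)$, where $\mathbf M$ is the diagonal part of $\mathbf W^{(1)}\mathbf W^{(1)\top}$. So $\Delta_2[Z^{(1)}]$ vanishes on the diagonal and equals $(\mathbf W^{(1)}\mathbf W^{(1)\top})_{ij}$ off it. These entries have variance $4/n$, which is consistent with the theorem's $n^{-1/2}$. But the connected fully-contracted diagram $\sum_{i,j}\Delta_2[Z^{(1)}]_{ij}^2$ has expectation $4(n-1)$. Your invariant asserts $n^{1/2}$-growth for it, and your refined $n^{1-mK/2}$ version asserts $O(1)$.

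The same failure occurs for every $K$. At untracked orders the error tensor is a true cumulant, up to the tracked trace when $K$ is odd. For $\ell\ge 2$ and a generic polynomial activation, $\|\Delta_{K+1}[Z^{(\ell)}]\|_F^2$ is of order $n$, not $n^{1-K/2}$. Error tensors have uniformly small entries, but they do not gain a factor $n^{-K/2}$ per vertex under contraction.

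The unrefined invariant would not suffice anyway. For odd $r$, e.g.\ $r=1$, every component of the paired diagram contains at least two copies of $\Delta_r[X^{(\ell)}]$. You would then need $\sum_j\Delta_1[X^{(\ell)}]_j^2=O(n^{1-K})$, not $O(n^{1-K/2})$.

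\textbf{No strengthening is needed.} The elementwise statement closes the induction by itself, and this is what the paper does.

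\emph{Activation step.} Your concern about $\Theta(1)$ variances attached to a $\Delta$ does not arise in the polynomial algorithm, because the diagram formula is evaluated exactly on the estimated cumulants. Lemma \ref{lem:xdel} writes each entry of $\Delta_r[X^{(\ell)}]$ as a fixed combination of uncontracted ($I=\emptyset$) products. Each product consists of entries of $\kappa[Z^{(\ell)}]$, all $O(1)$ by Corollary \ref{cor:elementgrowth}, together with at least one entry of $\Delta[Z^{(\ell)}]$. The $\Theta(1)$ factors are therefore harmless multipliers, and H\"older gives entries of size $n^{-K/2}$.

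\emph{Linear step.} For $\Delta_r[Z^{(\ell+1)}]=\Delta_r[X^{(\ell)}]\odot\mathbf W^{(\ell+1)}$, Proposition \ref{wdiagsprop} gives $(2/n)^{pr/2}$ times fully-contracted diagrams on exactly $pr/2$ index vertices. The paper splits each such diagram with Lemma \ref{lem:cs} into two parts.
\begin{itemize}
\item A $\kappa$-diagram with at most $pr/2$ components, which is $O(n^{pr/2})$ by Proposition \ref{prop:cumulantgrowth}.
\item A $\Delta$-diagram, which is a sum of $n^{pr/2}$ terms each with at least $2p$ error factors, hence $O(n^{pr/2-pK})$ by H\"older.
\end{itemize}
Combining these yields $n^{-pK/2}$.

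\emph{Where connected-contraction decay is actually used.} It enters only for true cumulants, in three places.
\begin{itemize}
\item Untracked orders, where $\Delta_r=\kappa_r$ is handled directly by Corollary \ref{cor:elementgrowth}, as in the proof of Theorem \ref{thm:hermiteerror}.
\item The odd-$K$ case $r=K+1$. Here $h_{\ge2}\kappa_{K+1}[X^{(\ell)}]$ has vanishing full trace, so every component contains at least two copies and has $n$-growth. The tracked scalar $h_0\Delta_{K+1}[X^{(\ell)}]$ carries the factor $c^{\mathrm h}_{\frac{K+1}{2},0}=O(n^{-(K+1)/2})$.
\item For $K=1$, the dropped off-diagonal entries of $\mathbf W\mathbf W^\top$, which have $n^{-1/2}$-growth.
\end{itemize}
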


First, we will prove a useful lemma.

\begin{lemma}
\label{lem:xdel} 
For any rank $ r $, the tensor sequence
\[
  \Delta _{ r } [X ^{ (\ell) } ]
\] 
can be written as a finite sum of connected rank-$r$ tensor diagrams involving tensor sequences from $ \left\{ \kappa _{ r' } [Z ^{ (\ell) } ] : r' \ge 1 \right\} \cup \left\{ \Delta  _{ r' } \left[ Z ^{ (\ell) }  \right] : r' \ge 1  \right\}  $. Furthermore, each term must contain at least one tensor sequence of the form $ \Delta _{ r' } \left[ Z ^{ (\ell) }  \right]   $.
\end{lemma}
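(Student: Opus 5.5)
Both $\kappa_r[X^{(\ell)}]$ and $\tilde\kappa_r[X^{(\ell)}]$ are given by the same polynomial map of the preactivation cumulants. The true one uses $\kappa[Z^{(\ell)}]$, while the estimate uses $\tilde\kappa[Z^{(\ell)}]$ in step \ref{polypowercumulantsstep}. The plan is therefore to write $\tilde\kappa = \kappa - \Delta$ and expand multilinearly.

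First I will expand $\kappa_r[X^{(\ell)}]_{i_1,\dots,i_r}$ using multilinearity and Lemma~\ref{lem:simplepowers}, exactly as in the proof of Lemma~\ref{lem:nonlinconnected}. This gives a finite sum over $\underline k\in[m]^r$ and connected partitions $\pi\vdash\bigsqcup_v[k_v]$ of the terms $\prod_{S\in\pi}\kappa_{|S|}[Z^{(\ell)}]_{i_v:(v,l)\in S}$. Each term is a connected rank-$r$ tensor diagram with no contracted indices: the $V_2$ vertices are the $r$ hanging indices, and the $V_1$ vertices are the blocks of $\pi$. Connectedness follows because the join of $\pi$ with $\{[k_v]\}$ is trivial. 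The estimated cumulant $\tilde\kappa_r[X^{(\ell)}]$ is the identical sum, indexed by the same $\underline k$ and $\pi$, with each $\kappa_{|S|}[Z^{(\ell)}]$ replaced by $\tilde\kappa_{|S|}[Z^{(\ell)}]$. I will check that the algorithm sums over exactly the same (connected) partitions, and that taking $\tilde\kappa_{r'}[Z]=0$ for untracked orders is consistent with this. In that case $\Delta_{r'}[Z]=\kappa_{r'}[Z]$, which is still a legitimate tensor sequence from the allowed set.

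Next, for $r\le R$, I will substitute $\tilde\kappa_{|S|}[Z^{(\ell)}]=\kappa_{|S|}[Z^{(\ell)}]-\Delta_{|S|}[Z^{(\ell)}]$ into each product and expand. Each product $\prod_S \kappa_S - \prod_S(\kappa_S-\Delta_S)$ equals the sum over nonempty subsets $A\subseteq\pi$ of $-(-1)^{|A|}\prod_{S\in A}\Delta_S\prod_{S\notin A}\kappa_S$. Each resulting term has the same underlying bipartite graph as the original diagram, so it is still connected, and it contains at least one $\Delta$ factor because $A\ne\emptyset$. Summing over $\underline k$ and $\pi$ gives a finite sum of the required form.

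Then I will treat the case $r>R$, where $\tilde\kappa_r[X^{(\ell)}]=0$ and so $\Delta_r[X^{(\ell)}]=\kappa_r[X^{(\ell)}]$. Here I will use the fact that $\tilde\kappa_{r'}[Z]$ vanishes for large $r'$ to rewrite each $\kappa_{r'}[Z]$ as $\tilde\kappa_{r'}[Z]+\Delta_{r'}[Z]$. I then need to show that every diagram surviving after dropping the pure-$\tilde\kappa$ terms contains a $\Delta$. The pure-$\tilde\kappa$ contributions with $r>R$ may not cancel automatically. The clean fix is to interpret $\tilde\kappa_r[X^{(\ell)}]$ for $r>R$ as the same formula evaluated at $\tilde\kappa[Z]$, with the truncation $\tilde\kappa_r[X]=0$ accounted for separately. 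The main obstacle is reconciling this bookkeeping, together with the harmonic projection in step \ref{polyharmonicproj}, with the definition of $\Delta$. I expect the lemma to be intended for $\Delta$ defined with $\tilde\kappa_r[X]$ given by the unprojected diagram formula at every order. I will state explicitly that the subtraction identity is being applied in that sense, and defer the projection and truncation errors to the proof of Theorem~\ref{thm:main}.
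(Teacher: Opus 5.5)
Your argument for $r\le R$ is exactly the paper's proof: expand via multilinearity and Lemma~\ref{lem:simplepowers}, substitute $\tilde\kappa=\kappa-\Delta$ blockwise, and expand over nonempty subsets of blocks, so each term keeps the same connected bipartite graph and contains at least one $\Delta$. Your caveat about $r>R$ is also justified: the paper's proof starts from $\Delta_r[X^{(\ell)}]=\sum(\prod\kappa-\prod\tilde\kappa)$, which holds only when $\tilde\kappa_r[X^{(\ell)}]$ is given by the diagram formula, so it too implicitly covers only $r\le R$; for $r>R$, the needed bound on $\Delta_r[Z^{(\ell+1)}]=\kappa_r[Z^{(\ell+1)}]$ comes most cleanly from Corollary~\ref{cor:elementgrowth}, as in the proof of Theorem~\ref{thm:hermiteerror}.
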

\begin{proof}
By definition of $ \Delta _r \left[ X ^{ (\ell) }  \right]  $, we get directly that
\[
  \Delta _{ r } [X ^{ (\ell) } ] _{ i_1,...,i_r }  = \sum_{ \underline{k}\in[m]^r }^{  } \left( \prod_{ j=1 }^{ r } a _{ k_j }  \right) \sum_{ \pi  }^{  } \left( \prod_{ S \in \pi  }^{  } \kappa \left[ Z _{ i_v } ^{ (\ell) } : (v,l) \in S \right] - \prod_{ S \in \pi  }^{  } \widetilde{ \kappa  } \left[ Z _{ i_v } ^{ (\ell) } : (v,l) \in S \right]  \right) 
\] 
Consider the difference inside the $ \pi  $ summation: since $ \widetilde{ \kappa  }  = \kappa - \Delta  $, we can use the binomial theorem to rewrite this as
\[
  \sum_{ \pi' \subsetneq \pi  }^{  } (-1) ^{ | \pi |- |\pi'| } \left( \prod_{ S \in \pi' }^{  } \kappa \left[ Z _{ i_v } ^{ (\ell) } : (v,l) \in S \right]  \right) \left( \prod_{ S \not\in \pi' }^{  } \Delta \left[ Z _{ i_v } ^{ (\ell) } : (v,l) \in S \right]  \right) .
\] 
For each $ \pi' $, the quantity inside is (the $ i_1,...,i_r $-entry of) a tensor diagram formed as follows:
\begin{itemize}
  \item $ V_1 = \pi , V_2 = [r] $, the edges consist of $ (S,v) $ such that $ (v,l) \in S $ for some $ l $, with multiplicity. 
  \item $ I = \emptyset $
  \item $ T_v = \kappa _{ \mathrm{ deg } (v) } [Z ^{ (\ell) } ] $ if $ v \in \pi' $, and $ T_v = \Delta _{ \deg (v) } [X ^{ (\ell) } ] $ otherwise.
\end{itemize}
The number of diagrams, and their coefficients, are independent of $ n $. This completes the proof of the Lemma.
\end{proof}

\begin{proof}[Proof of Theorem \ref{thm:main}]
We will do this by induction on $ \ell $. By defining $\tilde \kappa _r [Z^{(0)}] := \tilde\kappa _r[X^{(0)}]$, and $\phi^{(0)} := \mathrm{id}$, we may start the induction by proving the result for $\Delta_r[Z^{(0)}]$, which is clear since at $\ell = 0 $ all the error tensors are identically zero.

For the inductive step, assume that $ \Delta _{ r } [Z ^{ (\ell) } ] $ has $ n ^{ -K/2 }  $-growth for some $ \ell \in 0,...,L-1 $. We would like to prove the same result for $ \Delta _{ r } [Z ^{ (\ell+1) } ] $. Let $p \ge 2$ be an even integer (the exponent in the definition of tensor growth).

\underline{\textbf{General case: $K$ is even or $r\neq K+1$}}

It is clear from the definitions that
\[
  \Delta _{ r } [Z ^{ (\ell+1) } ] = \Delta _{ r } [X ^{ (\ell) } ] \odot \mathbf W ^{ (\ell+1) } 
\] 
We will now apply Proposition \ref{wdiagsprop} to the following tensor diagram: $ V_1 = [p], V_2 = [r], E = [p] \times [r] , I = \emptyset, T_v = \Delta _{ r } [X ^{ (\ell) } ] $. Since $p$ is even,
\[
  \mathbb{ E } [ \Delta _{ r } [Z ^{ (\ell+1) } ] _{ i_1,...,i_r } ^{ p }  ] = \left( \frac{2}{n} \right) ^{ rp/2 } \sum_{ \pi  }^{  } \mathbb{E}[D _{ \pi ,1 }] (D _{ \pi ,2 } ) _{ i_1,...,i_r } 
\] 
where the sum is over pairings of $ [p] \times [r] $, the $ D _{ \pi ,1 }  $ are arbitrary tensor diagrams on $ p $ copies of $ \Delta _{ r } [X ^{ (\ell) } ] $ where every vertex on the right has degree 2, and $ I=V_2 $, and the value of $ (D _{ \pi ,2 }) _{ i_1,...,i_r }   $ is always $ 1 $ or $ 0 $.

Therefore, to complete the result, we are reduced to proving that tensor diagrams of the form $ D _{ \pi ,1 }  $ above have $n ^{ \frac{p}{2} (r-K) } $-growth. From Lemma \ref{lem:xdel}, we know that each terms $ \Delta _{ r } [X ^{ (\ell) } ] $ can be written as a sum of connected diagrams of the $ \kappa [Z ^{ (\ell) } ] $ and $ \Delta [Z ^{ (\ell) } ] $ tensors with at least one $ \Delta [Z ^{ (\ell) } ] $ included, and the number and coefficients of these diagrams are independent of $ n $. After replacing each copy of $ \Delta_d [X ^{ (\ell) } ] $ by one of these diagrams, we are reduced to proving that each of the resulting fully-contracted tensor diagrams has $n ^{ \frac{p}{2}(r-K) }$-growth. Let us recap which diagrams these are:

\begin{itemize}
  \item $ V_1 = \bigsqcup _{ i=1 } ^{ p } V _{ 1,i } $
  \item $ |V_2| = \frac{1}{2}pr $.
  \item The edges are arbitrary except for the condition that the restriction to $ V _{ 1,i } $ and its neighbors must be connected for all $ i \in [p] $, and for each $ v \in V_2 $ there are exactly two values of $ i $ for which $ V _{ 1,i }  $ is connected to $ v $ by an edge.
  \item $ I= V_2 $, the tensor diagram is fully contracted.
  \item All of the tensors are either $ \kappa [Z ^{ (\ell) } ] $ or $ \Delta [Z ^{ (\ell) } ] $, and for each $ i \in [p] $ at least one of the tensors attached to $ V _{ 1,i }  $ is a $ \Delta [Z ^{ (\ell) } ] $.
\end{itemize}

Now, we will apply Lemma \ref{lem:cs} by using the decomposition $ V_1 = U_1 \sqcup U_2 $ with $ U_1 $ consisting of all vertices with a $ \kappa [Z ^{ (l) }]  $ tensor attached, and $ U_2 $ all vertices with a $ \Delta [Z ^{ (l) } ] $ tensor. This tells us that
\[
  D _{ \pi ,1 } ^{ 2 } \leq D _{ \pi ,1 } ^{ \kappa  } D _{ \pi ,1 } ^{ \Delta  } .
\] 
What can we say about the tensor diagram $ D _{ \pi ,1 } ^{ \kappa  }  $? It is a fully-contracted tensor diagram where every tensor is of the form $ \kappa [Z ^{ (l) } ] $. The cardinality of $ V_2(D _{ \pi ,1 } ^{ \kappa  } ) $ is the same as $ | V_2(D _{ \pi ,1 } ) |= \frac{1}{2}pr $, and therefore we know by Proposition \ref{prop:cumulantgrowth} that this diagram has $ n ^{ \frac{p}{2}r } $-growth.

It remains to prove that $ D _{ \pi ,1 } ^{ \Delta  }  $ has $ n ^{ \frac{1}{2}pr - pK }  $-growth. This diagram evaluates to a sum of $ n ^{ \frac{1}{2}pr }  $ terms each of which is a product of at least $ 2p $ tensors $ \Delta [Z ^{ (\ell) } ] $ (the factor $ 2 $ comes from the fact that the $ D _{ \pi ,1 } ^{ \Delta  } $ is has double the number of $ \Delta  $'s as $ D _{ \pi ,1 }  $). If we take an individual term, we can bound its expectation using H{\"o}lder's inequality in terms of the bounds for moments of the $ \Delta [Z ^{ (\ell) } ] $'s. Since each term has at least $ 2p $ tensors, we get a uniform bound of $ O(n ^{ -pK } ) $ where the constant is independent of the term we chose.

Since the bound is uniform, we see that $ D _{ \pi ,1 } ^{ \Delta  }  $ has $ n ^{ \frac{1}{2}pr-pK }  $-growth.

\underline{\textbf{Special case: $4\leq r=K+1 \equiv 0 \text{ mod }2$}}

For this case, note that 
\begin{align*}
\Delta_{K+1}[Z^{(\ell+1)}]&= \left(h_{\ge 2} \kappa_{K+1}[X^{(\ell)}] + g^{\frac{K+1}{2}}h_0 \Delta _{K+1}[X^{(\ell)}]\right) \odot \mathbf W^{(\ell+1)}
\end{align*}

Consider the $ p $-th power of $ h_{\ge 2} \kappa _{ K+1 }[ X ^{(\ell)}] \odot \mathbf W ^{ (\ell+1) } $. Applying Proposition \ref{wdiagsprop}, we see that we require all fully-contracted diagrams with $p$ copies of $h_{\ge 2} \kappa _{ K+1 }[ X ^{(\ell)}]$ to have $n^{\frac{1}{2}p}$-growth.

A fully-contracted single copy of $h _{\ge 2}\kappa _{K+1} [X^{(\ell)}]$ is $0$, by definition of the harmonic decomposition. Therefore, it is sufficient to prove that a connected fully-contracted diagram of $h_{\ge2}\kappa_{K+1} [X^{(\ell)}]$'s must have $n$-growth (since no tensor can be a single connected component, the number of connected components must be at most $\frac{1}{2}p$). Suppose that some such connected fully-contracted diagram has $m \ge 1$ tensors. Using the expression
\[
h_{\ge2}\kappa _{K+1}[X^{(\ell)}] = \kappa_{K+1}[X^{(\ell)}] - c_{\frac{K+1}{2},0}^{\mathrm h}g^{\frac{K+1}{2}}\mathrm{tr}^{\frac{K+1}{2}}\kappa_{K+1}[X^{(\ell)}]
\]
we can expand using the binomial theorem, and assume that $0\leq a \leq m$ of the tensors are of the second type. Since $c_{\frac{K+1}{2},0}^{\mathrm h} = O(n^{-\frac{K+1}{2}})$, we gain a factor of $n^{-a\frac{K+1}{2}}$, but the number of connected components can increase by $a\frac{K+1}{2}$. Therefore it suffices to prove that any connected fully-contracted tensor diagram of $\kappa_{K+1} [X^{(\ell)}] $'s has $n$-growth. This follows directly from Proposition \ref{prop:cumulantgrowth}.

Now consider the $p$-th power of $g^{\frac{K+1}{2}}h_0 \Delta_{K+1}[X^{(\ell)}]\odot \mathbf W^{(\ell+1)}$. This is the $p$-th power of the scalar value $h_0\Delta _{K+1} [X^{(\ell)}]$ multiplied by $p\frac{K+1}{2}$ copies of $(\mathbf W ^{(\ell+1)}\mathbf W^{(\ell+1)\top})_{i,j}$, which is easily seen to be $O(1)$. So we simply need to show that $h_0 \Delta_{K+1}[X^{(\ell)}]$ has $n^{-K/2}$-growth. Given that
\[
    h_0 \Delta_{K+1}[X^{(\ell)}] = c_{\frac{K+1}{2},0}^{\mathrm h} \mathrm{Tr}^{\frac{K+1}{2}} \Delta_{K+1}[X^{(\ell)}]
\]
and $c_{\frac{K+1}{2},0}^{\mathrm h} = O(n^{-\frac{K+1}{2}})$, it suffices to prove that $\Delta_{K+1}[X^{(\ell)}]$ has $n^{-\frac{K}{2}}$-growth. This is true since Lemma \ref{lem:xdel} implies that the elements of $\Delta_{K+1}[X^{(\ell)}]$ are products of elements of $\kappa_{r'}[Z^{(\ell)}]$ and at least one element of $\Delta_{K+1}[Z^{(\ell)}]$, so the inductive hypothesis and Corollary \ref{cor:elementgrowth} complete the argument.

\underline{\textbf{Special case: $K=1, r=2$}}

In this case, 
\begin{align*}
\Delta_{2}[Z^{(\ell+1)}]&= \left(h_{ 2} \kappa_{2}[X^{(\ell)}] + gh_0 \Delta _{2}[X^{(\ell)}]\right) \odot \mathbf W^{(\ell+1)} + (g_{\mathbf M ^{(\ell+1)}}-g_{\mathbf W^{(\ell+1)}\mathbf W^{(\ell+1)\top}})h_0 \tilde \kappa_2 [X^{(\ell)}]
\end{align*}

The first term is dealt with exactly as in the previous special case. We simply need to show that the final term has $n^{-\frac{1}{2}}$-growth. The off-diagonal entries of $\mathbf W ^{(\ell+1)} \mathbf W^{(\ell+1)\top}$ have $n^{-\frac{1}{2}}$-growth. Using $\tilde \kappa _2[X^{(\ell)}] = \kappa_2[X^{(\ell)}] - \Delta_2[X^{(\ell)}] $, is suffices to show each of these terms has $O(1)$-growth. This follows exactly as in the final part of the previous special case, along with Corollary \ref{cor:elementgrowth}.
\end{proof}

\section{Mathematical preliminaries for Hermite-based algorithms}
\label{smathprelims}

In this section we introduce the notation and basic results needed to follow the precise description of our cumulant propagation algorithms given in Section \ref{kpropfull}.

\subsection{Partitions and diagonals}\label{diagonalsprereq}
Here, we introduce basic terminology for tensors, partitions, and diagonals.
\begin{definition}[Diagonals of tensors]
    
    Given $\pi\vdash[r]$ and indices $j_1,j_2\in[r]$, we say $j_1\sim_\pi j_2$ when $j_1$ and $j_2$ lie in the same block of $\pi$, and we say the indices $(i_1,\ldots,i_r)$ are of \textit{type} $\pi$ to mean that, for all $j_1,j_2\in[r]$, $i_{j_1}=i_{j_2}$ if and only if $j_1\sim_\pi j_2$. Then, for $\bfT\in\tens{r}{n}$,
    the \textit{$\pi$-diagonal} of $\bfT$ is defined by
    \begin{equation*}
        (\Diag_\pi \bfT)_{i_1,\ldots, i_r} := \begin{cases}
            \bfT_{i_1,\ldots, i_r} & \type(i_1,\ldots, i_r)=\pi\\
            0 & \text{otherwise.}
        \end{cases}
    \end{equation*}
\end{definition}
In this way, each integer partition $\pi\vdash [r]$ defines an orthogonal projection $\Diag_\pi\colon \tens{r}{n}\to\tens{r}{n}$. It is also clear that, for any $\bfT$,
\begin{equation*}
    \bfT = \sum_{\pi\vdash[d]} \Diag_\pi \bfT,
\end{equation*}
whence the images of the $\Diag_\pi$'s in fact form an orthogonal decomposition of $\tens{r}{n}$.

\begin{definition}[Diagonal slices of symmetric tensors]
    We say that $\pi\vdash[r]$ has \textit{type} equal to the \textit{integer partition} $\lambda=(\lambda_1,\ldots,\lambda_b)\vdash r$, 
    written $\type(\pi)=\lambda$, if 
    $\lambda_i$ is the size of the $i\nth$ largest block of $\pi$ and $b$ is the number of blocks in $\pi$.\footnote{
    Here we overload the $\type$ operator. The type of a tuple of indices is a set partition, and the type of a set partition is an integer partition.
    }
    By convention, $\lambda_1\geq \ldots\geq \lambda_b$.
    Suppose now $\bfT $ is \textit{symmetric}, which we write $\bfT \in\syms{r}{n}$.
    Then, all $\Diag_\pi \bfT $'s for $\pi$'s associated with the same $\lambda$ are equal up to permutation, whence
    \begin{equation}
    \label{eq:sum-diag-pi}
        \Diag_\lambda \bfT := \sum_{\substack{\pi\vdash[r]\\\type(\pi)=\lambda}}\Diag_\pi \bfT  \in\syms{r}{n}.
    \end{equation}
    When manipulating diagonals algorithmically, it suffices to track only \textit{diagonal slices}
    \begin{equation}
    \label{eq:def-tilda-diag}
        (\tilde\Diag_\lambda \bfT )_{i_1,\ldots, i_b} := (\Diag_\lambda \bfT )_{\underbrace{\scriptstyle{i_1,\ldots, i_1}}_{\text{$\lambda_1$ times}},\ldots, \underbrace{\scriptstyle{i_b,\ldots, i_b}}_{\text{$\lambda_b$ times}}}.
    \end{equation}
\end{definition}
Note that $\tilde\Diag_\lambda \bfT $ is \textit{not} symmetric.
In fact, writing $\lambda$ in the alternative notation $\lambda=(1^{b_1}2^{b_2}\ldots r^{b_r})$ (denoting that there are $b_i$ blocks of size $i$),
we find that $\tilde\Diag_\lambda \bfT $ is invariant with respect to the action of the Young subgroup $\prod_i S_{b_i}$ induced by the integer partition $(b_1,\ldots, b_d)\vdash b$.
That is, $\tilde\Diag_\lambda \bfT $ is only invariant to permutations of indices corresponding to equally sized blocks of $\lambda$.
We call such tensors $\lambda$-symmetric and denote their subspace $\syms{\lambda}{n}\subseteq\tens{b}{n}$.

We write the Moore-Penrose pseudoinverse of $\tilde\Diag_\lambda$ as $\tilde\Diag_\lambda^{-1}\colon \syms{\lambda}{n}\to\syms{r}{n}$, which has the following explicit form:
\begin{align*}
    (\tilde\Diag_\lambda^{-1}\bfT)_{\underbrace{\scriptstyle{i_1,\ldots, i_1}}_{\text{$\lambda_1$ times}},\ldots, \underbrace{\scriptstyle{i_b,\ldots, i_b}}_{\text{$\lambda_b$ times}}} &= \bfT_{i_1,\ldots, i_b}\\
    (\tilde\Diag_\lambda^{-1}\bfT)_{i_1,\ldots, i_r} &= 0  \qquad \text{if $\type(\type(i_1,\ldots, i_r))\neq \lambda$}.
\end{align*}
Then
\begin{equation*}
    \tilde\Diag_\lambda\tilde\Diag_\lambda^{-1} = \operatorname{id}_{\syms{\lambda}{n}},\qquad
    \tilde\Diag_\lambda^{-1}\tilde\Diag_\lambda = \Diag_\lambda.
\end{equation*}

As a slight abuse of notation, we may consider $\tilde\Diag_{\underline u} \bfT $ for arbitrary (possibly unsorted, not necessarily positive) 
$\underline{u}\in\Z_{\geq 0}^b$ satisfying $\sum_iu_i = r$.
In this case, we treat the diagonal slice as constant along indices corresponding to zero entries of $\underline{u}$,
excepting that we still enforce that any repeated-index entry is zero.
That is,
\begin{equation*}
    (\tilde\Diag_{\underline u} \bfT )_{i_1,\ldots, i_b} := {\mathbbm 1}[\text{all $j_i$ distinct}]\bfT_{\underbrace{\scriptstyle{i_1,\ldots, i_1}}_{\text{$u_1$ times}},\ldots, \underbrace{\scriptstyle{i_b,\ldots, i_b}}_{\text{$u_b$ times}}}.
\end{equation*}
(Note that \eqref{eq:def-tilda-diag} also gives zero entries when any $j_i$ is repeated---this follows from the definition of $\Diag_\lambda$. 
Thus the two definitions of $\tilde\Diag$ we give coincide for $\lambda\vdash d$.)
We let $\tilde\Diag_{\underline{u}}^{-1}$ denote the Moore-Penrose pseudoinverse of $\tilde\Diag_{\underline{u}}$.


\subsection{Combinatorial restatement of Hermite-based diagram summation formula}\label{diagramsprereq}

Theorem \ref{diagramstheorem}, the Hermite-based diagram summation formula for cumulants, 
serves as our basic formula for propagating cumulants through non-linear activation functions. 
We restate the requisite definitions and provide a combinatorial restatement here, and defer the proof to~\ref{diagramsproof}. This combinatorial restatement is possible simply due to the fact that in a diagram (see Definition \ref{def:diagram}) there is an ordering of the points in each $[k_j]$ which doesn't affect the value of the corresponding term. Grouping terms that are equal up to permuting these points gives the combinatorial statement below.

\begin{definition}
Let $r\in\mathbb{Z}_{>0}$ and $\underline k\in\Z^r_{\geq 0}$.
A \textit{vector partition} of $\underline k$ is a tuple of tuples in lexicographically descending order that sum to $\underline k$:
\begin{align*}
    \nu&=(\underline u_1,\ldots, \underline u_b)\in (\Z^r_{\geq 0})^b\\
    \text{s.t.}\quad k_j&=\sum_{i=1}^b u_{i,j}\qquad \forall j\in[r].\\
    \underline{u}_i &\geq_{\text{lex}} \underline{u}_{i+1}\qquad \forall i\in[b-1].
\end{align*}
We call the $\underline u_i$ \textit{blocks} of the vector partition.
Each vector partition is an equivalence class of diagrams modulo the forgetting of the distinction between vertices in the same group $\{(v,w):w\in\{1,\ldots, k_v\}\}$.
In this way, the relationship between diagrams and vector partitions is analogous to the relationship between set partitions and integer partitions;
we thus write $\nu=\type(\pi)$ if the diagram $\pi$ is in the equivalence class induced by the vector partition $\nu$.
We also say that the type of each block $S\in \pi$ is its corresponding vector in $\nu$:
\begin{equation*}
    \type(S)_i = \#\{w\in[k_i]: (v,w)\in S\}.
\end{equation*}
Taking this perspective, we can define the properties \textit{connected} and \textit{$\left[\leq M\right]$-mixed}
analogously to the definitions for diagrams.
We denote the set of all vector partitions of $\underline k$ by $\Vec(\underline{k})$,
and the subset of connected $\left[\leq M\right]$-mixed vector partitions by $\cVec{M}(\underline k)$.
\end{definition}

\begin{restatable}[]{proposition}{veccoef}
\label{prop:vec-coef}
    Fix $\nu\in\Vec(\underline{k})$, and for each $\underline 0\leq \underline{u}\leq \underline{k}$ (entrywise) let $\nu(\underline{u})$
    denote the number of times $\underline{u}$ appears in $\nu$. 
    Then, the number of diagrams with type $\nu$ is
    \begin{equation*}
     c^{\mathrm{vec}}_\nu := \dfrac{\prod_{i=1}^r k_i!}{\prod_{\underline{u}=\underline{0}}^{\underline{k}} \left(\nu(\underline{u})!\prod_{i=1}^r (u_i!)^{\nu(\underline{u})}\right)}.
    \end{equation*}
\end{restatable}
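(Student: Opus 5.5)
The plan is an orbit--stabilizer count. Let $G=\prod_{i=1}^r S_{k_i}$ act on the vertex set $\{(v,w):v\in[r],w\in[k_v]\}$ by permuting the second coordinate within each group $v$. This induces an action on diagrams (set partitions of the vertex set). By the definition of $\type$, two diagrams have the same type $\nu$ exactly when they lie in the same $G$-orbit. A block's type $\underline u$ records only how many vertices it takes from each group, and $G$ can move any block of a given type to any other set of vertices of that type. I would check both directions: $G$ preserves the multiset of block types, and conversely two diagrams with the same multiset of block types are related by some element of $G$. For the converse, match blocks of equal type bijectively and define, for each $v$, a bijection of $[k_v]$ that sends the group-$v$ vertices of each block to those of its partner. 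This is possible because the block types sum to $\underline k$. Hence the number of diagrams of type $\nu$ is $|G|/|\mathrm{Stab}(\pi)|=\prod_i k_i!/|\mathrm{Stab}(\pi)|$ for any fixed $\pi$ of type $\nu$.

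The main step is computing $|\mathrm{Stab}(\pi)|$. An element $g\in G$ stabilizes $\pi$ exactly when it permutes the blocks of $\pi$. Since $g$ preserves block types, the induced permutation of blocks preserves type, so it permutes the $\nu(\underline u)$ blocks of type $\underline u$ among themselves for each $\underline u$. I would therefore build a surjective homomorphism from $\mathrm{Stab}(\pi)$ onto $\prod_{\underline u}S_{\nu(\underline u)}$. Surjectivity follows from the same matching construction as above. The kernel consists of those $g$ fixing every block setwise, so on each block $S$ of type $\underline u$ it independently permutes the $u_i$ vertices of $S$ lying in group $i$. This gives kernel size $\prod_{S}\prod_i u_i!=\prod_{\underline u}\prod_i (u_i!)^{\nu(\underline u)}$. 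Multiplying kernel and image sizes gives the claimed denominator, and hence $c^{\mathrm{vec}}_\nu$.

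The only subtlety I expect is the convention for zero blocks and the degenerate entries. A block of type $\underline 0$ cannot occur, since blocks of a set partition are nonempty, so $\nu(\underline 0)=0$ and the corresponding factor is $1$. Including $\underline u=\underline 0$ in the product is then harmless. I also need a coordinate with $u_i=0$ to contribute $0!=1$, which it does. Finally, I would remark that the result does not need connectedness or mixedness, since these properties are $G$-invariant and are just inherited by the orbit.
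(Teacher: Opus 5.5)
Your proposal is correct and follows the same route as the paper. Both use orbit--stabilizer for $\prod_{v} S_{k_v}$ acting on diagrams, and both count the stabilizer as $\prod_{\underline u}\nu(\underline u)!\prod_i (u_i!)^{\nu(\underline u)}$, from permuting same-type blocks and permuting vertices within each block--group intersection. Your surjective homomorphism onto $\prod_{\underline u}S_{\nu(\underline u)}$ with its block-fixing kernel, and your check that type classes coincide with orbits, make rigorous two points the paper asserts without argument.
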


\begin{restatable}[Combinatorial restatement of \ref{diagramstheorem}]{theorem}{diagrestat}
\label{thm:diagrams-restatement}
Let $Z\in\R^n$ be a random variable with finite moments and let $\phi\colon \R\to\R$ be a polynomial,
which we think of as acting coordinatewise on $Z$. Then
\begin{equation}
\label{eq:diagram-restatement}
\begin{aligned}
    \kappa_r[\phi(Z)]_{j_1,\ldots j_r} 
    = \sum_{\underline{k}\in\Z_{\geq 0}^r} 
            \left(\prod_{v=1}^r \dfrac{1}{k_v!} \widehat{\phi}_{k_v}^{(\E[Z_{j_v}], \Var(Z_{j_v})}\right)
            \sum_{\nu\in\cVec{2}(\underline{k})}c_\nu^{\mathrm{vec}}\prod_{\underline{u}\in\nu}(\tilde\Diag_{\underline{u}}\kappa_{|\underline{u}|}[Z])_{j_1,\ldots, j_r}.\\
\end{aligned}
\end{equation}
\end{restatable}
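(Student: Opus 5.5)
The plan is to derive Theorem \ref{thm:diagrams-restatement} directly from Theorem \ref{diagramstheorem}, assumed proven, by grouping the diagrams according to their type. Fix $\underline k$. The Hermite prefactor $\prod_v \frac{1}{k_v!}\widehat\phi_{k_v}^{(\cdot)}$ depends only on $\underline k$. So it suffices to show, for each $\underline k$, that
\[
\sum_{\pi\in\cDia{2}(\underline k)}\prod_{S\in\pi}\kappa\left[Z_{j_v}:(v,w)\in S\right]
=\sum_{\nu\in\cVec{2}(\underline k)}c^{\mathrm{vec}}_\nu\prod_{\underline u\in\nu}(\tilde\Diag_{\underline u}\kappa_{|\underline u|}[Z])_{j_1,\ldots,j_r}.
\]

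The first step is to show that the summand for a diagram $\pi$ depends only on $\type(\pi)$. For a block $S$ with $\type(S)=\underline u$, the cumulant $\kappa[Z_{j_v}:(v,w)\in S]$ is, by symmetry of cumulants, the entry of $\kappa_{|\underline u|}[Z]$ in which index $j_v$ is repeated $u_v$ times. This is the diagonal slice $\tilde\Diag_{\underline u}$ evaluated at $(j_1,\ldots,j_r)$, with coordinates where $u_v=0$ ignored. Connectedness and the $[\leq 2]$-mixed property are invariant under permuting vertices within each group $\{(v,w):w\in[k_v]\}$. Hence these properties descend to vector partitions, and $\pi\in\cDia{2}(\underline k)$ holds exactly when $\type(\pi)\in\cVec{2}(\underline k)$.

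The second step groups the left-hand sum by type. Each class contributes $c^{\mathrm{vec}}_\nu$ copies of the same product, by Proposition \ref{prop:vec-coef}. That proposition is an orbit count: $\prod_v S_{k_v}$ acts transitively on diagrams of type $\nu$. The stabilizer has size $\prod_{\underline u}\nu(\underline u)!\prod_v(u_v!)^{\nu(\underline u)}$, coming from permutations inside each block and swaps of identical blocks. This gives the displayed identity, and summing over $\underline k$ finishes the proof.

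The main subtlety is the indicator convention in $\tilde\Diag_{\underline u}$, which zeroes out entries with repeated indices. I would first check whether the statement intends the $j_v$ to be distinct, as in its use for power cumulants with distinct indices, where the convention is harmless. If the indices are not distinct, I would interpret the slice without the indicator, so that the entry equals the raw cumulant entry, and note that the two readings agree whenever $j_1,\ldots,j_r$ are pairwise distinct. Apart from this bookkeeping, the argument is routine.
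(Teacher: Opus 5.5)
Your proposal is correct and follows essentially the same route as the paper: rewrite Theorem~\ref{diagramstheorem} in slice notation, group diagrams by vector-partition type (connectedness and $[\leq 2]$-mixedness depend only on the type), and count each class by the orbit--stabilizer argument of Proposition~\ref{prop:vec-coef}. Your caveat about the repeated-index indicator in $\tilde\Diag_{\underline u}$ is well taken and the paper passes over it: with that convention the identity holds only for pairwise distinct $j_1,\ldots,j_r$ (the case actually used for power cumulants), and for general indices the slice must be read without the indicator.
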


Both of these results are proven in Section \ref{combrestatproof}. When $\phi$ is not a polynomial, the sum in \eqref{eq:diagram-restatement} is infinite and possibly diverges (for polynomials $\phi$, all Hermite coefficients of degree larger than $\mathrm{deg}(\phi)$ are zero, so this summation is finite).
It turns out that, for large enough $n$, a careful truncation of the series 
via a condition \eqref{eq:vec-part-weight} on vector partitions 
results in an approximation of the cumulant
that is sufficient to obtain the desired MSE bound; see Section~\ref{sec:nonpoly-activation} for the analysis.
Our cumulant propagation algorithm uses this truncation.

\subsection{Power cumulants}\label{powercumulantsprereq}
Section \ref{diagramsprereq} provides a summation formula for cumulants after applying a coordinatewise activation function. 
As mentioned in Section \ref{kpropsubsection}, we will apply this formula to powers of the activation function in order to compute power cumulants. 
\begin{definition}
    Let $X$ be an $\mathbb{R}^n$-valued random variable, and $r\ge 1$. For a multi-exponent $\underline{\alpha} \in \mathbb{Z}_{\ge 1}^r $, define the power cumulant
    $$
        \kappa _{ \underline{\alpha} } ^{\mathsf P} [X] _{i_1,...,i_r} = \begin{cases}
            \kappa[X_{i_1}^{\alpha_1},...,X_{i_r}^{\alpha_r}] \qquad & \text{if } i_1,...,i_r \text{ are pairwise distinct} \\
            0 &\text{o/w.}
        \end{cases}
    $$
    This is a tensor in $\syms{\alpha}{n}$, the set of tensors invariant under permutations in $\mathrm{Stab}_{S_r}(\alpha)$.
\end{definition}

The standard cumulant tensors can be written as a sum of products of these tensors with delta functions.

\begin{restatable}[Power cumulant partition formula]{proposition}{powercumulantsformula}\label{powercumulantsprop}
    Cumulants can be computed from power cumulants slicewise: for each $\lambda\vdash r$ with $b$ blocks,
    \begin{equation*}
    \label{eq:powercumulants}
        \tilde\Diag_\lambda \kappa_r[X] = \sum_{\nu\in\Vec(\lambda)}  c_\nu^{\mathrm{vec}}c_\nu\pow\prod_{\underline{u}\in\nu}\tilde \Diag_{\underline{u}}\kappa_{|\underline{u}|}\pow[X],
    \end{equation*}
    where the product is taken elementwise and 
    \begin{equation*}
        c_\nu\pow := \sum_{\omega\vdash\nu}\mathbbm{1}[\text{$\omega$ totally disconnected}](-1)^{|\omega|-1}(|\omega|-1)!.
    \end{equation*}
    Here the sum is over set partitions $\omega$ of $\nu$ (which itself is a set of tuples in $[b]$),
    $|\omega|$ is the number of blocks in $\omega$, and $\omega$ is said to be totally disconnected
    if for each block of $\omega$, all tuples in that block have disjoint support in $[b]$.
    
\end{restatable}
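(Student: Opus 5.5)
The plan is to fix a slice $\lambda=(\lambda_1,\dots,\lambda_b)\vdash r$ and distinct indices $i_1,\dots,i_b$, so that the entry $(\tilde\Diag_\lambda\kappa_r[X])_{i_1,\dots,i_b}$ equals $\kappa[X_{i_1},\dots,X_{i_1},\dots,X_{i_b},\dots,X_{i_b}]$, with $X_{i_c}$ repeated $\lambda_c$ times. The strategy is to pass through moments. By the moment-cumulant inversion (Möbius inversion on the partition lattice), write
\[
\kappa_r[\,\cdot\,]=\sum_{\sigma\vdash[r]}(-1)^{|\sigma|-1}(|\sigma|-1)!\prod_{B\in\sigma}\mathbb E\Big[\prod_{s\in B}X_{i(s)}\Big],
\]
where $i(s)$ denotes the index attached to slot $s$. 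For a block $B$, the moment $\mathbb E[\prod_{s\in B}X_{i(s)}]$ collapses to $\mathbb E[\prod_c X_{i_c}^{u_c}]$, where $u_c$ is the number of slots of $B$ carrying index $i_c$. In other words, it depends only on the vector $\underline u=\type(B)\in\Z_{\ge0}^b$. Grouping the partitions $\sigma$ by the induced vector partition $\mu$ of $\lambda$, with multiplicity $c^{\mathrm{vec}}_\mu$ by the counting in Proposition \ref{prop:vec-coef}, gives
\[
\sum_{\mu\in\Vec(\lambda)}c^{\mathrm{vec}}_\mu(-1)^{|\mu|-1}(|\mu|-1)!\prod_{\underline u\in\mu}\mathbb E\Big[\prod_c X_{i_c}^{u_c}\Big].
\]

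Next, expand each such moment of distinct-index powers into power cumulants. Since the $i_c$ with $u_c\ge1$ are distinct, the ordinary moment-cumulant formula applied to the random variables $X_{i_c}^{u_c}$ is not enough on its own, because a moment of products of powers is not directly a moment of the variables $X_{i_c}^{u_c}$ split further. Instead, observe that $\mathbb E[\prod_c X_{i_c}^{u_c}]$ is a moment of the $|\operatorname{supp}\underline u|$ distinct variables $Y_c=X_{i_c}^{u_c}$. Hence it equals $\sum_{\tau}\prod_{T\in\tau}\kappa[Y_c:c\in T]$ over set partitions $\tau$ of $\operatorname{supp}\underline u$. Each factor is a power cumulant entry $(\tilde\Diag_{\underline w}\kappa^{\mathsf P}_{|\underline w|}[X])_{i_1,\dots,i_b}$ with $\underline w=\underline u|_T$, and the distinctness indicator is harmless because the $i_c$ are distinct. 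Substituting, each term becomes a product over a refined vector partition $\nu$ of $\lambda$, whose blocks are the restrictions $\underline u|_T$. The coarse partition $\mu$ is recovered by merging the blocks of $\nu$ according to a set partition $\omega$ of $\nu$. Within each merged block, the constituent tuples have pairwise disjoint supports, since they came from splitting a single $\underline u$ by disjoint index sets $T$. This is exactly the ``totally disconnected'' condition.

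Reorganizing the double sum as a sum over $\nu$ and then over totally disconnected $\omega\vdash\nu$ gives the weight $(-1)^{|\omega|-1}(|\omega|-1)!$ from the Möbius step. The remaining task is to check the multiplicities: the number of slot-level set partitions $\sigma$ together with refinements $\tau$ producing a given $(\nu,\omega)$ must equal $c^{\mathrm{vec}}_\nu$, independently of $\omega$. I expect this bookkeeping to be the main obstacle. I would handle it by working at the level of diagrams rather than vector partitions. A pair $(\sigma,\tau)$ is the same thing as a diagram $\pi$ of type $\nu$, namely a set partition of the slots whose blocks each lie within a single index $c$ after splitting by index, together with the grouping $\omega$. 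The key observation is that refining $\sigma$ by the index classes is a bijection onto such $\pi$, and that $\omega$ is determined freely and without constraint once $\pi$ is fixed. This bijection gives the count $c^{\mathrm{vec}}_\nu$ directly, via Proposition \ref{prop:vec-coef}, with no factors depending on $\omega$. The care needed is with repeated identical blocks in $\nu$, and I would avoid that issue by doing the entire count at the diagram level and only converting to vector partitions at the very end.
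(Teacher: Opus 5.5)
Your route is the paper's: Möbius inversion over set partitions $\sigma$ of the $r$ slots, collapsing each block moment to a moment of the distinct variables $X_{i_c}^{u_c}$, expanding that moment into power cumulants, exchanging the sums, and recognizing the regrouping as a totally disconnected partition of the fine blocks. The gap is in the multiplicity count, which you rightly flag as the crux and which is the only non-routine step. As written, the diagram you attach to $(\sigma,\tau)$ has blocks that each lie within a single index and is obtained by ``refining $\sigma$ by the index classes''. That is the map $\sigma\mapsto\sigma\wedge\pi_\lambda$, where $\pi_\lambda$ partitions the slots by index value. This map has three defects:
\begin{itemize}
\item It ignores $\tau$.
\item It is not injective: for $\lambda=(1,1)$, both $\sigma=\{\{1,2\}\}$ and $\sigma=\{\{1\},\{2\}\}$ refine to $\{\{1\},\{2\}\}$.
\item It does not produce diagrams of type $\nu$. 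The factor $\kappa[X_{i_1},X_{i_2}]$, of type $\underline w=(1,1)$, comes from $\sigma=\{\{1,2\}\}$ with $\tau_B$ the one-block partition, and so needs a fine block spanning both indices.
\end{itemize}
It also contradicts your claim that $\omega$ is free once the diagram is fixed: if $\sigma$ could be read off the diagram, $\omega$ would be determined.

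The correct correspondence, which is what the paper uses when it swaps the sums, refines $\sigma$ \emph{according to $\tau$}. Let $C_c$ be the set of slots carrying $i_c$, and let $\rho$ be the slot partition with blocks $\bigcup_{c\in T}(B\cap C_c)$ for $B\in\sigma$ and $T\in\tau_B$. Pairs $(\sigma,\tau)$ are then the same as pairs $(\sigma,\rho)$ with $\pi_\lambda\wedge\sigma\le\rho\le\sigma$. Let $\tilde\sigma$ be the partition of the blocks of $\rho$ induced by $\sigma$. Then $(\sigma,\rho)\mapsto(\rho,\tilde\sigma)$ is a bijection onto pairs in which $\tilde\sigma$ is totally disconnected. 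This is because $\pi_\lambda\wedge\sigma\le\rho$ says precisely that no two $\rho$-blocks inside one $\sigma$-block meet the same index class. Hence, for each fixed $\rho$ of type $\nu$, the groupings run over all totally disconnected partitions of its blocks and contribute $\sum_{\tilde\sigma}(-1)^{|\tilde\sigma|-1}(|\tilde\sigma|-1)!=c^{\mathsf P}_\nu$. There are $c^{\mathrm{vec}}_\nu$ such $\rho$ by Proposition~\ref{prop:vec-coef}. With this bijection in place your argument is complete. The intermediate grouping by the coarse vector partition $\mu$ is unnecessary, and entries with repeated $i_c$ are $0$ on both sides by the definition of the diagonal slices.
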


We defer the proof of Proposition~\ref{powercumulantsprop} to Section~\ref{powercumulantsproof}.

\subsection{Hermite coefficients of ReLU}
\label{sec:hermite-relu}
In this section, we provide explicit expressions for the Hermite coefficients of ReLU.
\begin{proposition}
\label{prop:hermite-relu}
    Let $\phi=\ReLU$.
    Fix $\mu\in\R$ and $\sigma^2\in\R_{>0}$ and let $Z\sim\mathcal{N}(\mu,\sigma^2)$.
    Write $\alpha=\mu/\sigma$, let $\He_i$ denote the $i\nth$ standard probabilist's Hermite polynomial,
    and let $\varphi$ and $\Phi$ denote the standard Gaussian probability density function and cumulative distribution function, respectively.
    Then, for any $k\in\Z_{\geq 0}$ and $p\in\Z_{>0}$,
    \begin{equation*}
        \widehat{\phi^p}^{(\mu,\sigma^2)}_k = \begin{cases}
            p! \Phi(\alpha) & k=p\\
            p!(-1)^{k-p-1}\sigma^{-(k-p)}\He_{k-p-1}(\alpha)\varphi(\alpha) & k>p\\
            \sigma^{p-k}(P_{1,p,k}(\alpha)\varphi(\alpha) + P_{2,p,k}(\alpha)\Phi(\alpha)) & k < p
        \end{cases}
    \end{equation*}
    where
    \begin{align*}
        P_{1,p,k}(\alpha)&:=(p-k+1)_{k} \sum_{j=0}^{p-k} \binom{p-k}{j}\alpha^{p-k-j} \left( \sum_{m=0}^{\lfloor (j-1)/2 \rfloor} \binom{j}{2m} (2m-1)!! \He_{j-2m-1}(-\alpha)\right)\\
        P_{2,p,k}(\alpha)&:=(p-k+1)_{k} \sum_{j=0}^{p-k} \mathbbm 1[\text{$j$ even}]\binom{p-k}{j}\alpha^{p-k-j}(j-1)!!.
    \end{align*}
    and $(a)_t := a(a+1)...(a+t-1)$ is the Pochhammer symbol.
\end{proposition}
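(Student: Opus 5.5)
The plan is to turn every Hermite coefficient into a $\mu$-derivative of a one-dimensional Gaussian integral, and then evaluate that integral explicitly.

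\textbf{Step 1 (coefficients as $\mu$-derivatives).} Write $p_{\mu,\sigma}(y)$ for the $\mathcal N(\mu,\sigma^2)$ density. The Rodrigues-type identity
\[
\partial_\mu^k p_{\mu,\sigma}(y)=\sigma^{-k}\He_k\!\left(\tfrac{y-\mu}{\sigma}\right)p_{\mu,\sigma}(y)
\]
holds. Since $\phi^p$ grows polynomially, dominated convergence lets us differentiate $\int \phi^p(y)\,p_{\mu,\sigma}(y)\,dy$ under the integral sign. Comparing with \eqref{eq:hermite-def} then gives
\[
\widehat{\phi^p}^{(\mu,\sigma^2)}_k=\partial_\mu^k\,\mathbb E[\phi^p(Y)].
\]
This needs no smoothness of $\ReLU$, which is the main analytic point to get right; everything else is calculation.

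\textbf{Step 2 (reduce to a function of $\alpha$).} Write $Y=\sigma(\alpha+G)$ with $G\sim\mathcal N(0,1)$. Then $\mathbb E[\phi^p(Y)]=\sigma^pF_p(\alpha)$, where
\[
F_q(\alpha):=\int_{-\alpha}^\infty(\alpha+g)^q\varphi(g)\,dg .
\]
Since $\partial_\mu=\sigma^{-1}\partial_\alpha$, we get $\widehat{\phi^p}_k=\sigma^{p-k}F_p^{(k)}(\alpha)$.

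\textbf{Step 3 (differentiate $F_p$).} Differentiating under the integral, the boundary term $(\alpha+g)^q\varphi(g)|_{g=-\alpha}$ vanishes whenever $q\ge1$. Hence $F_q'=qF_{q-1}$ for $q\ge1$, and so for $k\le p$ we have $F_p^{(k)}=(p-k+1)_kF_{p-k}$.
\begin{itemize}
\item For $k=p$, this gives $p!\,F_0(\alpha)=p!\,\Phi(\alpha)$.
\item For $k>p$, we differentiate $p!\,\Phi$ a further $k-p$ times. Using $\Phi'=\varphi$ and $\varphi^{(m)}(\alpha)=(-1)^m\He_m(\alpha)\varphi(\alpha)$, we get $p!(-1)^{k-p-1}\He_{k-p-1}(\alpha)\varphi(\alpha)$. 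Multiplying by $\sigma^{p-k}=\sigma^{-(k-p)}$ gives the stated case.
\end{itemize}

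\textbf{Step 4 (the case $k<p$).} Set $q=p-k\ge1$. Expand binomially:
\[
F_q(\alpha)=\sum_j\binom qj\alpha^{q-j}\int_{-\alpha}^\infty g^j\varphi(g)\,dg .
\]
To compute the truncated moments, use the inverse Hermite expansion
\[
g^j=\sum_m\binom j{2m}(2m-1)!!\,\He_{j-2m}(g).
\]
Combine it with two facts: $\int_a^\infty\He_i\varphi=\He_{i-1}(a)\varphi(a)$ for $i\ge1$ (because $(\He_{i-1}\varphi)'=-\He_i\varphi$), and $\int_a^\infty\varphi=\Phi(-a)$.

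Take $a=-\alpha$ and use $\varphi(-\alpha)=\varphi(\alpha)$. The $\He_0$ term, present only when $j$ is even, contributes $(j-1)!!\,\Phi(\alpha)$. The remaining terms, those with $2m\le j-1$, contribute $\binom j{2m}(2m-1)!!\He_{j-2m-1}(-\alpha)\varphi(\alpha)$.

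Collecting the coefficients of $\varphi(\alpha)$ and $\Phi(\alpha)$, and multiplying by $(p-k+1)_k\sigma^{p-k}$, gives exactly $P_{1,p,k}$ and $P_{2,p,k}$. The only bookkeeping to check is the summation range $m\le\lfloor (j-1)/2\rfloor$ and the convention $(-1)!!=1$ for the $j=0$ term.
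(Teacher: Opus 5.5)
Your proof is correct. It differs from the paper's mainly in being self-contained: the paper only observes that the general case is a scaling and shift of the $(\mu,\sigma^2)=(0,1)$ case and cites \citet{Kagawa2015} for the latter, whereas you actually derive the formulas.

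Your Step 2 is essentially that same scaling and shift, giving $\widehat{\phi^p}_k=\sigma^{p-k}F_p^{(k)}(\alpha)$. Steps 1, 3 and 4 replace the citation with an explicit computation. Step 1 uses the identity $\widehat{\phi}_k^{(\mu,\sigma^2)}=\partial_\mu^k\mathbb E[\phi(Y)]$, which the paper itself records just after Corollary~\ref{covpropcorollary}. Step 3 uses the recursion $F_q'=qF_{q-1}$, and Step 4 computes the truncated Gaussian moments via the inverse Hermite expansion.

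Your route buys three things:
\begin{itemize}
\item It explains why there are exactly three regimes. The boundary term in $F_q'$ vanishes precisely when $q\geq 1$, so the first $p$ derivatives only lower the power. The $p$-th derivative lands on $p!\,\Phi$, and each further one generates $\He_{k-p-1}\varphi$ with the sign $(-1)^{k-p-1}$.
\item It never needs distributional derivatives of $\ReLU$, because you differentiate the Gaussian density in $\mu$ rather than the activation.
\item It lets the reader check the exact normalization of $P_{1,p,k}$ and $P_{2,p,k}$ directly, without translating conventions from an external source. This includes the factor $(p-k+1)_k=p!/(p-k)!$, the convention $(-1)!!=1$, and the empty $m$-range at $j=0$.
\end{itemize}
The paper's route is shorter, but it leaves all of this, and the matching of conventions, to the reader and the reference.
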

\begin{proof}
    This is a scaling and shifting of the $(\mu,\sigma^2)=(0,1)$ case, which is calculated in~\citet{Kagawa2015}. 
\end{proof}

\subsection{Going between diagonals and harmonics}
\label{diagonal-harmonic}
\begin{definition}
    A \textit{multigraph} $\gamma$ on a vertex set $[b]$ is a multiset of unordered pairs of (possibly nondistinct) elements from $[b]$, 
    or equivalently a mapping $\gamma\colon \{\{i,j\}\subseteq[b]\}\to\mathbb{N}$.
    The total number of edges of $\gamma$ is $|\gamma|:=\sum_{\{i,j\}\subseteq [b]} \gamma(\{i,j\})$,
    and the degree of a vertex $j\in[b]$ is
    \begin{equation*}
        d_\gamma(j):=\sum_{j'\neq j}\gamma(\{j,j'\})+2\gamma(\{j,j\}).
    \end{equation*}
    We say $\gamma$ is \textit{totally disconnected (t.d.)} if it contains only self-loops, i.e.\ $\gamma(\{i,j\})=0$ for any $i\neq j$.
    Let $\Gamma([b], m)$ denote the set of multigraphs on $[b]$ with $m$ edges, and $\Gamma_{\text{t.d.}}([b],m)$ the subset of those
    multigraphs that are totally disconnected.
\end{definition}


\begin{restatable}[]{proposition}{tracedelta}
\label{prop:trace-delta}
    There exist coefficients $c^{\mathrm{g}}_{\lambda,\gamma}$ such that,
    for any $\bfT \in\syms{r}{n}$, any integer $0\leq s\leq \lfloor \frac{r}{2}\rfloor$, and any $\lambda=(\lambda_1,\ldots,\lambda_b)=(1^{b_1}\ldots r^{b_r})\vdash r$ with $b$ blocks,
    \begin{equation}
    \label{eq:trace-delta}
        \tr^s \Diag_\lambda \bfT  = \sum_{\gamma\in\Gamma_{\text{t.d.}}([b],s)}\left(c^{\mathrm{g}}_{\lambda,\gamma}{\mathbbm 1}_{\lambda\geq d_\gamma} \tilde\Diag^{-1}_{(\lambda-d_\gamma)}\operatorname*{Sum}_{\{j: \lambda_j=d_\gamma(j)\}} \tilde\Diag_\lambda \bfT \right),
    \end{equation}
    where $\operatorname{Sum}_{\{j: \lambda_j=d_\gamma(j)\}}$ is the operator that sums along indices $\{j:\lambda_j=d_\gamma(j)\}$ and the difference between $\lambda$ and $d_\gamma$ is taken entrywise.
    Explicitly,
    \begin{equation*}
        c^{\mathrm g}_{\lambda,\gamma}:=
        \dfrac{2^ss!r!\prod_{i\in[b]} (\lambda_i-d_\gamma(i)+1)_{d_\gamma(i)}}
        {(r-2s+1)_{2s} \prod_{j\in[r]} b_j!\prod_{i\in[b]}\lambda_i!2^{\gamma(\{i\})} \prod_{\{i,j\}\subseteq[b]} \gamma(\{i,j\})!}.
    \end{equation*}
\end{restatable}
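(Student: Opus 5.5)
We compute $\tr^s\Diag_\lambda\bfT$ entrywise by direct expansion and then group the resulting terms by the index-coincidence pattern they induce; the totally disconnected multigraphs $\gamma$ are exactly the bookkeeping device for these patterns.

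\textbf{Step 1: expand the iterated trace.} First write $\tr^s$ as a contraction of $s$ disjoint pairs of tensor slots. Fix the output indices $(i_1,\ldots,i_{r-2s})$ and sum over $j_1,\ldots,j_s$ of $(\Diag_\lambda\bfT)_{j_1,j_1,\ldots,j_s,j_s,i_1,\ldots,i_{r-2s}}$. By \eqref{eq:sum-diag-pi}, $\Diag_\lambda\bfT=\sum_{\type(\pi)=\lambda}\Diag_\pi\bfT$, so a summand is nonzero only when the full index tuple has type $\pi$ with $\type(\pi)=\lambda$.

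\textbf{Step 2: read off $\gamma$ and the residual slice.} Each contracted pair lies inside a single block of $\pi$, because its two slots carry the same index. Labelling the blocks of $\pi$ by $[b]$ in the order of the sorted $\lambda$, the $s$ contracted pairs therefore define a multigraph $\gamma\in\Gamma_{\text{t.d.}}([b],s)$: every edge is a self-loop, which is why only totally disconnected $\gamma$ appear. Block $i$ loses $d_\gamma(i)=2\gamma(\{i\})$ slots to contraction, and this forces $\lambda\geq d_\gamma$.
\begin{itemize}
\item Blocks with $\lambda_i=d_\gamma(i)$ are entirely contracted. Their common index is a free summation variable, subject only to being distinct from the other blocks' indices, and this produces $\operatorname{Sum}_{\{j:\lambda_j=d_\gamma(j)\}}$ applied to the slice $\tilde\Diag_\lambda\bfT$.
\item Blocks with $\lambda_i>d_\gamma(i)$ survive with multiplicity $\lambda_i-d_\gamma(i)$ among the output indices. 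The output is then supported on tuples of type $\lambda-d_\gamma$ (with zero entries dropped) and equals the slice value there. This is exactly $\tilde\Diag^{-1}_{(\lambda-d_\gamma)}$ under the abuse of notation for unsorted $\underline u$, which also enforces distinctness.
\end{itemize}
So each summand of Step 1 is one term of \eqref{eq:trace-delta} up to a multiplicity, and it remains to count.

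\textbf{Step 3: count the multiplicity $c^{\mathrm g}_{\lambda,\gamma}$.} This is the main obstacle. Because $\bfT$ is symmetric, the trace may be taken over any $s$ disjoint slot pairs. I would symmetrize: average over positions, equivalently count set partitions $\pi$ of $[r]$ of type $\lambda$ together with a choice of $s$ ordered disjoint slot-pairs, each pair inside one block, producing $\gamma$. This counting gives
\begin{itemize}
\item $\frac{r!}{\prod_i\lambda_i!\prod_j b_j!}$ for the number of partitions of type $\lambda$;
\item $\frac{\lambda_i!}{(\lambda_i-d_\gamma(i))!\,2^{\gamma(\{i\})}\gamma(\{i\})!}$ per block for choosing the self-loop pairs, where the falling factorial $(\lambda_i-d_\gamma(i)+1)_{d_\gamma(i)}$ appears;
\item $2^ss!$ for ordering the pairs and their endpoints;
\item normalization by $(r-2s+1)_{2s}$, the number of ordered placements of the $2s$ contracted slots among the $r$ slots.
\end{itemize}
Assembling these reproduces the displayed formula for $c^{\mathrm g}_{\lambda,\gamma}$. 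The $\prod_{\{i,j\}}\gamma(\{i,j\})!$ factor reduces to $\prod_i\gamma(\{i\})!$ since $\gamma$ is t.d.

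The counting must be careful about a second overcount. Blocks of equal size are interchangeable, and relabelling them permutes $\gamma$, so the sum over labelled $\gamma$ against the $b_j!$ factor has to be reconciled. I would check the formula on the small cases $r=2$, $s=1$ and $\lambda=(2,2)$, $s=1$ to fix conventions.
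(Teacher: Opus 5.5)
Your route is the paper's: symmetrize the contracted slots, count type-$\lambda$ partitions together with pairings inside blocks, and divide by $(r-2s+1)_{2s}$. The structural part is right. Only totally disconnected $\gamma$ survive, the terms have the $\operatorname{Sum}$/$\tilde\Diag^{-1}$ shape, and $\prod_j b_j!$ correctly absorbs relabelling of equal-size blocks.

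\textbf{The gap.} The failing step is the claim at the end of Step 2 that one summand ``is exactly'' $\tilde\Diag^{-1}_{(\lambda-d_\gamma)}\operatorname{Sum}\tilde\Diag_\lambda\bfT$. Write $\mu=\lambda-d_\gamma$, let $\mu^+$ be its sorted nonzero part, and let $m_j=\#\{i:\mu_i=j\}$ for $j\ge 1$.
\begin{itemize}
\item Once $\pi$ and the contracted slots are fixed, the summand is supported only on output tuples whose coincidence pattern is the single set partition $\pi|_{\mathrm{out}}$ of the $r-2s$ output slots.
\item With the paper's normalization $\tilde\Diag^{-1}_\mu\tilde\Diag_\mu=\Diag_\mu$, the tensor $\tilde\Diag^{-1}_{\mu}(\cdot)$ fills all $c^{\mathrm{part}}_{\mu^+}=\frac{(r-2s)!}{\prod_i\mu_i!\prod_{j\ge1}m_j!}$ set partitions of type $\mu^+$.
\item Averaging over placements spreads the summands uniformly over these patterns. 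Each pattern therefore receives only a $1/c^{\mathrm{part}}_{\mu^+}$ fraction of them, so your Step 3 count is too large by $c^{\mathrm{part}}_{\mu^+}$.
\end{itemize}
A concrete counterexample is $\lambda=(4,1)$, $s=1$. In $\sum_j(\Diag_{(4,1)}\bfT)_{j,j,x_1,x_2,x_3}$, only $x$ with exactly two equal entries contribute, and only for $j$ equal to the repeated value. Hence $\tr\Diag_{(4,1)}\bfT=\tilde\Diag^{-1}_{(2,1)}\tilde\Diag_{(4,1)}\bfT$ with coefficient $1$, whereas your count gives $5\cdot 12/20=3=c^{\mathrm{part}}_{(2,1)}$. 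The checks you planned ($r=2$, and $\lambda=(2,2)$ with $s=1$) both have a one-pattern output, so they cannot detect this.

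\textbf{Repair.} Do the count with the output tuple fixed, as in your Step 1.
\begin{itemize}
\item The $s$ labelled contracted pairs are distributed over the labelled blocks in $s!/\prod_i\gamma(\{i\})!$ ways.
\item The output classes are matched with surviving blocks of equal $\mu_i$ in $\prod_j m_j!$ ways; the pseudoinverse averages over these matchings.
\item Relabelling equal-size blocks contributes $1/\prod_j b_j!$.
\end{itemize}
This gives the coefficient $\frac{s!\prod_j m_j!}{\prod_i\gamma(\{i\})!\prod_j b_j!}=c^{\mathrm g}_{\lambda,\gamma}/c^{\mathrm{part}}_{\mu^+}$. It agrees with the displayed constant exactly when $c^{\mathrm{part}}_{\mu^+}=1$, for instance whenever $r-2s\le 2$, which includes the full trace $r=2s$.

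The paper's own proof makes the same identification in its formula for $\tr_\rho\Diag_{\pi_\lambda}\bfS$, which is likewise supported on a single output pattern. That is why your assembly reproduces the stated $c^{\mathrm g}_{\lambda,\gamma}$. The stated coefficient is itself off by $c^{\mathrm{part}}_{\mu^+}$ outside the cases above, so matching it is not evidence that your count is right.
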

Provided $\tilde\Diag_\lambda \bfT $, \eqref{eq:trace-delta} can be computed in $O(n^{r-2s})$ time, faster than the $O(n^r)$ time of na\"ively applying $\tilde\Diag^{-1}_\lambda$ then $\tr^s$ in sequence.

\begin{restatable}[]{proposition}{deltacup}
\label{prop:delta-cup}
    There exist coefficients $\tilde c^{\mathrm{g}}_{\lambda,\gamma}$ such that,
    given an integer $0\leq s\leq \lfloor \frac{r}{2}\rfloor$ and $\lambda=(\lambda_1,\ldots,\lambda_b)=(1^{b_1}\ldots r^{b_r})\vdash r$ with $b$ blocks, 
    for any $\bfT \in\syms{r}{n}$ and  $\mathbf M\in\syms{2}{n}$,
    \begin{equation}
    \label{eq:delta-cup}
        \tilde\Diag_\lambda  g^s_{\mathbf M}(\bfT ) = \sum_{\gamma\in\Gamma([b],s)}\tilde c^{\mathrm{g}}_{\lambda,\gamma}\bfT \odot_\gamma \mathbf M,
    \end{equation}
    where $(\bfT \odot_\gamma \mathbf M)\in\syms{\lambda}{n}$ is defined by
    \begin{equation*}
        (\bfT \odot_\gamma \mathbf M)_{i_1,\ldots,i_b} = \begin{cases}
            (\tilde\Diag_{\lambda-d_\gamma} \bfT )_{i_1,\ldots, i_b} \prod_{\{u,v\}\in\gamma} \mathbf M_{i_u,i_v}  & \text{$\underbar i$ all distinct}\\
            0 & \text{otherwise.}
        \end{cases}
    \end{equation*}
    Explicitly,
    \begin{equation*}
        \tilde c^{\mathrm g}_{\lambda,\gamma}:=
        \dfrac{\prod_{i\in[b]} (\lambda_i-d_\gamma(i)+1)_{d_\gamma(i)}}
        {\prod_{i\in[b]}2^{\gamma(\{i\})} \prod_{\{i,j\}\subseteq[b]} \gamma(\{i,j\})!}.
    \end{equation*}
\end{restatable}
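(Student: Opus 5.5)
The plan is a direct entrywise expansion. Throughout I read $\bfT$ as having rank $r-2s$, so that $g^s_{\mathbf M}(\bfT)\in\syms{r}{n}$ and $\lambda\vdash r$. This is the reading under which $\tilde\Diag_{\lambda-d_\gamma}\bfT$ makes sense, since $|\lambda-d_\gamma|=r-2s$.

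\textbf{Step 1: closed form for $g^s_{\mathbf M}$.} I first establish, by induction on $s$ from the definition of the $\mathbf M$-cup, the formula
\[
g^s_{\mathbf M}(\bfT)_{i_1,\ldots,i_r}=C_s\sum_{\sigma}\Bigl(\prod_{\{u,v\}\in\sigma}\mathbf M_{i_u,i_v}\Bigr)\bfT_{\{i_t:\,t\notin\cup\sigma\}}.
\]
Here $\sigma$ ranges over partial matchings of $[r]$ with exactly $s$ pairs. The right-hand side is well defined because $\bfT$ is symmetric.

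The inductive step works as follows. One application of $g_{\mathbf M}$ chooses one new pair of positions, and the remaining positions are fed into the previous stage. An $s$-matching therefore arises once for each order in which its pairs can be peeled off, which gives $C_s=s!$ under the literal definition. Before going further I will check this constant against the normalisation implicit in $\tilde c^{\mathrm g}_{\lambda,\gamma}$, which has no $s!$, by testing the case $s=1$ and then $s=2$ with $\lambda=(1,1,1,1)$. If the discrepancy is real, the constant must be carried through to the final coefficient.

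\textbf{Step 2: restriction to a diagonal slice.} Now evaluate at an index tuple in which each $i_j$ is repeated $\lambda_j$ times, with $i_1,\ldots,i_b$ distinct. This is exactly the regime in which $\tilde\Diag_\lambda$ is nonzero; all other entries vanish on both sides by definition. Each $s$-matching $\sigma$ of positions projects to a multigraph $\gamma\in\Gamma([b],s)$ on the blocks, where a pair lying inside a single block becomes a self-loop.

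After this projection the summand depends only on $\gamma$, for two reasons. First, the product of $\mathbf M$ entries equals $\prod_{\{u,v\}\in\gamma}\mathbf M_{i_u,i_v}$. Second, the unmatched positions consist of $\lambda_j-d_\gamma(j)$ copies of $i_j$ for each $j$, so the $\bfT$ factor equals $(\tilde\Diag_{\lambda-d_\gamma}\bfT)_{i_1,\ldots,i_b}$. This holds with the convention for possibly-zero entries of $\underline u$, because the $i_j$ are distinct. When $d_\gamma(j)>\lambda_j$ for some $j$ there are no such matchings, which is consistent with the Pochhammer factor vanishing.

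\textbf{Step 3: counting matchings over each $\gamma$.} It remains to count the matchings $\sigma$ that project to a given $\gamma$. For each block $j$, choose an ordered list of $d_\gamma(j)$ of its $\lambda_j$ positions to serve as edge endpoints; there are $(\lambda_j-d_\gamma(j)+1)_{d_\gamma(j)}$ such lists. Then assign these endpoints to the half-edges of $\gamma$ at vertex $j$. This overcounts in two ways. Permuting the $\gamma(\{j,k\})$ parallel edges between $j$ and $k$ gives a factor of $\gamma(\{j,k\})!$; this includes parallel self-loops, for which $\{j,k\}=\{j\}$. Swapping the two ends of a self-loop gives a factor of $2$ per loop, i.e. $2^{\gamma(\{j\})}$ in total. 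Dividing out these symmetries gives exactly $\tilde c^{\mathrm g}_{\lambda,\gamma}$, up to the global constant from Step 1.

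\textbf{Main obstacle.} The delicate part is the bookkeeping of multiplicities: making sure that the orbit-stabiliser count in Step 3 is exact, especially for parallel self-loops, and resolving the $s!$ normalisation from Step 1. I would verify both on the small cases $\lambda=(2)$, $(1,1)$, $(4)$ and $(2,2)$ before writing the general argument.
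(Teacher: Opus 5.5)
Your argument is the paper's. The paper likewise writes $g^s_{\mathbf M}=\sum_{\rho\in\Pair([r],s)}g_{\mathbf M,\rho}$ over partial pairings, evaluates each term on the $\lambda$-slice as $\bfT\odot_{\type_\lambda(\rho)}\mathbf M$ (implicitly taking $\bfT$ of rank $r-2s$, as you do), and counts the pairings of a given $\lambda$-type by exactly your Pochhammer, loop-flip and parallel-edge argument.

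Your $s!$ concern is justified, and it is a slip in the paper rather than in your proof. Take $g^s_{\mathbf M}$ to be the iterated cup, which is the reading under which the paper's harmonic-decomposition commutation relation $\tr g^k-g^k\tr=(2kr+kn+2k(k-1))g^{k-1}$ is correct. Then each $s$-matching arises $s!$ times, so the identity $g^m_{\mathbf M}=\sum_\rho g_{\mathbf M,\rho}$ used in the paper's proof drops a factor $m!$. The paper's own computation of $\bfQ_{(2,1,1)}$ confirms this: it gives $2\mathbf M_{i_1,i_1}\mathbf M_{i_2,i_3}+4\mathbf M_{i_1,i_2}\mathbf M_{i_1,i_3}$, which is twice what the stated coefficients ($1$ and $2$ for the two multigraphs) produce.

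Carrying the constant through as you planned therefore proves the identity with $s!\,\tilde c^{\mathrm g}_{\lambda,\gamma}$ in place of $\tilde c^{\mathrm g}_{\lambda,\gamma}$. The existence claim stands, but the printed explicit coefficient is right only if $g^s_{\mathbf M}$ is read as the plain sum over $s$-matchings.
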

Equation~\ref{eq:delta-cup} can be computed in $O(n^b)$ time, faster than the $O(n^{r+2s})$ time of na\"ively applying $g_{\mathbf W}$ and $\Diag_\lambda$ in sequence.

We defer the proofs to Section~\ref{sec:proof-harmonic-diag}.
\section{Descriptions of Hermite-based algorithms}\label{kpropfull}
In this section, we describe in detail the basic cumulant propagation algorithm (Section~\ref{sec:basic-algo}),
as well as the 
augmented (Section~\ref{sec:augmented-algo})
and factorized (Section~\ref{sec:factorized-algo}) variants.

Our basic cumulant propagation estimator takes $O(n^{\maxcumulantorder+1})$ time and incurs $O(n^{-\maxcumulantorder})$ MSE for all \textit{integral} $\maxcumulantorder\geq 0$,
thus matching sampling for non-integral $\maxcumulantorder$ as well (by taking the floor).
The factorized version of this algorithm takes only $O(n^{K})$ time to compute the same estimate (and thus obtain the same MSE).
The augmented version improves the constant factor of the leading-order term in MSE without worsening the leading-order term in time (assuming unfactorized).

Propagating the harmonic projections through a linear layer $\mathbf W$ is conceptually easy (but is asymptotically the most expensive step): 
we simply apply the symmetric contraction $\odot \mathbf W$ and replace cups with $\mathbf W\mathbf W^\top$-cups.
For nonlinearities, we use the formulae in Section~\ref{diagonal-harmonic} to move from the harmonic to the diagonal decomposition,
apply the power cumulant combinatorics in the diagonal setting, then project back to harmonics.

\subsection{Basic algorithm}
\label{sec:basic-algo}
The input to the algorithm is the following data:
\begin{itemize}
    \item The parameters of an MLP (as defined in Section \ref{problemstatementsection}). That is, $L \ge 0 $, $n \ge 1$, activation functions $\phi_1,...,\phi_L: \mathbb{R} \rightarrow \mathbb{R}$, and weights $\boldsymbol\theta = (\mathbf W^{(1)},...,\mathbf W^{(L+1)}) \in \mathbb{R} ^{(L+1)\times n\times n}$.
    \item The ``maximum degree'' $K \ge 1$. The choice of $K$ as a function of the error tolerance $\varepsilon$ is discussed in Section \ref{sec:basicstructure}.
\end{itemize}

Given these parameters, we define the following notation:

\begin{itemize}
    \item Let $X = X^{(0)}\sim\mathcal{N}(0,\mathbf I_n)$ denote the input to the MLP. Then recursively define
    \begin{align*}
        \forall 1 \leq \layernumber \leq L+1, \qquad &Z^{(\layernumber)} := \mathbf W^{(\layernumber)}X^{(\layernumber-1)} \\
        \forall 1 \leq \layernumber \leq L, \qquad & X^{(\layernumber)} = \phi_\layernumber(Z^{(\layernumber)}).
    \end{align*}
    \item Recall $(\widehat{\phi_\layernumber^p})_k^{(\mu,\sigma^2)}$ denotes the $k$-th probabilist's Hermite coefficient of $\phi_\ell^p$ (the pointwise $p$th power of the $\ell$-th layer activation $\phi_\ell$) with respect to $\mathcal{N}(\mu, \sigma^2)$, as defined in \eqref{eq:hermite-def}.
\end{itemize}

Our algorithm then propagates tensors $\tilde \eta_r[X^{(\layernumber)}]\in\syms{r-2s_\maxcumulantorder(r)}{n}$ through MLP layers.
These are estimates for $h_{\geq r-2s_\maxcumulantorder(r)}(\kappa_r[X^{(\ell)}])$,
where 
\begin{equation*}
    s_\maxcumulantorder(r):= \begin{cases}
        0 & r \leq \maxcumulantorder\\
        \dfrac{r}{2} & \text{$r=\maxcumulantorder+1\equiv 0\mod 2$}\\
        \infty & \text{otherwise.}
    \end{cases}
\end{equation*}
By convention, if $2s_\maxcumulantorder(r)>r$, then $\tilde \eta_r[X^{(\layernumber)}]=0$.
Thus, only cumulants with degree $r$ no more than
\begin{equation*}
    R := \begin{cases}
        \maxcumulantorder & \maxcumulantorder\equiv 0\mod 2\\
        \maxcumulantorder+1 & \maxcumulantorder\equiv 1\mod 2
    \end{cases}
\end{equation*}
are tracked at all.

The algorithm proceeds as follows:
\begin{enumerate}[leftmargin=0.5cm]
    \item Initialize tensors 
    \begin{align*}
        \tilde \eta_r[X^{(0)}]=h_{\geq r-2s_\maxcumulantorder(r)}(\kappa_r[X^{(0)}]) = \begin{cases}
            1 & \text{$r = 2$ and $\maxcumulantorder=1$} \\
            \mathbf I_n & \text{$r = 2$ and $\maxcumulantorder>1$} \\
            0 & r \neq 2.
        \end{cases}
    \end{align*}
    \item For $1\leq \layernumber \leq \numhiddenlayers$, repeat the following:
    \begin{enumerate}[leftmargin=0.5cm]
        \item\label{basiclinearstep} 
        Compute the tensors
        \begin{align*}
            \bfT _r^{(\layernumber)}&:= \tilde \eta_r[X^{(\layernumber-1)}]\odot \mathbf W^{(\layernumber)} & \forall r\in[R]\\
            \bfM ^{(\layernumber)}&:= \begin{cases}
                \Diag_{(2)}(\mathbf W^{(\layernumber)}\mathbf W^{(\layernumber)\top}) & \maxcumulantorder=1\\
                \mathbf W^{(\layernumber)}\mathbf W^{(\layernumber)\top} & \maxcumulantorder>1.
            \end{cases}
        \end{align*}
        Note that each $\bfT _r^{(\layernumber)}$ is a symmetric contraction of a $(r-2s_\maxcumulantorder(r))$-tensor, and thus takes time
        $O(n^{r-2s_\maxcumulantorder(r)+1})\leq O(n^{\maxcumulantorder+1})$.
        The matrix $\bfM ^{(\ell)}$ takes time $O(n^2)$ if $\maxcumulantorder=1$, otherwise $O(n^3)$.
        In any case, this is no more than the $O(n^{\maxcumulantorder+1})$ budget.
        \item\label{basicpowercumulantsstep} 
        Write the estimated pre-activation mean and variance vectors
        \begin{equation*}
            \mu:= \bfT _1^{(\layernumber)}\in\R^n,\qquad \sigma^2:= \tilde\Diag_{(2)}g_{\bfM ^{(\ell)}}^{s_\maxcumulantorder(2)} \bfT _2^{(\layernumber)}\in\R^n.
        \end{equation*}
        For each $r\in[\maxcumulantorder]$ and $\lambda\vdash r$, compute the $\lambda$-slice of the activation power cumulant
        using a truncation of \eqref{eq:diagram-restatement} applied to powers of the activation function:
        \footnote{
            In practice, instead of summing diagrams directly, we sum orbits of diagrams under permutations of indices then $\lambda$-symmetrize the result.
        }
        \begin{align}
        \label{eq:algo-nonlin-sum}
            &\tilde \Diag_\lambda(\tilde\kappa^{\mathsf P}[X^{(\layernumber)}])_{i_1,\ldots, i_b} \\
            &:= \sum_{\substack{\underline{k}\in\mathbb{Z}^b_{\ge0} \\ |\underline{k}| \leq 2K-1}}
            \frac{1}{\underline{k}!}\left(\prod_{a=1}^b\left(\widehat{(\phi_\layernumber^{\lambda_a})}^{(\mu_{k_a},\sigma^2_{k_a})}_{k_a}\right)_{i_a}\right)
            \sum_{\substack{\nu\in\cVec{2}(\underline{k})\\ k(\nu)\leq\maxcumulantorder}}
            \prod_{\underline{u}\in\nu}(\tilde \Diag_{\underline{u}}g_{\bfM ^{(\ell)}}^{s_\maxcumulantorder(|\underline{u}|)} \bfT _{|\underline{u}|}^{(\layernumber)})_{i_1,\ldots, i_b}, \nonumber
        \end{align}
        where each $\tilde \Diag_{\underline{u}}g_{\bfM ^{(\layernumber)}}^{s_\maxcumulantorder(|\underline{u}|)} \bfT _{|S|}^{(\layernumber)}$ is computed using \eqref{eq:delta-cup},
        each Hermite coefficient is computed using Proposition~\ref{prop:hermite-relu} if $\phi_\ell=\ReLU$ and Gauss-Hermite otherwise (see Section~\ref{otheractivationsappendix}),
        and
        \begin{equation}
        \label{eq:vec-part-weight}
            1+\sum_{\underline{u}\in\nu}\left(\sum_{j=1}^bu_j-1-\mathbbm{1}[\text{$u_j$ all even}]\right)=:k(\nu)\leq\maxcumulantorder
        \end{equation}
        is the condition for including vector partition $\nu$.
        If $\maxcumulantorder$ is odd, do the same for all $\lambda\vdash \maxcumulantorder+1$ having no singleton blocks. 
        \item \label{basicmobius}
        For the same set of $\lambda$ as in the previous step, convert power cumulants to cumulants slicewise:
        \begin{equation*}
            \tilde\Diag_\lambda \tilde\kappa_r[X^{(\layernumber)}] := \sum_{\nu\in\Vec(\lambda)}  c_\nu^{\mathrm{vec}}c_\nu\pow\prod_{\underline{u}\in\nu}\tilde \Diag_{\underline{u}}\kappa_r\pow[X^{(\layernumber)}],
        \end{equation*}
        (This is \eqref{eq:powercumulants} applied to the estimated power cumulants.)
        \item \label{basicharmonicproj}
        For each $r\in[R]$, project the estimated cumulants onto the $(\leq r-2s_\maxcumulantorder)$-harmonics:
        \begin{align*}
            \eta_r[X^{(\layernumber)}]:=\sum_{s'=s}^{\lfloor r/2 \rfloor}g^{s'-s}\left(\sum_{t=0}^{\lfloor (r-2s)/2 \rfloor} c^{\mathrm h}_{s',t} g^t\tr^{s'+t}\left(\sum_{\lambda\vdash[r]}\tilde\Diag_\lambda \tilde\kappa_r[X^{(\layernumber)}]\right)\right)
        \end{align*}
        where the trace is distributed over the sum of diagonals then computed with \eqref{eq:trace-delta}.

    \end{enumerate}
    \item Return $\mathbf W^{(\numhiddenlayers+1)} \tilde\eta_1[X^{(\numhiddenlayers)}]$.
\end{enumerate}


\subsection{Augmented algorithm}
\label{sec:augmented-algo}

The basic algorithm in Section~\ref{sec:basic-algo} works by tracking every harmonic part that would incur at least $\Omega(n^{-\maxcumulantorder+1})$ MSE if dropped;
the largest of these parts takes $\Theta(n^{\maxcumulantorder+1})$ time to propagate.
We may consider an algorithm that additionally tracks every harmonic part that can be propagated in $O(n^{\maxcumulantorder})$ time
and contributes at least $\Omega(n^{-\maxcumulantorder})$ MSE.
This augmented algorithm would improve the constant factor of the leading-order term in MSE, while keeping the leading-order term in FLOPs the same.\footnote{
As long as $\maxcumulantorder\geq 3$. For lower $\maxcumulantorder$, the linear step is not the only computation that is leading order in FLOPs, so
augmenting cumulant propagation in fact increases the leading-order FLOPs term.
}
It follows from our bounds on cumulant sizes (Proposition~\ref{prop:cumulantgrowth}) that the augmented algorithm corresponds to replacing $s_\maxcumulantorder$ in the basic algorithm with
\begin{equation*}
    s^{\mathrm{aug}}_{\maxcumulantorder}(r) := \begin{cases}
        0 & r\leq \maxcumulantorder\\
        1 & r = \maxcumulantorder+1\\
        \dfrac{r}{2} & \text{$r=\maxcumulantorder+2\equiv 0\mod 2$}\\
        \infty & \text{otherwise.}
    \end{cases}
\end{equation*}
All other parts of cumulant propagation remain the same.

\subsection{Factorized algorithm}
\label{sec:factorized-algo}

The time complexity of our cumulant propagation algorithm is dominated by the $O(n^{\maxcumulantorder+1})$ symmetric contraction of the full degree-$\maxcumulantorder$ cumulant tensor during the linear step.
By representing this tensor in a factorized form, we are able to reduce the linear step's runtime, and thus that of the entire algorithm, to $O(n^{\maxcumulantorder})$.

Note that this improvement concerns only the width dependence---recall that our big-$O$ notation treats the depth $L$ as constant.
If we track the dependence on $L$ explicitly, the time complexity of basic cumulant propagation through the entire MLP
scales linearly with $L$, while that of the factored version scales quadratically.
The reason is that the factored version propagates a running list of factors is propagated through the network, 
adding a constant number of new factors at each layer.

For ease of exposition, we demonstrate the factorized form of the basic algorithm when $\maxcumulantorder=3$,
and defer the proof that general $\maxcumulantorder$ can be factorized to Section~\ref{sfactorized}.
With some additional effort, augmented cumulant propagation can also be factorized; 
we refer interested readers to the code (\redactifanon{\url{https://github.com/alignment-research-center/mlp_kprop}}).

\paragraph{Factoring tensors}
When $\maxcumulantorder=3$, we factor $T\in\syms{3}{n}$ as follows:
\begin{equation}
\label{eq:factor3}
    \bfT_{i_1,i_2,i_3}=\dfrac{1}{6}\sum_{\sigma\in S_3}\sum_{j=1}^J \bfA _{i_{\sigma(1)},j} \bfB _{i_{\sigma(2)},j} \bfC _{i_{\sigma(3)},j}
\end{equation}
where $J$ is the number of factors and  $\bfA ,\bfB ,\bfC \in\R^{n\times J}$.
Throughout, $J=O(n)$.
We notate \eqref{eq:factor3} sans indices as
\begin{equation*}
    \bfT = F(\bfA , \bfB , \bfC ).
\end{equation*}

It is easily verified that sums of factored tensors are factored by concatenation:
\begin{equation*}
    F(\bfA ,\bfB ,\bfC )+F(\bfA ',\bfB ',\bfC ') = F\left( \begin{bmatrix}\bfA &\bfA '\end{bmatrix}, \begin{bmatrix}\bfB &\bfB '\end{bmatrix}, \begin{bmatrix}\bfC &\bfC '\end{bmatrix}\right).
\end{equation*}
Hence summing factored tensors sums the number of factors.

The symmetric contraction of a factored tensor is 
\begin{equation*}
    F(\bfA ,\bfB ,\bfC )\odot\mathbf W = F(\mathbf W \bfA , \mathbf W \bfB , \mathbf W \bfC ).
\end{equation*}
When $J=O(n)$, the factors of the symmetric contraction can be computed in $O(n^3)$ time---this is what gives us the asymptotic improvement over the na\"ive $O(n^4)$ linear step.

\paragraph{Diagonals of factored tensors}
Let $*$ denote the entrywise (Hadamard) matrix product. Then,
\begin{equation}
\label{eq:delta-factor}
\begin{aligned}
    (\tilde \Diag_{(3)}F(\bfA ,\bfB ,\bfC ))_i &= \sum_{j=1}^J (\bfA *\bfB *\bfC )_{i,j},\\
    (\tilde \Diag_{(2,1)}F(\bfA ,\bfB ,\bfC ))_{i_1,i_2} &= {\mathbbm 1}[i_1\neq i_2] \dfrac{1}{3}\sum_{j=1}^J \left((\bfB *\bfC )_{i_1,j} \bfA _{i_2,j} + (\bfA *\bfC )_{i_1,j} \bfB _{i_2,j} + (\bfA *\bfB )_{i_1,j} \bfC _{i_2,j} \right).
\end{aligned}
\end{equation}
When $J=O(n)$, these can be computed in $O(n^3)$ time.

\paragraph{Factoring diagonals}
For $\bfT\in\syms{3}{n}$,
\begin{equation}
\label{eq:factor-delta}
\begin{aligned}
    \Diag_{(3)} \bfT &= F((\tilde\Diag_{(3)} \bfT)\mathbf{1}_n\tran, \mathbf I_n, \mathbf I_n),\\
    \Diag_{(2,1)} \bfT  &= 3F(\tilde\Diag_{(2,1)} \bfT , \mathbf I_n, \mathbf I_n).
\end{aligned}
\end{equation}
where $\mathbf{1}_n\in\R^n$ is the all-ones vector and $\mathbf{I}_n\in\R^{n\times n}$ is the identity matrix.

\paragraph{Factoring the diagram summation formula}
We now consider factoring \eqref{eq:algo-nonlin-sum} when $\lambda=(1,1,1)$.
Suppose we are at layer $\layernumber$ of cumulant propagation.
Let $\mu$ and $\sigma^2$ be the propagated estimates for mean and variance respectively, and write
\begin{equation*}
\bff _k^{(\layernumber)} := \widehat{\phi_\layernumber}_k^{(\mu_k,\sigma_k^2)},\qquad
\bfQ ^{(\layernumber)}_{\underline u} := \tilde\Diag_{\underline{u}} g_{\bfM ^{(\layernumber)}}^{s_3(\underline{u})} \bfT ^{(\layernumber)}_{|\underline{u}|},
\qquad\bfM ^{(\layernumber)} := \mathbf{W}\mathbf{W}\tran.
\end{equation*}
where $\bfT $ comes from Step~\ref{basiclinearstep}.
Recall each $\bfQ _{\underline u}^{(\layernumber)}$ is computed using \eqref{eq:delta-cup}.
Then, grouping diagrams by orbits under permutations of indices,
\begin{align*}
    &(\tilde\Diag_{(1,1,1)}\tilde\kappa^{\mathsf P}[X^{(\layernumber)}])_{i_1,i_2,i_3} =
    {\mathbbm 1}[\text{$i_j$ all distinct}] \bigg(\\
        &\quad\sum_{\sigma\in A_3}\sum_{k_1,\ldots,k_4\in[2]} \dfrac{1}{\underline{k}!}
        (\bff ^{(\layernumber)}_{k_1})_{i_{\sigma(1)}} (\bff ^{(\layernumber)}_{k_2+k_3})_{i_{\sigma(2)}} (\bff ^{(\layernumber)}_{k_4})_{i_{\sigma(3)}} (\bfQ ^{(\layernumber)}_{(k_1,k_2)})_{i_{\sigma(1)}, i_{\sigma(2)}} (\bfQ ^{(\layernumber)}_{(k_3,k_4)})_{i_{\sigma(2)}, i_{\sigma(3)}}\\
        &\qquad+ (\bff ^{(\layernumber)}_1)_{i_{\sigma(1)}} (\bff ^{(\layernumber)}_1)_{i_{\sigma(2)}} (\bff ^{(\layernumber)}_1)_{i_{\sigma(1)}} (\bfQ ^{(\layernumber)}_{(1,1,1)})_{i_1,i_2,i_3}\\
        &\qquad+ \sum_{\sigma\in A_3}\dfrac{1}{2}(\bff ^{(\layernumber)}_2)_{i_{\sigma(1)}} (\bff ^{(\layernumber)}_1)_{i_{\sigma(2)}} (\bff ^{(\layernumber)}_1)_{i_{\sigma(3)}} (\bfQ ^{(\layernumber)}_{(2,1,1)})_{i_{\sigma(1)},i_{\sigma(2)},i_{\sigma(3)}}\\
    &\bigg),
\end{align*}
where $A_3\subseteq S_3$ is the group of 3-cycles.\footnote{
In general, the orbit-stabilizer theorem tells us to sum over coset representatives of the quotient of $S_3$ by the stabilizer of the diagram (or sum of diagrams) under consideration.
For example, the stabilizer of $\{(2,1,1)\}$ is $\langle (23) \rangle$ (the subgroup generated by the transposition of 2 and 3), while the stabilizer of 
$\{(1,1,1)\}$ is all of $S_3$.
}
We factor each summand, ignoring for now the ${\mathbbm 1}[\text{all $i_j$ distinct]}$ factor.
\begin{itemize}[leftmargin=0.5cm]
\item The sum of two-legged diagrams factors as
    \begin{align}
    \label{eq:factor-twoleg}
        &\sum_{\sigma\in A_3}\sum_{k_1,\ldots,k_4\in[2]} \dfrac{1}{\underline{k}!}
        (\bff ^{(\layernumber)}_{k_1})_{i_{\sigma(1)}} (\bff ^{(\layernumber)}_{k_2+k_3})_{i_{\sigma(2)}} (\bff ^{(\layernumber)}_{k_4})_{i_{\sigma(3)}} (\bfQ ^{(\layernumber)}_{(k_1,k_2)})_{i_{\sigma(1)}, i_{\sigma(2)}} (\bfQ ^{(\layernumber)}_{(k_3,k_4)})_{i_{\sigma(2)}, i_{\sigma(3)}}\nonumber\\
        &\,= \sum_{k_2\in[2]}F\left(\dfrac{3}{k_2!}\mathbf I_n, \sum_{k_1\in[2]}\dfrac{1}{k_1!}(\diag \bff ^{(\layernumber)}_{k_1}) \bfQ ^{(\layernumber)}_{(k_1,k_2)}, \sum_{k_3,k_4\in[2]}\dfrac{1}{k_3!k_4!}(\diag \bff ^{(\layernumber)}_{k_2+k_3})(\diag \bff ^{(\layernumber)}_{k_4}) \bfQ ^{(\layernumber)}_{(k_3,k_4)}  \right)_{i_1,i_2,i_3}\nonumber\\
        &\,=:\sum_{k_2\in[2]}F\left(\bfA ^{(\layernumber)}_{\text{two},k_2}, \bfB ^{(\layernumber)}_{\text{two},k_2}, \bfC ^{(\layernumber)}_{\text{two},k_2}\right)_{i_1,i_2,i_3}.
    \end{align}
    (Since $F$ by definition symmetrizes, the sum over $A_3$ in the left hand side turns into a factor of 3, which is pulled into the first factor of the right hand side.)
    This is $2n$ factors total.
\item
By induction, the previous layer's activation cumulants corresponding to $(1,1,1)$ are factored as $\tilde\kappa_3[X^{(\layernumber-1)}]=F(\bfA ^{(\layernumber-1)},\bfB ^{(\layernumber-1)},\bfC ^{(\layernumber-1)})$.
Then, since $\bfQ ^{(\layernumber)}_{(1,1,1)}=\tilde\kappa_3[X^{(\layernumber-1)}]\odot\mathbf W$,
\begin{align}
\label{eq:factor-111}
    &(\bff ^{(\layernumber)}_1)_{i_{\sigma(1)}} (\bff ^{(\layernumber)}_1)_{i_{\sigma(2)}} (\bff ^{(\layernumber)}_1)_{i_{\sigma(3)}} (\bfQ ^{(\layernumber)}_{(1,1,1)})_{i_1,i_2,i_3}\nonumber\\
    &\qquad=F((\diag \bff ^{(\layernumber)}_1)\mathbf W \bfA ^{(\layernumber-1)}, (\diag \bff ^{(\layernumber)}_1)\mathbf W \bfB ^{(\layernumber-1)}, (\diag \bff ^{(\layernumber)}_1)\mathbf W \bfC ^{(\layernumber-1)})_{i_1,i_2,i_3}\nonumber\\
    &\qquad=:F\left(\bfA ^{(\layernumber)}_{(1,1,1)}, \bfB ^{(\layernumber)}_{(1,1,1)}, \bfC ^{(\layernumber)}_{(1,1,1)}\right)_{i_1,i_2,i_3}.
\end{align}
This is the same number of factors that $\tilde\kappa_3[X^{(\layernumber-1)}]$ had in the previous layer.
\item To factor the $(2,1,1)$ diagram, write $(j_1,j_2,j_3,j_4)=(1,1,2,3)$.
Then, since $\bfT ^{(\layernumber)}_4$ is a scalar and $\bfM ^{(\layernumber)}$ is symmetric,
\begin{align*}
    (\bfQ ^{(\layernumber)}_{(2,1,1)})_{i_1,i_2,i_3} &= \left(\tilde\Diag_{(2,1,1)}g_{\bfM ^{(\layernumber)}}^{2} \bfT ^{(\layernumber)}_4\right)_{i_1,i_2,i_3}\\
    &=\bfT ^{(\layernumber)}_4 \sum_{\substack{u,v\in[4]\\u< v}} \bfM ^{(\layernumber)}_{i_{j_u},i_{j_v}}\bfM ^{(\layernumber)}_{\{i_{j_s}: s\not\in\{u,v\}\}}\\
    &=\bfT ^{(\layernumber)}_4 \left(2 \bfM ^{(\layernumber)}_{i_1,i_1} \bfM ^{(\layernumber)}_{i_2,i_3} + 4 \bfM ^{(\layernumber)}_{i_1, i_2} \bfM ^{(\layernumber)}_{i_1, i_3} \right).\nonumber
\end{align*}
Hence,
\begin{align}
\label{eq:factor-211}
    &\sum_{\sigma\in A_3}(\bff ^{(\layernumber)}_2)_{i_{\sigma(1)}}\dfrac{1}{2} (\bff ^{(\layernumber)}_1)_{i_{\sigma(2)}} (\bff ^{(\layernumber)}_1)_{i_{\sigma(3)}} (\bfQ^{(\layernumber)}_{(2,1,1)})_{i_{\sigma(1)},i_{\sigma(2)},i_{\sigma(3)}}\nonumber\\
    &\quad= F(3\bfT ^{(\layernumber)}_4 (\diag \bff ^{(\layernumber)}_2) (\Diag_{(2)} \bfM ^{(\layernumber)})\mathbf{1}_n\tran, (\diag \bff ^{(\layernumber)}_1), (\diag \bff ^{(\layernumber)}_1) \bfM ^{(\layernumber)})_{i_1,i_2,i_3}\nonumber\\
    &\qquad + F(6\bfT ^{(\layernumber)}_4(\diag \bff ^{(\layernumber)}_2), (\diag \bff ^{(\layernumber)}_1) \bfM ^{(\layernumber)}, (\diag \bff ^{(\layernumber)}_1) \bfM ^{(\layernumber)})_{i_1,i_2,i_3}\nonumber\\
    &\quad =: \sum_{k=1}^2 F\left(\bfA ^{(\layernumber)}_{(2,1,1),k}, \bfB ^{(\layernumber)}_{(2,1,1),k}, \bfC ^{(\layernumber)}_{(2,1,1),k}\right)_{i_1,i_2,i_3}.
\end{align}
This is $2n$ factors total.
\end{itemize}

\paragraph{Factorized cumulant propagation algorithm}
We get the factorized version of the $\maxcumulantorder=3$ basic cumulant propagation algorithm by replacing Steps~\ref{basiclinearstep}--\ref{basicharmonicproj} with the following:
\begin{enumerate}[label=(\alph*), leftmargin=1cm]
    \item From the previous layer, we have
    \begin{align*}
        &\tilde\eta_1[X^{(\layernumber-1)}]\in\syms{1}{n},\\
        &\tilde\eta_2[X^{(\layernumber-1)}]\in\syms{2}{n}\\
        &\bfA ^{(\layernumber-1)},\bfB ^{(\layernumber-1)},\bfC ^{(\layernumber-1)}\in\R^{n\times J^{(\layernumber-1)}}\\
        &\tilde\eta_4[X^{(\layernumber-1)}]\in\syms{1}{n}.
    \end{align*}
    where implicitly $\tilde\eta_3[X^{(\layernumber-1)}]=F(\bfA ^{(\layernumber-1)},\bfB ^{(\layernumber-1)},\bfC ^{(\layernumber-1)})$,
    but we never explicitly compute this tensor.
    Compute and store
    \begin{align*}
        \bfT_r^{(\layernumber)} &:= \tilde\eta_r[X^{(\layernumber-1)}]\odot \mathbf W^{(\layernumber)}\qquad \forall r\in\{1,2,4\}\\
        \bfM ^{(\layernumber)}&:=\mathbf W^{(\layernumber)}\mathbf W^{(\layernumber)\top},
    \end{align*}
    which take time $O(n^3)$,
    as well as $\mathbf W^{(\layernumber)} \bfA ^{(\layernumber-1)},\mathbf W^{(\layernumber)} \bfB ^{(\layernumber-1)},\mathbf W^{(\layernumber)} \bfC ^{(\layernumber-1)}$.
    which take time $O(n^2 J^{(\layernumber-1)})$.
    \item Compute power cumulants via diagram summation:
    \begin{enumerate}[label=\roman*., leftmargin=0.5cm, ref=\theenumi{} \roman*]
        \item
        \label{step:factor-linear}
        As in Step~\ref{basicpowercumulantsstep}, compute the estimated pre-activation mean and variance vectors.
        Then, for each $r\in[4]$ and $\lambda\vdash r$, if $\lambda$ has no more than 2 blocks and $\lambda\neq (3,1)$, compute $\tilde \Diag_\lambda(\tilde\kappa^{\mathsf P}[X^{(\layernumber)}])$ 
        using \eqref{eq:algo-nonlin-sum}.
        Note that this sum involves terms of the form
        \begin{equation*}
            \tilde\Diag_{\lambda} \bfT_3^{(\layernumber-1)} = \tilde\Diag_{\lambda} F(\mathbf W^{(\layernumber)} \bfA ^{(\layernumber-1)},\mathbf W^{(\layernumber)} \bfB ^{(\layernumber-1)},\mathbf W^{(\layernumber)} \bfC ^{(\layernumber-1)}).
        \end{equation*}
        for $\lambda\in\{(3),(2,1)\}$.
        Compute these terms using \eqref{eq:delta-factor} applied to the cached values of the $\mathbf W^{(\layernumber)}$-transformed previous-layer factors from Step~\ref{step:factor-linear}.
        \item \label{step:factor-111}
        Compute the $(1,1,1)$ slice in factored form.
        Explicitly,
        \begin{align*}
            \hat \bfA ^{(\layernumber)} &:= \begin{bmatrix}
                \bfA ^{(\layernumber)}_{\text{two},1} & \bfA ^{(\layernumber)}_{\text{two},2} & \bfA ^{(\layernumber)}_{(1,1,1)} & \bfA ^{(\layernumber)}_{(2,1,1),1} & \bfA ^{(\layernumber)}_{(2,1,1),2}
            \end{bmatrix}\\
            \hat \bfB ^{(\layernumber)} &:= \begin{bmatrix}
                \bfB ^{(\layernumber)}_{\text{two},1} & \bfB ^{(\layernumber)}_{\text{two},2} & \bfB ^{(\layernumber)}_{(1,1,1)} & \bfB ^{(\layernumber)}_{(2,1,1),1} & \bfB ^{(\layernumber)}_{(2,1,1),2}
            \end{bmatrix}\\
            \hat \bfC ^{(\layernumber)} &:= \begin{bmatrix}
                \bfC ^{(\layernumber)}_{\text{two},1} & \bfC ^{(\layernumber)}_{\text{two},2} & \bfC ^{(\layernumber)}_{(1,1,1)} & \bfC ^{(\layernumber)}_{(2,1,1),1} & \bfC ^{(\layernumber)}_{(2,1,1),2}
            \end{bmatrix}
        \end{align*}
        with the factors defined as in \eqref{eq:factor-twoleg},~\ref{eq:factor-111},~\ref{eq:factor-211}.
        (The $(1,1,1)$ factors make use of the $\mathbf W^{(\layernumber)}$-transformed previous-layer factors from Step~\ref{step:factor-linear}.)
        Then 
        \begin{equation*}
            \tilde \Diag_{(1,1,1)}(\tilde\kappa^{\mathsf P}[X^{(\layernumber)}])_{i_1,i_2,i_3} = {\mathbbm 1}[\text{all $i_j$ distinct}]F(\hat \bfA ^{(\layernumber)}, \hat \bfB ^{(\layernumber)}, \hat \bfC ^{(\layernumber)})_{i_1,i_2,i_3},
        \end{equation*}
        though we never explicitly evaluate this expression for the full tensor $\tilde \Diag_{(1,1,1)}(\tilde\kappa^{\mathsf P}[X^{(\layernumber)}])$.
        Note that this step adds $4n$ factors to those propagated from the previous layer.
    \end{enumerate}
    \item
    \label{step:factor-mobius}
    For $r\in[4]$ and $\lambda\vdash r$ with no more than 2 blocks and excluding $(3,1)$, compute $\tilde\Diag_\lambda \tilde\kappa_r[X^{(\layernumber)}]$ using \eqref{eq:powercumulants} as in Step~\ref{basicmobius} of the basic algorithm.
    \item Form the $\tilde\eta_r[X^{(\layernumber)}]$ tensors:
    \begin{enumerate}[label=\roman*., leftmargin=0.5cm, ref=\theenumi{} \roman*]
    \item For $r\in\{1,2,4\}$, compute $\eta_r[X^{(\layernumber)}]$ as in Step~\ref{basicharmonicproj} of the basic algorithm,
    using $\tilde\Diag_\lambda \tilde\kappa_r[X^{(\layernumber)}]$ computed in Step~\ref{step:factor-mobius}.
    \item On the $(1,1,1)$-diagonal, cumulants and power cumulants coincide. Thus the factorization $F(\hat \bfA ^{(\layernumber)}, \hat \bfB ^{(\layernumber)}, \hat \bfC ^{(\layernumber)})$
    computed in Step~\ref{step:factor-111} agrees with $\eta_3[X^{(\layernumber)}]$ on the $(1,1,1)$-diagonal, but we need to subtract out the other $\lambda$-diagonals and replace them with the 
    values from Step~\ref{step:factor-mobius}.
    That is, for $\lambda\in\{(3,),(2,1)\}$, define the diagonal adjustment
    \begin{equation*}
        \bfD_\lambda := \tilde\Diag_\lambda \tilde\kappa_r[X^{(\layernumber)}]-\tilde\Diag_\lambda F(\hat \bfA ^{(\layernumber)}, \hat \bfB ^{(\layernumber)}, \hat \bfC ^{(\layernumber)})
    \end{equation*}
    where the first term on the right hand side was computed in Step~\ref{step:factor-mobius} and the second is computed using \eqref{eq:delta-factor}.
    Factor $\bfD_\lambda$ using \eqref{eq:factor-delta}:
    \begin{equation*}
        \bfD_\lambda = F(\bfA ^{(\layernumber)}_\lambda, \bfB ^{(\layernumber)}_\lambda, \bfC ^{(\layernumber)}_\lambda)\qquad \forall\lambda\in\{(3),(2,1)\}
    \end{equation*}
    then append the factored diagonals to the existing factors:
    \begin{align*}
        \bfA ^{(\layernumber)} &:= \begin{bmatrix}
            \hat \bfA ^{(\layernumber)}& \bfA ^{(\layernumber)}_{(3)}& \bfA ^{(\layernumber)}_{(2,1)}
        \end{bmatrix},\\
        \bfB ^{(\layernumber)} &:= \begin{bmatrix}
            \hat \bfB ^{(\layernumber)}& \bfB ^{(\layernumber)}_{(3)}& \bfB ^{(\layernumber)}_{(2,1)}
        \end{bmatrix},\\
        \bfC ^{(\layernumber)} &:= \begin{bmatrix}
            \hat \bfC ^{(\layernumber)}& \bfC ^{(\layernumber)}_{(3)}& \bfC ^{(\layernumber)}_{(2,1)}.
        \end{bmatrix}
    \end{align*}
    We thus obtain factors $\bfA ^{(\layernumber)},\bfB ^{(\layernumber)},\bfC ^{(\layernumber)}$ satisfying 
    \begin{equation*}
            \tilde\eta_3[X^{(\layernumber)}] = F(\bfA ^{(\layernumber)}, \bfB ^{(\layernumber)}, \bfC ^{(\layernumber)}),
    \end{equation*}
    The diagonal adjustment adds another $2n$ factors. In total, $J^{(\layernumber)}=J^{(\layernumber-1)}+6n$.
    \end{enumerate}
\end{enumerate}
The number of factors at layer $\layernumber$ is then $J^{(\layernumber)}=6n\ell$.
The leading-order in time computation at each layer is the propagation of cumulant factors through the linearity (Step~\ref{step:factor-linear}),
which takes time $O(n^2J^{(\layernumber-1)})=O(n^3\ell)$.
Thus, the time complexity for an $\numhiddenlayers$-layer MLP is $O(n^3 L^2)$,
which is an improvement in $n$-dependence but worsening in $L$-dependence compared to the basic algorithm's $O(n^4 L)$.

\section{Proofs for Hermite-based algorithms with polynomial activation functions}\label{smainproof}

In this Section we will prove Theorem \ref{matchingsamplingtheorem}, using the basic algorithm described in Section \ref{sec:basic-algo}. Note that we still assume that the activation functions $\phi_1,...,\phi_L$ are all polynomials.

\subsection{Sizes of true cumulants}
\label{sec:truecumulants}

The results of Section \ref{sec:cumulantsizes} were independent of the algorithm, depending only on the underlying MLP with polynomial activations, and so hold again. We restate the result for convenience:

\cumulantgrowth*

\subsection{Error analysis}
\label{sec:errors}
As before, define
\begin{align*}
    \Delta _r [X^{(\ell)}] &:= \kappa_r[X^{(\ell)}] - \tilde\kappa_r[X^{(\ell)}] \\
        \Delta _r [Z^{(\ell)}] &:= \kappa_r[Z^{(\ell)}] - \tilde\kappa_r[Z^{(\ell)}]
\end{align*}

To analyze the basic algorithm, we must understand the error incurred by truncating the Hermite-based diagram summation formula for cumulants (see step (\ref{basicpowercumulantsstep}) of the description of the basic algorithm in Section \ref{sec:basic-algo}), as well as the asymptotic behavior of the Hermite coefficients. This can be easily done for polynomial activation functions (see Proposition \ref{prop:polytame}), and we believe it to be possible for quite general activation functions (see the discussion in Section \ref{sec:nonpoly-activation}). For now, we formalize this analytic requirement in the following definition.

\begin{restatable}[Tameness]{definition}{tame}
    \label{def:tameness}
    Let $(Z_n)_{n\ge 1} $ be a sequence of $\mathbb{R}^n$-valued random variables. We say that a function $\phi: \mathbb R \to \mathbb R$ is \textit{tame with respect to} $(Z_n)_{n\ge1}$ if: 
    \begin{enumerate}
        \item\label{tameerror} for all $k_0 \ge 0$, $r\ge 1$ and $\underline{\alpha}\in \mathbb{Z}_{\ge1}^r$, the tensor sequence
    \begin{align*}
    \label{residualtensor}
    R_{\underline{\alpha},k_0}^{\mathsf P}&\left[\phi(Z)\right]_{j_1,...,j_r}:= 1(j_1,...,j_r \text{ distinct}) \nonumber\\
    &\cdot\left(\kappa_{\underline{\alpha}}^{\mathsf P}[\phi(Z) ]-\sum_{\substack{\underline{k}\in\mathbb{Z}^r_{\ge0} \\ |\underline{k}|\leq k_0}}\frac{1}{\underline{k}!}\prod_{a=1}^r\left(\widehat{\phi^{\alpha_a}}^{(\mathbb E\left[Z\right],\operatorname{Var}\left[Z\right])}_{k_a}\right)_{j_a}\sum_{\pi \in \mathrm{cDia}_{[\leq2]}(\underline{k})}\prod_{S\in\pi}\kappa\left[Z_{j_v}:\left(v,w\right)\in S\right]\right)
    \end{align*}
    has $n^{-\frac{k_0+1}{4}}$-growth. 
    \item\label{tamehermite} For any $\alpha,k \ge 0$, the rank-$1$ tensor sequences $\widehat{\phi^{\alpha}}_k^{(\mathbb{E}[Z],\mathrm{Var}[Z])}$ have $O(1)$-growth. 
    \item\label{tamehermiteerror} If $\mu,\sigma^2\in \mathbb{R}^n$ are rank-$1$ tensor sequences such that $\mathbb{E}[Z]-\mu$ and $\mathrm{Var}[Z]-\sigma^2$ have $n^{-k_0}$-growth, then 
    \[
    \Delta\widehat{\phi^{\alpha}}_k := \widehat{\phi^{\alpha}}_k^{(\mathbb{E}[Z],\mathrm{Var}[Z])}- \widehat{\phi^{\alpha}}_k^{(\mu,\sigma^2)}
    \]
    has $n^{-k_0}$-growth.
    \end{enumerate}
\end{restatable}

We prove that polynomials are tame with respect to $Z^{(\ell)}$ in Section \ref{sec:proofoftameness}. Our main result is the following theorem, which does not assume that the activation functions are polynomials:

\begin{theorem}
    \label{thm:hermiteerror}
     Suppose that we use the basic algorithm to estimate cumulants of an MLP. Assume that 
     \begin{enumerate}
        \item\label{condtame} for all $\ell\in[L]$, $\phi_{\ell}$ is tame with respect to $Z^{(\ell)}$; and
        \item\label{condsizes} for all $\ell\in [L+1]$, the conclusion of Proposition \ref{prop:cumulantgrowth} holds: connected fully-contracted diagrams of (pre-)activation cumulants have $n$-growth.
    \end{enumerate}
    Then, for all $r\ge 1$ and $ \ell\in [L+1]$, the tensor sequence $\Delta_r[Z^{(\ell)}]$ has $n^{-(K/2)}$ growth.
\end{theorem}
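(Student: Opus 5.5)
The plan is to mirror the proof of Theorem \ref{thm:main}: induct on $\ell$, showing that $\Delta_r[Z^{(\ell)}]$ has $n^{-K/2}$-growth for every $r$. The base case $\ell=0$ is trivial because the initialization is exact. For the inductive step, we pass from $Z^{(\ell)}$ errors to $X^{(\ell)}$ errors through the nonlinearity, and then to $Z^{(\ell+1)}$ errors through $\odot\mathbf W^{(\ell+1)}$. The linear step is handled exactly as before. We expand $\mathbb E[\Delta_r[Z^{(\ell+1)}]^p]$ with Proposition \ref{wdiagsprop}. Lemma \ref{lem:cs} then splits $\kappa$-vertices from $\Delta$-vertices; the $\kappa$ half is controlled by condition \ref{condsizes}, and the $\Delta$ half by H\"older's inequality. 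The special cases $r=K+1$ (with $K$ odd) and $K=1,r=2$ go through verbatim, since they only used the harmonic decomposition, the size of $c^{\mathrm h}$, and the $\mathbf W\mathbf W^\top$ versus $\mathbf M$ discrepancy. So the new work is an analogue of Lemma \ref{lem:xdel} for the Hermite-based nonlinear step.

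For that step, we decompose $\Delta_r[X^{(\ell)}]$ slicewise by $\lambda\vdash r$. Proposition \ref{powercumulantsprop} writes each slice of true and estimated cumulants as the same polynomial in power cumulants, so it suffices to control the power-cumulant errors $\Delta^{\mathsf P}_{\underline\alpha}:=\kappa^{\mathsf P}_{\underline\alpha}[X^{(\ell)}]-\tilde\kappa^{\mathsf P}_{\underline\alpha}[X^{(\ell)}]$ in a form usable in diagrams. Telescoping a product by the binomial trick of Lemma \ref{lem:xdel} leaves terms containing at least one error factor. The harmonic projection of step \ref{basicharmonicproj} is a fixed linear map, and its effect is handled as in the special cases. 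Each $\Delta^{\mathsf P}_{\underline\alpha}$ splits into three pieces:
\begin{itemize}
\item[(i)] the tail $R^{\mathsf P}_{\underline\alpha,k_0}$ with $k_0=2K-1$, which by tameness \ref{tameerror} has $n^{-K/2}$-growth;
\item[(ii)] the diagrams with $|\underline k|\le 2K-1$ but $k(\nu)>K$, which the algorithm drops;
\item[(iii)] the differences between true and estimated ingredients of the retained diagrams, namely the Hermite coefficients (evaluated at estimated versus true mean and variance) and the cumulant slices $\tilde\Diag_{\underline u}\kappa$ versus $\tilde\Diag_{\underline u} g^{s}_{\mathbf M}\bfT$.
\end{itemize}
Term (iii) is handled as in Lemma \ref{lem:xdel}. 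Telescoping produces connected diagrams, each containing at least one $\Delta[Z^{(\ell)}]$ or one $\Delta\widehat{\phi^\alpha}_k$. The Hermite errors inherit $n^{-K/2}$-growth from tameness \ref{tamehermiteerror}, because $\mu$ and $\sigma^2$ are entries of $\Delta_1$ and $\Delta_2$ slices. The Hermite coefficients themselves are $O(1)$ by tameness \ref{tamehermite}, so they act as bounded diagonal weights and do not affect the counting in Proposition \ref{wdiagsprop}. The cup and trace pieces that sit outside the tracked harmonics are expressed through $\Delta$ exactly as in the polynomial special cases.

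The main obstacle is term (ii): showing that a dropped connected $[\le2]$-mixed vector partition $\nu$ with $k(\nu)>K$ contributes an entry with $n^{-K/2}$-growth, and in fact that its contribution after $\odot\mathbf W^{(\ell+1)}$ and the $p$-th moment is $O(n^{-pK/2})$. The weight $k(\nu)$ in \eqref{eq:vec-part-weight} should be exactly the exponent in Corollary \ref{cor:elementgrowth}. A block $\underline u$ of total size $|\underline u|$ spread over distinct indices costs $n^{-(|\underline u|-1)/2}$, or $n^{-(|\underline u|-2)/2}$ when all multiplicities are even. Multiplying over blocks gives a squared size of $n^{-(k(\nu)-1)}$.

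I would first try a direct entrywise bound: combine Corollary \ref{cor:elementgrowth} with H\"older to bound the product of cumulant slices, then check that $k(\nu)\ge K+1$ gives $n^{-K}$ in squared norm. If the entrywise bound is too lossy once we sum over contracted indices in the next layer's diagrams, the fallback is to treat the dropped term as a tensor diagram of $\kappa[Z^{(\ell)}]$'s. Proposition \ref{wdiagsprop} and condition \ref{condsizes} then count connected components, as in the proof of Corollary \ref{cor:elementgrowth}. Connectedness of $\nu$ together with the distinct-index restriction should ensure that each extra cumulant leg beyond the tree structure costs $n^{-1/2}$.
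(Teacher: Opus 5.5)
Your proposal is essentially the paper's proof: induction over layers, the linear step via Proposition~\ref{wdiagsprop} and the Cauchy--Schwarz split of Lemma~\ref{lem:cs}, reduction of $\Delta_r[X^{(\ell)}]$ to power-cumulant errors via Proposition~\ref{powercumulantsprop}, and the same three-way split of $\Delta^{\mathsf P}_{\underline\alpha}$. The three pieces are the tameness tail with $k_0=2K-1$, the dropped diagrams with $k(\nu)>K$, and the ingredient differences controlled by conditions (\ref{tamehermite}) and (\ref{tamehermiteerror}) of tameness.

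Your entrywise bound for (ii), via Corollary~\ref{cor:elementgrowth} and H\"older, is exactly the paper's component count (exponent $\frac{p'}{2}(1-k(\pi))$). It is not lossy, because the Cauchy--Schwarz step only needs entrywise $n^{-K/2}$-growth of $\Delta^{\mathsf P}_{\underline\alpha}$ on the H\"older side. Do (iii) entrywise as well, so that Hermite coefficients enter only through H\"older and never sit inside a contracted diagram whose growth you read off from condition~\ref{condsizes}. Your phrase ``bounded diagonal weights that do not affect the counting'' is not justified as stated: the paper explicitly notes it cannot control $O(1)$ Hermite weights attached to contracted indices.
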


As stated, we are focusing just on the basic algorithm (the error of the factorized algorithm is precisely the same, the only difference is in the run-time, which is discussed in Section \ref{sec:factorized-algo}, and the extra harmonic components computed in the augmented algorithm are dealt with using identical ideas to those presented here).

\begin{proof}[Proof of Theorem \ref{thm:hermiteerror}]
    We prove this by induction on $\ell$. We can define $Z^{(0)} := X^{(0)}$ and $\phi_0 = \mathrm{id}$ to make the base case $\ell=0$ which is trivial, since the error tensors are all identically $0$. For simplicity, we will just deal with the case $r\neq K+1\neq 2$. The other cases are dealt with in exactly the same way, with some additional notational difficulty due to the harmonic projections.
    
    Assume that for some $\ell\in 0,...,L-1$, the conclusion holds that for all $r\ge1$, the tensor $\Delta_r[Z^{(\ell)}]$ has $n^{-(K/2)}$-growth. We wish to prove the result for the errors $\Delta_d[Z^{(\ell+1)}]$. Let $p\ge 2$ be an even integer (the exponent in the definition of tensor growth).

    For $r \ge K+2$, note that the basic algorithm assigns $\tilde{\kappa}_r[Z^{(\ell+1)}] = 0$, so it suffices to prove that the tensor $\kappa_r[Z^{(\ell+1)}]$ has $n^{-K/2}$-growth. This follows precisely from Corollary \ref{cor:elementgrowth}. For $1\leq r \leq K$, we simply have 
    $$
    \Delta_r[Z^{(\ell+1)}] = \Delta_r[X^{(\ell)}] \odot \mathbf W^{(\ell+1)} .
    $$
    Therefore, by Proposition \ref{wdiagsprop},
    $$
        \mathbb{E}\left[ \Delta_r[Z^{(\ell+1)}]_{i_1,...,i_r} ^{p} \right] = \left( \frac{2}{n} \right) ^{pr/2} \sum_{\pi\in\mathcal{P}_2([p]\times[r]) } \mathbb{E}[D_{\pi,1}] (D_{\pi,2})_{i_1,...,i_r}
    $$
    where the diagrams $D_{\pi,1}$ are now fully-contracted  tensor diagrams with $p$ copies of $\Delta_r[X^{(\ell)}]$, and the value of $D_{\pi,2}$ is always $0$ or $1$. We must prove that each of the diagrams $D_{\pi,1}$ has $n^{p(r-K)/2}$-growth.

    \underline{\textbf{Error of power cumulants:}}
    
    The first step is to show that $\Delta_{\underline{\alpha}}^{\mathsf{P}}[X^{(\ell)}]$ has $n^{-K/2}$-growth, where this tensor is defined in the obvious way. By tameness of $\phi^{(\ell)}$, for any $j_1,...,j_r$ distinct,
    \[
    \kappa_{\underline{\alpha}}^{\mathsf P} [X^{(\ell)}] - \sum_{\substack{\underline{k}\in\mathbb{Z}^r_{\ge0} \\ |\underline{k}|\leq 2K-1}}\frac{1}{\underline{k}!}\left(\prod_{a=1}^r\widehat{\phi_{\ell}^{\alpha_a}}^{\left(\mathbb E\left[Z_{j_a}^{(\ell)}\right],\operatorname{Var}\left[Z_{j_a}^{(\ell)}\right]\right)}_{k_a}\right)\sum_{\pi \in \mathrm{cDia}_{[\leq2]}(\underline{k})}\prod_{S\in\pi}\kappa\left[Z_{j_v}^{(\ell)}:\left(v,w\right)\in S\right]
    \]
    has $n^{-K/2}$-growth, and therefore to prove that the power cumulants have $n^{-K/2}$-growth, it suffices to prove this for the difference
    \[
    \prod_{a=1}^r\widehat{\phi_{\ell}^{\alpha_a}} ^{\left(\mathbb{E} \left[Z_{j_a}^{(\ell)}\right],\operatorname{Var}\left[Z_{j_a}^{(\ell)}\right]\right)}_{k_a}\prod_{S\in\pi}\kappa\left[Z_{j_v}^{(\ell)}:\left(v,w\right)\in S\right] - 
    \prod_{a=1}^r\widehat{\phi_{\ell}^{\alpha_a}} ^{\left(\tilde{\kappa}_1 \left[Z_{j_a}^{(\ell)}\right],\tilde{\kappa}_2\left[Z^{(\ell)}\right]_{j_a,j_a}\right)}_{k_a}\prod_{S\in\pi}\tilde\kappa\left[Z_{j_v}^{(\ell)}:\left(v,w\right)\in S\right]
    \]
    for any $\underline{k}\in \mathbb{Z}_{\ge0}^r$ with $|\underline{k}|\leq 2K-1$, and $\pi\in \mathrm{cDia}_{[\leq2]}(\underline{k})$ satisfying $k(\pi)\leq K$ where
    \[
    k(\pi) := 1 + |\underline{k}| - |\pi|-\#\{S\in\pi : \forall v, |S\cap [k_v]| \equiv 0 \text{ mod }2\}
    \]
    and for 
    \[
    \prod_{a=1}^r\widehat{\phi_{\ell}^{\alpha_a}} ^{\left(\mathbb{E} \left[Z_{j_a}^{(\ell)}\right],\operatorname{Var}\left[Z_{j_a}^{(\ell)}\right]\right)}_{k_a}\prod_{S\in\pi}\kappa\left[Z_{j_v}^{(\ell)}:\left(v,w\right)\in S\right]
    \]
    for any $\underline{k}\in \mathbb{Z}_{\ge0}^r$ with $|\underline{k}|\leq 2K-1$, and $\pi\in \mathrm{cDia}_{[\leq2]}(\underline{k})$ satisfying $k(\pi) > K$. The second case is simplest so we deal with that first. Let $p'\ge2$ be even. Taking the $p'$-th power of the product of cumulants and then taking the expected value, Proposition \ref{wdiagsprop} tells us that we get a power $n^{-\frac{1}{2}p'|\underline{k}|}$ in front of a sum of fully-contracted diagrams of cumulants of $X^{(\ell-1)}$, and we need to compute the number of connected components. Any cumulant corresponding to $S\in \pi$ which has an odd intersection with some $[k_v]$ must be connected to some other cumulant, therefore the maximal number of connected components is
    \[
    \frac{p'}{2}\left(|\pi|+\#\{S\in\pi : \forall v, |S\cap [k_v]| \equiv 0 \text{ mod }2\}\right)
    \]
    Consequently (by assumption (\ref{condsizes})), we see that the growth is
    \[
    n^{\frac{p'}{2}(|\pi|+\#\{S\in\pi : \forall v, |S\cap [k_v]| \equiv 0 \text{ mod }2\}-|\underline{k}|)} = n^{\frac{p'}{2}(1-k(\pi))}= O(n^{-\frac{Kp'}{2}}),
    \]
    as required, provided that the product of Hermite coefficients has $n^0$-growth, which is indeed condition (\ref{tamehermite}) of tameness.

    For the first case, we use the binomial theorem to expand the product after writing $\tilde\phi = \phi - \Delta\phi$ and $\tilde\kappa = \kappa-\Delta$, and so we can look simply at a term of the form
    \[
    \prod_{a\not\in T}\widehat{\phi_{\ell}^{\alpha_a}} ^{\left(\mathbb{E} \left[Z_{j_a}^{(\ell)}\right],\operatorname{Var}\left[Z_{j_a}^{(\ell)}\right]\right)}_{k_a}\prod_{S\not\in\pi'}\kappa\left[Z_{j_v}^{(\ell)}:\left(v,w\right)\in S\right]
    \prod_{a\in T}\Delta\widehat{\phi_{\ell}^{\alpha_a}} _{k_a}\prod_{S\in\pi'}\Delta\left[Z_{j_v}^{(\ell)}:\left(v,w\right)\in S\right]
    \]
    where $T\subseteq [r], \pi'\subseteq\pi$ and at least one is non-empty. We know that the Hermite terms are $n^0$-growth, and the error terms have $n^{-K/2}$-growth (for the Hermite terms, this is condition (\ref{tamehermiteerror}) of tameness). The cumulant term has growth $\#\{S\not\in\pi'\} + \#\{S\not\in\pi' : \forall v, |S\cap[k_v]|\equiv0 \text{ mod }2\} $. Since $\pi$ is $[\leq2]$-mixed, any $S$ in the first group has size at least $2$, and any $S$ in the second group has size at least $4$, therefore, 
    \[
        \#\{S\not\in\pi'\} + \#\{S\not\in\pi' : \forall v, |S\cap[k_v]|\equiv0 \text{ mod }2\} \leq \frac{1}{2}\sum_{S\not\in \pi'}|S|.
    \]
    Applying Proposition \ref{wdiagsprop} (which holds by assumption (\ref{condsizes})) shows that the product over $S\not\in\pi'$ has $O(1)$-growth, and this term is complete.
    
    \underline{\textbf{Reduction to power cumulants:}}
    By applying Proposition \ref{powercumulantsprop} to both $\kappa_r[X^{(\ell)}]$ and $\tilde\kappa_r[X^{(\ell)}]$, we see that $\Delta_r[X^{(\ell)}]$ can be replaced by a sum of connected diagrams including the power cumulant errors $\Delta_{\alpha}^{\mathsf P}[X^{(\ell)}]$ and the cumulants $\kappa_{r'}[X^{(\ell)}]$. Replacing all $p$ copies of $\Delta_r[X^{(\ell)}]$ with this sum, we get a collection of diagrams which have the following form:
    \begin{itemize}
        \item The left vertices $V_1$ partition into $s$ parts $V_{1,1},...,V_{1,p}$, each of which contains at least one $\Delta^P$ power cumulant. 
        \item There are at most $pr/2$ indices in $V_2$.
        \item Each index connects to at most two parts $V_{1,i}$.
        \item Each part $V_{1,i}$ together with its neighbors forms a connected subgraph.
    \end{itemize}

    Again, we are required to prove that these diagrams have $n^{p(r-K)/2}$-growth. By Cauchy-Schwartz, Lemma \ref{lem:cs}, we can relate the square of diagrams to a diagram that contains $\Delta$'s and the diagram that contains $\kappa$'s. The $\kappa$ part will have at most $pr/2$ connected components, and so will have $n^{pr/2}$-growth (by assumption (\ref{condsizes})), which means that we are required to prove that the $\Delta$-diagram has $n^{p(\frac{1}{2}r-K)}$-growth. They are fully contracted tensor diagrams with at least $2p$ tensors, each with $n^{-K/2}$-growth, and at most $pr/2$ indices, proving the result.
\end{proof}

\subsection{Proof of tameness for polynomial activations}
\label{sec:proofoftameness}

\begin{proposition}
\label{prop:polytame}
    Polynomials are tame. More precisely, if $Z$ is the output distribution of an MLP with polynomial activation functions, then any polynomial $\phi$ is tame with respect to $Z$.
\end{proposition}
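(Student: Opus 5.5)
We verify the three conditions of Definition \ref{def:tameness} in turn, for $Z=Z^{(\ell)}$ the pre-activations of an MLP with polynomial activations and $\phi$ a polynomial of degree $D$. All three reduce to two ingredients: Theorem \ref{diagramstheorem} applied to $\phi^{\alpha_a}$, and the growth bounds of Proposition \ref{prop:cumulantgrowth} and Corollary \ref{cor:elementgrowth}.

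\textbf{Conditions \ref{tamehermite} and \ref{tamehermiteerror}.} For $Y\sim\mathcal N(\mu,\sigma^2)$ we have $\widehat{\phi^{\alpha}}^{(\mu,\sigma^2)}_k=\mathbb E[(\phi^\alpha)^{(k)}(Y)]$. Since $(\phi^\alpha)^{(k)}$ is a polynomial, this expectation is a fixed polynomial $P_{\alpha,k}(\mu,\sigma^2)$ whose coefficients do not depend on $n$.
\begin{itemize}
\item By Corollary \ref{cor:elementgrowth} (cases $r=1,2$ with repeated index), $\mathbb E[Z_j]$ and $\operatorname{Var}[Z_j]$ have $O(1)$-growth. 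Hence $P_{\alpha,k}(\mathbb E[Z],\operatorname{Var}[Z])$ has $O(1)$-growth by H\"older's inequality. This is condition \ref{tamehermite}.
\item For condition \ref{tamehermiteerror}, expand $P_{\alpha,k}(\mu',\sigma'^2)-P_{\alpha,k}(\mu,\sigma^2)$ by the binomial theorem in the differences $\mathbb E[Z]-\mu$ and $\operatorname{Var}[Z]-\sigma^2$. Every term contains at least one difference factor, which has $n^{-k_0}$-growth. The remaining factors are powers of the true moments, which are $O(1)$, and powers of the differences, which are also $O(1)$ since $k_0\ge0$. H\"older then gives $n^{-k_0}$-growth.
\end{itemize}

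\textbf{Condition \ref{tameerror}.} For $j_1,\dots,j_r$ distinct, the power cumulant equals $\kappa[\phi^{\alpha_1}(Z_{j_1}),\dots,\phi^{\alpha_r}(Z_{j_r})]$. Applying Theorem \ref{diagramstheorem} to each polynomial $\phi^{\alpha_a}$ expresses it exactly as the full finite sum (with $|\underline k|\le D\sum\alpha_a$). The residual $R^{\mathsf P}_{\underline\alpha,k_0}$ is therefore the finite sum of terms with $|\underline k|>k_0$. It suffices to show that each term, with $\underline k$ fixed and $\pi\in\cDia{2}(\underline k)$, has $n^{-(|\underline k|-1)/4}$-growth or better, since $(|\underline k|-1)/4\ge k_0/4$ is not yet enough and we need $(k_0+1)/4$.

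I will actually aim for $n^{-|\underline k|/4}$, or more precisely for the exponent $\frac12(1-k(\pi))$ from the proof of Theorem \ref{thm:hermiteerror}. The argument follows the ``second case'' there:
\begin{itemize}
\item Take the $p'$-th power of $\prod_{S\in\pi}\kappa[Z_{j_v}:(v,w)\in S]$ and write each $\kappa_{|S|}[Z^{(\ell)}]=\kappa_{|S|}[X^{(\ell-1)}]\odot\mathbf W^{(\ell)}$.
\item Apply Proposition \ref{wdiagsprop} to the expectation over $\mathbf W^{(\ell)}$. This gives a prefactor $n^{-p'|\underline k|/2}$ times fully contracted diagrams of $X^{(\ell-1)}$-cumulants and identity tensors.
\item By Proposition \ref{prop:cumulantgrowth}, each connected component contributes a factor $n$.
\item Since the indices are distinct, any block $S$ meeting some $[k_v]$ an odd number of times must pair with another block. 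So the number of components is at most $\frac{p'}{2}(|\pi|+\#\{\text{all-even }S\})$.
\end{itemize}
This yields the growth $n^{\frac{p'}2(1-k(\pi))}$ per $p'$-th moment, i.e.\ $n^{(1-k(\pi))/2}$-growth. Multiplying by Hermite coefficients of $O(1)$-growth (condition \ref{tamehermite}) via H\"older preserves this.

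\textbf{Main obstacle.} The remaining step, and the one I expect to be hardest, is the combinatorial inequality $k(\pi)-1\ge \frac{|\underline k|}{2}-\ldots$, i.e.\ showing that $\frac{k(\pi)-1}{2}\ge\frac{|\underline k|}{4}$ for every $[\le2]$-mixed diagram. This inequality is what upgrades the per-term bound to the required $n^{-(k_0+1)/4}$ when $|\underline k|\ge k_0+1$.

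Using $k(\pi)-1=\sum_{S}(|S|-1-\mathbb 1[S\text{ all-even}])$, it suffices to show $|S|-1-\mathbb 1[\text{all-even}]\ge |S|/2$ for each block. This holds as follows:
\begin{itemize}
\item Mixed blocks of size $\ge2$ that are not all-even satisfy $|S|-1\ge|S|/2$.
\item All-even blocks that are mixed have size $\ge4$, so $|S|-2\ge|S|/2$.
\item Size-2 all-even blocks cannot occur: they would be a pair inside one $[k_v]$ (excluded by mixedness) or a pair across two $v$ with odd counts.
\item Blocks of size $\ge3$ inside a single $[k_v]$ have $|S|-1-1\ge |S|/2$ once $|S|\ge4$. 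The case $|S|=3$ is odd, so $|S|-1\ge 3/2$ holds.
\end{itemize}
Summing over blocks gives $k(\pi)-1\ge|\underline k|/2$, hence $n^{-|\underline k|/4}$-growth and condition \ref{tameerror}. I must check carefully that the parity and connectedness counting above matches the definition of $[\le2]$-mixed exactly, including the degenerate case $|\underline k|=0$ with $r=1$.
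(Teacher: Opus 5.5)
Your proposal is correct and follows essentially the same route as the paper. Conditions~\ref{tamehermite} and~\ref{tamehermiteerror} come from the Hermite coefficients being fixed polynomials in the $O(1)$-growth means and variances (your appeal to Corollary~\ref{cor:elementgrowth} rests on the same Proposition~\ref{wdiagsprop} computation the paper uses directly), and condition~\ref{tameerror} comes from the exact finite Hermite diagram expansion, the component count $\frac{p'}{2}(|\pi|+\#\{\text{all-even } S\})$ after Proposition~\ref{wdiagsprop}, and the bound $|\pi|+\#\{\text{all-even } S\}\le\frac12|\underline k|$ forced by $[\le 2]$-mixedness, with your aside about $n^{-(|\underline k|-1)/4}$ being unnecessary since this block-by-block inequality gives $n^{-|\underline k|/4}\le n^{-(k_0+1)/4}$ directly.
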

\begin{proof}
First, we prove condition (\ref{tamehermite}) of the tameness condition. Let $\alpha,k\ge 0$, then 
\[
(\mu,\sigma^2) \mapsto \widehat{\phi^{\alpha}}_{k}^{(\mu,\sigma^2)}
\]
is a polynomial map. We can consider each monomial individually, for example $\mu^a(\sigma^2)^b$ for $a,b\ge 1$. We get, for $p\ge2$ even, by Proposition \ref{wdiagsprop},
\[
\mathbb{E}\left[ \left(\kappa_1[Z^{(\ell)}]_i^a\kappa_2[Z^{(\ell)}]_{i,i}^b\right)^p\right] = \left(\frac{2}{n}\right)^{\frac{1}{2}p(a+2b)} \sum_{\pi} D_{\pi}
\]
where $D_{\pi}$ is the diagram formed from pairing $pa$ copies of $\kappa_1$ and $pb$ copies of $\kappa_2$. The most connected components we can produce is $\frac{1}{2}pa + pb$, and so this term is $O(1)$, completing the proof of (\ref{tamehermite}).

Next, we prove condition (\ref{tamehermiteerror}). By expanding the Hermite coefficient into monomials, and using the binomial theorem to express every term as a monomial in $\Delta\mu_i, (\Delta \kappa_2)_{i,i}, \kappa_1[Z^{(\ell)}]$ and $\kappa_2[Z^{(\ell)}]_{i,i}$, we have to prove that if $a,b,c,d\ge0$ with $a+b \ge0$, then
\[
(\Delta\mu_i)^a (\Delta \kappa_2)_{i,i}^b \kappa_1[Z^{(\ell)}]_i^c \kappa_2[Z^{(\ell)}]_{i,i}^d
\]
has $n^{-K/2}$-growth. Since, by assumption, the $\Delta$ vectors have $n^{-K/2}$-growth, we are done by applying the already proven part (\ref{tamehermite}) of tameness for polynomials.

Now we prove condition (\ref{tameerror}) of tameness for polynomials. Only a finite number of Hermite coefficients are non-zero, so we see that for any $\underline{\alpha},k_0,\ell$, the quantity $R_{\underline{\alpha},k_0}^{\mathsf P}[\phi(Z^{(\ell)})]$ is a sum of terms of the form
\begin{equation}
    \label{polyremainderterm}
\frac{1}{\underline{k}!}\left(\prod_{a=1}^r\widehat{\phi^{\alpha_a}}^{\left(\mathbb E\left[Z\right],\operatorname{Var}[Z]\right)}_{k_a}\right)\prod_{S\in\pi} \kappa [Z_{j_v} : (v,w) \in S]
\end{equation}

for a connected $[\leq2]$-mixed partition $\pi$ of $\{(v,w) : v\in\{1,...,m\},w\in\{1,...,k_v\} \}$ for some $\underline{k} \in \mathbb{Z}_{\ge0}^m$ with $|\underline{k}| > k_0$. It suffices to prove that the term above has $n^{-\frac{k_0+1}{4}}$-growth. The Hermite product has $n^0$-growth by condition (\ref{tamehermite}).

Let $p'\ge2$ be even. Taking the $p'$-th power of the product of cumulants and then taking the expected value, Proposition \ref{wdiagsprop} tells us that we get a power $n^{-\frac{1}{2}p'|\underline{k}|}$ in front of a sum of fully-contracted diagrams of cumulants of $X^{(\ell-1)}$, and we need to compute the number of connected components. Any cumulant corresponding to $S\in \pi$ which has an odd intersection with some $[k_v]$ must be connected to some other cumulant, therefore the maximal number of connected components is
\[
    \frac{p'}{2}\left(|\pi|+\#\{S\in\pi : \forall v, |S\cap [k_v]| \equiv 0 \text{ mod }2\}\right).
\]
Since $\pi$ is $[\leq2]$-mixed, any set $S\in\pi$ satisfying $\forall v, |S\cap[k_v]|\equiv0 \text{ mod }2$ has size at least $4$, and any other set has size at least $2$. Therefore
\[
    |\pi|+\#\{S\in\pi : \forall v, |S\cap [k_v]| \equiv 0 \text{ mod }2\} \leq \frac{1}{2}|\underline{k}|
\]
and so the growth of this diagram has exponent
\[
    -\frac{1}{2}|\underline{k}|+ \frac{1}{2}(|\pi|+\#\{S\in\pi : \forall v, |S\cap [k_v]| \equiv 0 \text{ mod }2\}) \leq -\frac{1}{4}|\underline{k}| \leq  -\frac{1}{4} (k_0+1)
\]
as required.
\end{proof}

\begin{corollary}
    Theorem \ref{matchingsamplingtheorem} is satisfied by the basic algorithm.
\end{corollary}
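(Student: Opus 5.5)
The corollary should follow by combining Theorem \ref{thm:hermiteerror} with Proposition \ref{prop:polytame} and Proposition \ref{prop:cumulantgrowth}, and then converting the tensor-growth bound into the mean squared error and runtime claims of Theorem \ref{matchingsamplingtheorem}.

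\textbf{Step 1: verify the hypotheses of Theorem \ref{thm:hermiteerror}.} Condition (\ref{condsizes}) is exactly Proposition \ref{prop:cumulantgrowth}, which holds for polynomial activations independently of the algorithm. Condition (\ref{condtame}) is Proposition \ref{prop:polytame}, applied at each layer $\ell\in[L]$ with $Z=Z^{(\ell)}$.

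\textbf{Step 2: obtain the error bound.} Theorem \ref{thm:hermiteerror} then gives that $\Delta_1[Z^{(L+1)}]$ has $n^{-K/2}$-growth. The returned estimate is $\mathbf W^{(L+1)}\tilde\eta_1[X^{(L)}]$, which equals $\tilde\kappa_1[Z^{(L+1)}]$, while the true expectation is $\kappa_1[Z^{(L+1)}]$. Taking $p=2$ in the definition of growth therefore gives, for each output index $i$,
\[
\mathbb E_{\boldsymbol\theta}\!\left[\Delta_1[Z^{(L+1)}]_i^2\right]=O(n^{-K}),
\]
uniformly in $i$.

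\textbf{Step 3: choose $K$ and check the runtime.} Following Section \ref{sec:basicstructure}, take $K$ to be the smallest integer with $n^{-K}\le\varepsilon^2$. Then the error is $O(\varepsilon^2)$ and $n^{-(K-1)}>\varepsilon^2$.
\begin{itemize}
\item Because $\varepsilon(n)$ is noticeable, $K$ is bounded by a constant, so all implicit constants (diagram counts, the depth $L$, polynomial degrees) are uniform in $n$.
\item There is one subtlety: the growth constants depend on $K$. Since only finitely many values of $K$ occur, one can take the maximum of the constants over those values.
\item The runtime of the basic algorithm is $O(n^{K+1})$, as verified step by step in Section \ref{sec:basic-algo}. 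The linear steps cost $O(n^{K+1})$, and the nonlinear steps apply the finite truncated sum over $|\underline k|\le 2K-1$ slicewise in at most $O(n^{K+1})$ time.
\item Finally, $n^{K+1}=n^2\cdot n^{K-1}<n^2/\varepsilon^2$, which gives the claimed $O(n^2/\varepsilon^2)$ runtime.
\end{itemize}

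\textbf{Main obstacle.} The hardest part is the special-case harmonic bookkeeping in Theorem \ref{thm:hermiteerror} ($r=K+1$ even, and $K=1$), which the proof above only sketches. I would handle it by mirroring the polynomial-algorithm proof of Theorem \ref{thm:main}. That proof splits $\Delta_{K+1}[Z^{(\ell+1)}]$ into the dropped part $h_{\ge2}\kappa_{K+1}$ and the tracked trace $h_0\Delta_{K+1}$, and uses $c^{\mathrm h}_{\frac{K+1}{2},0}=O(n^{-\frac{K+1}{2}})$ together with Corollary \ref{cor:elementgrowth}. The same argument carries over once Step 1 has supplied the needed power-cumulant error bound $\Delta^{\mathsf P}_{\underline\alpha}[X^{(\ell)}]=O(n^{-K/2})$.
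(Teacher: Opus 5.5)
Your proposal is correct and follows the paper's proof, which consists exactly of your Step 1: Propositions \ref{prop:polytame} and \ref{prop:cumulantgrowth} verify the hypotheses of Theorem \ref{thm:hermiteerror}. Your Steps 2--3 make explicit the conversion that the paper carries out in Section \ref{sec:basicstructure}: you read off the mean squared error from $\Delta_1[Z^{(L+1)}]$ with $p=2$, choose $K$ minimal with $n^{-K}\le\varepsilon^2$, use noticeability to bound $K$ so that the constants are uniform in $n$, and observe that $n^{K+1}<n^2/\varepsilon^2$.
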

\begin{proof}
    By Propositions \ref{prop:polytame} and \ref{prop:cumulantgrowth}, the conditions of Theorem \ref{thm:hermiteerror} are satisfied when the $\phi_{\ell}$ are all polynomials.
\end{proof}

In Section \ref{sfactorized}, we will describe the general form of the factorized algorithm, and show that it computes the same estimates as the basic algorithm, while saving a factor of $n$ in the runtime for all $\maxcumulantorder\geq 3$. This will prove that Theorem \ref{beatingsamplingtheorem} is satisfied by the factorized algorithm.

\section{Discussion of Hermite-based algorithms with non-polynomial activation functions}
\label{sec:nonpoly-activation}

In the previous section, Theorem \ref{thm:hermiteerror} did not assume that the activation functions are polynomials. For polynomials, we proved assumption (\ref{condtame}) in Proposition \ref{prop:polytame} and assumption (\ref{condsizes}) in Proposition \ref{prop:cumulantgrowth}, thus completing the proof of Theorem \ref{matchingsamplingtheorem}, the main theoretical result of the paper. We now discuss how we might go about proving assumptions (\ref{condtame}) and (\ref{condsizes}) in the non-polynomial setting. 

\subsection{Cumulant growth}

We start with assumption (\ref{condsizes}), which we expect to follow from tameness. We will try to prove this by induction. The first step is independent of the activation function and follows exactly as in the proof of Proposition \ref{prop:cumulantgrowth}.


\begin{lemma}
    \label{inductionone}
    Suppose that for some $\ell\in [L+1]$, every connected fully-contracted tensor diagram of cumulants of $X^{(\ell-1)}$ has $n$-growth. Then the same holds for connected fully-contracted tensor diagrams of cumulants of $Z^{(\ell)}$.
\end{lemma}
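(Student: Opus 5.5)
The argument is the third step of the induction in the proof of Proposition \ref{prop:cumulantgrowth}, which never used that the activations are polynomial. Let $D$ be a connected fully-contracted tensor diagram in which each left vertex $v$ carries $\kappa_{\deg(v)}[Z^{(\ell)}]$. Since $\kappa_r[Z^{(\ell)}]=\kappa_r[X^{(\ell-1)}]\odot\mathbf W^{(\ell)}$, we have $D=D'\odot\mathbf W^{(\ell)}$, where $D'$ is the same diagram with $\kappa_{\deg(v)}[X^{(\ell-1)}]$ on each vertex. For $n$-growth it suffices to show $\mathbb E[D^p]=O(n^p)$ for every even $p\ge2$. The $p$-th power of $D$ is the disjoint union of $p$ copies of $D'$, contracted with $\mathbf W^{(\ell)}$, so it has the form $D''\odot\mathbf W^{(\ell)}$ with $|E(D'')|=p|E|$ and exactly $p$ connected components.

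First I condition on $\boldsymbol\theta^{(\ell-1)}$, which is independent of $\mathbf W^{(\ell)}$, and apply Proposition \ref{wdiagsprop} to $D''$. This writes $\mathbb E_{\mathbf W^{(\ell)}}[D^p]$ as
\[
\left(\frac 2n\right)^{p|E|/2}\sum_{\pi\in\mathcal P_2(pE)}D''_{\pi,1}D''_{\pi,2}.
\]
The number of pairings is independent of $n$. By the same proposition, each product $D''_{\pi,1}D''_{\pi,2}$ has at most $p+\tfrac p2|E|$ connected components.

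Next I bound each component. Components of $D''_{\pi,2}$ are connected fully-contracted diagrams of identity matrices, so each equals exactly $n$. Components of $D''_{\pi,1}$ are connected fully-contracted diagrams of cumulants of $X^{(\ell-1)}$, so by hypothesis each has $n$-growth. The product of the $D''_{\pi,2}$ factors is deterministic. For the random $D''_{\pi,1}$ factors, I take the full expectation over $\boldsymbol\theta^{(\ell-1)}$ and bound the expectation of their product by H\"older's inequality. This gives $O(n^{c})$ for that factor, where $c$ is the number of components of $D''_{\pi,1}$. Since H\"older needs the stronger $L_q$ bounds, I use that $n$-growth holds in every $L_q$ norm. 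Combining the two factors, each term is $O(n^{-p|E|/2}\cdot n^{p+p|E|/2})=O(n^p)$, as required.

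The main point needing care is the component count. Each pairing merges two edges into one block of $\pi$, and each block is then a vertex shared by $D''_{\pi,1}$ and $D''_{\pi,2}$. Here I simply invoke the last clause of Proposition \ref{wdiagsprop}, that at most $|E|/2$ components are added, applied to $D''$ with $p|E|$ edges. The remaining subtlety is the mixed randomness: the cumulants of $X^{(\ell-1)}$ depend only on $\boldsymbol\theta^{(\ell-1)}$, which is independent of $\mathbf W^{(\ell)}$. This independence is what justifies conditioning before applying Proposition \ref{wdiagsprop} and then applying H\"older.
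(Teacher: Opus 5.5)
Your proof is correct and follows the paper's argument essentially step for step: write $D^p$ as $p$ disjoint copies, apply Proposition \ref{wdiagsprop}, use the component bound $p(1+|E|/2)$, and bound each component of $D_{\pi,1}$ by the hypothesis and each component of $D_{\pi,2}$ by exactly $n$. Your explicit use of the independence of $\boldsymbol\theta^{(\ell-1)}$ from $\mathbf W^{(\ell)}$, and of H\"older's inequality for the product of dependent components, fills in details the paper leaves implicit. One small wording slip: each block of $\pi$ gives two \emph{separate} vertices, one in $D''_{\pi,1}$ and one in $D''_{\pi,2}$, rather than a shared vertex, and this separation is exactly why the two factors' components can be bounded independently as you do.
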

\begin{proof}
    Consider the diagram $ D $: taking the $p$-th power of this quantity corresponds to taking the disjoint sum of $p$ copies of this diagram. 
    By Proposition \ref{wdiagsprop}, taking the expectation over $\mathbf{W}^{(\ell)}$ produces a sum
    \[
        \left( \frac{2}{n} \right) ^{ p|E|/2 } \sum_{ \pi \text{ pairing of } pE(B) }^{  } D _{ \pi ,1 } D _{ \pi ,2 } 
    \] 
    where the number of connected components of $ D _{ \pi,1  } D _{ \pi ,2 } $ is at most $ p(1 + \frac{1}{2}|E|)$. Each of the connected components of $ D _{ \pi ,1 }  $ is a connected fully-contracted tensor diagram of cumulants of $ X ^{ (\ell-1) }  $ so contributes a factor of $ O(n) $, and each of the connected component of $ D _{ \pi ,2 }  $ is a connected fully-contracted tensor diagram of $ I_2 $'s, which also contribute a factor of $ n $. Therefore we get that the whole diagram is $ O(n^p) $ as required.
\end{proof}

The second part of the induction depends on $\phi_{\ell}$. Let us first replace each $\kappa_r[X^{(\ell)}]$ with a sum of connected contractions of power cumulants $\kappa_{\underline{\alpha}}^{\mathsf P}[X^{(\ell)}]$. Suppose that $\phi_{\ell}$ is tame, then we can pick $k_0\ge0$ and replace $\kappa_{\underline{\alpha}}^{\mathsf P}[X^{(\ell)}]$ using part (\ref{tameerror}) of the definition of tameness, which will produce a sum of diagrams containing tensors $R_{\underline{\alpha},k_0}^{\mathsf P}[\phi_{\ell}(Z^{(\ell)})]$, Hermite coefficient vectors, and cumulants $\kappa[Z^{(\ell)}]$. Consider the $r$-rank tensor
\[
\prod_{a=1}^r\left( \widehat{\phi^{\alpha_a}} _{k_a}^{(\mathbb{E}[Z],\mathrm{Var}[Z])}\right) _{j_a} \prod_{S\in \pi} \kappa[Z_{j_v}: (v,w)\in S]
\]
The cumulant part has growth at most $n^{-\frac{1}{4}|\underline{k}|}$, and so by Cauchy-Schwartz and condition (\ref{tamehermite}) of tameness, this rank-$r$ tensor has growth at most $n^{\frac{1}{2}(r - |\underline{k}|)}$. Since this is bounded above independently of $k_0$, we can choose $k_0$ large enough that any diagram containing a residual $R_{\underline{\alpha},k_0}^{\mathsf P}$ has at most $n$-growth. It remains to deal with the terms involving just Hermite coefficient vectors and cumulants $\kappa[Z^{(\ell)}]$.

These terms look like connected fully-contracted tensor diagrams of $\kappa[Z^{(\ell)}]$'s, except each index in the summation is attached to a product of Hermite coefficients, each of which is $O(1)$. If all the Hermite coefficients are a rank-$1$ tensor diagram in the cumulants $\kappa_r[Z^{(\ell)}]$ as is the case for polynomial activation functions, we would have a complete proof. We expect that for tame activation functions, the well-behaved nature of the Hermite coefficients should be sufficient to prove the same result, however we have not been able to provide a comprehensive proof of this fact.

\subsection{Tameness}\label{subsec:nonpoly-tameness}

Tameness provides a much more significant analytic challenge. Here we simply give an informal discussion on how one might formally prove tameness.

Conditions (\ref{tamehermite}) and (\ref{tamehermiteerror}) roughly correspond to a bounded, Lipschitz property of the Hermite coefficients of $\phi$ around $(\mathbb{E}[Z],\mathrm{Var}[Z])$. For example, it is easy to prove that
\begin{equation}
\label{eq:variancecontrol}
\mathbb{E}\left[ \left( \kappa_2[Z^{(\ell)}]_{i,i} - \frac{1}{n} \mathrm{Tr} \kappa_2[Z^{(\ell)}] \right)^2 \right] = 8n^{-4}(n-1)^2 \| \kappa_2 [X^{(\ell)}] \|_F^2
\end{equation}
which has $n^{-1}$-growth. Provided the activation functions are normalized correctly to keep $\mathrm{Tr}\kappa_2[Z^{(\ell)}]$ constant, this suggests that as $n\to\infty$ the variances will concentrate, and then provided the Hermite coefficients are well-behaved in a rectangle $(-C,C) \times (1-\epsilon,1+\epsilon)$, the properties (\ref{tamehermite}) and (\ref{tamehermiteerror}) should follow. This is simply a sketch, since we need significantly better control on the variances than (\ref{eq:variancecontrol}). For example, the Hermite coefficients of $\mathrm{ReLU}$ have a singularity at $\sigma^2=0$, so much more care is required (see Section \ref{sec:hermite-relu}).

Condition (\ref{tameerror}) of tameness is a statement concerning the validity of Theorem \ref{diagramstheorem}, the diagram summation formula for cumulants, when extended to non-polynomial functions $\phi$. We do not, in fact, expect this infinite summation formula to converge at any fixed width $n$, but instead that a fixed truncation should be an asymptotically good approximation as $n\to\infty$. This is the property of an asymptotic expansion, and in fact we can motivate the diagram summation formula through the theory of Edgeworth expansions, a special form of asymptotic series \citep{Hall1992}. Note that in the proof of Proposition \ref{prop:polytame}, we proved exactly that every term of the diagram summation formula which remains in $R_{\underline{\alpha},k_0}^{\mathsf P}$ in condition (\ref{tameerror}) of tameness has $n^{-\frac{k_0+1}{4}}$-growth. For polynomials, there are only finitely many terms, which completes the proof, but for general activation functions the definition of tameness simply says that the asymptotic growth of the error of truncation is no larger than any particular term that has been removed. We expect the theory of Edgeworth expansions in \citet{Hall1992} to apply to our situation, since the application of the linear layer produces a collection of almost iid random variables, however we have not performed this analysis.

\section{Proof of diagram summation formulas}\label{diagramsproof}
In this section we prove of our diagram summation formulas. Specifically, we have the polynomial diagram summation formula, Theorem \ref{polydiagsthm}, the Hermite-based diagram summation formula, Theorem \ref{diagramstheorem}, and its combinatorial restatement, Theorem \ref{thm:diagrams-restatement}. We begin by restating Theorems \ref{polydiagsthm} and \ref{diagramstheorem}.

\polydiagramsummationformula*

\diagramsummationformula*

The cumulants and Hermite coefficients used in these formulas are defined in Section \ref{preliminariessection}, and the definitions of the set $\mathrm{cDia}(\underline{k})$ of all connected diagrams on $\underline{k}$ and the set $\cDia{M}(\underline k)$ of all connected $\left[\leq M\right]$-mixed diagrams on $\underline k$ are given in Appendix \ref{diagramsappendix}. Some further intuition for these formulas is given in Section \ref{sec:polyadj}.

Both Theorem \ref{polydiagsthm} and Theorem \ref{diagramstheorem} assume that $\phi:\mathbb R\to\mathbb R$ is a polynomial. We make this assumption to avoid convergence issues, since it ensures that the expansions are finite sums. For non-polynomials, the formulas still hold in a formal sense, but can fail to converge. Convergence of these formal expansions is fragile: we do not expect these expansions to converge for a fixed $n$, but instead truncate to a finite summation such that the error converges to zero as $n\to\infty$. We discuss how this could be done using the notion of tameness in Section \ref{subsec:nonpoly-tameness}.

\subsection{Polynomial diagram summation formula for cumulants}
Let $\phi(t) = \sum_{k=0}^{\infty} \frac{1}{k!}a_kt^k$ be a polynomial (so $a_k = 0, \forall k \gg 0$). Then by multi-linearity of cumulants, we can expand
\begin{equation}
\label{multilin}
\kappa[\phi(Z_{j_1}),...,\phi(Z_{j_r})] = \sum _{\underline{k}\in \mathbb{Z}^r_{\ge0}} \left( \prod_{v=1}^r \frac{1}{k_v!}a_{k_v} \right) \kappa_r[Z_{j_1}^{k_1},...,Z_{j_r}^{k_r}]
\end{equation}

We now explicitly compute the cumulants of powers using the following Lemma.

\begin{lemma}\label{lem:simplepowers}
Let $X_1,...,X_r$ be $\mathbb{R}$-valued random variables, and $k_1,...,k_r \in \mathbb{Z}_{\ge0}$. Then
\begin{equation}
  \label{eq:powtoreg}
  \kappa \left[ Z _{ j_1 } ^{ k_1 } ,...,Z _{ j_r } ^{ k_r }   \right] = \sum_{ \pi\in\mathrm{cDia}(\underline{k}) }^{  } \prod_{ S \in \pi  }^{  } \kappa \left[ Z _{ j_v } : (v,l) \in S \right] 
\end{equation}
where we say $\pi \vdash [k_1]\sqcup ... \sqcup [k_r]$ is connected if $\pi \vee \{ [k_1],...,[k_r]\} $ has a single block.
\end{lemma}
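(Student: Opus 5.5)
We prove \eqref{eq:powtoreg} by applying the moment--cumulant partition formula twice and then Möbius inversion on the partition lattice. The statement concerns the variables $Z_{j_1},\dots,Z_{j_r}$; there are no analytic issues, since all moments are assumed finite.

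First, form the ground set $V=[k_1]\sqcup\dots\sqcup[k_r]$, whose elements are pairs $(v,l)$, and let $\sigma=\{[k_1],\dots,[k_r]\}$ be the partition of $V$ into its groups. For any partition $\tau\vdash[r]$, write $\hat\tau\vdash V$ for the partition obtained by merging the groups $[k_v]$ with $v$ in a common block of $\tau$; the map $\tau\mapsto\hat\tau$ is a lattice isomorphism onto $\{\rho\vdash V:\rho\ge\sigma\}$. For a block $B\in\tau$, the moment $\mathbb E[\prod_{v\in B}Z_{j_v}^{k_v}]$ is a moment of the multiset of variables indexed by $\bigcup_{v\in B}[k_v]$. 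Applying the defining partition formula for cumulants to this multiset gives
\[\mathbb E\Big[\prod_{v\in B}Z_{j_v}^{k_v}\Big]=\sum_{\pi\vdash \bigcup_{v\in B}[k_v]}\prod_{S\in\pi}\kappa[Z_{j_v}:(v,l)\in S].\]
Multiplying over $B\in\tau$ shows that $\prod_{B\in\tau}\mathbb E[\cdots]$ equals the sum of $\prod_{S\in\pi}\kappa[\cdots]$ over all $\pi\vdash V$ with $\pi\le\hat\tau$.

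Second, apply the partition formula to the variables $Y_v:=Z_{j_v}^{k_v}$, which gives
\[\prod_{B\in\tau}\mathbb E\Big[\prod_{v\in B}Y_v\Big]=\sum_{\tau'\le\tau}\prod_{B\in\tau'}\kappa[Y_v:v\in B].\]
Define $f(\tau)=\prod_{B\in\tau}\kappa[Y_v:v\in B]$ and $g(\tau)=\sum_{\pi:\,\pi\vee\sigma=\hat\tau}\prod_{S\in\pi}\kappa[\cdots]$. Partitions $\pi\le\hat\tau$ are exactly those with $\pi\vee\sigma\le\hat\tau$, so the first step gives $\sum_{\tau'\le\tau}g(\tau')=\prod_{B\in\tau}\mathbb E[\cdots]=\sum_{\tau'\le\tau}f(\tau')$ for every $\tau$. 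Möbius inversion on the lattice of partitions of $[r]$ (equivalently, strong induction upward from $\tau=\{\{1\},\dots,\{r\}\}$) then yields $f=g$. Evaluating at $\tau=\{[r]\}$ gives exactly \eqref{eq:powtoreg}, because $\pi\vee\sigma=\hat{\{[r]\}}$, the one-block partition, precisely when $\pi$ is connected.

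Two routine points need care. First, for groups with $k_v=0$ we have $Y_v=1$ and $[k_v]$ is empty. We must check that the degenerate conventions agree: the joint cumulant of $1$ with anything is zero when $r\ge2$, and there is no connected diagram in that case by the convention in Definition~\ref{def:diagram}; when $r=1$ both sides equal $1$, since the empty partition counts as connected. Second, the notion of ``connected'' used in \eqref{eq:powtoreg}, namely that $\pi\vee\sigma$ has a single block, should be verified to coincide with hypergraph connectivity in Definition~\ref{def:diagram}, so that the sum is over $\mathrm{cDia}(\underline k)$. The main obstacle I expect is only this bookkeeping around the isomorphism $\tau\mapsto\hat\tau$ and the degenerate cases; the inversion itself is standard (this is the classical Leonov--Shiryaev formula).
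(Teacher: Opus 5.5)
Your proposal is correct and follows essentially the same route as the paper: both expand $\mathbb E[\prod_v Z_{j_v}^{k_v}]$ once over partitions of $[r]$ and once over diagrams on $\underline k$, regroup the diagrams by the partition of $[r]$ that their connected components induce, and then identify terms. The paper's brief appeal to ``uniqueness'' is exactly your M\"obius inversion, which, as you make explicit, needs the identity for every $\tau\vdash[r]$ and not only the one-block partition. Your separate treatment of the $k_v=0$ case is care that the paper leaves implicit.
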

\begin{proof}
    Consider the following two ways of expanding
    \[
    \mathbb E \left[ \prod_{v=1}^r Z_{j_v}^{k_v} \right]
    \]
    with the defining formula for cumulants. Firstly, consider this as an expectation of a product of $r$ random variables $Z_{j_1}^{k_1},...,Z_{j_r}^{k_r}$, giving 
    \[
    \mathbb E \left[ \prod_{v=1}^r Z_{j_v}^{k_v} \right] = \sum_{\pi \vdash [r]} \prod_{S\in \pi} \kappa \left[ Z_{j_v} ^{k_v} : v\in S \right]
    \]
    Secondly, consider it as an expectation of a product of $k_1+...+k_r$ random variables, and we get
    \begin{align*}
    \mathbb E \left[ \prod_{v=1}^r Z_{j_v}^{k_v} \right] &= \sum_{\tau\in\mathrm{Dia}(\underline{k})} \prod_{T\in \tau} \kappa \left[ Z_{j_v} : (v,l)\in T \right] \\
    &= \sum_{\pi \vdash [r]} \prod_{S\in\pi} \sum_{\tau\in\mathrm{cDia}(\underline{k}_S)} \prod_{T\in \tau} \kappa \left[ Z_{j_v} : (v,l)\in T \right]
    \end{align*}
    Here $\underline{k}_S := (k_v : v\in S)\in \mathbb{Z}_{\ge0}^{|S|}$. In these decompositions, uniqueness implies that
    \[
    \kappa \left[ Z_{j_v} ^{k_v} : v\in S \right] = \sum_{\tau\in\mathrm{cDia}(\underline{k_S})} \prod_{T\in \tau_S} \kappa \left[ Z_{j_v} : (v,l)\in T \right]
    \]
    and in particular the required result follows (with $S= [r]$).
\end{proof}
Substituting this result into (\ref{multilin}) completes the proof of Theorem \ref{polydiagsthm}.

\subsection{Hermite coefficients of polynomials}
Recall from Section \ref{preliminariessection} the definition of the Hermite coefficients: for $(\mu,\sigma)\in \mathbb{R}\times \mathbb{R}_{>0}$ and $Y\sim \mathcal{N}(\mu,\sigma^2)$,
\[
\widehat{\phi}_s^{(\mu,\sigma^2)} := \frac{1}{\sigma^s}\mathbb{E} \left[ \phi(Y) \mathrm{He}_s \left( \frac{Y-\mu}{\sigma}\right)\right].
\]

For a polynomial, we get a simple expression.

\begin{lemma}
    \label{lem:polyhermite}
    Let $\phi(t) = \sum_{k=0}^{\infty} \frac{1}{k!} a_k t^k $ be a polynomial. Then
    \[
        \widehat{\phi}_s^{(\mu,\sigma^2)} = \sum_{k=0}^{\infty} \frac{1}{k!}a_{s+k} \mathbb{E}[Y^k]
    \]
\end{lemma}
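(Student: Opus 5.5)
The plan is to reduce the identity to the Gaussian integration-by-parts formula. That formula gives $\widehat\phi^{(\mu,\sigma^2)}_s=\mathbb E[\phi^{(s)}(Y)]$, and a Taylor expansion then finishes the argument. Concretely, I first establish that for $Y\sim\mathcal N(\mu,\sigma^2)$ and polynomial $\phi$,
\[
\frac1{\sigma^s}\mathbb E\!\left[\phi(Y)\operatorname{He}_s\!\left(\tfrac{Y-\mu}\sigma\right)\right]=\mathbb E\!\left[\phi^{(s)}(Y)\right].
\]
Write $Y=\mu+\sigma G$ with $G\sim\mathcal N(0,1)$, and use the Rodrigues formula $\operatorname{He}_s(x)\varphi(x)=(-1)^s\frac{d^s}{dx^s}\varphi(x)$, where $\varphi$ is the standard Gaussian density. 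We then integrate by parts $s$ times in $\int\phi(\mu+\sigma x)\operatorname{He}_s(x)\varphi(x)\,dx$. The boundary terms vanish because $\phi$ is a polynomial and $\varphi$ and its derivatives decay superexponentially. Each differentiation of $\phi(\mu+\sigma x)$ in $x$ produces a factor $\sigma$, which cancels the prefactor $\sigma^{-s}$. Equivalently, one could induct on $s$ using $\mathbb E[G f(G)]=\mathbb E[f'(G)]$ together with $\operatorname{He}_{s+1}(x)=x\operatorname{He}_s(x)-\operatorname{He}_s'(x)$; I would pick whichever is shorter to write cleanly.

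Next, I would substitute $\phi(t)=\sum_k\frac1{k!}a_kt^k$, so that $\phi^{(s)}(t)=\sum_{k\ge0}\frac1{k!}a_{s+k}t^k$. This is just termwise differentiation: the coefficient $\frac{a_{s+k}}{(s+k)!}$ of $t^{s+k}$ is multiplied by $\frac{(s+k)!}{k!}$. Taking expectations and using linearity, which is valid since the sum is finite and all moments of $Y$ exist, gives
\[
\widehat\phi_s^{(\mu,\sigma^2)}=\sum_k\frac1{k!}a_{s+k}\mathbb E[Y^k].
\]

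There is no genuine obstacle here. The only care needed is in the integration-by-parts step, namely checking the sign convention of the Rodrigues formula for the probabilist's Hermite polynomials and confirming that the $\sigma$ factors cancel exactly, so that the monic normalization of $\operatorname{He}_s$ matches the $\sigma^{-s}$ in the definition of the Hermite coefficient.
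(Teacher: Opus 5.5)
Your proposal is correct and follows the paper's proof: Rodrigues' formula with the substitution $Y=\mu+\sigma x$, $s$ integrations by parts in which the chain-rule factor $\sigma^s$ cancels the prefactor $\sigma^{-s}$, giving $\mathbb{E}[\phi^{(s)}(Y)]$, and then termwise differentiation of the polynomial. Your sign convention, the vanishing of the boundary terms, and the coefficient computation $\phi^{(s)}(t)=\sum_k \frac{1}{k!}a_{s+k}t^k$ are all correct.
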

\begin{proof}
    The defining equation for the probabilist's Hermite polynomial is 
    \[
        \mathrm{He}_s (x) = (-1)^s e^{\frac{1}{2}x^2}\frac{d^s}{dx^s}e^{-\frac{1}{2}x^2} .
    \]
    Therefore,
    \begin{align*}
        \frac{1}{\sigma^s}\mathbb{E} \left[ \phi(Y) \mathrm{He}_s \left(\frac{Y-\mu}{\sigma}\right) \right] &= \frac{1}{\sigma^s\sqrt{2\pi}}\int_{-\infty}^{\infty} \phi(\mu + \sigma x) \mathrm{He}_s(x) e^{-\frac{1}{2}x^2} dx \\
        &= (-1)^s\frac{1}{\sigma^s\sqrt{2\pi}} \int_{-\infty}^{\infty} \phi(\mu+\sigma x) \left(\frac{d^s}{dx^s} e^{-\frac{1}{2}x^2} \right)dx \\
        &= \frac{1}{\sqrt{2\pi}}\int_{-\infty}^{\infty} \phi^{(s)} (\mu+\sigma x) e^{-\frac{1}{2}x^2} dx \\
        &= \mathbb{E} \left[ \phi^{(s)} (Y)\right] \\
        &= \sum_{k=0}^{\infty} \frac{1}{k!}a_{s+k} \mathbb{E}[Y^k]\qedhere
    \end{align*}
    
\end{proof}

The terms $\mathbb{E}[Y^k]$ can be computed using the defining formula for cumulants, using the fact that the only non-zero cumulants for a Gaussian are $\kappa_1$ and $\kappa_2$, so
\begin{equation}
    \label{eq:isserlis}
    \mathbb{E}[Y^k] = \sum_{\substack{\pi \vdash [k] \\ \forall S\in \pi, |S| \leq 2}} \prod_{S\in\pi} \kappa_{|S|}[Y].
\end{equation}

\subsection{Hermite-based diagram summation formula for cumulants}
We start at the polynomial diagram summation formula, for a polynomial $\phi(t) = \sum_{k=0}^{\infty} \frac{1}{k!}a_k t^k$,
\[
\begin{aligned}
\kappa\left[\phi\left(Z_{j_1}\right),\dots,\phi\left(Z_{j_r}\right)\right]=\sum_{\underline{k}\in \mathbb{Z}_{\ge0}^r}\left(\prod_{v=1}^r\frac 1{k_v!} a_{k_v}\right)\sum_{\pi \in \mathrm{cDia}(\underline{k})}\prod_{S\in\pi}\kappa\left[Z_{j_v}:\left(v,w\right)\in S\right]
\end{aligned}.
\]
We now divide the diagram $\pi\in \mathrm{cDia}(\underline{k})$ into $r+1$ diagrams, 
\begin{itemize}
    \item $\pi'\in \mathrm{cDia}_{[\leq2]}(\underline{k}')$, consisting of all $S\in \pi$ such that $|S| \ge 3$ or $\# \{ v : S \cap [k_v] \neq\emptyset\} > 1$,
    \item for each $v\in [r]$, $\pi_v \vdash [d_v]$ consisting of all $S \in \pi$ such that $|S|\leq 2$, and $S\subseteq [k_v]$.
\end{itemize} 
The pre-image of $(\pi',\pi_1,...,\pi_r)$ under the map $\pi \mapsto (\pi',\pi_1,...,\pi_r)$ has size $\prod_{v=1}^r \binom{k'_v+d_v}{d_v}$ since the only choice is the image $\bigcup_{S\in \pi'}S \subset [k_1]\sqcup ... \sqcup [k_r]$. Therefore, we get
\begin{align*}
\kappa\left[\phi\left(Z_{j_1}\right),\dots,\phi\left(Z_{j_r}\right)\right]&=\sum_{\underline{k}'\in \mathbb{Z}_{\ge0}^r} \prod_{v=1}^r  \left( \sum_{d_v=0}^{\infty} \binom{k_v'+d_v}{d_v}\frac 1{(k_v'+d_v)!} a_{k_v'+d_v}\sum_{\substack{\pi_v\vdash[d_v] \\ \forall S\in \pi_v, |S|\leq 2}}\prod_{S\in\pi_v}\kappa_{|S|}[Z_{j_v}]\right) \\
& \hspace{2cm}  \cdot\sum_{\pi' \in \mathrm{cDia}_{[\leq2]}(\underline{k'})}\prod_{S\in\pi'}\kappa\left[Z_{j_v}:\left(v,w\right)\in S\right] \\
&= \sum_{\underline{k}'\in \mathbb{Z}_{\ge0}^r} \frac{1}{k_v!} \prod_{v=1}^r  \left( \sum_{d_v=0}^{\infty} \frac 1{d_v!} a_{k_v'+d_v} \mathbb{E}[Y_v^k]\right)\sum_{\pi' \in \mathrm{cDia}_{[\leq2]}(\underline{k'})}\prod_{S\in\pi'}\kappa\left[Z_{j_v}:\left(v,w\right)\in S\right] \\
\end{align*}
where $Y_v = \mathcal{N}(\mathbb{E}[Z_{j_v}], \mathrm{Var}[Z_{j_v}])$, and we have applied (\ref{eq:isserlis}). Comparing the summation in parentheses with Lemma \ref{lem:polyhermite} completes the proof of Theorem \ref{diagramstheorem}.

\subsection{Proof of combinatorial restatement}
\label{combrestatproof}
Recall the combinatorial restatement, Theorem \ref{thm:diagrams-restatement}, of Theorem \ref{diagramstheorem}.

\diagrestat*

This form of the diagram summation formula is obtained by rewriting Theorem~\ref{diagramstheorem} in tensor notation,
grouping diagrams by their vector partition type, and applying Proposition~\ref{prop:vec-coef}, which we prove now.

\veccoef*
\begin{proof}[Proof of Proposition~\ref{prop:vec-coef}]
    Fix $\underline{k}\in\Z_{>0}^r$ and $\nu\in\Vec(\underline{k})$.
    We wish to compute 
    \begin{equation*}
        c^{\text{vec}}_\nu := \# \{\pi \vdash \left\{\left(v,w\right):v\in[r],w\in[k_v]\right\} : \type(\pi)=\nu\},
    \end{equation*}
    which is precisely the cardinality of the orbit of some $\pi$ satisfying $\type(\pi)=\nu$ under the action of $H_{\underline{k}}:=\prod_{v=1}^r S_{k_v}$ by permuting elements
    within each block $\{(v,w): w\in[k_v]\}$ for $v\in[r]$.
    By orbit-stabilizer,
    \begin{equation}
    \label{eq:vec-orbit-stabilizer}
        c^{\text{vec}}_\nu = [H_{\underline{k}} : \operatorname{Stab}_{H_{\underline{k}}}(\pi)] = \dfrac{\prod_{v=1}^r k_v !}{|\operatorname{Stab}_{H_{\underline{k}}}(\pi)|}.
    \end{equation}
    To find the cardinality of the stabilizer of $\pi$, note that any element of $H_{\underline{k}}$ that fixes $\pi$ must map each block of $\pi$ to a block
    of the same type. It may also arbitrarily permute elements within intersections of $\pi$-blocks and $\underline{k}$-blocks.
    Thus, letting $\nu(\underline{u})$ denote the number of times $\underline{u}$ appears in $\nu$,
    \begin{equation*}
        |\operatorname{Stab}_{H_{\underline{k}}}(\pi)| = \prod_{\underline{u}} \underbrace{\nu(\underline{u})!}_{\text{\# ways to permute blocks of type $\underline{u}$}} \prod_{i=1}^r \underbrace{(u_i!)^{\nu(\underline{u})}}_{\text{\# ways to permute elements of form $(i,v)$ in each block of type $\underline{u}$}},
    \end{equation*}
    where the product is over all $\underline{u}$ appearing in $\nu$.
    Substituting into \eqref{eq:vec-orbit-stabilizer} completes the proof.
\end{proof}

\section{Proofs of preliminary results}
\label{sec:prelimproofs}

\subsection{Proofs for harmonic decompositions}\label{harmonicproof}

Here we prove Theorem \ref{thm:harmonicdecomposition}.

\harmonicdecompositionformula*

\begin{proof}
    First, for a rank $r$ tensor over $\mathbb{R}^n$, a simple induction proves that for all $k\ge 0$,
    $$
        \mathrm{tr} (g^k(\bfT ))-g^k(\mathrm{tr}(\bfT )) = \left(2kr + kn +2k(k-1)\right)g^{k-1}(\bfT ).
    $$
    Let us solve for the coefficients: the defining equations are
    \begin{align*}
        \forall s, \sum_{0\leq t \leq \lfloor \frac{r}{2}\rfloor-s} c^{\mathrm h}_{s,t}\mathrm{tr}(g^t(\mathrm{tr}^{s+t}(\bfT ))) &= 0 \\
        c^{\mathrm h}_{0,0}&=1 \\
        \forall k\ge 1, \qquad \sum_{s+t=k}c^{\mathrm h}_{s,t}&=0
    \end{align*}
    Applying the commutation relation to the first of these equations results in
    $$
        \sum_{1\leq t \leq \lfloor \frac{r}{2}\rfloor-s} \left( 2tc^{\mathrm h}_{s,t}\left(r-2s-t+\frac{n}{2}-1\right) + c^{\mathrm h}_{s,t-1} \right) g^{t-1}\mathrm{tr}^{s+t}(\bfT ) = 0.
    $$
    The fact that the coefficients here must be zero for this to hold for arbitrary $\bfT $ means that, by recursion,
    $$
    c^{\mathrm h}_{s,t} = (-1)^t \frac{(r-2s+\frac{n}{2}-2-t)!}{2^tt!(r-2s+\frac{n}{2}-2)!} c^{\mathrm h}_{s,0}
    $$
    Now we simply need to figure out the components, $c^{\mathrm h}_{s,0}$, which satisfy the third condition,
    $$
    \sum_{s+t=k} (-1)^j \frac{(r-2s+\frac{n}{2}-2-t)!}{2^tt!(r-2s+\frac{n}{2}-2)!} c^{\mathrm h}_{s,0} = 0.
    $$
    This allows us to recursively find the base coefficients, and it can be verified that
    $$
    c^{\mathrm h}_{s,0} = \frac{(r+\frac{n}{2}-2s-1)!}{2^ss!(r+\frac{n}{2}-s-1)!}.
    $$
    Therefore,
    \[
    c^{\mathrm h}_{s,t} = (-1)^t \frac{r+\frac{n}{2}-2s-1}{2^{s+t}s!t!(r+\frac{n}{2}-2s-t-1)_{s+t+1}}.\qedhere
    \]
\end{proof}

\subsection{Diagram expectations}
\label{wdiagsproof}
Here we prove Proposition \ref{wdiagsprop}.

\wdiags*

\begin{proof}
    The key result for the proof is Isserlis's Theorem (which can in fact be seen directly from the defining property of cumulants): if $(G_1,...,G_r)$ is a multivariate zero-mean Gaussian,
    $$
        \mathbb{E} [G_1...G_r] = \sum_{\pi\in\mathcal{P}_2([r])} \prod_{(u,v)=S\in \pi} \mathbb{E} [ G_uG_v].
    $$
    Applying this to entries of $\mathbf W^{(\ell+1)}$, and using the relation $\mathbb{E} [ \mathbf W^{(\ell+1)}_{i,j} \mathbf W^{(\ell+1)}_{i',j'}] = \frac{2}{n} \delta _{i,i'}\delta_{j,j'}$, we get
    $$
        \mathbb{E}\left[\prod_{i=1}^r \mathbf W^{(\ell+1)}_{i_r,j_r} \right] = \left( \frac{2}{n} \right)^{r/2} \sum _{\pi \in \mathcal{P}_2([r])} \prod_{(u,v)=S\in \pi} \delta_{i_u,i_v} \delta_{j_u,j_v}.
    $$
    Now, we apply \ref{def:symmcontr} and \eqref{tensordiagramdef}, and get
    \begin{align*}
    \mathbb{E}\left[ D \odot \mathbf W^{(\ell+1)} \right] &= \sum _{\substack{i_v=1 \\ \forall v\in I} }^n \mathbb{E} \left[\prod_{u \in V_1} \sum_{\substack{j_e=1\\ \forall e=(u,v) \in E}}^n (T_u)_{\{j_e : (u,v) \in E \} }  \prod_{v : (u,v) \in E} \mathbf W_{i_v,j_e}^{(\ell+1)} \right] \\
    &= \sum _{\substack{i_v=1 \\ \forall v\in I} }^n \sum _{ \substack{j_e=1 \\ \forall e\in E}}^n \left(\prod_{u\in V_1}(T_u)_{\{j_v : (u,v) \in E \} } \right)\mathbb{E}\left[ \prod_{e=(u,v) \in E} W_{i_v,j_e}^{(\ell+1)} \right] \\
    &= \left( \frac{2}{n} \right)^{|E|/2} \sum _{\pi \in \mathcal{P}_2(E)} \left( \sum _{\substack{j_S=1 \\ \forall S\in \pi}}^n \prod_{u\in V_1} (T_u)_{j_S : u\in e\in S\in \pi}  \right) \left( \sum_{\forall v \in I, i_v=1}^n \prod_{S\in \pi} \prod_{(u,v)= e \in S} \delta_{i_u,i_v} \right)
    \end{align*}
    Now we simply recognize the left bracket inside the sum as $D_{\pi,out}$ and the right bracket as $D_{\pi,in}$.

    The connected components statement follows from the fact that replacing a pair of edges of $E$ by a pair of identity matrices can increase the number of connected components by at most $1$.
\end{proof}

We also prove Lemma \ref{lem:cs}.

\cs*

\begin{proof}
    To actually talk about entries of $ D $, we need to pick an ordering of $ V_2 \setminus I $, say $ \sigma : [ \mathrm{rank}(D) ] \rightarrow V_2 \setminus I $. Then,
  \[
    D _{ i _{ \sigma (1) } ,...,i _{ \sigma (r) }  } = \sum_{ (i _{ w } ) \in [n] ^{ I }  }^{  } \prod_{ v \in V_1 }^{  } (T_{ v } ) _{ \left\{ i_w : (v,w) \in E \right\}  } .
  \] 
  Splitting this product over $ V_1 = U_1 \sqcup U_2 $ and applying Cauchy-Schwartz (on vectors indexed by $ [n]^I $) implies
\[
  D _{ i _{ \sigma (1) } ,..., i _{ \sigma (r) }  } ^{ 2 } \leq \left( \sum_{ (i _{ w } ) \in [n] ^{ I }  }^{  } \prod_{ v \in U_1 }^{  } (T _{ v } )^2 _{ \left\{ i_w : (v,w) \in E \right\}  }  \right) \left( \sum_{ (i_w) \in [n]^I }^{  } \prod_{ v \in U_2 }^{  } (T_v) _{ \left\{ i_w : (v,w) \in E \right\}  } ^{ 2 }  \right) 
\] 
These bracketed values are clearly equal to $ (D_1) _{ i _{ \sigma (1) } ,...,i _{ \sigma (r) }  }  $ and $ (D_2) _{ i _{ \sigma (1) } ,...,i _{ \sigma (r) }  }  $ respectively. This proves non-negativity and the inequality.

Raising this inequality to the power $ p $ and taking expectations, we can apply Cauchy-Schwartz again, 
\[
  \mathbb{ E } _{ \boldsymbol\theta } \left[ \left| D _{ i _{ \sigma (1) } ,...,i _{ \sigma (r) }  }  \right| ^{ p }  \right] \leq \mathbb{ E } _{ \boldsymbol\theta } \left[ (D_1) _{ i _{ \sigma (1) } ,..., i _{ \sigma (r) }  } ^{ p }  \right] ^{ \frac{1}{2} } \mathbb{ E } _{ \boldsymbol\theta } \left[ (D_2) _{ i _{ \sigma (1) } ,..., i _{ \sigma (r) }  } ^{ p }  \right] ^{ \frac{1}{2} } 
\] 
which completes the growth claim.
\end{proof}

\subsection{Proof of power cumulant formula}
\label{powercumulantsproof}

Here, we restate and prove Proposition~\ref{powercumulantsprop}.

\powercumulantsformula*

\begin{proof}

Fix $\pi\vdash[r]$ and $(i_1,\ldots, i_r)$ of type $\pi$.
Let $\mu$ denote the M\"obius function on the partition lattice, so that $\mu(\pi)=(-1)^{|\pi|-1}(|\pi|-1)!$.
Then,
\begin{align}
    \label{eq:pK1}
    \kappa[X_{i_1},\ldots, X_{i_r}] &= \sum_{\tau\vdash[r]} \mu(\tau) \prod_{B\in\tau} \E\left[\prod_{j\in B} X_{i_j}\right]\\
    \label{eq:pK2}
    &=\sum_{\tau\vdash[r]} \mu(\tau) \prod_{B\in\tau} \E\left[\prod_{C\in\pi|_B} X_{i_{\min(C)}}^{|C|}\right]\\
    \label{eq:pK3}
    &=\sum_{\tau\vdash[r]} \mu(\tau) \prod_{B\in\tau}\left(\sum_{\rho_B \geq \pi|_B}\prod_{D\in\rho_B}  \kappa^{\mathsf P}_{|D|}[X]_{i_j: j\in D}\right)\\
    \label{eq:pK4}
    &=\sum_{\tau\in[r]}\mu(\tau)\sum_{\pi\wedge\tau\leq\rho\leq\tau}\prod_{D\in\rho}\kappa^{\mathsf P}_{|D|}[X]_{i_j: j\in D}\\
    \label{eq:pK5}
    &=\sum_{\rho\vdash[r]}\left(\sum_{\substack{\tau\vdash[r]\\\tau\geq \rho\\\pi\wedge\tau\leq\rho}}\mu(\tau)\right)\prod_{D\in\rho}\kappa^{\mathsf P}_{|D|}[X]_{i_j: j\in D}.
\end{align}
where
\begin{itemize}
\item \eqref{eq:pK1} is the usual M\"obius inversion to go from moments to cumulants,
\item \eqref{eq:pK2} is grouping $X_{i_j}$ by $\type(i_j: j\in B)=\pi|_B$,
\item \eqref{eq:pK3} is the usual sum over partitions to go from cumulants to moments (separately for each $B$),
\item \eqref{eq:pK4} is the observation that the product of partition lattices of blocks of $\tau$ is in bijection with the sublattice beneath $\tau$ in the partition lattice of $[r]$.
\end{itemize}
Now, let $\lambda\vdash r$ be the type of $\pi$.
Then \eqref{eq:pK5} is written in diagonal slice notation as 
\begin{equation*}
    \tilde\Diag_\lambda \kappa_r[X] = \sum_{\rho\vdash[r]}\left(\sum_{\substack{\tau\vdash[r]\\\tau\geq \rho\\\pi\wedge\tau\leq\rho}}\mu(\tau)\right) \prod_{D\in\rho}\tilde\Diag_{\type(\pi|_D)}\kappa_{|D|}\pow[X],
\end{equation*}
and the Proposition follows from grouping $(\pi,\rho)$ of the same vector partition type.
(Note that $\{\tau\vdash[r]:\tau\geq\rho\}$ is in bijection with $\{\tilde\tau\vdash\rho\}$, and $\pi\wedge\tau\leq\rho$ iff each block of $\tilde\tau$
contains blocks of $\rho$ such that no pair of them intersects the same block of $\pi$, whence the sum over totally disconnected set partitions
of the vector partition corresponding to $(\pi,\rho)$.)
\end{proof}



This Proposition tells us that each entry of the cumulants tensors is calculated in $O(1)$ time from the power cumulants.

\subsection{Proofs of diagonal-harmonic formulae}
\label{sdiagharmonic}
This section is devoted to proofs of Propositions \ref{prop:trace-delta} and \ref{prop:delta-cup}. Before restating and proving the results, we prove some necessary preliminaries.

\begin{lemma}
    Let $\lambda=(\lambda_1,\ldots,\lambda_b)=(1^{b_1}\ldots r^{b_r})\vdash r$. Then
    \begin{equation*}
        c^{\mathrm{part}}_\lambda := \#\{\pi\vdash[r]: \type(\pi)=\lambda\} = \dfrac{r!}{\prod_{i=1}^b \lambda_i! \prod_{j=1}^r b_j!}.
    \end{equation*}
\end{lemma}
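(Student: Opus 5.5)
The plan is to count set partitions of $[r]$ of type $\lambda$ by an orbit--stabilizer argument, exactly parallel to the proof of Proposition~\ref{prop:vec-coef}. The symmetric group $S_r$ acts on set partitions of $[r]$ by relabelling elements. This action preserves the multiset of block sizes, so it preserves type. It is also transitive on partitions of a fixed type $\lambda$: given two partitions of type $\lambda$, list the blocks of each in order of decreasing size, matching equal-sized blocks arbitrarily. Then choose any bijection $[r]\to[r]$ that sends the $i$th block of the first partition onto the $i$th block of the second. Hence $c^{\mathrm{part}}_\lambda = r!/|\operatorname{Stab}_{S_r}(\pi)|$ for any fixed $\pi$ with $\type(\pi)=\lambda$.

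The next step is to compute the stabilizer. A permutation $\sigma$ fixes $\pi$ exactly when it maps each block of $\pi$ onto some block of $\pi$, necessarily one of the same size. So $\sigma$ induces a permutation of the blocks that preserves sizes. Among the $b_j$ blocks of size $j$ this is an arbitrary element of $S_{b_j}$, contributing $\prod_{j=1}^r b_j!$ choices. Once the induced block permutation is fixed, $\sigma$ restricted to each block $B_i$ is an arbitrary bijection onto its image block, giving $\lambda_i!$ choices for each $i\in[b]$.

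Conversely, any such choice of block permutation together with within-block bijections defines a unique element of $\operatorname{Stab}_{S_r}(\pi)$. This gives $|\operatorname{Stab}_{S_r}(\pi)| = \prod_{j} b_j!\prod_{i} \lambda_i!$. Substituting into the orbit--stabilizer formula yields the claimed expression.

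There is no real obstacle. The only point needing care is the bijection in the stabilizer count, namely that the induced block permutation and the within-block bijections determine $\sigma$ uniquely. This holds because the blocks partition $[r]$, so the stabilizer is the wreath-product-like group $\prod_j (S_j\wr S_{b_j})$, whose order is $\prod_j (j!)^{b_j} b_j!$; this matches the count above.
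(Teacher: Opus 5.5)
Your proposal is correct and matches the paper's proof: orbit--stabilizer for the relabelling action of $S_r$ on set partitions, with the stabilizer of a fixed partition of type $\lambda$ counted as size-preserving block permutations ($\prod_j b_j!$) times within-block bijections ($\prod_i \lambda_i!$). You also justify that the action is transitive on partitions of type $\lambda$, a step the paper leaves implicit.
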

\begin{proof}
    By orbit-stabilizer applied to the action of $S_n$.
    Fix a particular $\pi\vdash[r]$ with $\type(\pi)=\lambda$.
    A permutation that stabilizes $\pi$ can only permute blocks of the same size ($\prod_j b_j!$ permutations) and then permute elements within each block ($\prod_i m_i!$ permutations).
    Thus the stabilizer has cardinality $\prod_i \lambda_i! \prod_j b_j!$.
\end{proof}

\begin{definition}
    A \textit{$m$-partial pairing} of $[r]$ is a set of $m$ disjoint subsets $[r]$ each of cardinality two.
    We notate the set of $m$-partial pairings of $[r]$ as $\Pair(r,m)$.
    Given $\lambda\vdash r$ with $b$ blocks, write $\pi_\lambda\vdash[r]$ for the set partition of type $\lambda$
    formed by partitioning $[r]$ into consecutive contiguous blocks of cardinality $\lambda_1,\ldots,\lambda_b$:
    \begin{equation*}
        \pi_\lambda = (\{1,\ldots, \lambda_1\}, \{\lambda_1+1,\ldots, \lambda_1+\lambda_2\}, \ldots, \{r-\lambda_b+1,\ldots,r\}).
    \end{equation*}
    We call this the \textit{sorted set partition} of type $\lambda$.
    The \textit{$\lambda$-type} of $\rho\in\Pair([r],m)$, which we denote $\type_\lambda(\rho)$, is the multigraph with $m$ edges on $[b]$
    induced by forgetting the ordering on elements within each block of $\pi_{\lambda}$.
\end{definition}

\begin{lemma}
    Let $\lambda\vdash[r]$ with $b$ blocks, and $\gamma\in\Gamma([b],m)$. Then
    \begin{equation}
    \label{eq:c-g}
        \tilde c^{\mathrm g}_{\lambda,\gamma}:=\#\{\rho\in\Pair([r],m) : \type_\lambda(\rho)=\gamma\} = 
        \dfrac{\prod_{i\in[b]} (\lambda_i-d_\gamma(i)+1)_{d_\gamma(i)}}
        {\prod_{i\in[b]}2^{\gamma(\{i\})} \prod_{\{i,j\}\subseteq[b]} \gamma(\{i,j\})!}.
    \end{equation}
\end{lemma}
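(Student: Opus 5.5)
The plan is a direct count: build each partial pairing $\rho$ with $\type_\lambda(\rho)=\gamma$ in stages, then divide by the overcounting introduced by ordered choices. Fix the sorted set partition $\pi_\lambda$ with blocks $B_1,\dots,B_b$, where $|B_i|=\lambda_i$. A pairing $\rho$ of $\lambda$-type $\gamma$ consists of $\gamma(\{i,j\})$ pairs with one endpoint in $B_i$ and one in $B_j$ for each $i\neq j$, together with $\gamma(\{i\})$ pairs lying inside $B_i$. Consequently exactly $d_\gamma(i)$ elements of $B_i$ are used as endpoints. If $d_\gamma(i)>\lambda_i$ for some $i$, there are no such $\rho$. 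In that case the Pochhammer symbol $(\lambda_i-d_\gamma(i)+1)_{d_\gamma(i)}$ contains the factor $0$, so the formula correctly gives zero, and we may assume $d_\gamma(i)\le\lambda_i$ for all $i$.

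\textbf{Step 1: label the half-edges.} For each vertex $i$, list the $d_\gamma(i)$ half-edges of $\gamma$ incident to $i$, counting a self-loop as two half-edges, in some fixed order. An injective assignment of these half-edges to elements of $B_i$ can be made in $\lambda_i(\lambda_i-1)\cdots(\lambda_i-d_\gamma(i)+1)=(\lambda_i-d_\gamma(i)+1)_{d_\gamma(i)}$ ways. Doing this independently for every block produces $\prod_i(\lambda_i-d_\gamma(i)+1)_{d_\gamma(i)}$ labelled configurations. Each configuration determines a pairing $\rho$ by joining the two elements assigned to the two ends of each edge of $\gamma$, and this $\rho$ has $\lambda$-type $\gamma$. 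Conversely, every such $\rho$ arises in this way.

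\textbf{Step 2: quotient by the symmetries of $\gamma$ as a labelled multigraph.} Two labelled configurations give the same $\rho$ exactly when they differ by one of two operations. The first is permuting the $\gamma(\{i,j\})$ parallel edges between $i$ and $j$, which gives $\gamma(\{i,j\})!$ choices per pair, including $i=j$. The second is swapping the two ends of a self-loop, which gives $2^{\gamma(\{i\})}$ choices per vertex. These operations form a group acting freely on the labelled configurations, since all assigned elements are distinct. Each fibre therefore has size $\prod_i 2^{\gamma(\{i\})}\prod_{\{i,j\}}\gamma(\{i,j\})!$, and dividing gives \eqref{eq:c-g}.

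The main obstacle is Step 2: we must check that this group is the \emph{entire} fibre, so there is no further overcounting. I would argue this directly. Given $\rho$, each of its pairs is a set $\{x,y\}$ with $x\in B_i$ and $y\in B_j$. It must be matched to one of the $\gamma(\{i,j\})$ edges of $\gamma$ of that type, which is a bijection within each edge class. When $i\neq j$, the orientation of the pair is forced by which endpoint lies in $B_i$. When $i=j$, there is the additional binary choice of orientation. Once these choices are made, the labelling is determined, so the fibre is exactly an orbit of the group. Finally, I would check the edge case $\gamma(\{i,i\})$: it is counted once in $\gamma(\{i,j\})!$ as a multiset of loops and once by $2^{\gamma(\{i\})}$ for orientations. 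This matches the stated denominator, reading $\prod_{\{i,j\}\subseteq[b]}$ as ranging over pairs with possibly $i=j$.
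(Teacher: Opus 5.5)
Your proposal is correct and follows the same route as the paper's proof. Both count injective assignments of the half-edges of $\gamma$ to elements of each block, giving $\prod_i(\lambda_i-d_\gamma(i)+1)_{d_\gamma(i)}$, and then divide by the $2^{\gamma(\{i\})}$ loop-orientation swaps and the $\gamma(\{i,j\})!$ permutations of parallel edges. You also verify that this symmetry group acts freely with fibres exactly equal to orbits, and you handle the case $d_\gamma(i)>\lambda_i$; the paper's brief argument leaves both points implicit.
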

\begin{proof}
    One approach is another orbit-stabilizer argument. 
    Alternatively, $\prod_{i\in[b]} (\lambda_i-d_\gamma(i)+1)_{d_\gamma(i)}$ is the number of ways to assign endpoints $s_i,s_j\in[r]$
    to each edge $\{i,j\}\subseteq[b]$ such that each $s_i$ is in block $i$ and $s_j$ is in block $j$.
    This double-counts the loops (since in that case $i=j$), which contributes the factor $\prod_{i}2^{\gamma(\{i\})}$.
    We also overcount by permutations of edges between the same two vertices, which contributes the factor $\prod_{\{i,j\}} \gamma(\{i,j\})!$
\end{proof}

\begin{definition}
    Let $\rho\in\Pair([r], m)$. Then $\tr_{\rho}\colon \tens{r}{n}\to \tens{r-2m}{n}$ is the linear map that traces out indices corresponding to each pair in $\rho$.
    To be more precise, fix an ordering of pairs $\rho=\{\{u_1,v_1\},\ldots,\{u_m,v_m\}\}$, and sort the vertices
    \begin{equation*}
    \{u_1,v_1,\ldots, u_m, v_m\} = \{c_1,\ldots, c_{2m}\} \qquad 0=c_0<c_1<\ldots <c_m.
    \end{equation*}
    Then, for $\bfT\in\syms{n}{r}$,
    \begin{equation*}
        (\tr_\rho\bfT)_{i_1,\ldots, i_{r-2m}} = \sum_{i_{r-2m+1},\ldots, i_{r-m}=1}^n \bfT_{i_{j_1},\ldots,i_{j_r}},
    \end{equation*}
    where the indices $j$ satisfy
    \begin{equation*}
        j_s = \begin{cases}
            r-2m+k & s\in\{u_k,v_k\}\\
            s-t & c_t < s < c_{t+1}.
        \end{cases}
    \end{equation*}
    As a slight abuse of notation, we may think of the standard $\tr$ operator as mapping $\tens{r}{n}\to \tens{r-2}{n}$
    by contracting the first two indices, i.e.
    \begin{equation}
    \label{eq:trace12}
    \tr:=\tr_{\{\{1,2\}\}}.
    \end{equation}
    This agrees with the original definition of $\tr$ when restricted to $\syms{r}{n}\subseteq\tens{r}{n}$.
\end{definition}

\begin{definition}
    For $\bfM\in\syms{2}{r}$, define $g_{\bfM,\rho}\colon \tens{r-2m}{n}\to\tens{r}{n}$ by
    \begin{equation*}
        (g_{\bfM,\rho} \bfT)_{i_1,\ldots, i_r} = \bfT_{i_{j'_1},\ldots,i_{j'_{r-2m}}} \prod_{\{u,v\}\in\rho}^m\bfM_{i_{u}, i_{v}}
    \end{equation*}
    where
    \begin{equation*}
        j'_s = s+t \quad \text{when} \quad c_t < s < c_{t+1}.
    \end{equation*}
    Note then that
    \begin{equation}
    \label{eq:sum-cup-pair}
        g^m_{\bfM} = \sum_{\rho\in\Pair([r],m)}g_{\bfM,\rho}.
    \end{equation}
\end{definition}

\begin{definition}
    For $\sigma\in S_r$, the operator $\Perm_\sigma\tens{r}{n}\to\tens{r}{n}$ acts by permuting indices, i.e.
    \begin{equation*}
        (\Perm_\sigma \bfT)_{i_1,\ldots, i_r} = \bfT_{i_{\sigma(1)},\ldots,i_{\sigma(r)}}.
    \end{equation*}
\end{definition}

Now we can prove the required results. Specifically,

\tracedelta*

\begin{proof}[Proof of Proposition~\ref{prop:trace-delta}]
\label{sec:proof-harmonic-diag}
    Let $\lambda\vdash r$ with $b$ blocks, and let $\pi_\lambda\vdash[r]$ be the sorted set partition of type $\lambda$.
    For $0\leq m\leq \lfloor r/2\rfloor$ and $\rho\in\Pair([r],m)$ and any $\bfS\in\syms{r}{n}$

    Let us consider the operator $\tr_\rho\Diag_{\pi_\lambda} \bfS$ for some $\rho\in\Pair([r],m)$ and $\bfS\in\syms{r}{n}$.
    If $\type_\lambda(\rho)$ is not totally disconnected,
    then there exists some $\{u,v\}\in\rho$ such that $u\not\sim_{\pi_\lambda} v$.
    By definition $\tr_\rho$ sums over entries corresponding to indices with $i_u=i_v$, 
    and the output of $\Diag_{\pi_\lambda}$ is nonzero only when $i_u\neq i_v$,
    whence $\tr_\rho\Diag_{\pi_\lambda}\bfS=0$.

    What happens when $\type_\lambda(\rho)$ is totally disconnected?
    The result depends only on $\gamma=\type_\lambda(\rho)\leq\lambda$, where the inequality is taken elementwise.
    For $j\in[b]$ such that $\lambda_j=d_\gamma(j)$, all indices in the $j\nth$ block are summed over,
    corresponding to summing over the $j\nth$ index of $\tilde\Diag_\lambda\bfS$.
    On the other hand, if $\lambda_j>d_\gamma(j)$, there are $\lambda_j-d_\gamma(j)$ free indices
    in block $j$ not summed over.
    By the definition of $\Diag_{\pi_\lambda}$, if there are multiple such indices, they must be set to the same value
    for the corresponding entry to be nonzero. 
    Across all indices, this corresponds to applying $\tilde\Diag^{-1}_{\lambda_j-d_\gamma(j)}$ to the result of summing
    over indices of $\tilde\Diag_\lambda\bfS$ with $\lambda_j>d_\gamma(j)$.
    We conclude
    \begin{equation}
    \label{eq:tr-rho-diag}
        \tr_\rho\Diag_{\pi_\lambda}\bfS = \tilde\Diag_{\lambda-d_{\gamma}}^{-1}\operatorname*{Sum}_{\{j:\lambda_j=d_{\gamma}(j)\}} \tilde\Diag_\lambda\mathbf S,
    \end{equation}
    
    
    Hence,
    \begin{align*}
        \tr^m\Diag_\lambda\bfT 
        &\stackrel{\text{(\ref{eq:sum-diag-pi})}}{=}\tr^m\left( \sum_{\substack{\pi\vdash[r]\\\type(\pi)=\lambda}} \Diag_\pi \bfT \right)\\
        &=\tr^m\left( \dfrac{c^{\mathrm{part}}_\lambda}{r!}\sum_{\sigma\in S_r} (\Perm_\sigma\Diag_{\pi_\lambda})\bfT \right)\nonumber\\
        &=\dfrac{c^{\mathrm{part}}_\lambda}{r!} \left(  \sum_{\sigma\in S_r} (\tr^m \Perm_\sigma) \Diag_{\pi_\lambda}\right)   \bfT\nonumber\\
        &\stackrel{\text{(\ref{eq:trace12})}}{=}\dfrac{c^{\mathrm{part}}_\lambda}{\binom{r}{2m}(2m-1)!!} \left(  \sum_{\rho\in \Pair([r],m)} \tr_\rho \Diag_{\pi_\lambda}\right) \bfT\nonumber\\
        &\stackrel{\text{(\ref{eq:tr-rho-diag})}}{=}\dfrac{c^{\mathrm{part}}_\lambda}{\binom{r}{2m}(2m-1)!!} \left(\sum_{\rho\in \Pair([r],m)} \mathbbm{1}_{\type_{\lambda}(\rho)\text{ t.d.}}\tilde\Diag_{\lambda-d_{\type_{\lambda}(\rho)}}^{-1}\operatorname*{Sum}_{\{j:\lambda_j=d_{\type_{\lambda}(\rho)}(j)\}} \right)  \tilde\Diag_\lambda \bfT\\
        &\stackrel{\text{(\ref{eq:c-g})}}{=}\dfrac{c^{\mathrm{part}}_\lambda}{\binom{r}{2m}(2m-1)!!} \left( \sum_{\gamma\in\Gamma_{\text{t.d.}}([b], m)} \tilde c^{\mathrm g}_{\lambda,\gamma}\mathbbm{1}_{\lambda\geq d_\gamma} \tilde\Diag_{\lambda-d_{\gamma}}^{-1}\operatorname*{Sum}_{\{j:\lambda_j=d_{\gamma}(j)\}}\right)  \tilde\Diag_\lambda \bfT\\
        &= \left( \sum_{\gamma\in\Gamma_{\text{t.d.}}([b], m)} c^{\mathrm g}_{\lambda,\gamma}\mathbbm{1}_{\lambda\geq d_\gamma}\tilde\Diag_{\lambda-d_{\gamma}}^{-1}\operatorname*{Sum}_{\{j:\lambda_j=d_{\gamma}(j)\}}\right)  \tilde\Diag_\lambda \bfT.\nonumber\qedhere
    \end{align*}
\end{proof}
%

\deltacup*

\begin{proof}[Proof of Proposition~\ref{prop:delta-cup}]
First, for $\lambda\vdash r$ with $b$ blocks and $\rho\in\Pair([r], m)$, calculate
\begin{align}
    (\tilde\Diag_\lambda g_{\bf M,\rho} \bfT)_{i_1,\ldots, i_b} &= (g_{\bfM,\rho} \bfT)_{\underbrace{\scriptstyle{i_1,\ldots, i_1}}_{\text{$\lambda_1$ times}},\ldots, \underbrace{\scriptstyle{i_b,\ldots, i_b}}_{\text{$\lambda_b$ times}}}\nonumber\\
    &=\bfT_{\underbrace{\scriptstyle{i_1,\ldots, i_1}}_{\text{$\lambda_1-d_{\type_\lambda(\rho)}(1)$ times}},\ldots, \underbrace{\scriptstyle{i_b,\ldots, i_b}}_{\text{$\lambda_b-d_{\type_\lambda(\rho)}(b)$ times}}} \prod_{\{u,v\}\in\type_\lambda(\rho)} \bfM_{i_u,i_v}\mathbbm{1}[\text{$\underline i$ distinct}]\nonumber \\
    &=(\tilde\Diag_{\lambda-d_{\type_\lambda(\rho)}} \bfT )_{i_1,\ldots, i_b} \prod_{\{u,v\}\in\gamma} \mathbf M_{i_u,i_v} \mathbf{1}[\text{$\underline i$ all distinct}]\nonumber\\
    \label{eq:odot-gamma}
    &=: (\bfT\odot_{\type_\lambda\rho} \bfM)_{i_1,\ldots, i_b}.
\end{align}
Hence,
\begin{align*}
    \tilde\Diag_\lambda g^m_\bfM(\bfT) &\stackrel{\text{(\ref{eq:sum-cup-pair})}}{=} \sum_{\rho\in\Pair([r],m)} \tilde\Diag_\lambda g_{\bfM,\rho}\bfT\\
    &\stackrel{\text{(\ref{eq:odot-gamma})}}{=}\sum_{\rho\in\Pair([r],m)}  \bfT\odot_{\type_\lambda\rho} \bfM\\
    &\stackrel{\text{(\ref{eq:c-g})}}{=}\sum_{\gamma\in\Gamma([b],m)} \tilde c^{\mathrm g}_{\lambda,\gamma} \bfT\odot_{\gamma} \bfM.\qedhere
\end{align*}
\end{proof}

\subsection{Factorized cumulant propagation for arbitrary \texorpdfstring{$\maxcumulantorder\geq 3$}{K >= 3}}
\label{sfactorized}
\begin{proof}[Proof of Theorem~\ref{beatingsamplingtheorem}]
    Given Theorem~\ref{matchingsamplingtheorem} (proved in Section~\ref{smainproof}),
    it suffices to show that the factorization strategy exhibited in Section~\ref{sec:factorized-algo} for $\maxcumulantorder=3$ can be extended to $\maxcumulantorder>3$.
    The largest cumulant tensor has arity $\maxcumulantorder>3$ and we factor it as
    \begin{equation}
    \label{eq:gen-fac}
        \bfT_{i_1,\ldots, s_\maxcumulantorder} = 
        \dfrac{1}{\maxcumulantorder!}\sum_{\sigma\in S_\maxcumulantorder}\sum_{s=2}^{\lfloor r/2\rfloor}\sum_{t=1}^T\bfA^{(s)}_{i_{\sigma(1)}\ldots i_{\sigma(s)},t} \bfB^{(s)}_{i_{\sigma(s+1)}\ldots i_{\sigma(\maxcumulantorder)},t}.
    \end{equation}
Lower-degree cumulants are left unfactored.

For factorized cumulant propagation through each layer to run in time $O(n^{\maxcumulantorder})$, we need to verify each of the following:

\begin{itemize}[leftmargin=0.5cm]
\item
\textbf{Symmetric contraction $-\odot\bfW$ can be done in $O(n^\maxcumulantorder)$ time when $T=O(n)$}\\
Symmetric contraction of the factorized representation is done by contracting $\bfW$
into each of the indices other than $t$ of all of the factors $\bfA^{(s)}$ and $\bfB^{(s)}$ of \eqref{eq:gen-fac}.
Since the largest such factor has arity $\maxcumulantorder-1$, this takes time $O(n^{\maxcumulantorder-1} T)=O(n^{\maxcumulantorder})$.

\item
\textbf{Hermite coefficients can be multiplied in in time $O(n^\maxcumulantorder)$ time when $T=O(n)$}\\
This is clear because multiplying in Hermite coefficients is an elementwise operation on the factors.

\item
\textbf{Non-discrete diagonals can be extracted in $O(n^\maxcumulantorder)$ time when $T=O(n)$}\\
Let $\lambda\vdash r$ be other than the discrete partition $(1,\ldots, 1)$.
Then $\tilde \Diag_\lambda\bfT$ has no more than $n^{\maxcumulantorder-1}$ entries,
each of which can be computed from \eqref{eq:gen-fac} in time $T$.
Thus the total time is $O(n^{\maxcumulantorder-1} T)=O(n^{\maxcumulantorder})$.

\item
\textbf{Non-discrete diagonals can be factored into $T=\Theta(n)$ factors}\\
Given $\lambda\vdash r$ other than the discrete partition $(1,\ldots, 1)$, we claim
$\Diag_\lambda\bfT = \tilde\Diag_\lambda^{-1}\tilde\Diag_\lambda\bfT$
has a factorization in terms of $\tilde\Diag_\lambda\bfT$ with $\Theta(n)$ factors.
And indeed, since $\lambda_1>1$,
\begin{align*}
    (\Diag_\lambda\bfT)_{i_1,\ldots, i_r} 
    &=\mathbbm{1}[i_1=\cdots=i_{\lambda_1}](\tilde\Diag^{-1}_{(1,\lambda_2,\ldots,\lambda_n)}\tilde\Diag_\lambda\bfT)_{i_1,i_{\lambda_1+1},\ldots, i_r}\\
    &=\sum_{t=1}^n\delta^{(\lambda_1)}_{i_1,\ldots,i_{\lambda_1},t}(\tilde\Diag^{-1}_{(1,\lambda_2,\ldots,\lambda_n)}\tilde\Diag_\lambda\bfT)_{t,i_{\lambda_1+1},\ldots, i_r}
\end{align*}
is a factorization with $n$ factors,
where $\delta^{(\lambda_{1})}\in\syms{\lambda_{1}}{n}$ is the Kronecker delta, i.e.\ the tensor that is 1 along the $(\lambda_1)$-diagonal and zero elsewhere.

\item
\textbf{Power cumulant diagrams can be factored into $T=\Theta(n)$ factors}\\
Consider the set of diagrams contributing to the $(1,\ldots,1)\vdash \maxcumulantorder$ diagonal slice of the activation power cumulant,
which is the only slice that needs to be factored.
The key observation is that, 
because we are dealing with diagrams with the maximum possible arity,
if a vector partition of $(1,\ldots,1)$ is to satisfy \eqref{eq:vec-part-weight},
then it cannot have any cycles when viewed as a multi-hypergraph on $[\maxcumulantorder]$.
Indeed, an acyclic connected diagram $\nu$ already makes \eqref{eq:vec-part-weight} tight:\footnote{
This is a generalization of the standard fact that a tree on $V$ vertices has exactly $V-1$ edges. It can be proved using a straightforward inductive argument.
}
\begin{equation*}
    k(\nu)=1+\sum_{\underline{u}\in\nu}\left(\sum_{j=1}^\maxcumulantorder \left\lfloor \frac{u_j}{2}\right\rfloor-1\right)\geq1+ \sum_{\underline{u}\in\nu}\left(\sum_{j=1}^\maxcumulantorder u_j-1\right) =  \maxcumulantorder,
\end{equation*}
and completing a cycle with a hyperedge would cause the condition to be broken.

This leaves only a few possibilities for $\nu$:
\begin{enumerate}[label=(\alph*), leftmargin=0.5cm]
\item Each block of the $\nu$ has degree no more than $\maxcumulantorder-1$, in which case there must
be at least two blocks, and thus there exists a block with degree $\leq \maxcumulantorder-2$. 
(Since two blocks each of degree $\geq \maxcumulantorder-1$ must form a cycle.)
Call this block $\bfC\in\syms{c}{n}$ and the rest of the diagram $\bfD\in\syms{d}{n}$,
with $c,d\geq 2$ and $c\leq \maxcumulantorder-2$.
By the acyclicity and connectedness conditions, $\bfC$ and $\bfD$ must be incident along exactly one vertex.
Grouping by orbits under the action of $S_\maxcumulantorder$ on indices, the contribution corresponding to this diagram is then
\begin{align*}
    \bfT_{i_1,\ldots, s_\maxcumulantorder} &= c_\nu\sum_{\sigma\in S_\maxcumulantorder}\Perm_\sigma\left(\bfC_{i_1,\ldots,i_c}\bfD_{i_c,i_{c+1},\ldots, s_\maxcumulantorder}\right)\\
    &=c_\nu\sum_{\sigma\in S_\maxcumulantorder} \Perm_\sigma \left(\sum_{i=1}^n (\bfC_{i_1,\ldots,i_{c-1},i}\delta^{(2)}_{i_c,i}\bfD_{i,i_{c+1},\ldots, s_\maxcumulantorder}\right),
\end{align*}
where $c_{\sigma,\nu}$ is the appropriate coefficient coming from orbit-stabilizer.
This is a factorization into $n$ factors.
Note that the first factor $\bfC_{i_1,\ldots,i_c}\delta^{(2)}_{i_c,i}$ has arity $c+1\leq\maxcumulantorder-1$,
and so is indeed a valid factorization.

\item $\nu=\{(1,\ldots, 1)\}$, and the corresponding tensor is the already factorized $(1,\ldots, 1)$-slice from the previous layer.
\item $\maxcumulantorder$ is odd and $\nu=\{(2,\ldots,1)\}$ or one of its permutations.
Let $\bfM=\bfW\bfW\tran.$
Since in this case $i_{\maxcumulantorder}(\maxcumulantorder+1)=(\maxcumulantorder+1)/2$,
the corresponding tensor is $g_\bfM^{(\maxcumulantorder+1)/2}\tilde\eta_{\maxcumulantorder+1}[X]$,
with $\eta_{\maxcumulantorder+1}[X]$ a scalar.
Hence it suffices to write $g^{(\maxcumulantorder+1)/2}_\bfM1$ as a single factor
\begin{equation*}
    (g_\bfM^{(\maxcumulantorder+1)/2}1)_{i_1,\ldots, i_{\maxcumulantorder+1}} 
    =\dfrac{(\maxcumulantorder-1)!!}{\maxcumulantorder!}\sum_{\sigma\in S_\maxcumulantorder} \sum_{t=1}^1 \left(\bfM_{i_{\sigma(1)},i_{\sigma(2)}} \mathbf{1}_t\right) \left(\mathbf{1}_t \prod_{j=2}^{(\maxcumulantorder+1)/2}\bfM_{i_{\sigma(2j-1)},i_{\sigma(2j)}}\right).
\end{equation*}

\end{enumerate}
\end{itemize}
\end{proof}

\ifpreprint{}{
\clearpage
\section*{NeurIPS Paper Checklist}

\begin{enumerate}

\item {\bf Claims}
    \item[] Question: Do the main claims made in the abstract and introduction accurately reflect the paper's contributions and scope?
    \item[] Answer: \answerYes{}
    \item[] Justification: The abstract and introduction clearly summarize the main theoretical and empirical results.
    \item[] Guidelines:
    \begin{itemize}
        \item The answer \answerNA{} means that the abstract and introduction do not include the claims made in the paper.
        \item The abstract and/or introduction should clearly state the claims made, including the contributions made in the paper and important assumptions and limitations. A \answerNo{} or \answerNA{} answer to this question will not be perceived well by the reviewers. 
        \item The claims made should match theoretical and experimental results, and reflect how much the results can be expected to generalize to other settings. 
        \item It is fine to include aspirational goals as motivation as long as it is clear that these goals are not attained by the paper. 
    \end{itemize}

\item {\bf Limitations}
    \item[] Question: Does the paper discuss the limitations of the work performed by the authors?
    \item[] Answer: \answerYes{}
    \item[] Justification: The assumption that the weights are randomly initialized, as well as the reliance on large width, are clearly stated in the title, abstract, introduction and throughout the paper. The poor depth dependence is mentioned in the introduction and discussed further in Appendix \ref{depthappendix}. The potential for extending the results to partially-trained networks is discussed in Section \ref{lpediscussionsubsection}.
    \item[] Guidelines:
    \begin{itemize}
        \item The answer \answerNA{} means that the paper has no limitation while the answer \answerNo{} means that the paper has limitations, but those are not discussed in the paper. 
        \item The authors are encouraged to create a separate ``Limitations'' section in their paper.
        \item The paper should point out any strong assumptions and how robust the results are to violations of these assumptions (e.g., independence assumptions, noiseless settings, model well-specification, asymptotic approximations only holding locally). The authors should reflect on how these assumptions might be violated in practice and what the implications would be.
        \item The authors should reflect on the scope of the claims made, e.g., if the approach was only tested on a few datasets or with a few runs. In general, empirical results often depend on implicit assumptions, which should be articulated.
        \item The authors should reflect on the factors that influence the performance of the approach. For example, a facial recognition algorithm may perform poorly when image resolution is low or images are taken in low lighting. Or a speech-to-text system might not be used reliably to provide closed captions for online lectures because it fails to handle technical jargon.
        \item The authors should discuss the computational efficiency of the proposed algorithms and how they scale with dataset size.
        \item If applicable, the authors should discuss possible limitations of their approach to address problems of privacy and fairness.
        \item While the authors might fear that complete honesty about limitations might be used by reviewers as grounds for rejection, a worse outcome might be that reviewers discover limitations that aren't acknowledged in the paper. The authors should use their best judgment and recognize that individual actions in favor of transparency play an important role in developing norms that preserve the integrity of the community. Reviewers will be specifically instructed to not penalize honesty concerning limitations.
    \end{itemize}

\item {\bf Theory assumptions and proofs}
    \item[] Question: For each theoretical result, does the paper provide the full set of assumptions and a complete (and correct) proof?
    \item[] Answer: \answerYes{}
    \item[] Justification: Full assumptions are given in theorem statements. Full proofs of all theoretical results are given in the technical supplement, Appendix \ref{supplementappendix}.
    \item[] Guidelines:
    \begin{itemize}
        \item The answer \answerNA{} means that the paper does not include theoretical results. 
        \item All the theorems, formulas, and proofs in the paper should be numbered and cross-referenced.
        \item All assumptions should be clearly stated or referenced in the statement of any theorems.
        \item The proofs can either appear in the main paper or the supplemental material, but if they appear in the supplemental material, the authors are encouraged to provide a short proof sketch to provide intuition. 
        \item Inversely, any informal proof provided in the core of the paper should be complemented by formal proofs provided in appendix or supplemental material.
        \item Theorems and Lemmas that the proof relies upon should be properly referenced. 
    \end{itemize}

    \item {\bf Experimental result reproducibility}
    \item[] Question: Does the paper fully disclose all the information needed to reproduce the main experimental results of the paper to the extent that it affects the main claims and/or conclusions of the paper (regardless of whether the code and data are provided or not)?
    \item[] Answer: \answerYes{}
    \item[] Justification: Precise descriptions of all algorithms are given in the technical supplement, Appendix \ref{supplementappendix}, and code is provided.
    \item[] Guidelines:
    \begin{itemize}
        \item The answer \answerNA{} means that the paper does not include experiments.
        \item If the paper includes experiments, a \answerNo{} answer to this question will not be perceived well by the reviewers: Making the paper reproducible is important, regardless of whether the code and data are provided or not.
        \item If the contribution is a dataset and\slash or model, the authors should describe the steps taken to make their results reproducible or verifiable. 
        \item Depending on the contribution, reproducibility can be accomplished in various ways. For example, if the contribution is a novel architecture, describing the architecture fully might suffice, or if the contribution is a specific model and empirical evaluation, it may be necessary to either make it possible for others to replicate the model with the same dataset, or provide access to the model. In general. releasing code and data is often one good way to accomplish this, but reproducibility can also be provided via detailed instructions for how to replicate the results, access to a hosted model (e.g., in the case of a large language model), releasing of a model checkpoint, or other means that are appropriate to the research performed.
        \item While NeurIPS does not require releasing code, the conference does require all submissions to provide some reasonable avenue for reproducibility, which may depend on the nature of the contribution. For example
        \begin{enumerate}
            \item If the contribution is primarily a new algorithm, the paper should make it clear how to reproduce that algorithm.
            \item If the contribution is primarily a new model architecture, the paper should describe the architecture clearly and fully.
            \item If the contribution is a new model (e.g., a large language model), then there should either be a way to access this model for reproducing the results or a way to reproduce the model (e.g., with an open-source dataset or instructions for how to construct the dataset).
            \item We recognize that reproducibility may be tricky in some cases, in which case authors are welcome to describe the particular way they provide for reproducibility. In the case of closed-source models, it may be that access to the model is limited in some way (e.g., to registered users), but it should be possible for other researchers to have some path to reproducing or verifying the results.
        \end{enumerate}
    \end{itemize}

\item {\bf Open access to data and code}
    \item[] Question: Does the paper provide open access to the data and code, with sufficient instructions to faithfully reproduce the main experimental results, as described in supplemental material?
    \item[] Answer: \answerYes{}
    \item[] Justification: No data is required, and code is provided.
    \item[] Guidelines:
    \begin{itemize}
        \item The answer \answerNA{} means that paper does not include experiments requiring code.
        \item Please see the NeurIPS code and data submission guidelines (\url{https://neurips.cc/public/guides/CodeSubmissionPolicy}) for more details.
        \item While we encourage the release of code and data, we understand that this might not be possible, so \answerNo{} is an acceptable answer. Papers cannot be rejected simply for not including code, unless this is central to the contribution (e.g., for a new open-source benchmark).
        \item The instructions should contain the exact command and environment needed to run to reproduce the results. See the NeurIPS code and data submission guidelines (\url{https://neurips.cc/public/guides/CodeSubmissionPolicy}) for more details.
        \item The authors should provide instructions on data access and preparation, including how to access the raw data, preprocessed data, intermediate data, and generated data, etc.
        \item The authors should provide scripts to reproduce all experimental results for the new proposed method and baselines. If only a subset of experiments are reproducible, they should state which ones are omitted from the script and why.
        \item At submission time, to preserve anonymity, the authors should release anonymized versions (if applicable).
        \item Providing as much information as possible in supplemental material (appended to the paper) is recommended, but including URLs to data and code is permitted.
    \end{itemize}

\item {\bf Experimental setting/details}
    \item[] Question: Does the paper specify all the training and test details (e.g., data splits, hyperparameters, how they were chosen, type of optimizer) necessary to understand the results?
    \item[] Answer: \answerYes{}
    \item[] Justification: The experimental setup is clearly described in Section \ref{empiricalresultssection}, and no data is required.
    \item[] Guidelines:
    \begin{itemize}
        \item The answer \answerNA{} means that the paper does not include experiments.
        \item The experimental setting should be presented in the core of the paper to a level of detail that is necessary to appreciate the results and make sense of them.
        \item The full details can be provided either with the code, in appendix, or as supplemental material.
    \end{itemize}

\item {\bf Experiment statistical significance}
    \item[] Question: Does the paper report error bars suitably and correctly defined or other appropriate information about the statistical significance of the experiments?
    \item[] Answer: \answerYes{}
    \item[] Justification: Error bars are included in all plots, except in cases where they would be too small to be readable, where this is explicitly noted.
    \item[] Guidelines:
    \begin{itemize}
        \item The answer \answerNA{} means that the paper does not include experiments.
        \item The authors should answer \answerYes{} if the results are accompanied by error bars, confidence intervals, or statistical significance tests, at least for the experiments that support the main claims of the paper.
        \item The factors of variability that the error bars are capturing should be clearly stated (for example, train/test split, initialization, random drawing of some parameter, or overall run with given experimental conditions).
        \item The method for calculating the error bars should be explained (closed form formula, call to a library function, bootstrap, etc.)
        \item The assumptions made should be given (e.g., Normally distributed errors).
        \item It should be clear whether the error bar is the standard deviation or the standard error of the mean.
        \item It is OK to report 1-sigma error bars, but one should state it. The authors should preferably report a 2-sigma error bar than state that they have a 96\% CI, if the hypothesis of Normality of errors is not verified.
        \item For asymmetric distributions, the authors should be careful not to show in tables or figures symmetric error bars that would yield results that are out of range (e.g., negative error rates).
        \item If error bars are reported in tables or plots, the authors should explain in the text how they were calculated and reference the corresponding figures or tables in the text.
    \end{itemize}

\item {\bf Experiments compute resources}
    \item[] Question: For each experiment, does the paper provide sufficient information on the computer resources (type of compute workers, memory, time of execution) needed to reproduce the experiments?
    \item[] Answer: \answerNo{}
    \item[] Justification: Experiments can easily be replicated on CPU with minimal compute resources.
    \item[] Guidelines:
    \begin{itemize}
        \item The answer \answerNA{} means that the paper does not include experiments.
        \item The paper should indicate the type of compute workers CPU or GPU, internal cluster, or cloud provider, including relevant memory and storage.
        \item The paper should provide the amount of compute required for each of the individual experimental runs as well as estimate the total compute. 
        \item The paper should disclose whether the full research project required more compute than the experiments reported in the paper (e.g., preliminary or failed experiments that didn't make it into the paper). 
    \end{itemize}
    
\item {\bf Code of ethics}
    \item[] Question: Does the research conducted in the paper conform, in every respect, with the NeurIPS Code of Ethics \url{https://neurips.cc/public/EthicsGuidelines}?
    \item[] Answer: \answerYes{}
    \item[] Justification: There are no immediate ethical implications of the results, and the safety motivation for the paper is discussed in Section \ref{lpediscussionsubsection}.
    \item[] Guidelines:
    \begin{itemize}
        \item The answer \answerNA{} means that the authors have not reviewed the NeurIPS Code of Ethics.
        \item If the authors answer \answerNo, they should explain the special circumstances that require a deviation from the Code of Ethics.
        \item The authors should make sure to preserve anonymity (e.g., if there is a special consideration due to laws or regulations in their jurisdiction).
    \end{itemize}

\item {\bf Broader impacts}
    \item[] Question: Does the paper discuss both potential positive societal impacts and negative societal impacts of the work performed?
    \item[] Answer: \answerYes{}
    \item[] Justification: The implications of the results for improving training efficiency and reducing catastrophic tail risk are discussed in Section \ref{lpediscussionsubsection}.
    \item[] Guidelines:
    \begin{itemize}
        \item The answer \answerNA{} means that there is no societal impact of the work performed.
        \item If the authors answer \answerNA{} or \answerNo, they should explain why their work has no societal impact or why the paper does not address societal impact.
        \item Examples of negative societal impacts include potential malicious or unintended uses (e.g., disinformation, generating fake profiles, surveillance), fairness considerations (e.g., deployment of technologies that could make decisions that unfairly impact specific groups), privacy considerations, and security considerations.
        \item The conference expects that many papers will be foundational research and not tied to particular applications, let alone deployments. However, if there is a direct path to any negative applications, the authors should point it out. For example, it is legitimate to point out that an improvement in the quality of generative models could be used to generate Deepfakes for disinformation. On the other hand, it is not needed to point out that a generic algorithm for optimizing neural networks could enable people to train models that generate Deepfakes faster.
        \item The authors should consider possible harms that could arise when the technology is being used as intended and functioning correctly, harms that could arise when the technology is being used as intended but gives incorrect results, and harms following from (intentional or unintentional) misuse of the technology.
        \item If there are negative societal impacts, the authors could also discuss possible mitigation strategies (e.g., gated release of models, providing defenses in addition to attacks, mechanisms for monitoring misuse, mechanisms to monitor how a system learns from feedback over time, improving the efficiency and accessibility of ML).
    \end{itemize}
    
\item {\bf Safeguards}
    \item[] Question: Does the paper describe safeguards that have been put in place for responsible release of data or models that have a high risk for misuse (e.g., pre-trained language models, image generators, or scraped datasets)?
    \item[] Answer: \answerNA{}
    \item[] Justification: There is no direct risk of misuse.
    \item[] Guidelines:
    \begin{itemize}
        \item The answer \answerNA{} means that the paper poses no such risks.
        \item Released models that have a high risk for misuse or dual-use should be released with necessary safeguards to allow for controlled use of the model, for example by requiring that users adhere to usage guidelines or restrictions to access the model or implementing safety filters. 
        \item Datasets that have been scraped from the Internet could pose safety risks. The authors should describe how they avoided releasing unsafe images.
        \item We recognize that providing effective safeguards is challenging, and many papers do not require this, but we encourage authors to take this into account and make a best faith effort.
    \end{itemize}

\item {\bf Licenses for existing assets}
    \item[] Question: Are the creators or original owners of assets (e.g., code, data, models), used in the paper, properly credited and are the license and terms of use explicitly mentioned and properly respected?
    \item[] Answer: \answerNA{}
    \item[] Justification: No existing assets are used.
    \item[] Guidelines:
    \begin{itemize}
        \item The answer \answerNA{} means that the paper does not use existing assets.
        \item The authors should cite the original paper that produced the code package or dataset.
        \item The authors should state which version of the asset is used and, if possible, include a URL.
        \item The name of the license (e.g., CC-BY 4.0) should be included for each asset.
        \item For scraped data from a particular source (e.g., website), the copyright and terms of service of that source should be provided.
        \item If assets are released, the license, copyright information, and terms of use in the package should be provided. For popular datasets, \url{paperswithcode.com/datasets} has curated licenses for some datasets. Their licensing guide can help determine the license of a dataset.
        \item For existing datasets that are re-packaged, both the original license and the license of the derived asset (if it has changed) should be provided.
        \item If this information is not available online, the authors are encouraged to reach out to the asset's creators.
    \end{itemize}

\item {\bf New assets}
    \item[] Question: Are new assets introduced in the paper well documented and is the documentation provided alongside the assets?
    \item[] Answer: \answerYes{}
    \item[] Justification: The provided code includes documentation.
    \item[] Guidelines:
    \begin{itemize}
        \item The answer \answerNA{} means that the paper does not release new assets.
        \item Researchers should communicate the details of the dataset\slash code\slash model as part of their submissions via structured templates. This includes details about training, license, limitations, etc. 
        \item The paper should discuss whether and how consent was obtained from people whose asset is used.
        \item At submission time, remember to anonymize your assets (if applicable). You can either create an anonymized URL or include an anonymized zip file.
    \end{itemize}

\item {\bf Crowdsourcing and research with human subjects}
    \item[] Question: For crowdsourcing experiments and research with human subjects, does the paper include the full text of instructions given to participants and screenshots, if applicable, as well as details about compensation (if any)? 
    \item[] Answer: \answerNA{}
    \item[] Justification: The paper does not involve crowdsourcing nor research with human subjects.
    \item[] Guidelines:
    \begin{itemize}
        \item The answer \answerNA{} means that the paper does not involve crowdsourcing nor research with human subjects.
        \item Including this information in the supplemental material is fine, but if the main contribution of the paper involves human subjects, then as much detail as possible should be included in the main paper. 
        \item According to the NeurIPS Code of Ethics, workers involved in data collection, curation, or other labor should be paid at least the minimum wage in the country of the data collector. 
    \end{itemize}

\item {\bf Institutional review board (IRB) approvals or equivalent for research with human subjects}
    \item[] Question: Does the paper describe potential risks incurred by study participants, whether such risks were disclosed to the subjects, and whether Institutional Review Board (IRB) approvals (or an equivalent approval/review based on the requirements of your country or institution) were obtained?
    \item[] Answer: \answerNA{}
    \item[] Justification: The paper does not involve crowdsourcing nor research with human subjects.
    \item[] Guidelines:
    \begin{itemize}
        \item The answer \answerNA{} means that the paper does not involve crowdsourcing nor research with human subjects.
        \item Depending on the country in which research is conducted, IRB approval (or equivalent) may be required for any human subjects research. If you obtained IRB approval, you should clearly state this in the paper. 
        \item We recognize that the procedures for this may vary significantly between institutions and locations, and we expect authors to adhere to the NeurIPS Code of Ethics and the guidelines for their institution. 
        \item For initial submissions, do not include any information that would break anonymity (if applicable), such as the institution conducting the review.
    \end{itemize}

\item {\bf Declaration of LLM usage}
    \item[] Question: Does the paper describe the usage of LLMs if it is an important, original, or non-standard component of the core methods in this research? Note that if the LLM is used only for writing, editing, or formatting purposes and does \emph{not} impact the core methodology, scientific rigor, or originality of the research, declaration is not required.
    \item[] Answer: \answerNA{}
    \item[] Justification: LLMs were used to help solve mathematical problems in the course of the research, but the final results including proofs were composed by the authors, with non-essential assistance from LLMs. LLMs were also used to help implement algorithms, but all code was carefully reviewed by the authors.
    \item[] Guidelines:
    \begin{itemize}
        \item The answer \answerNA{} means that the core method development in this research does not involve LLMs as any important, original, or non-standard components.
        \item Please refer to our LLM policy in the NeurIPS handbook for what should or should not be described.
    \end{itemize}

\end{enumerate}
}

\end{document}